\def\eqref#1{equation~\ref{#1}}
\def\1{\bm{1}}
\DeclareMathAlphabet{\mathsfit}{\encodingdefault}{\sfdefault}{m}{sl}
\SetMathAlphabet{\mathsfit}{bold}{\encodingdefault}{\sfdefault}{bx}{n}
\DeclareMathOperator*{\argmin}{arg\,min}
\pgfplotsset{compat=1.18}
\g@addto@macro\normalsize{%
  \setlength{\abovedisplayskip}{4pt plus 1pt minus 1pt}%
  \setlength{\belowdisplayskip}{4pt plus 1pt minus 1pt}%
  \setlength{\abovedisplayshortskip}{2pt plus 1pt minus 1pt}%
  \setlength{\belowdisplayshortskip}{2pt plus 1pt minus 1pt}%
}
\title{In-memory Training on Analog Devices with Limited \\  Conductance States via  Multi-tile Residual Learning}
\author{
Jindan Li\textsuperscript{1}, 
Zhaoxian Wu\textsuperscript{1}, 
Gaowen Liu\textsuperscript{3}, 
Tayfun Gokmen\textsuperscript{4},
Tianyi Chen\textsuperscript{1, 2}\\[10pt]
\textsuperscript{1} Cornell University, New York, NY 10044 \\
\textsuperscript{2} Rensselaer Polytechnic Institute, Troy, NY 12180 \\
\textsuperscript{3} Cisco Research, Naperville, IL 60540 \\
\textsuperscript{4} IBM T. J. Watson Research Center, Yorktown Heights, NY 10598 \\
\texttt{\{jl4767, zw868, tianyi.chen\}@cornell.edu}
}
\begin{document}

\maketitle

\begin{abstract}
Analog in-memory computing (AIMC) accelerators enable efficient deep neural network computation directly within memory using resistive crossbar arrays, where model parameters are represented by the conductance states of memristive devices. 
However, effective in-memory training typically requires at least 8-bit conductance states to match digital baselines. Realizing such fine-grained states is costly and often requires complex noise mitigation techniques that increase circuit complexity and energy consumption. In practice, many promising memristive devices such as ReRAM offer only about 4-bit resolution due to fabrication constraints, and this limited update precision substantially degrades training accuracy.
To enable on-chip training with these limited-state devices, this paper proposes a \emph{residual learning} framework that sequentially learns on multiple crossbar tiles to compensate the residual errors from low-precision weight updates. 
Our theoretical analysis shows that the optimality gap shrinks with the number of tiles and achieves a linear convergence rate. Experiments on standard image classification benchmarks demonstrate that our method consistently outperforms state-of-the-art in-memory analog training strategies under limited-state settings, while incurring only moderate hardware overhead as confirmed by our cost analysis.
\end{abstract}

\section{Introduction}
With the growing adoption of AI across various fields, the demand for \emph{accurate and energy-efficient} training hardware is increasing. 
In this context, \emph{analog in-memory computing} (AIMC) is an emerging solution that performs matrix vector multiplication (MVM) operations directly on weights stored in memory, offering significant efficiency improvements over conventional von Neumann systems. 
In AIMC hardware, the parameters (matrices) of deep neural networks (DNN) are represented by the conductance states of  \emph{memristive devices} in analog crossbar arrays, while the inputs (vectors) are programmed as voltage signals. Using Kirchhoff's and Ohm’s laws, MVM operations between a $D\times D$ matrix and a vector can be completed in $\mathcal{O}(1)$ time on AIMC hardware, while it requires at least $\mathcal{O}(D)$ time in digital MVM \citep{hu2016dot}.
A full MVM on analog hardware can be executed with energy in the range of tens of femtojoules ($10^{-15}$ joules), whereas accessing a 1 kB SRAM block in digital systems typically costs 1 picojoule ($10^{-12}$ joules) per byte \citep{murmann2020mixed}. 
This advantage translates into higher energy efficiency.
A typical commercial digital accelerator has plateaued around 10 tera-operations per second per watt (TOPS/W) \citep{reuther2022ai}, which can be significantly surpassed by AIMC accelerators.
For example, a monolithic 3D AIMC chip achieves more than 210 TOPS/W~\citep{chen202264}, and a 4$\times$4 core array reaches 30 TOPS/W~\citep{jia2021scalable}.
However, due to the inherent difficulty in precisely and reliably changing the conductance of the memory elements, in-memory analog  training presents significant challenges.

This paper focuses on gradient-based in-memory training on AIMC hardware. The objective of training is to solve the standard model optimization problem, formally defined as:
\begin{equation}
    \label{problem}
    W^* := \arg\min_{W \in \mathbb{R}^{D \times D}} f(W)
\end{equation}
\begin{wrapfigure}{r}{0.4\linewidth}
    \centering
         \vspace{-.5em}
\includegraphics[width=0.9\linewidth]{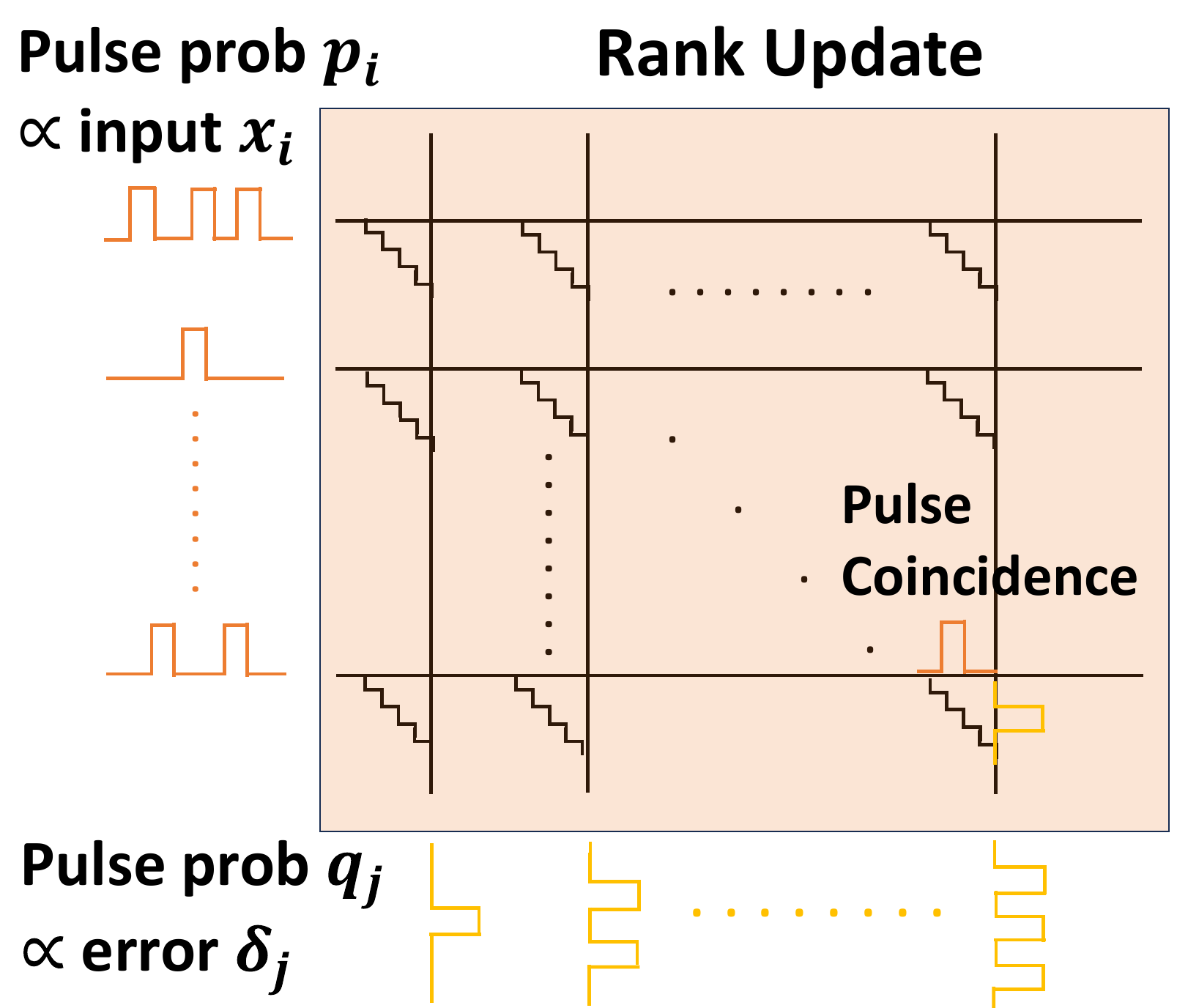} 
\caption{Illustration for rank update via stochastic pulse streams.
}    \label{fig:demo_of_pulse_update}
     \vspace{-1em}
\end{wrapfigure}
where \( f(\cdot) : \mathbb{R}^{D \times D} \to \mathbb{R} \) is the objective and \( W \) is a trainable matrix stored in analog crossbar arrays. In digital accelerators,   \eqref{problem} can be solved by stochastic gradient descent (SGD), whose recursion is given by $W_{t+1} = W_t -\alpha\nabla f(W_t; \xi_t)$. Here, $\alpha$ is the learning rate and \( \xi_t \) denotes a sample randomly drawn in iteration \( t \). To implement SGD on AIMC hardware, one needs to update the weights stored in the crossbar array using the \emph{rank update} method \citep{gokmen2016acceleration}. This approach leverages two $\mathcal{O}(D)$-dimensional vectors, the backpropagation error $\delta$ and the input $x$, to perform in-memory updates directly on the analog array via stochastic pulse streams, as illustrated in Figure \ref{fig:demo_of_pulse_update}. 
 Ideally, each pulse adjusts a weight element $w \in W$ by the minimal increment $\pm \Delta w_{\min}$, with the sign determined by the pulse polarity.
The resulting weight evolution is illustrated in Figure~\ref{fig:weight_increment}.

 \begin{wrapfigure}{r}{0.45\linewidth}
    \centering
    \vspace{-1em}  
\includegraphics[width=0.95\linewidth]{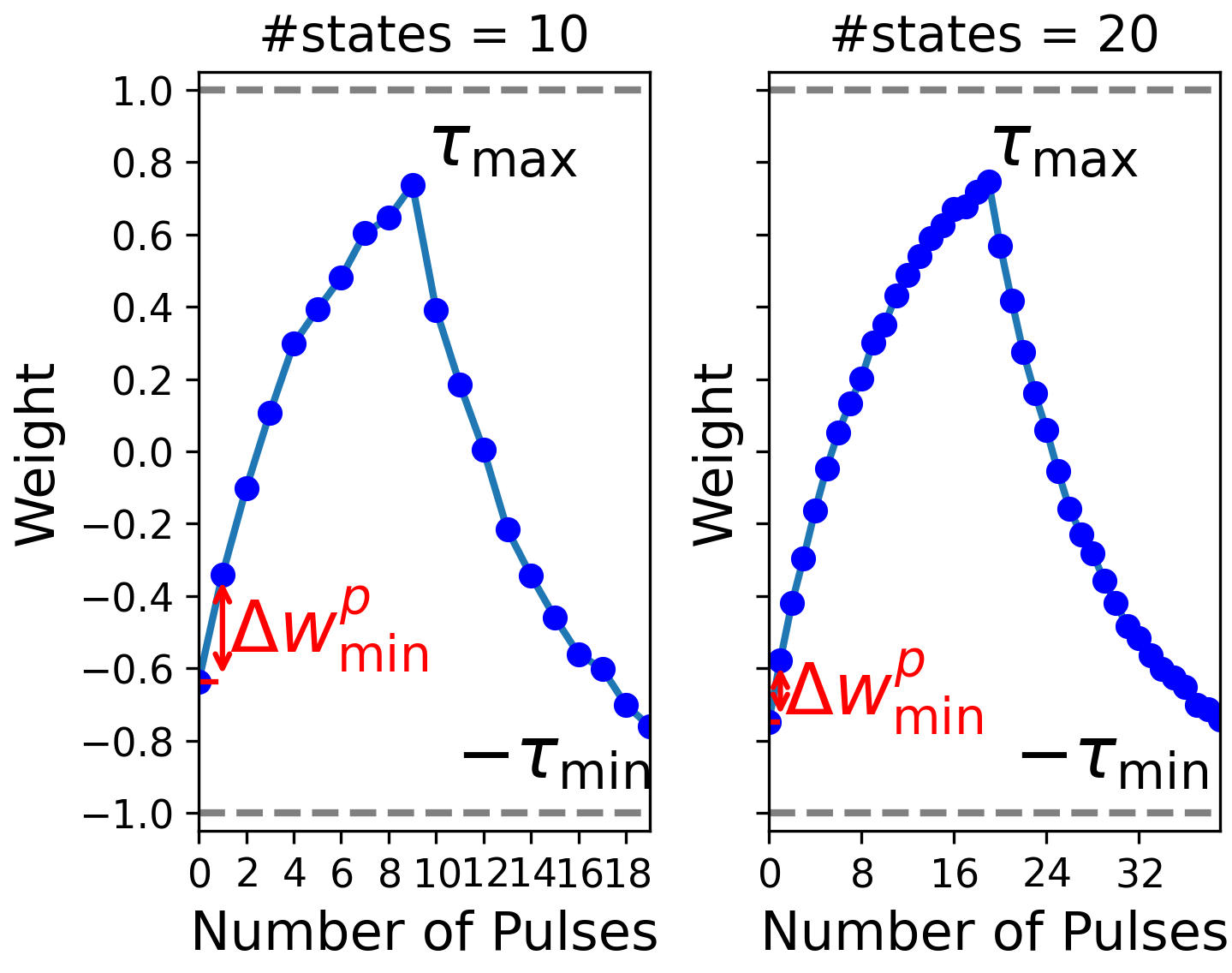}
    \caption{Illustration of pulsed weight updates on 10 and 20 states softbound devices. Due to asymmetric update, the actual weight increment follows $\Delta w^p_{\min} = \Delta w_{\min} \cdot q_+(w)$, where $q_+(\cdot)$ represents the device positive response factor.
    }
\label{fig:weight_increment}
    \vspace{-1em} 
\end{wrapfigure}

 We define the number of device states by dividing the total weight range \( w \in [\tau_{\min}, \tau_{\max}] \) by this minimal weight change:
 $n_{\text{states}}: = (\tau_{\max} - \tau_{\min})/\Delta w_{\min}$, where  \(n_{\text{states}}\) determines how many distinct values the weight can stably represent.
 A smaller  \(n_{\text{states}}\) (larger  \(\Delta w_{\min}\)) amplifying quantization noise \(\zeta\) whose variance scales with \(\Delta w_{\min}\). This noise captures the gap between ideal and actual updates and fundamentally limits training accuracy.
Previous studies have shown that successful training on crossbar-based architectures typically requires at least 8-bit distinct conductance levels to achieve competitive accuracy~\citep{li2018capacitor, chen2017neurosim+}. However, some devices struggle to provide this level of granularity within a single memory cell. MRAM devices are typically limited to two stable states per cell, whereas ReRAM is usually constrained to 4-bit per cell in practice (detailed survey in Table~\ref{table:device_granularity} and Appendix \ref{sec:Memeristor_Devices}), which makes it difficult to achieve the multi-bit precision required for effective training.  As illustrated in Figure~\ref{fig:demo_of_training_with_states}, reducing the number of states to 20 or fewer results in a convergence failure. While ECRAM can support thousands of states, it remains hindered by practical challenges, including complex three-terminal design, CMOS incompatibility, and material instability~\citep{kim2023three, kwak2025unveiling}, which lack a scalable fabrication pipeline~\citep{kwak2024electrochemical}.
In contrast, ReRAM remains one of the most manufacturable and scalable options~\citep{stecconi2024analog}. In practice, its bi-directional update behavior typically involves limited conductance states together with asymmetric non-idealities \cite{xi2020memory}, which form the primary focus of this paper.
 {\em Rather than} pushing for increasingly precise devices, our work advocates {\em algorithm innovations} to mitigate the limitations of low-state memristive devices, which better align with current fabrication capabilities and offer energy and area efficiency for near-term deployment. Importantly, our goal is not to dismiss high-state devices, but to emphasize the practical and architectural benefits of training with low-state memristive technologies.

\subsection{Main results}
This work addresses the fundamental challenges of limited precision in gradient-based training on AIMC hardware, which stem from the limited number of conductance states and the asymmetric update. We address these challenges by designing {\em composite weight representations} that integrate multiple low-precision tiles to represent high-precision weights, and by developing {\em multi-timescale residual learning} algorithms that enable each tile to dynamically track the residual training error left by preceding tiles. Together, these techniques ensure stable convergence and high training accuracy under low-precision constraints. This motivates our first question: 

\textbf{Q1)} \emph{ How can high-precision weights be represented using limited-conductance states AIMC devices?}

To construct a high precision weight, we define the composite weight as  $\overline{W} = \sum_{n=0}^{N} \gamma^n W^{(n)}$, where $W^{(n)}$ denotes a low precision weight on an AIMC tile $n$, and $\gamma \in (0, 1)$ controls its scaling.
This structure increases the total number of representable values exponentially with the number of tiles, thus significantly enhancing the effective numeric precision. The composite weight $\overline{W}$ is then used in both forward and backward passes; see the details of circuit implementation in Section \ref{sec:Analog_Circuit_Implementation}. 
Given the composite
weight $\overline{W}$, it raises another critical question:

\textbf{Q2)} \emph{How to ensure that the composite weight $\overline{W}$ converges effectively under gradient-based training?}

 \begin{wrapfigure}{r}{0.4\linewidth}
    \centering
        \vspace{-1.5em}
\includegraphics[width=0.9\linewidth]{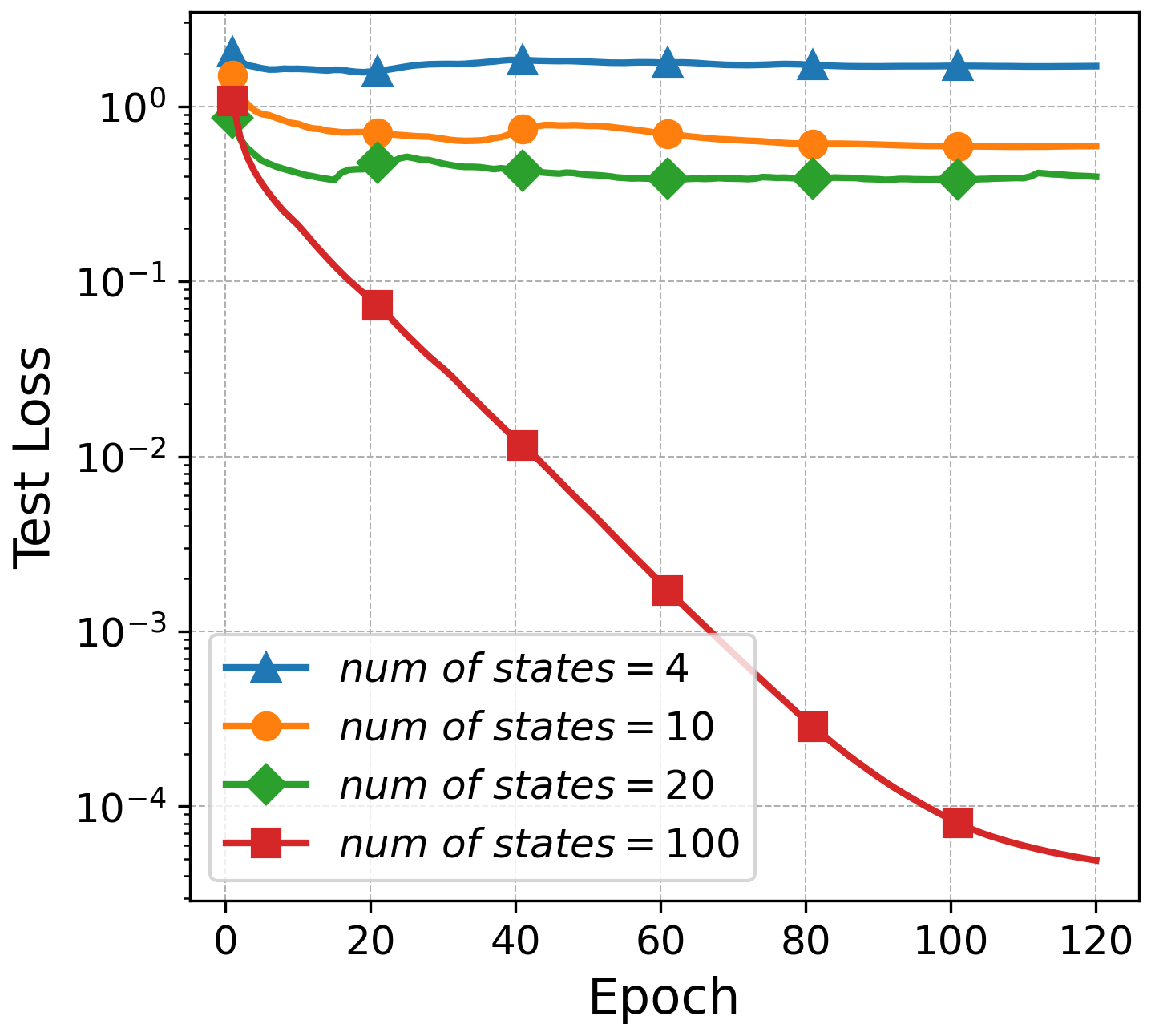}    \vspace{-1em}\caption{Training fails to converge under 4-bit conductance states.
The experiment is conducted on LeNet-5 (MNIST) using Tiki-Taka (TT-v1). 
}    \label{fig:demo_of_training_with_states}   \vspace{-1.5em}
\end{wrapfigure}
To ensure that the composite weight \( \overline{W} \) converges under gradient-based training, we propose a multi-timescale residual learning strategy inspired by the recent advances in \emph{single-timescale stochastic approximation} (STSA) \citep{shen2022single}. However, different from STSA, which tracks a drifting optimum at a single timescale, our method employs a multi-timescale scheme in which each analog tile learns on a progressively slower timescale to approximate the residual left by lower-resolution tiles. This recursive refinement ensures that the composite weight \( \overline{W} \) approaches the global optimum $W^*$ with an exponentially vanishing residual error.
Since our algorithms enable in-memory analog training, the multi-tile strategy incurs minimal hardware overhead in terms of digital storage and latency, thereby achieving a better accuracy–efficiency trade-off than methods that rely on digital processors for high-precision gradient computation such as Mixed-Precision (MP) (see Table~\ref{tab:weight-update-comparison} and Section~\ref{sec:Analog_Circuit_Implementation}).

\textbf{Our contributions.} This work makes the following key contributions:
\begin{enumerate}   
\item[\bf C1)] We propose a high-precision in-memory analog training framework termed multi-timescale residual learning, which overcomes the precision bottleneck of limited conductance states, without requiring reset operations and relying solely on an open-loop transfer process between tiles, thus simplifying hardware implementation.
    \item[\bf C2)] We theoretically analyze the non-convergence of single-tile Analog SGD under realistic device constraints and establish both an upper bound and a matching lower bound. Furthermore, we analyze the convergence of our multi-timescale residual learning and show that the error can be exponentially reduced by increasing the number of tiles.
\item[\bf C3)] We evaluate the proposed algorithm using IBM AIHWKIT~\citep{rasch2021flexible} on CIFAR-100, Fashion-MNIST, and other datasets, demonstrating consistent improvements over existing in-memory analog training methods under limited conductance states.  
 We analyze hardware costs (storage, energy, latency and area) on real datasets, showing that our method achieves an accuracy–efficiency trade-off compared to baseline methods.  
\end{enumerate}

\subsection{Related works}
\label{sec:related works}

\textbf{Gradient-based training on AIMC hardware.} 
Gradient-based AIMC training was first explored by \emph{rank-update} methods such as Analog SGD~\citep{gokmen2016acceleration}. 
TT-v1 mitigates asymmetric updates and noise by accumulating gradients on an auxiliary array and periodically transferring them~\citep{gokmen2020algorithm}, while TT-v2 adds digital filtering for improved robustness~\citep{gokmen2021enabling}. 
However, these methods often fail to converge on larger models under limited conductance states. 
Hybrid 3T1C–PCM designs~\citep{ambrogio2018equivalent,cristiano2018perspective} improve precision through closed-loop tuning (CLT) but incur high latency and area overhead (detailed in Appendix \ref{sec:Memeristor_Devices}). 
Another line of \emph{hybrid training paradigms} programs high-precision digital gradients directly into analog weights (MP~\citep{le2018mixed}), with momentum-based extensions~\citep{wang2020ssm,huang2020overcoming}, though these incur substantial digital storage and compute cost, as compared in Figure \ref{fig:weight-update-comparison}.

\textbf{Multi-sequence Stochastic approximation.}
Stochastic approximation with multiple coupled sequences \cite{yang2019multilevel, shen2022single, huang2025single} has found broad applications in machine learning such as bilevel learning \cite{lu2023bilevel, jiang2024primal} and reinforcement learning \cite{zeng2024fast, zeng2024accelerated}. Our analog training on multiple tiles can be naturally viewed as a system of
\begin{wrapfigure}{r}{0.55\linewidth}
    \centering
 \vspace{-0.5em} 
    \includegraphics[width=\linewidth]{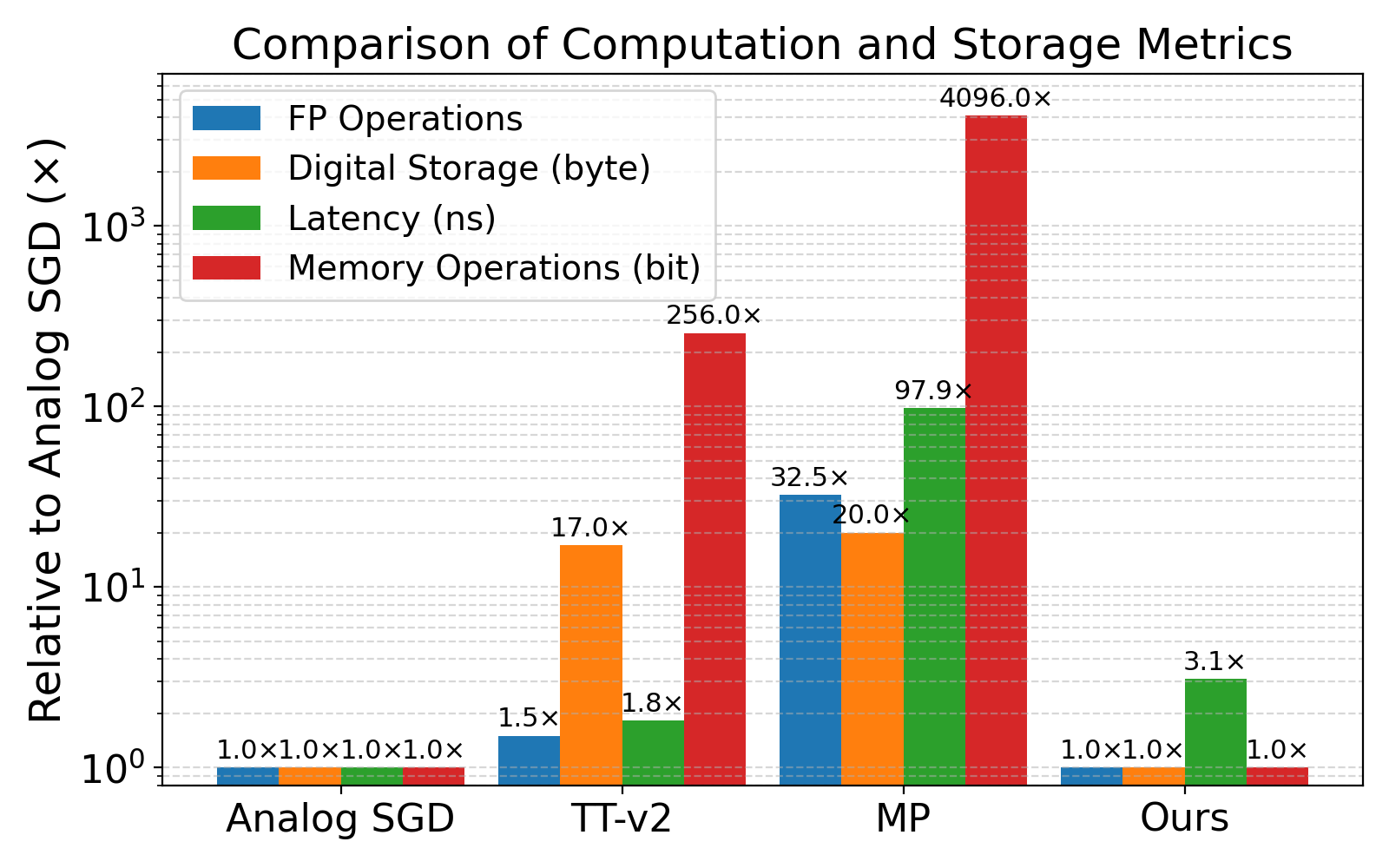}
    \vspace{-2em}
    \caption{
Algorithm comparison of computation and storage for per-sample update with model dimension $D=32$ and mini-batch size $B=4$ from the statistics in Table~\ref{tab:weight-update-comparison}. MP incurs substantially higher overhead, which grows more severe as $D$ and $B$ increase. 
    }
    \vspace{-2em}
    \label{fig:weight-update-comparison}
\end{wrapfigure}
coupled sequences, since the gradient is computed over all tiles jointly, and in turn, each tile’s update is driven by this gradient to ensure the component weight converges toward the global optimum.

\textbf{Low-precision computing.}  
Existing works have primarily shown how low-precision devices can be combined to achieve high-precision computation in \emph{static settings} such as scientific computing and DNN inference (e.g., bit-slicing schemes \citep{feinberg2018enabling, song2023refloat, song2024programming,le2022precision, pedretti2021conductance, boybat2018neuromorphic,mackin2022optimised}). However, extending this concept to high-precision training on multiple low-precision tiles is far more challenging, as it must maintain convergence in the presence of asymmetric updates while weights are continually changing under limited-precision gradients.
Alternative precision enhancement strategies fall under hardware-aware training, which incorporates quantization and other hardware noise into the digital training process to improve inference accuracy when weights are deployed on non-ideal analog devices~\citep{klachko2019improving,he2019noise,buchel2025analog}, in contrast to our in-memory analog training that updates weights directly on analog hardware.

\textbf{Contribution relative to prior works.}
Our work extends \cite{wu2024towards,wu2025analog} by introducing a new setting that accounts for limited conductance state non-idealities and by addressing it through a new algorithm called multi-timescale residual learning. Both prior studies focus on asymmetric non-idealities: they model the analog update dynamics and establish convergence analyzes for Analog SGD and the Tiki-Taka algorithm. Specifically, \cite{wu2024towards} considers only an asymmetric linear device, while \cite{wu2025analog} extends the analysis to general asymmetric devices to demonstrate the scalability of the approach.
Building on their foundation, we incorporate another widespread device non-ideality, limited conductance states, by modeling a quantization noise term that introduces a new non-vanishing error component in Analog SGD. To mitigate this error, we generalize the two-tile residual learning scheme in \cite{wu2025analog} to a multi-tile regime, where each tile approximates the residual left by lower-resolution tiles, leading to an exponentially reduced error floor as the number of tiles increases. The key challenge is that this residual keeps drifting as  lower-resolution tiles are updated, which we resolve by a multi-timescale learning strategy that freezes the  lower-resolution tiles while the current tile runs a sufficiently long inner loop to track its quasi-stationary residual. See detailed discussion in Appendix \ref{section:relation-with-wu2024}.

\section{Training Dynamics on Non-Ideal AIMC Hardware}
\label{sec:Training_Dynamics}
In this section, we analyze how two critical non-idealities in AIMC hardware, \emph{limited conductance states} and \emph{asymmetric updates}, affect the training dynamics. We derive an analog update rule that explicitly captures these effects, and show that under this rule, Analog SGD exhibits an asymptotic error determined by both the gradient noise and the quantization noise.
\label{subsection:Pulse update}

\textbf{Asymmetric pulse update.} 
Rank-update-based training updates the weights on the crossbar array by simultaneously applying two stochastic pulse streams to its rows and columns, with weight increments occurring at pulse coincidences. Ideally, each pulse coincidence induces a minimal weight change $\Delta w_{\min}$. However, practical updates depend nonlinearly on both the current weight and the pulse polarity, causing deviations from this idealized increment and resulting in asymmetric updates. 
Specifically, given the weight 
\( W_t \in \mathbb{R}^{D\times D}\) at iteration \( t \), the asymmetric pulse update for an element  $w_t$ is modeled as:
$w_{t+1} =
\begin{cases}
w_{t} + \Delta w_{\min} \cdot q_+(w_{t}), & \textit{for a positive pulse}, \\
w_{t} + \Delta w_{\min} \cdot q_-(w_{t}), & \textit{for a negative pulse},
\end{cases}$ 
 where \( q_+(w) \) and \( q_-(w) \) denote the device response factors to positive and negative pulses, respectively. 
Following the decomposition introduced in~\citep{gokmen2020algorithm}, we define the symmetric and asymmetric components as
$F(w) := \frac{q_{-}(w) + q_{+}(w)}{2}$ and $G(w) := \frac{q_{-}(w) - q_{+}(w)}{2}$, yielding a compact element-wise update form triggered by each pulse coincidence: $w_{t+1} = w_t + \Delta w_{\min} \odot F(w_t) -|\Delta w_{\min}| \odot G(w_t)$.

\textbf{Quantization noise from limited conductance states.}  
 During the rank update process (see Figure~\ref{fig:demo_of_pulse_update}), each weight element $w_{ij}$, located at column $i$ and row $j$ of the crossbar array, is updated by $\alpha x_i \delta_j$, where $x_i$ is the $i$-th entry of the input vector $x$, $\delta_j$ is the $j$-th entry of the backpropagated error vector $\delta$, and $\alpha$ is the learning rate.
We implement the update using stochastic pulse streams, where the amplitude of each pulse is generated from a Bernoulli distribution with parameters $p_i \propto x_i$ and $q_j \propto \delta_j$. This scheme guarantees that the expectation of the actual weight change $\Delta w_{ij}$ is equal to the ideal update $\alpha x_i \delta_j$.
However, due to the limited number of conductance states, each pulse induces only a discrete weight increment of magnitude $\Delta w_{\min}$. This discretization introduces a mismatch between the actual update and its ideal target. We capture this discrepancy by defining a stochastic noise term $\zeta_{ij}$, such that
$
\Delta w_{ij} = \alpha x_i \delta_j + \zeta_{ij},
$
and show its statistical properties in the following.
\begin{restatable}[Statistical properties of pulse update noise]{lemma}{LemmaTTConvergenceScvx}
    \label{lemma:pulse-update-error}
    Under the stochastic pulse update in \citep{gokmen2016acceleration}, the random variable \(\zeta_{ij}\) has the following properties:
\begin{align}
\mathbb{E}[\zeta_{ij}] = 0,\quad  \textit{ and }\quad
    \operatorname{Var}[\zeta_{ij}] = \Theta ( \alpha \cdot \Delta w_{\min}).\nonumber
\end{align}
\vspace{-2em}
\end{restatable}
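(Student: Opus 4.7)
The plan is to unroll the stochastic pulse mechanism so that the observed increment $\Delta w_{ij}$ becomes an explicit function of Bernoulli coincidences, and then reduce both claims to elementary moment computations for a Binomial distribution. Over a pulse stream of length $BL$, the row emits a pulse at position $k$ with probability $p_i \propto |x_i|$ and the column emits a pulse with probability $q_j \propto |\delta_j|$, independently across $k$ and across streams; each coincidence produces a discrete increment of magnitude $\Delta w_{\min}$ whose sign is determined by $\operatorname{sign}(x_i \delta_j)$. I would first record this representation and fix the normalization of $p_i, q_j$ built into the scheme so that the nominal expected update matches $\alpha x_i \delta_j$, i.e., $BL \cdot p_i q_j \cdot \Delta w_{\min} = \alpha\, x_i \delta_j$ in magnitude.

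Let $K_{ij}$ denote the coincidence count, so $K_{ij} \sim \text{Binomial}(BL, p_i q_j)$ and
\[
\Delta w_{ij} \;=\; \operatorname{sign}(x_i \delta_j)\, \Delta w_{\min}\, K_{ij},
\qquad
\zeta_{ij} \;=\; \Delta w_{ij} - \alpha x_i \delta_j .
\]
Unbiasedness is immediate from $\mathbb{E}[K_{ij}] = BL\, p_i q_j$ combined with the calibration above, yielding $\mathbb{E}[\zeta_{ij}]=0$. For the variance, I invoke the Binomial identity $\operatorname{Var}(K_{ij}) = BL\, p_i q_j\,(1 - p_i q_j)$, which after substitution gives
\[
\operatorname{Var}[\zeta_{ij}] \;=\; \Delta w_{\min}^{2} \cdot BL\, p_i q_j\,(1 - p_i q_j) \;=\; \alpha\, \Delta w_{\min} \cdot |x_i \delta_j| \cdot (1 - p_i q_j).
\]
In the standard operating regime the single-pulse coincidence probability is small ($p_i q_j = o(1)$), so $1 - p_i q_j = 1 - o(1)$ and the expression is sandwiched between two constant multiples of $\alpha\,\Delta w_{\min}$ as long as $|x_i \delta_j|$ is of constant order. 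Folding in the standard boundedness assumption on activations and backpropagated errors then delivers the $\Theta(\alpha \cdot \Delta w_{\min})$ conclusion.

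The step I expect to require the most care is the matching \emph{lower} bound in the $\Theta$ statement. The upper bound $\operatorname{Var}[\zeta_{ij}] \le C\, \alpha\, \Delta w_{\min}$ follows from any uniform bound on $|x_i|, |\delta_j|$, but the lower direction degenerates when $x_i \delta_j \to 0$ since the Bernoulli activity itself vanishes. The cleanest resolution is to state the bound at a representative operating point where $|x_i \delta_j| = \Theta(1)$, or equivalently to interpret the $\Theta$ as hiding the factor $|x_i \delta_j|$ under a nondegeneracy assumption. A secondary subtlety is making sure the sign convention used to realize $q_\pm$ in the asymmetric-update model does not bias the Bernoulli construction; this is handled by separating the polarity, which is deterministic in $\operatorname{sign}(x_i \delta_j)$, from the magnitude $K_{ij}$, so that the asymmetric response factors $F, G$ enter only as multiplicative device-response constants that do not alter the leading-order scaling of the moments.
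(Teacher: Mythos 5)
Your proposal is correct and follows essentially the same route as the paper's proof: both represent $\Delta w_{ij}$ as a sum of $BL$ independent Bernoulli coincidences of magnitude $\Delta w_{\min}$ with sign $\operatorname{sign}(x_i\delta_j)$, calibrate the coincidence probability so the mean equals $\alpha x_i\delta_j$ (the paper lumps your $p_iq_j$ into the single probability $p=\frac{|\alpha x_i\delta_j|}{BL\,\Delta w_{\min}}$), and then read off the binomial variance $BL\,\Delta w_{\min}^2\,p(1-p)=\alpha|x_i\delta_j|\Delta w_{\min}(1-p)$. Your explicit caveat that the matching lower bound requires $|x_i\delta_j|=\Theta(1)$ is a point the paper's $\Theta(\alpha\,\Delta w_{\min})$ statement leaves implicit, so flagging it is a mild sharpening rather than a deviation.
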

The proof of Lemma \ref{lemma:pulse-update-error} is deferred to
Appendix \ref{Section:Proof_of_Lemma pulse-update-error}. 
Since analog crossbar arrays update all weight elements in parallel, the matrix update rule combining asymmetric pulse updates and quantization noise from limited conductance states can be succinctly represented as:
\begin{tcolorbox}[colback=gray!10, colframe=black, boxrule=0.5pt]
    \vspace{-0.5em}
\begin{align}
    \label{biased-update}
    W_{t+1} = W_t + \Delta W_t \odot F(W_t) -|\Delta W_t| \odot G(W_t)+ \zeta_t
 \end{align}
    \vspace{-1.1em}
\end{tcolorbox}
where the operations $|\cdot|$ and $\odot$ denote element-wise absolute value and multiplication, respectively. The specific form of $\Delta W_t$ depends on the chosen optimization algorithm.
By substituting  $\Delta W_t$ with the  gradient used in digital SGD, the update rule for Analog SGD under   \eqref{biased-update} becomes: \begin{align}
    \label{biased-update_sgd}
    W_{t+1} = W_t -\alpha\nabla f(W_t;\xi_t) \odot F(W_t) -|\alpha\nabla f(W_t;\xi_t)| \odot G(W_t)+ \zeta_t. 
 \end{align} 
 Based on~\eqref{biased-update_sgd}, we establish the upper and lower bounds on the convergence of Analog SGD on a single tile with limited conductance states. Our analysis shows that asymmetric pulse responses and the quantization noise term \(\zeta_t\) arising from limited conductance states pose fundamental challenges to convergence and lead to a non-negligible asymptotic error during training.  \begin{theorem}[Convergence of Analog SGD, short version]
\label{theorem_Convergence of_Analog_SGD_short}
Under a set of mild assumptions, with \( \sigma^2 \) denoting the variance bound of the  gradient noise, if the learning rate is set as 
$\alpha =\mathcal{O} \big(\sqrt{ \frac{ 2(f(W_0) - f^*)}{\sigma^2 T}}\big)$, then
it holds that:
\[
 \frac{1}{T} \sum_{t=0}^{T-1} \mathbb{E}[\|W^* - W_t\|^2]
\leq  \mathcal{O} \Big(R_T\sqrt{ \frac{ 2(f(W_0) - f^*)\sigma^2}{ T}}\Big)+4\sigma^2 S_T + R_T\Delta w_{\min}
\]
where \(
  S_T := \frac{1}{T} \sum_{t=0}^{T-1} \frac{\|W_t\|^2_\infty / \tau_{\max}^2}{1 - \|W_t\|^2_\infty / \tau_{\max}^2}\)
, \(
R_T := \frac{1}{T} \sum_{t=0}^{T-1} \frac{2L}{1 - \|W_t\|^2_\infty / \tau_{\max}^2}\).
\end{theorem}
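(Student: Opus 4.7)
The plan is to carry out a Lyapunov analysis on the squared distance $V_t := \|W^* - W_t\|^2$ and derive a one-step descent inequality from the update rule \eqref{biased-update_sgd}. Subtracting $W^*$ and expanding, the zero-mean property in Lemma \ref{lemma:pulse-update-error} kills the cross-term with $\zeta_t$, while its second moment leaves an additive per-step residual of order $\Theta(\alpha \Delta w_{\min})$ that aggregates the entry-wise variance from the lemma. The expectation of the symmetric descent $-\alpha\nabla f(W_t;\xi_t)\odot F(W_t)$ reduces to $-\alpha \nabla f(W_t)\odot F(W_t)$ since $F(W_t)$ does not depend on $\xi_t$, and combined with smoothness plus a strong-convexity/PL-type condition on $f$ it yields the contraction weighted element-wise by $F(W_t)$, which under the softbound model scales as $1 - \|W_t\|_\infty^2/\tau_{\max}^2$.

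The central obstacle is the asymmetric bias term $-|\alpha\nabla f(W_t;\xi_t)|\odot G(W_t)$. I would exploit the device-level structural inequality $|G(w)| \leq (\|w\|_\infty/\tau_{\max})\,F(w)$, which follows from the softbound parameterization of $q_{\pm}$, so that after factoring out $F(W_t)$ the bias is controlled by $|\nabla f(W_t;\xi_t)|\cdot \|W_t\|_\infty/\tau_{\max}$. Young's inequality $ab \leq \epsilon a^2 + b^2/(4\epsilon)$ then splits this into a quadratic-in-gradient piece that is absorbed into the descent term with a small $\epsilon$, plus a residual proportional to $\mathbb{E}\|\nabla f(W_t;\xi_t)\|^2\cdot \|W_t\|_\infty^2/\tau_{\max}^2$. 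The subtlety is that $|\nabla f(W_t;\xi_t)|$ is nonlinear in the sample noise, so its expectation cannot simply be replaced by $|\nabla f(W_t)|$; I would therefore bound directly in second moment via $\mathbb{E}\|\nabla f(W_t;\xi_t)\|^2 \leq \|\nabla f(W_t)\|^2 + \sigma^2$. After dividing by $F(W_t)$ to normalize the descent step, the $\sigma^2$ contribution from this inequality produces precisely the $4\sigma^2 S_T$ term appearing in the bound.

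Finally I would telescope the one-step inequality over $t=0,\dots,T-1$, divide by $T$, and use strong convexity or PL to convert the accumulated function-gap $\frac{\alpha}{T}\sum_t(f(W_t)-f^*)$ into $\frac{1}{T}\sum_t\|W^*-W_t\|^2$; the $1/F(W_t)$ factors left over from clearing the element-wise scaling aggregate into $R_T = \frac{1}{T}\sum_t 2L/(1-\|W_t\|_\infty^2/\tau_{\max}^2)$. Choosing $\alpha = \Theta(\sqrt{2(f(W_0)-f^*)/(\sigma^2 T)})$ balances the initialization gap $(f(W_0)-f^*)/(\alpha T)$ against the noise variance $\alpha\sigma^2$ and yields the leading $R_T\sqrt{(f(W_0)-f^*)\sigma^2/T}$ rate. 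The accumulated quantization noise $T\cdot\Theta(\alpha\Delta w_{\min})$, under the same $1/(\alpha T)$ normalization, leaves the irreducible floor $R_T\Delta w_{\min}$. The hardest step is the decoupling of $|\nabla f(W_t;\xi_t)|$ from the sample noise via a worst-case second-moment bound; the remaining steps essentially follow the Analog SGD template of \cite{wu2024towards,wu2025analog}, with the novel ingredient being the quantization-induced residual contributed by Lemma \ref{lemma:pulse-update-error}.
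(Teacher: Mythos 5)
Your proposal takes a genuinely different route from the paper's. The paper does not run a distance recursion for this theorem: it applies the $L$-smoothness descent lemma to $f(W_t)$, completes the square in \eqref{inequality:analog_sgd_converge_T2}, and combines the symmetric descent with the \emph{deterministic} part of the asymmetric bias exactly, with no Young step, into a single negative term $-\tfrac{\alpha}{2F_{\max}}\|\nabla f(W_t)\|^2_{H(W_t)}$ where $H = F^{\odot 2}-G^{\odot 2} = q_+\odot q_-$ (see \eqref{inequality:ASGD-converge-linear-3}); only the stochastic part of the bias is split off, producing the $\sigma^2\|G/\sqrt{F}\|_\infty^2$ terms that become $4\sigma^2 S_T$, and the squared-distance bound is recovered only at the very end via strong convexity. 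Your plan instead runs the Lyapunov recursion on $\|W_t-W^*\|^2$ directly (closer in spirit to the paper's lower-bound proof and to its Lemma~\ref{lemma:TT-WN-descent}), which matches the LHS of the statement more naturally, but it relocates the difficulty.

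The gap is in your absorption step for the asymmetric bias. In the distance recursion the bias enters through $\langle W_t-W^*,\ |\nabla f(W_t;\xi_t)|\odot G(W_t)\rangle$, and because this inner product involves a \emph{different} pair of vectors than the descent inner product, the exact $F^2-G^2$ cancellation available in the function-value route does not exist; after your Young split, the piece $\tfrac{1}{\epsilon}\tfrac{\|W_t\|_\infty^2}{\tau_{\max}^2}\,\|\nabla f(W_t)\|^2$ must be absorbed either into the co-coercivity surplus $-\tfrac{2\alpha}{\mu+L}\|\nabla f(W_t)\|^2$ or, via $\|\nabla f(W_t)\|^2\le L^2\|W_t-W^*\|^2$, into the $-\Theta(\alpha\mu)\|W_t-W^*\|^2$ contraction. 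Either way this only goes through under an additional, condition-number-dependent saturation restriction of the form $\|W_t\|_\infty^2/\tau_{\max}^2 \lesssim \mu L/(\mu+L)^2$, which the stated theorem does not assume; the paper hits exactly this obstacle in its own distance-based analysis and has to impose $C_{k,-}\ll C_{k,+}$, i.e.\ $(\mu+L)C_{k,-}^2/C_{k,+}^2\le \mu L/(2(\mu+L))$, right after \eqref{inequality:TT-convergence-P-S1-T1-T2}. Claiming the absorption works ``with a small $\epsilon$'' unconditionally is therefore not correct as stated: you must either add such a condition or adopt the function-value route where the exact weighted-norm combination makes it unnecessary. Two smaller points: for the softbound/ALD model $F\equiv 1$, so the factor $1-\|W_t\|_\infty^2/\tau_{\max}^2$ is the scale of $H=F^{\odot 2}-G^{\odot 2}$, not of $F(W_t)$ as you assert; and your handling of the quantization noise (zero-mean cross term, per-step variance $\Theta(\alpha\Delta w_{\min})$, yielding the $R_T\Delta w_{\min}$ floor after normalizing by the step size) does match the paper and is fine.
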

Theorem \ref{theorem_Convergence of_Analog_SGD_short} suggests that the average squared Euclidean distance between \( W_t \) and \( W^* \) is upper bounded by the sum of three terms: the first term vanishes at a rate of 
\(
\mathcal{O}(\sqrt{\sigma^2 /T})
\),
which also appears in the digital SGD’s convergence bound; the second and third terms contribute to the \emph{asymptotic error} of Analog SGD, which does not vanish as the number of iterations \(T\) increases. 
Intuitively, the second term arises from the absolute gradient term $|\alpha \nabla f(W_t,\xi_t)|$ in   \eqref{biased-update_sgd}, which introduces variance scaling as $\alpha \sigma^2$. The third term originates from the quantization noise $\zeta_t$, which has variance $\Theta(\alpha \Delta w_{\min})$. In the convergence analysis, after normalizing by the descent coefficient $\alpha$, these two terms result in residual errors of order $\sigma^2$ and $\Delta w_{\min}$, respectively. In contrast, in digital SGD, the variance of sample noise scaling as  $\alpha^2 \sigma^2$ vanishes under diminishing learning rates. We next present a matching lower bound.
\begin{restatable}[Lower bound of the error of Analog SGD, short version]{theorem}{ThmOneTileAsymptoticError}
\label{theorem:one-tile-asymptotic-error} 
Under a set of mild assumptions, if the learning rate $\alpha = \frac{1}{2L}$, there exists an instance where \textit{Analog SGD} generates a sequence $\{W_t\}_{t=0}^{T-1}$ such that the iterates converge to a neighborhood of the optimal solution \( W^* \), satisfying:
\[
\frac{1}{T} \sum_{t=0}^{T-1} \mathbb{E}[\|W^* - W_t\|^2]\geq  \Omega( \sigma^2S_T+ R_T\Delta w_{\min}).
\]
\vspace{-1.2em}
\end{restatable}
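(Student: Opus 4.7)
The plan is to demonstrate tightness of the asymptotic error in Theorem~\ref{theorem_Convergence of_Analog_SGD_short} by exhibiting a concrete instance whose Analog SGD iterates exhibit a stationary squared deviation matching $\Omega(\sigma^2 S_T+R_T\Delta w_{\min})$. Because the two residual terms trace to two distinct mechanisms in \eqref{biased-update_sgd}, namely the asymmetric drift $|\alpha\nabla f|\odot G(W_t)$ activated by gradient noise and the quantization noise $\zeta_t$, a single instance must excite both simultaneously. I would specialize to a one-dimensional strongly convex quadratic $f(w)=\tfrac{L}{2}w^2$ (so $w^*=0$) paired with the softbound device $q_+(w)=1-w/\tau_{\max}$, $q_-(w)=1+w/\tau_{\max}$, which yields $F(w)\equiv 1$ and $G(w)=w/\tau_{\max}$. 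For the stochastic gradient I would take the two-point noise $\epsilon_t\in\{+\sigma,-\sigma\}$ uniformly, so that $\mathbb{E}[\epsilon_t]=0$, $\mathbb{E}[\epsilon_t^2]=\sigma^2$, and crucially $\mathbb{E}[|\epsilon_t|]=\sigma$; the last identity is essential, as it keeps the asymmetric drift $\mathbb{E}[|g_t|]\,w_t/\tau_{\max}$ active even when $w_t$ approaches $w^*=0$.

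Under this instance~\eqref{biased-update_sgd} collapses to
\[
w_{t+1}=w_t-\alpha g_t-\alpha|g_t|\tfrac{w_t}{\tau_{\max}}+\zeta_t,\qquad g_t=Lw_t+\epsilon_t,
\]
with $\mathrm{Var}(\zeta_t)=\Theta(\alpha\Delta w_{\min})$ from Lemma~\ref{lemma:pulse-update-error}. Next I would form the one-step variance recursion $\mathbb{E}[w_{t+1}^2\mid w_t]$. In the regime $|Lw_t|\le\sigma$, which is consistent with $\alpha=1/(2L)$ and holds at stationarity, the sign of $g_t$ matches that of $\epsilon_t$, so the moments $\mathbb{E}[g_t]$, $\mathbb{E}[g_t^2]$, $\mathbb{E}[|g_t|]$, $\mathbb{E}[g_t|g_t|]$ admit closed forms under the two-point distribution; in particular the cross moment $\mathbb{E}[g_t|g_t|]=2Lw_t\sigma$ is the key non-vanishing coupling between drift and noise. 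Solving for the stationary second moment and dividing the variance injection by the effective contraction, the piece carrying $\sigma^2\cdot w_t^2/\tau_{\max}^2$ produces a floor of order $\sigma^2\cdot\frac{w_t^2/\tau_{\max}^2}{1-w_t^2/\tau_{\max}^2}$ once one accounts for the boundary-induced slowdown of the contraction as $w_t\to\tau_{\max}$, exactly matching the summand of $S_T$; independently the $\zeta_t$ term produces a floor of order $\Delta w_{\min}/(1-w_t^2/\tau_{\max}^2)$, matching the summand of $R_T\Delta w_{\min}$ after absorbing the factor $L$ through $\alpha=1/(2L)$. Averaging these pointwise floors over $t=0,\dots,T-1$ and taking outer expectation delivers the claimed $\Omega(\sigma^2 S_T+R_T\Delta w_{\min})$.

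The main obstacle is the absolute value $|g_t|$, which makes the second-moment recursion nonlinear and forces a piecewise treatment through the cross term $\mathbb{E}[g_t|g_t|]$; the two-point noise is chosen precisely to keep this casework finite, and extending beyond it would require a coupling argument that uniformly lower-bounds $\mathbb{E}[|\epsilon|]$ across symmetric noises of variance $\sigma^2$. A secondary subtlety is recovering the correct $1/(1-w_t^2/\tau_{\max}^2)$ boundary singularity: the plain softbound model contributes contraction $\Theta(\alpha L)$ near $w_t=0$, so the singular denominator emerges either from the $O(\alpha^2)$ correction to the contraction as $w_t$ approaches the boundary, or by allowing a slightly richer device response such as $F(w)=1-w^2/\tau_{\max}^2$ that is still consistent with the assumptions used in the upper bound. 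Finally, matching the leading constant inside $\Omega(\cdot)$, rather than only the scaling in $T$, $\sigma^2$, and $\Delta w_{\min}$, hinges on the tightness of the Jensen-type inequalities used in Theorem~\ref{theorem_Convergence of_Analog_SGD_short}; this should be attainable by direct substitution since the two-point noise saturates the bound $\mathbb{E}[|\epsilon|]\le\sqrt{\mathbb{E}[\epsilon^2]}$.
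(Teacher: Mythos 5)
Your construction differs from the paper's in two ways that turn out to be load-bearing, and as written the argument has real gaps. The paper (proof of Theorem~\ref{theorem1_long}) also uses a quadratic on the asymmetric linear device \eqref{eq:analog_lowerbound}, but with a \emph{state-dependent} two-point noise \eqref{noise_distribution}, $p_t=\tfrac12(1-w_t/\tau_{\max})$, and an optimum $w^*$ that is \emph{not} at the symmetric point. That design does three jobs at once: it forces $\nabla f(W_t)+\varepsilon_t^+\ge 0$ and $\nabla f(W_t)+\varepsilon_t^-\le 0$ for \emph{every} state, so the absolute values in $|\nabla f(W_t;\xi_t)|$ resolve deterministically (no casework region to police); the cross-moments then produce the factor $\sqrt{p_t(1-p_t)}=\tfrac12\sqrt{1-w_t^2/\tau_{\max}^2}$, which is exactly where the contraction coefficient $2\alpha L(1-\alpha L/2)(1-\|W_t\|_\infty^2/\tau_{\max}^2)$ in \eqref{eq:43} — and hence the $1/(1-\|W_t\|_\infty^2/\tau_{\max}^2)$ in the per-step bound \eqref{eq:44} — comes from, not from any "boundary-induced slowdown" or from modifying $F$; and, combined with initializing at $W^*$ and the monotonicity result imported from prior work, it yields a \emph{pointwise-in-$t$} lower bound on $\|W_t-W^*\|^2$ that can simply be averaged over $t$. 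Your symmetric $\pm\sigma$ noise with $w^*=0$ gives up all three: the sign of $g_t$ is only controlled on the event $|Lw_t|\le\sigma$, which you assume "at stationarity" but never enforce (with quantization noise of variance $\Theta(\alpha\Delta w_{\min})$ the excursions can leave that region, and then your closed forms for $\mathbb{E}|g_t|$ and $\mathbb{E}[g_t|g_t|]$ are wrong); placing $w^*$ at the symmetric point makes the asymmetric term purely contractive toward the optimum, so the $\sigma^2 S_T$ part of the bound is only met because $S_T$ itself is tiny on your trajectory, not because your analysis exhibits the asymmetry-induced floor; and since $\mathbb{E}[w_{t+1}^2\mid w_t]>w_t^2$ near $w_t=0$, the monotonicity trick that converts the one-step recursion into a per-step lower bound is unavailable to you, so your "divide injection by contraction" step is a stationary heuristic, not a finite-$T$ bound on $\frac1T\sum_t\mathbb{E}\|W_t-W^*\|^2$ (you would still need to control burn-in and the fact that $S_T,R_T$ are trajectory-dependent random quantities).

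Concretely, to repair the proposal along your own lines you would need (i) a mechanism guaranteeing sign-resolution of $|g_t|$ for all reachable states — this is precisely what the paper's $w_t$-dependent probabilities buy, and a fixed $\pm\sigma$ law does not; (ii) an instance in which the optimum is displaced from the symmetric point so that the $\sigma^2 S_T$ term is matched for a non-trivial reason; and (iii) a rigorous route from the one-step recursion to the time-averaged statement, e.g.\ the paper's combination of $W_0=W^*$ with the non-increase of $\mathbb{E}\|W_t-W^*\|^2$, or an explicit telescoping/stationarity argument with quantified burn-in. Your secondary worries are misplaced in both directions: the $1/(1-w_t^2/\tau_{\max}^2)$ factor needs no special device (it falls out of the contraction coefficient in the paper's algebra, and is an $O(1)$ factor anyway once the iterates are kept away from $\tau_{\max}$), and no coupling argument over general noise laws is needed, since the theorem only asks for a single instance.
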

The full versions of Theorem \ref{theorem_Convergence of_Analog_SGD_short} and \ref{theorem:one-tile-asymptotic-error} together with their proofs are deferred to Appendix \ref{section:Analog_Stochastic_Gradient_Descent_Convergence}.
 These theoretical insights underscore the importance of addressing quantization noise, which stands as a key obstacle to fully realizing the potential of analog neural network training.

\section{Multi-tile Residual Learning on  Non-Ideal Hardware}\label{sec.3}
\begin{wrapfigure}{r}{0.4\linewidth}
    \centering 
    \vspace{-1em}
\includegraphics[width=0.9\linewidth]{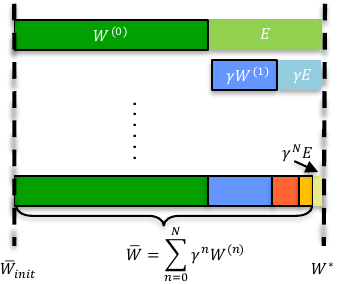}
\vspace{-0.5em}
\caption{Illustration of residual learning intuition. $\overline{W}_{\textit{init}}$ denotes the initial composite weight where $W^{(0)}$ is randomly initialized and all remaining tiles are 0.
}    \label{fig:demo_of_residual}
    \vspace{-4.5em}
\end{wrapfigure}
As discussed in Section~\ref{sec:Training_Dynamics}, single-tile training on non-ideal AIMC hardware inevitably results in a non-vanishing error. We propose a multi-timescale residual learning strategy for non-ideal AIMC hardware, where each additional scaled tile iteratively corrects the \emph{asymptotic residual error} left by the preceding  lower-resolution tiles due to limited conductance states and asymmetric updates.

\subsection{Multi-tile residual learning formulation}
Denote the weights stored on a single analog tile at iteration $t$ by $W^{(0)}_t$, the optimal weight by $W^*$, and the non-vanishing error as
\(
E := \lim_{t \to \infty}  W^* - W_t^{(0)}. 
\)
To mitigate this error, 
we introduce a second analog tile $W^{(1)}$, scaled by a factor $\gamma$, to iteratively compensate for it. As illustrated in Figure~\ref{fig:demo_of_residual}, rather than directly approximating $E$, the second tile approximates a scaled target $E/\gamma$. Although $W^{(1)}$ still suffers from similar device non-idealities and incurs a non-vanishing error when tracking its target (i.e., 
$
\lim_{t \to \infty}E/\gamma - W^{(1)}_{t} = E
$),
the combined output of the two tiles nonetheless converges to a smaller residual:
\begin{align}
   & \quad \lim_{t \to \infty}W^* - (W_t^{(0)} + \gamma W_t^{(1)}) = \lim_{t \to \infty}(W^* - W_t^{(0)}) - \gamma W_t^{(1)}= \lim_{t \to \infty}E - \gamma W_t^{(1)}
= \gamma E.  \nonumber
\end{align}
This shows that the use of an additional tile reduces the asymptotic residual by a factor of
$\gamma$. Extending this idea further, we introduce $N$ more analog tiles $W^{(1)}, \dots, W^{(N)}$, each tile $W^{(n)}$ is  scaled by a geometric factor $\gamma^n$. We define the geometric sum of the first $n$ tiles as
$
\overline{W}^{(n)} := \sum_{n'=0}^{n-1} \gamma^{n'} W^{(n')}, n \in [N],
$
so that the residual left by the first $n$ tiles is $W^* - \overline{W}^{(n)}$. We define the local optimal point for tile ${W}^{(n)}$ as
$
P_n^*(\overline{W}^{(n)}) := \arg\min_{P_n} f(\overline{W}^{(n)} + \gamma^n P_n).
$
Assuming that $f(\cdot)$ is strongly convex with a unique minimizer $W^*$, the optimal solution is
$
P_n^*(\overline{W}^{(n)}) = \gamma^{-n}(W^* - \overline{W}^{(n)}), \text{with } P_0^* := W^*.
$
To optimize each tile $W^{(n)}$, we minimize the objective
$
 \| W^{(n)} - P_n^*(\overline{W}^{(n)})\|^2,
$
so that $\gamma^n W^{(n)}$ approximates the residual left by the first $n$ tiles. Applying this process to all tiles finally yields an exponentially reduced error between the composite weight $\overline{W}$ and the optimal weight $W^*$.
Formally, we solve the multi-layer problem as:
\begin{subequations} \label{eq:nested_problem}
\begin{align}
W^{(0)} &:= \arg\min_{U_{0}}   \| U_{0}  - P_0^*  \|^2,\quad  P_0^* := W^*, \tag{4a} \\
W^{(1)} &:= \arg\min_{U_{1}}   \| U_{1}  - P_1^*(\overline{W}^{(1)})  \|^2, 
\quad \text{s.t. } P_1^*(\overline{W}^{(1)}) := \arg\min_{P_1} f(\overline{W}^{(1)} + \gamma P_1), \tag{4b}\\
&\ldots \notag\\
W^{(N)} &:= \arg\min_{U_{N}}    \| U_{N} - P_N^*(\overline{W}^{(N)}) \|^2, 
\quad \text{s.t. } P_N^*(\overline{W}^{(N)})  := \arg\min_{P_N} f(\overline{W}^{(N)} + \gamma^N P_N)\tag{4c}
\end{align}
\end{subequations}
where  $U_{n}, P_n \in \mathbb{R}^{D \times D}$ for $n \in \{0, \ldots, N\}$. 
\begin{figure}[t]
     \vspace{-0.5 em}
  \centering
\includegraphics[width=0.9\linewidth]{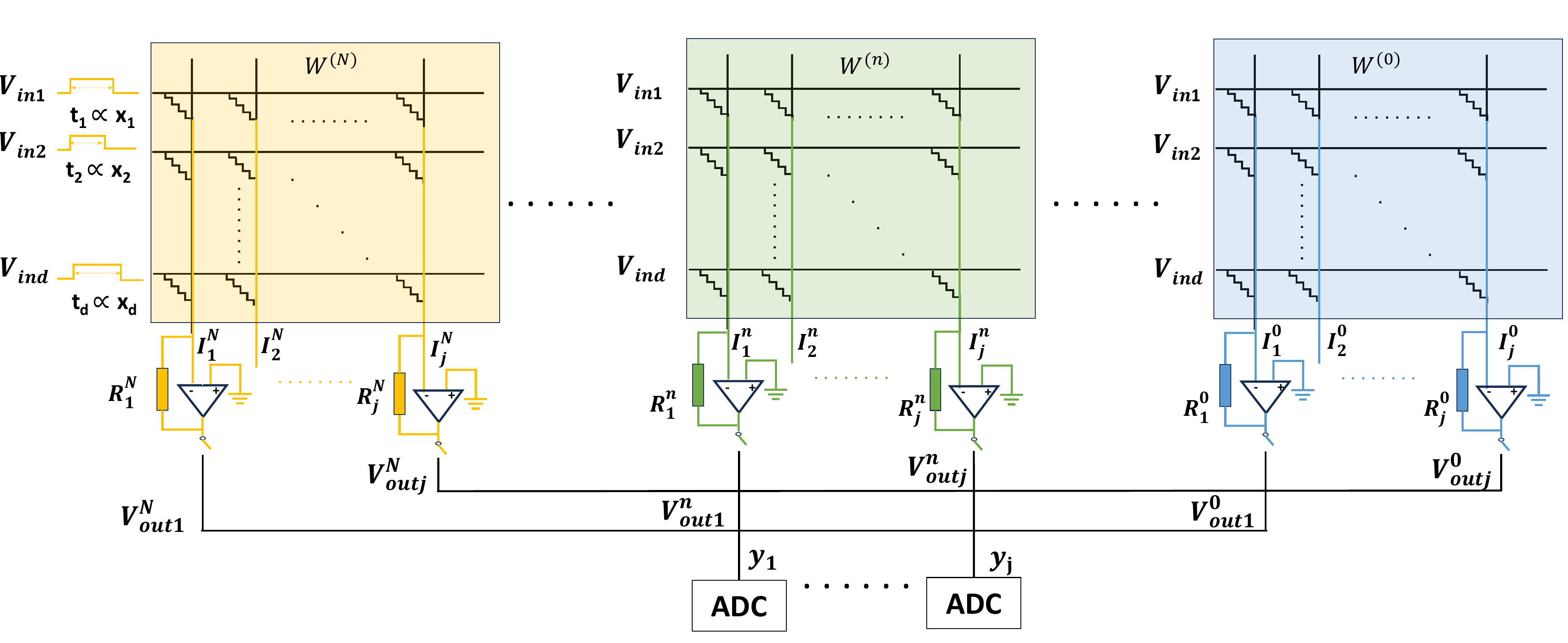}
      \vspace{-0.7 em}
    \caption{Analog circuit implementation of forward process using composite weight~$\overline{W}$.}
      \vspace{-1em} \label{fig:analog_forward_practice}
\end{figure}
\subsection{Multi-tile gradient-based update}
The optimization problem is challenging because the drifting optimum $P_n^*(\overline{W}^{(n)})$ is an implicit function of $\overline{W}^{(n)}$. To decouple this dependency when optimizing $W^{(n)}$, we freeze tiles $\{W^{(0)}, \ldots, W^{(n-1)}\}$ to ensure that $\overline{W}^{(n)}$ remains fixed.
To optimize each tile $W^{(n)}$ in the~\eqref{eq:nested_problem}, we aim to update via an approximate descent direction of its objective $\| W^{(n)} - P^*_n(\overline{W}^{(n)}) \|^2$ . The negative gradient of the objective is :
\begin{align}
    \label{eq:tile_gradient}
    -\nabla_{W^{(n)}} \| W^{(n)} - P^*_n(\overline{W}^{(n)}) \|^2 = 2 (P^*_n(\overline{W}^{(n)}) - W^{(n)})
\end{align}
which implies that $P^*_n(\overline{W}^{(n)}) - W^{(n)}$  is the descent direction to update $W^{(n)}$.  
 For $n = N$,
since
$$P^*_N(\overline{W}^{(N)}) - W^{(N)}= \gamma^{-N}(W^* - \overline{W}^{(N)})- W^{(N)} = \gamma^{-N}(W^* -\sum_{n=0}^{N} \gamma^{n} W^{(n)})=\gamma^{-N}(W^* - \overline{W})$$ and 
$\mathbb{E}_{\xi} [-\nabla f(\overline{W}; \xi)] = -\nabla f(\overline{W}) \propto W^* - \overline{W}$ under strong convexity assumptions, we directly use the stochastic gradient $-\nabla f(\overline{W}; \xi)$ as the descent direction to update $W^{(N)}$.
For $n \in [N-1]$, since the optimization on $W^{(n+1)}$ (see   \eqref{eq:nested_problem}) ensures that $W^{(n+1)} \approx P^*_{n+1}(\overline{W}^{(n+1)}) = \gamma^{-1}(P^*_n(\overline{W}^{(n)}) - W^{(n)})$, we use $W^{(n+1)}$ to update $W^{(n)}$.
Following the update rule in \eqref{biased-update}, for tile $W^{(N)}$, the update is given by:
\begin{align}
\label{eq:TT-update-n-I}
W_{t+1}^{(N)} &= W_{t}^{(N)} 
 - \alpha \nabla f(\overline{W}_{t}; \xi_{t}) \odot F\bigl(W_{t}^{(N)}\bigr) 
 - \bigl|\alpha \nabla f(\overline{W}_{t}; \xi_{t})\bigr| \odot G\bigl(W_{t}^{(N)}\bigr)
 + \zeta_{t}.
\end{align} 
For  $W^{(n)}$, $n \in [N-1]$, the update is given by:
\begin{align}
\label{eq:TT-update-n-II}
W_{t_n+1}^{(n)} &= W_{t_n}^{(n)} 
 + \beta  \tilde{W}^{(n+1)} \odot F\bigl(W_{t_n}^{(n)}\bigr) 
 - \bigl|\beta  \tilde{W}^{(n+1)}\bigr| \odot G\bigl(W_{t_n}^{(n)}\bigr)
 + \zeta_{t_n}
\end{align}
where $\beta$ is the learning rate, and the transferred weight is defined as $\tilde{W}^{(n+1)} := W^{(n+1)}_{t_{n+1} + T_{n+1} - 1}$. 
We show in Section \ref{Sec:Theorem} that each tile $W^{(n)}$ requires an inner loop of $T_n = \Theta(\gamma^{-1})$ steps to converge to its optimum $P_n^*(\overline{W}^{(n)})$. We thus adopt a multi-timescale training schedule to coordinate these updates,
with each tile $W^{(n)}$ maintains a local step counter $t_n = \lfloor (t + 1)/\prod_{n'=n+1}^{N} T_{n'} \rfloor$. A detailed algorithm is provided in Algorithm \ref{alg:bit-slice-tt-switch}.

\begin{remark}
The optimization problem in \eqref{eq:nested_problem} resembles the STSA framework \citep{shen2022single}, where each sequence tracks a drifting optimum that evolves with the updates of other sequences, denoted as $y^{n,*}(y^{n-1})$ in STSA and $P^*_n(\overline{W}^{(n)})$ in our setting.
However, directly applying STSA to our problem encounters two main difficulties:
\textbf{C1)} STSA relies on rapid convergence of each sequence to its drifting optimum via a single update step, but 
in our composite weight structure, a single-step  update on \(W^{(n-1)}\) causes $P_n^*(\overline{W}^{(n)})$ to drift approximately \(\Theta(\gamma^{-1})\) times faster than a single-step update on \(W^{(n)}\); 
\textbf{C2)} STSA considers that $y^{n,*}(y^{n-1})$ depends only on sequence $y^{n-1}$, while our scenario involves \(P^*_{n}(\overline{W}^{(n)})\) depending on multiple sequences \(\overline{W}^{(n)}\).
\end{remark}
 \begin{remark}
Our update dynamics for each tile naturally support an open-loop transfer process. As shown in \eqref{eq:tile_gradient}, what needs to be propagated is only the descent direction of each tile rather than its exact weight value. This eliminates the need for closed-loop tuning, thereby reducing control overhead and highlighting the hardware efficiency of our algorithm.
\end{remark}

\subsection{Analog circuit implementation}
\label{sec:Analog_Circuit_Implementation}
Figure~\ref{fig:analog_forward_practice} illustrates how the composite weight \( \overline{W} \) is formed in the analog domain by combining low-precision tiles \( W^{(n)} \in \mathbb{R}^{D \times D} \), \( n \in \{0,\ldots, N\} \).
 For the forward pass \( y = x^{\top} \overline{W} \), each input \( x_d \) is encoded by a voltage pulse on the \( d \)-th row of each tile \( W^{(n)} \), with duration proportional to \( x_d \).
 By Ohm’s and Kirchhoff’s laws, each tile produces a current of the $j$-th column as $
I_j^{n}  = \sum_{d=1}^D W_{d,j}^{(n)} x_d$. 
Each \( I_j^n \) is fed into an inverting op-amp with feedback resistor \( R_j^n \) to apply scaling \( \gamma^n \), yielding \( V_{\text{out}j}^n = -R_j^n I_j^n \). The voltage outputs are summed in hardware to produce the final result \( y_j \) as: 
\begin{align}
    y_j  \propto  \sum_{n=0}^{N} V_{\text{outj}}^{n}
= -\sum_{n=0}^{N} R_j^{n} I_j^{n}
= -\sum_{n=0}^{N} R_j^{n}  \sum_{d=1}^D W_{d,j}^{(n)} x_d
= -\sum_{d=1}^D \overline W_{d,j} x_d.
\nonumber \end{align}
A similar operation is used during the backward pass, where the output is given by
$\delta = \overline W^{\top} \delta'$, with $\delta'$ denoting the error signal propagated from the next layer. 
Table~\ref{tab:weight-update-comparison} compares the computational complexity and estimated update latency of MP, Analog SGD, TT-v2, and our method. Our algorithm achieves a low update latency, upper bounded by 95.9 ns even with an infinite number of tiles, which is over 30$\times$ faster than MP, while requiring only $\mathcal{O}(2D)$ digital storage for the input $x$ and error $\delta$ and $\mathcal{O}(1)$ digital memory operations for reading and writing, which is comparable to the cost of Analog SGD.
Please see Appendix~\ref{appendix:complexity_analysis}, where we discuss the feasibility of circuit-level implementation and present a detailed comparison of digital storage, runtime, energy, and area costs across algorithms.

\section{Stochastic Approximation Theory for Residual Learning}
\label{Sec:Theorem}
In this section, we present a proof sketch for the convergence of our proposed multi-timescale residual learning algorithm. Before analyzing the algorithm, we introduce four assumptions concerning the objective function, the gradient noise, and the device response characteristics.
\begin{assumption}[Unbiasness and bounded variance]
    \label{assumption:noise}
    The sample $\xi_t$ is independently sampled from a distribution $\ccalD$, $\forall t\in[T]$, and the stochastic gradient is unbiased with bounded variance, i.e., $\mbE_{\xi_t}[\nabla f(W_t;\xi_t)] = \nabla f(W_t)$ and $\mbE_{\xi_t}[\|\nabla f(W_t;\xi_t)-\nabla f(W_t)\|^2]\le\sigma^2$.  
\end{assumption}
\begin{assumption} [Smoothness and strong convexity]
\label{assumption:Lip}
  $f(W)$ is $L$-smooth and $\mu$-strongly convex. 
\end{assumption}
\begin{assumption}[Bounded weights]
\label{assumption:bounded-saturation}
The weights are bounded as $\|W_t\|_\infty \leq W_{\max} \leq \tau_{\max}$ for all $t$.
\end{assumption}
\begin{table}[t]
\small
  \centering
  \vspace{-1em}
  \setlength{\tabcolsep}{4pt}
\renewcommand{\arraystretch}{1.2}
  \begin{tabular}{c c c | c | c  c c}
    \toprule
    \textbf{Dataset} & \textbf{TT-v1} & \textbf{TT-v2} & \textbf{MP}
      & \textbf{Ours (3 tiles)} & \textbf{Ours (4 tiles)} & 
      \textbf{Ours (6 tiles)} \\
    \midrule
    Fashion-MNIST  (\#4)
          & 10.01{\tiny$\pm$0.07} 
      & 47.51{\tiny$\pm$0.91} 
      & 75.61{\tiny$\pm$0.69} 
      & 68.09{\tiny$\pm$0.49}  
      & 73.35{\tiny$\pm$0.13}
      & 75.11 {\tiny$\pm$0.07}
      \\
   MNIST  (\#10)
      & 78.65{\tiny$\pm$2.36} 
      & 95.43{\tiny$\pm$0.17}
      & 99.13{\tiny$\pm$0.02} 
      & 95.07{\tiny$\pm$0.35} 
      & 97.10{\tiny$\pm$0.17} 
      & 98.53{\tiny$\pm$0.09} \\
    \bottomrule
  \end{tabular}
   \vspace{-0.5em}
\caption{
Test accuracy on MNIST and Fashion-MNIST with analog LeNet-5 under 10 and 4 states. 
Compared methods include MP, TT-v1, TT-v2 and different versions of our algorithm.
}
\vspace{-1em}
\label{tab:num_states_comparison}
\end{table}
\begin{table}
\centering
\small
\setlength{\tabcolsep}{6pt}
\renewcommand{\arraystretch}{1.2}
\begin{tabular}{ccc|c|ccccc}
\toprule
\textbf{Dataset} & \textbf{TT-v1} & \textbf{TT-v2} & \textbf{MP} & \textbf{Ours (4 tiles)} & \textbf{Ours (6 tiles)} & \textbf{Ours (8 tiles)} \\
\midrule
CIFAR-10 (\#4) & 11.65{\tiny$\pm$0.68} & 87.43{\tiny$\pm$0.10} & 93.31{\tiny$\pm$0.04} & 90.45{\tiny$\pm$0.28} & 92.02{\tiny$\pm$0.14} & 90.65{\tiny$\pm$0.09} \\
CIFAR-100 (\#4) & 
9.75{\tiny$\pm$2.47} & 11.28{\tiny$\pm$0.62} & 70.25{\tiny$\pm$0.61} & 63.26{\tiny$\pm$0.45} & 68.46{\tiny$\pm$0.23} & 
69.63{\tiny$\pm$0.40}\\
CIFAR-10 (\#16)& 58.00{\tiny$\pm$0.26} & 84.30{\tiny$\pm$0.09} & 95.04{\tiny$\pm$0.05} & 92.98{\tiny$\pm$0.62} & 93.29{\tiny$\pm$0.64} & 
94.36{\tiny$\pm$0.08}\\
CIFAR-100 (\#16) & 26.69{\tiny$\pm$0.17} & 65.17{\tiny$\pm$0.36} & 73.16{\tiny$\pm$0.07} & 68.11{\tiny$\pm$0.35} & 69.62{\tiny$\pm$0.53} &
72.20{\tiny$\pm$0.11} \\
\bottomrule
\end{tabular}
\vspace{-0.5 em}
\caption{
Test accuracy on CIFAR-10  and CIFAR-100 under 4 and 16 conductance states on ResNet-34. Compared methods
include MP, TT-v1, TT-v2 and different versions of our algorithm.
}
\label{tab:num_states_comparison_cifar_models}
\end{table}
\begin{assumption}[Response factor and zero shifted symmetric point]
\label{assumption:pulse-response-symmetry} 
\textbf{(Continuity)} \( q_+(\cdot) \) and \( q_-(\cdot) \) are continuous; 
\textbf{(Saturation)} \( q_+(\tau_{\max}) = 0 \), \( q_-(\tau_{\min} ) = 0 \); 
\textbf{(Positive-definiteness)} \( q_+(w) > 0 \) for all \( w < \tau_{\max} \), and \( q_-(w) > 0 \) for all \( w > \tau_{\min} \); 
\textbf{(Symmetric point)} \( G(w) = 0 \) if and only if \( w = 0 \).
\end{assumption}
\vspace{-0.5em}
Assumptions \ref{assumption:noise}--\ref{assumption:Lip} are standard in convex optimization \citep{bottou2018optimization}.  Assumption \ref{assumption:bounded-saturation} assumes that $W_t$ remains within a small region, which is a mild condition that generally holds in practice. 
Assumption \ref{assumption:pulse-response-symmetry} defines the response function class observed in resistive devices \citep{wu2025analog} and adopts the widely used zero-shifted symmetric point in analog training \citep{gokmen2020algorithm}. 
We begin by presenting Lemmas \ref{lemma:2} and \ref{lemma:3}, which describe how each tile tracks its drifting optimum within an inner loop and serve as the basis for our convergence analysis. Their full proofs are given in Appendix \ref{section:proof-TT-convergence-scvx}. For notational convenience, here we write $P^*_{n}(\overline{W}^{(n)}_{t_{n-1}}) =: P^*_{n}$. 
\begin{lemma}[Descent lemma of the main sequence \(W^{(N)}\)]
   \label{lemma:2}
    Under Assumptions~\ref{assumption:noise}--\ref{assumption:pulse-response-symmetry},
 the update dynamics~\eqref{eq:TT-update-n-I} ensures that after a single inner loop of length \(T_N\), the expected distance between \(W^{(N)}\) and its optimum decreases as:
\begin{align}
\mathbb{E} \bigl[\|W^{(N)}_{t+T_N-1} - P^*_{N}\|^2\bigr]
&\le 
\left(1 - \Theta(\gamma^N)\right)^{T_N} \|W^{(N)}_{t} - P^*_{N}\|^2
+\Theta (\sigma^2 \gamma^{-N}+\gamma^{-\frac{4N}{3}} (\sigma \Delta w_{\min})^{\frac{2}{3}}). \nonumber
\end{align}
\end{lemma}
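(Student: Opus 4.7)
The plan is to analyze the inner-loop dynamics of the main-tile sequence $\{W^{(N)}_t\}$ while the lower-resolution tiles $W^{(0)},\ldots,W^{(N-1)}$, and hence both $\overline{W}^{(N)}$ and the target $P^*_N$, remain frozen throughout the inner loop. I introduce the tracking error
\[
D_t := W^{(N)}_t - P^*_N.
\]
By construction $W^* = \overline{W}^{(N)} + \gamma^N P^*_N$ and $\overline{W}_t = \overline{W}^{(N)} + \gamma^N W^{(N)}_t$, which yields the key identity $\overline{W}_t - W^* = \gamma^N D_t$. This identity converts bounds on $\|\overline{W}_t - W^*\|^2$, available through the $\mu$-strong convexity of $f$, into bounds on $\|D_t\|^2$, and it is the origin of the factor $\gamma^N$ appearing in the effective contraction rate.

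First, I would expand $\|D_{t+1}\|^2$ using the update rule~\eqref{eq:TT-update-n-I} and separate the contributions of (i) the symmetric drift $-\alpha g_t\odot F(W^{(N)}_t)$, (ii) the asymmetric bias $-|\alpha g_t|\odot G(W^{(N)}_t)$, and (iii) the quantization noise $\zeta_t$, where $g_t := \nabla f(\overline{W}_t;\xi_t)$. Taking conditional expectations and applying Assumption~\ref{assumption:noise}, the descent cross-term $\langle D_t, \mathbb{E}[g_t]\odot F(W^{(N)}_t)\rangle$ is rewritten via the key identity as $\gamma^{-N}\langle \overline{W}_t - W^*, \nabla f(\overline{W}_t)\rangle$. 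Combining strong convexity from Assumption~\ref{assumption:Lip} with a positive lower bound on $F(\cdot)$ guaranteed by the positive-definiteness in Assumption~\ref{assumption:pulse-response-symmetry} yields a strongly-convex descent of order $\Theta(\alpha\gamma^N\|D_t\|^2)$, producing the stated one-step contraction factor $1 - \Theta(\gamma^N)$ for a suitably chosen $\alpha$.

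Next, I would control the higher-order terms. The stochastic gradient variance contributes $\Theta(\alpha^2\sigma^2)$, Lemma~\ref{lemma:pulse-update-error} gives quantization variance $\Theta(\alpha\Delta w_{\min})$ per coordinate, and Assumption~\ref{assumption:bounded-saturation} together with Cauchy--Schwarz bounds the asymmetric term $|\alpha g_t|\odot G(W^{(N)}_t)$, where Young's inequality absorbs part of it into the symmetric descent and the remainder contributes to the constant floor. Assembling these pieces produces a one-step recursion of the form
\[
\mathbb{E}\|D_{t+1}\|^2 \le \bigl(1 - c_1\alpha\gamma^N\bigr)\mathbb{E}\|D_t\|^2 + c_2\alpha^2\sigma^2 + c_3\alpha\Delta w_{\min}.
\]
Iterating this $T_N$ times and summing the resulting geometric series gives the desired contraction $(1-\Theta(\gamma^N))^{T_N}\|D_t\|^2$ together with an additive residual in which the gradient-noise contribution scales as $\alpha\sigma^2/\gamma^N$ and the quantization contribution inherits a $\Delta w_{\min}$ dependence that, after a refined analysis of the squared effect of $\zeta_t$ in conjunction with the asymmetric bias, exhibits an $\alpha^{-q}$ dependence. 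The final step is a cube-root (AM--GM) optimization over $\alpha$, which exchanges the naive $\Theta(\sigma^2\gamma^{-2N})$ floor for the tighter geometric-mean form $\Theta(\gamma^{-4N/3}(\sigma\Delta w_{\min})^{2/3})$ and reduces the remaining gradient-noise residual to $\Theta(\sigma^2\gamma^{-N})$, matching the statement.

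The hardest step is the precise accounting of the asymmetric cross-term $|\alpha g_t|\odot G(W^{(N)}_t)$. Unlike the symmetric drift it does not vanish at $D_t = 0$, since $\mathbb{E}|g_t|$ remains of order $\sigma$ even when $\mathbb{E}[g_t]=0$; a crude bound here would leave a $\Theta(\sigma^2\gamma^{-2N})$ error floor that is strictly worse than the lemma claims. Obtaining the tighter $\gamma^{-4N/3}(\sigma\Delta w_{\min})^{2/3}$ requires a delicate splitting of this bias with a small-constant Young inequality so that the dominant part is reabsorbed into the strongly-convex descent via the positive-definiteness of $F$ and the symmetric-point condition on $G$ in Assumption~\ref{assumption:pulse-response-symmetry}, and carrying $\alpha$ as a free parameter throughout the recursion so that the cube-root optimization at the end exchanges the quantization residual for the claimed geometric-mean scaling.
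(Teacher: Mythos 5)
Your overall skeleton matches the paper's: freeze the lower tiles, work with the composite function $\varphi(W^{(N)}) := f(\overline{W}^{(N)} + \gamma^N W^{(N)})$ (your identity $\overline{W}_t - W^* = \gamma^N D_t$ is exactly this), get a one-step contraction $1-\Theta(\alpha\gamma^N)$ from strong convexity (the paper uses the co-coercivity bound for $\gamma^{2N}\mu$-strongly convex, $\gamma^{2N}L$-smooth $\varphi$ together with an element-wise product lemma to handle $F$ and $G$), bound the update-magnitude term via Lipschitzness of the analog increment and Lemma~\ref{lemma:pulse-update-error}, and sum a geometric series over the $T_N$ inner steps. You also correctly isolate the real difficulty: the asymmetric cross-term driven by $\mathbb{E}\bigl[|\nabla f(\overline{W}_t;\xi_t)|\bigr]\odot G(\cdot)$ does not vanish at the optimum and naively leaves a $\Theta(\sigma^2\gamma^{-2N})$ floor.

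The gap is in your proposed fix. In the one-step recursion, the descent term, the asymmetric-noise term, and the quantization term are all \emph{linear} in $\alpha$ (the gradient-variance term alone is $\alpha^2$), so after dividing by the contraction coefficient $\Theta(\alpha\gamma^N)$ the asymmetric-noise floor ($\propto \sigma^2\|G\|_\infty^2\gamma^{-2N}$) and the quantization floor ($\propto \Delta w_{\min}\gamma^{-N}$) are both independent of $\alpha$; there is no trade-off for a ``cube-root optimization over $\alpha$'' to exploit, and your appeal to an unexplained $\alpha^{-q}$ dependence of the quantization contribution (cross terms with the zero-mean, independent $\zeta_t$ vanish in expectation) does not supply one. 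The paper obtains the $\Theta\bigl(\gamma^{-4N/3}(\sigma\Delta w_{\min})^{2/3}\bigr)$ floor through a different degree of freedom: the weight-to-conductance mapping constant $\kappa$. Using $G(0)=0$ and Lipschitz continuity of $G$, it bounds $\bigl\|G(W^{(N)}_t/\kappa)\bigr\|_\infty^2 \le L_G^2 W_{\max}^2/\kappa^2$, so the asymmetric-noise floor scales as $\sigma^2/(\gamma^{2N}\kappa^2)$, while the quantization floor scales with $\kappa$ (since $\Delta w_{\min} = \kappa\,\Delta c_{\min}$ for fixed device granularity). Balancing these two by choosing $\kappa = (\sigma L_G W_{\max})^{1/2}(\gamma^{N}\Delta w_{\min})^{-1/4}$ yields exactly the $2/3$-power geometric mean in the statement, with the remaining $\Theta(\alpha\sigma^2\gamma^{-N}) = \Theta(\sigma^2\gamma^{-N})$ term coming from the symmetric variance. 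Without invoking this (or some equivalent) extra parameter, your argument stalls at the $\Theta(\sigma^2\gamma^{-2N})$ floor you yourself identify as too weak.
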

\begin{lemma}[Descent lemma of the sequences \(W^{(n)}\)]
   \label{lemma:3}
Under the same assumptions as Lemma \ref{lemma:2}, for \(n\in\{0,\ldots, N-1\}\), the update dynamics~\eqref{eq:TT-update-n-II} ensures that:
\begin{align}
\mathbb{E} \bigl[\|W^{(n)}_{t_n+T_n-1} - P^*_{n}\|^2\bigr]
&\le 
\left(1 - \Theta(\gamma)\right)^{T_n} \|W^{(n)}_{t_n} - P^*_{n}\|^2
+ \Theta(\gamma^2\|\Tilde W^{(n+1)} - P^*_{n+1}\|^2
+\Delta w_{\min}). \nonumber
\end{align}
\end{lemma}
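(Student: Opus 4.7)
My approach is to establish a one-step descent inequality for $\|W^{(n)}_{t_n+1} - P^*_n\|^2$ from the update rule~\eqref{eq:TT-update-n-II}, then unroll it over the inner loop of length $T_n$. The central observation is that, because tiles $W^{(0)},\ldots,W^{(n-1)}$ are frozen during the inner loop, $P^*_n$ is constant, and by the nested structure of~\eqref{eq:nested_problem} the ``ideal'' transfer input is $P^*_{n+1}(\overline{W}^{(n+1)}_{t_{n-1}}) = \gamma^{-1}(P^*_n - W^{(n)}_{t_n})$, which points exactly toward $P^*_n$. The transferred $\tilde{W}^{(n+1)}$ is frozen throughout the inner loop (it was produced by tile $n{+}1$'s preceding inner loop), so I would decompose it once as $\tilde{W}^{(n+1)} = P^*_{n+1} + (\tilde{W}^{(n+1)} - P^*_{n+1})$ and treat the second summand as a constant bias whose norm is exactly what appears in the statement.

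\textbf{Per-step analysis.} I would subtract $P^*_n$ from both sides of~\eqref{eq:TT-update-n-II}, take squared Frobenius norms in expectation over $\zeta_{t_n}$, and expand. The cross term against the ideal direction $P^*_{n+1} = \gamma^{-1}(P^*_n - W^{(n)}_{t_n})$ produces a contraction of order $\beta\gamma^{-1}\|W^{(n)}_{t_n} - P^*_n\|^2$, using Assumption~\ref{assumption:pulse-response-symmetry} (positive-definiteness of $F$ in the interior) together with Assumption~\ref{assumption:bounded-saturation} to bound $F(W^{(n)}_{t_n})$ away from zero on the trajectory; a calibration $\beta = \Theta(\gamma^2)$ then yields the advertised per-step rate $1-\Theta(\gamma)$. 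The cross term against the bias $\tilde{W}^{(n+1)} - P^*_{n+1}$ and the quadratic square term are separated via Young's inequality, the quantization noise $\zeta_{t_n}$ contributes an additive $\Theta(\beta\Delta w_{\min})$ by Lemma~\ref{lemma:pulse-update-error}, and the asymmetric correction $-|\beta\tilde{W}^{(n+1)}|\odot G(W^{(n)}_{t_n})$ is controlled through the local linear estimate $|G(w)|\le L_G|w|$ that follows from $G(0)=0$ and continuity of $G$ in Assumption~\ref{assumption:pulse-response-symmetry}.

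\textbf{Unrolling and main obstacle.} With the per-step inequality in hand, iterating $T_n$ times gives $(1-\Theta(\gamma))^{T_n}$ on the initial error, while the perturbation terms form a geometric series; dividing the per-step perturbation by the contraction rate $\Theta(\gamma)$, the residual collapses to $\Theta(\gamma^2\|\tilde{W}^{(n+1)} - P^*_{n+1}\|^2 + \Delta w_{\min})$, matching the claim. The \textbf{main obstacle} is the joint calibration of the learning rate $\beta$, the Young's-inequality splits for both the asymmetric term $G$ and the transfer residual, and the contraction budget: the bilinearity of $|\beta\tilde{W}^{(n+1)}|\odot G(W^{(n)}_{t_n})$ couples $\tilde{W}^{(n+1)}$ and $W^{(n)}_{t_n}$, so a too-aggressive split destroys the contraction while a too-conservative one inflates the $\gamma^2$ coefficient. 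Matching exactly the $1-\Theta(\gamma)$ rate together with the $\gamma^2$ prefactor is the delicate step that the full proof in Appendix~\ref{section:proof-TT-convergence-scvx} has to carry out.
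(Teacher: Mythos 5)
Your overall route is the same as the paper's: expand the one-step squared distance under the update \eqref{eq:TT-update-n-II}, exploit that $P^*_n$ is fixed while the lower tiles are frozen, decompose the transferred weight into the ideal target $P^*_{n+1}$ plus a deviation, use the identity $P^*_{n+1}(\overline{W}^{(n+1)}_{t_n}) = \gamma^{-1}\bigl(P^*_n - W^{(n)}_{t_n}\bigr)$ to turn the ideal part into a contraction, absorb the quantization noise as $\Theta(\beta\Delta w_{\min})$ via Lemma~\ref{lemma:pulse-update-error}, and unroll the geometric series so the per-step perturbation divided by the $\Theta(\gamma)$ contraction gives the $\Theta(\gamma^2\|\tilde W^{(n+1)}-P^*_{n+1}\|^2+\Delta w_{\min})$ floor. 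The calibration $\beta=\Theta(\gamma^2)\Rightarrow$ rate $1-\Theta(\gamma)$ also matches (the paper's per-step factor is $1-\tfrac{\beta H_{\min}}{2\gamma F_{\max}}$).

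The genuine gap is in how you propose to control the asymmetric term $-|\beta\tilde W^{(n+1)}|\odot G(W^{(n)}_{t_n})$. The local estimate $|G(w)|\le L_G|w|$ is the device the paper uses only for the main sequence $W^{(N)}$ (Lemma~\ref{lemma:TT-WN-descent}), where the mapping constant $\kappa=\Theta(\gamma^{-N/4})$ makes $G(W/\kappa)$ provably small; here you never invoke $\kappa$, and without it the bound fails to secure the contraction: with $\tilde W^{(n+1)}\approx P^*_{n+1}=\gamma^{-1}(P^*_n-W^{(n)}_{t_n})$, the asymmetric cross term is of size $\beta\gamma^{-1}\,\|G\|_\infty\,\|W^{(n)}_{t_n}-P^*_n\|^2$, i.e.\ exactly the same order as the symmetric descent term $\beta\gamma^{-1}\,F\,\|W^{(n)}_{t_n}-P^*_n\|^2$, so a crude $L_G$ bound can wipe out the $1-\Theta(\gamma)$ rate whenever $L_G\|W\|_\infty$ is not small compared with $F$. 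What the paper does instead is algebraic: it writes the cross term via $2\langle U,V\rangle=\|U\|^2+\|V\|^2-\|U-V\|^2$, which produces the weighted norm $\|W^{(n)}_{t_n}-P^*_n\|^2_{H(W^{(n)}_{t_n})}$ with the saturation vector $H=F^2-G^2=q_+\odot q_-$, and then uses Assumptions~\ref{assumption:bounded-saturation}--\ref{assumption:pulse-response-symmetry} (positivity of \emph{both} $q_+$ and $q_-$ on the bounded trajectory, i.e.\ strictness of $|G|<F$) to lower-bound it by $H_{\min}\|W^{(n)}_{t_n}-P^*_n\|^2$; the leftover negative $\|P^*_{n+1}\odot F-|P^*_{n+1}|\odot G\|^2$ term is what later absorbs the drift of $P^*_n$ in the Lyapunov argument. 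Your "positive-definiteness of $F$" is not the right margin — you need $F-|G|$ (equivalently $q_+q_-$) bounded away from zero, not $F$ alone. A smaller imprecision: $\tilde W^{(n+1)}$ is not frozen across tile $n$'s inner loop (a fresh transfer arrives each time tile $n+1$ completes its own inner loop, and $P^*_{n+1}(\overline{W}^{(n+1)}_{t_n})$ drifts as $W^{(n)}_{t_n}$ moves); the paper's unrolled sum keeps these indexed terms and compresses them only at the end, so your "constant bias" treatment needs that same bookkeeping to be stated honestly.
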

\vspace{-0.5em}
Lemmas~\ref{lemma:2} and \ref{lemma:3} yield contraction terms $(1 - \Theta(\gamma^p))^{T_n}$ with $p=1$ for $W^{(n)}$ and $p=N$ for $W^{(N)}$. Using $(1-\lambda)^{T_n} \le e^{-\lambda T_n}$, we set $T_n=\Theta(\gamma^{-1})$ and $T_N=\Theta(\gamma^{-N})$ so that $(1-\Theta(\gamma^p))^{T_n}=\Theta(\rho)$, where $\rho$ is the contraction rate in the Lyapunov analysis below.
To this end, we now analyze the optimization problem in  \eqref{eq:nested_problem} by introducing a Lyapunov sequence
as 
$
 \mathbb{J}_{k} := \sum_{n=0}^N \|W^{(n)}_{t_n + kT_n - 1} - P^*_{n}(\overline{W}^{(n)}_{t_{n-1}+k})\|^2
$,
 which aggregates the squared distances between each tile and its drifting optimum after completing $k$ inner loops. In particular, when the slowest tile $W^{(0)}$ has completed $k$ inner loops, the fastest tile $W^{(N)}$ will have performed
$
t = k \prod_{n=0}^N T_n = \mathcal{O}(\gamma^{-2N}k)
$
gradient updates, which we take as the reference measure for evaluating the overall convergence rate. By balancing the learning rates and inner-loop length across all tiles, we establish a linear convergence rate for the Lyapunov sequence with a non-vanishing asymptotic error induced by device-level imperfections. This result is formalized in the following theorem.
\begin{restatable}[Convergence of residual learning]{theorem}{ThmTTConvergenceScvx}
\label{theorem:TT-convergence-scvx_long}
Suppose Assumptions~\ref{assumption:noise}--\ref{assumption:pulse-response-symmetry} hold. Let the scaling parameter satisfy \(\gamma \in (0, 1/
\sqrt{6}]\).
For all \(n\in\{0,\ldots, N-1\}\), set the learning rate \(\beta = \Theta(\gamma^2)\), the inner loop length
\(
T_{n} \geq \Theta \bigl(\gamma^{-1}\bigr),
\) except for $T_0= \Theta(1)$.
For \(n = N\), set the learning rate \(\alpha = \Theta(1)\) and 
\(
T_{N} \geq \Theta \bigl(\gamma^{-N}\bigr).
\)
Given \(t=\mathcal{O}(\gamma^{-2N}k) \), $\rho \in (0,1)$,
the Lyapunov function $\mathbb{J}_k$ is bounded as:
\begin{align}
   \mathbb{E}[\mathbb{J}_k] \le \mathcal{O}((1 - \rho)^{\gamma^{2N} t} )\mathbb{E}[\mathbb{J}_0] + \Theta(\gamma^{-\frac{4N}{3}} (\sigma \Delta w_{\min})^{\frac{2}{3}}).  \nonumber 
\end{align}
\end{restatable}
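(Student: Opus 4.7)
The plan is to treat each tile's inner loop as a single contraction step in an outer iteration indexed by $k$, assemble these per-tile contractions into a recursion for the Lyapunov sequence $\mathbb{J}_k$, and then convert the outer-iteration rate back to the fine-grained iteration count $t$ via the multi-timescale schedule $t=\mathcal{O}(\gamma^{-2N}k)$. The asymptotic error will be obtained by summing the per-tile additive terms under the geometric contraction.

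\textbf{Step 1: Per-tile contraction per outer iteration.} First I would apply Lemma~\ref{lemma:2} to $W^{(N)}$ with $T_N=\Theta(\gamma^{-N})$ so that
\begin{equation*}
\bigl(1-\Theta(\gamma^{N})\bigr)^{T_N}\le\exp\bigl(-\Theta(1)\bigr)\le \rho,
\end{equation*}
for some $\rho\in(0,1)$. This gives
\begin{equation*}
\mathbb{E}\bigl[\|W^{(N)}_{(k+1)}-P_N^*\|^2\bigr]\le \rho\,\mathbb{E}\bigl[\|W^{(N)}_{(k)}-P_N^*\|^2\bigr]+C_N,\quad C_N:=\Theta\!\bigl(\sigma^2\gamma^{-N}+\gamma^{-\tfrac{4N}{3}}(\sigma\Delta w_{\min})^{\tfrac{2}{3}}\bigr).
\end{equation*}
Similarly, for $n\in\{1,\ldots,N-1\}$, I would apply Lemma~\ref{lemma:3} with $T_n=\Theta(\gamma^{-1})$ to obtain $(1-\Theta(\gamma))^{T_n}\le\rho$, yielding
\begin{equation*}
\mathbb{E}\bigl[\|W^{(n)}_{(k+1)}-P_n^*\|^2\bigr]\le \rho\,\mathbb{E}\bigl[\|W^{(n)}_{(k)}-P_n^*\|^2\bigr]+\Theta\!\bigl(\gamma^{2}\|\tilde W^{(n+1)}-P_{n+1}^*\|^2+\Delta w_{\min}\bigr).
\end{equation*}
The base tile $n=0$ is treated analogously with $T_0=\Theta(1)$. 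Substituting the bound on $\|\tilde W^{(n+1)}-P_{n+1}^*\|^2$ (which is itself a term in $\mathbb{J}_k$) introduces a coupling through the factor $\gamma^{2}$.

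\textbf{Step 2: Lyapunov recursion via the $\gamma^{2}$ coupling.} Summing the per-tile inequalities over $n=0,\ldots,N$ gives a recursion of the shape
\begin{equation*}
\mathbb{E}[\mathbb{J}_{k+1}]\le \rho\,\mathbb{E}[\mathbb{J}_k]+c\,\gamma^{2}\!\!\sum_{n=0}^{N-1}\mathbb{E}\bigl[\|W^{(n+1)}_{(k)}-P_{n+1}^*\|^2\bigr]+C_N+N\cdot\Theta(\Delta w_{\min}).
\end{equation*}
The coupling sum is bounded by $\mathbb{E}[\mathbb{J}_k]$ itself, so choosing $\gamma\in(0,1/\sqrt{6}]$ makes $c\gamma^{2}\le 1-\rho'$ small enough to absorb the coupling into an effective contraction $\rho_{\mathrm{eff}}\in(0,1)$, producing a clean scalar recursion
\begin{equation*}
\mathbb{E}[\mathbb{J}_{k+1}]\le \rho_{\mathrm{eff}}\,\mathbb{E}[\mathbb{J}_k]+E,\quad E=\Theta\!\bigl(\sigma^{2}\gamma^{-N}+\gamma^{-\tfrac{4N}{3}}(\sigma\Delta w_{\min})^{\tfrac{2}{3}}+N\Delta w_{\min}\bigr).
\end{equation*}
Unrolling this geometric recursion yields $\mathbb{E}[\mathbb{J}_k]\le \rho_{\mathrm{eff}}^{k}\mathbb{E}[\mathbb{J}_0]+E/(1-\rho_{\mathrm{eff}})$. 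Since the $\gamma^{-4N/3}(\sigma\Delta w_{\min})^{2/3}$ term dominates $E$ in the relevant regime, this matches the claimed asymptotic error up to constants. Finally, rewriting $k=\gamma^{2N}t$ via the multi-timescale schedule converts $\rho_{\mathrm{eff}}^{k}$ into $(1-\rho)^{\gamma^{2N}t}$ for an appropriate $\rho$, which is exactly the stated bound.

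\textbf{Main obstacle.} The hard part will be Step~2, specifically handling the drift of $P_n^*(\overline{W}^{(n)}_{t_{n-1}+k})$ between consecutive outer iterations. When the slower tiles $W^{(0)},\ldots,W^{(n-1)}$ are updated, the target $P_n^*$ moves, so the quantity $\|W^{(n)}_{(k+1)}-P_n^*(\overline{W}^{(n)}_{k+1})\|^2$ used in $\mathbb{J}_{k+1}$ is not the same as what Lemma~\ref{lemma:3} naturally controls. I would handle this by adding and subtracting $P_n^*(\overline{W}^{(n)}_k)$, invoking strong convexity of $f$ to Lipschitz-bound the drift $\|P_n^*(\overline{W}^{(n)}_{k+1})-P_n^*(\overline{W}^{(n)}_k)\|$ in terms of $\gamma^{n-1}\|W^{(n-1)}_{(k+1)}-W^{(n-1)}_{(k)}\|$, and then absorbing the resulting term into the $\gamma^{2}$-coupling argument — here the choice $\beta=\Theta(\gamma^{2})$ is what keeps the drift of order $\gamma^{2}$ per outer step so that the coupling remains a contraction. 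The second delicate point is verifying that the inner-loop lengths $T_n=\Theta(\gamma^{-1})$ (and $T_N=\Theta(\gamma^{-N})$) are simultaneously large enough for each tile's contraction to dominate the drift it inherits from its parents, yet not so large that the parent tiles' targets have moved significantly — the freeze-and-transfer protocol is essential here, and the balance is what pins down the constants in $\rho_{\mathrm{eff}}$.
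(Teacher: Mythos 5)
Your overall architecture matches the paper's proof: per-tile descent over one inner loop (Lemmas~\ref{lemma:2} and \ref{lemma:3} with $T_n=\Theta(\gamma^{-1})$, $T_N=\Theta(\gamma^{-N})$), assembly into a recursion for $\mathbb{J}_k$ with the $\gamma^2$-coupling absorbed by the smallness condition on $\gamma$, unrolling, and the change of clock $k=\Theta(\gamma^{2N}t)$. You also correctly identify the drift of the targets $P^*_n(\overline{W}^{(n)}_{t_{n-1}+k})$ as the crux. However, your proposed treatment of that drift has a genuine gap, on two counts. First, the scaling is off: since $P^*_n(\overline{W}^{(n)})=\gamma^{-n}(W^*-\overline{W}^{(n)})$ and the parent enters $\overline{W}^{(n)}$ with weight $\gamma^{n-1}$, one outer update of $W^{(n-1)}$ moves $P^*_n$ by $\gamma^{-1}\|\Delta W^{(n-1)}\|$, an \emph{amplification} by $\gamma^{-1}$, not the attenuated $\gamma^{n-1}\|\Delta W^{(n-1)}\|$ you state. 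Second, and more importantly, bounding $\|\Delta W^{(n-1)}\|$ crudely (e.g., by $\beta F_{\max}\|\tilde W^{(n)}\|$ with bounded weights) and "absorbing into the $\gamma^2$-coupling" does not suffice: the squared drift enters the recursion additively with an $O(1)$ coefficient, contributing a floor term of order $\beta^2\gamma^{-2}W_{\max}^2=\Theta(\gamma^2)$ that does not vanish as $\Delta w_{\min}\to 0$ and is not dominated by the claimed $\Theta(\gamma^{-\frac{4N}{3}}(\sigma\Delta w_{\min})^{\frac{2}{3}})$ in general, so the stated bound would not follow.

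The paper's proof avoids this by a finer decomposition of the drift that your plan (working only from the short-form Lemma~\ref{lemma:3}) cannot reproduce. The drift $\|P^*_n(\overline{W}^{(n)}_{t_{n-1}+k+1})-P^*_n(\overline{W}^{(n)}_{t_{n-1}+k})\|^2$ is expanded through the transfer update and split, via the Lipschitz property of the analog increment, into (i) a piece proportional to $\beta^2\gamma^{-1}\|\tilde W^{(n)}-P^*_n\|^2$, which is part of the Lyapunov function and is swallowed by the contraction; (ii) a piece proportional to $\beta^2\gamma^{-1}\|P^*_n\odot F(W^{(n-1)}/\kappa)-|P^*_n|\odot G(W^{(n-1)}/\kappa)\|^2$, which is \emph{not} bounded by a constant but is instead cancelled against the matching negative term of order $\gamma^2/H_{\min}$ that the long version of the descent lemma (Lemma~\ref{lemma:TT-n-descent}) retains — this cancellation is exactly the paper's condition "$A\le 0$", made possible because $\beta=\Theta(\gamma^2)$ gives $\beta^2/\gamma=\Theta(\gamma^3)\ll\gamma^2$; and (iii) a quantization piece $\Theta(\beta\gamma^{-1}\Delta w_{\min})=\Theta(\gamma\Delta w_{\min})$, which is small enough to sit under the stated floor. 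Without retaining and using that negative "descent-direction" term (invisible in the short statement of Lemma~\ref{lemma:3}), your Step 2 leaves an extra non-vanishing residual, so the proposal as written does not establish the theorem; the fix is to carry the long-form lemmas and perform the cancellation rather than a norm bound on the drift.
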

\vspace{-0.5em}
Theorem \ref{theorem:TT-convergence-scvx_long} shows the Lyapunov function converges linearly up to an asymptotic upper bound \(\Theta(\gamma^{-\frac{4N}{3}} (\sigma\Delta w_{\min})^{2/3})\) determined solely by quantization and sample noise. 
Corollary~\ref{corollary:Convergence_rate} follows from Theorem~\ref{theorem:TT-convergence-scvx_long} 
by applying the Lyapunov function bound as an upper bound on the component 
\(\|W^{(N)}_{t_N+kT_N-1} - P^*_{N}(\overline{W}^{(N)}_{t_{N-1}+k})\|^2\). 
Then using the definition of \(P^*_N(\overline{W}^{(N)})\) and multiplying both sides by \(\gamma^{2N}\) yields the optimality gap of the composite weight.

\begin{restatable}
[Optimality gap of residual learning]{corollary}{ThmTTConvergencecorollary}
\label{corollary:Convergence_rate}
Under the same conditions as in Theorem~\ref{theorem:TT-convergence-scvx_long}, 
 the limit of the composited weight \(\overline{W}_{t}\) satisfies:
\begin{align}
\limsup_{t \to \infty} \mathbb{E}\big[\|W^* - \overline{W}_{t}\|^2\big]
\leq 
\Theta(\gamma^{\frac{2N}{3}} (\sigma \Delta w_{\min})^{\frac{2}{3}}).\nonumber
\end{align}
\end{restatable}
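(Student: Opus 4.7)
The plan is to derive the corollary as a short algebraic consequence of Theorem~\ref{theorem:TT-convergence-scvx_long} that rescales the tracking error of the top tile into an error on the composite weight. Since the Lyapunov function $\mathbb{J}_k$ is a sum of non-negative squared distances across tiles, the summand at the highest-resolution index $n=N$ satisfies
\begin{align}
\|W^{(N)}_{t_N+kT_N-1}-P^*_N(\overline{W}^{(N)}_{t_{N-1}+k})\|^2 \le \mathbb{J}_k, \nonumber
\end{align}
so any bound on $\mathbb{E}[\mathbb{J}_k]$ is immediately a bound on the expected deviation of $W^{(N)}$ from its drifting optimum $P^*_N$.

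Next, I would unfold $P^*_N$ using the closed form implied by strong convexity, $P^*_N(\overline{W}^{(N)})=\gamma^{-N}(W^*-\overline{W}^{(N)})$, together with the composite decomposition $\overline{W}=\overline{W}^{(N)}+\gamma^N W^{(N)}$. A single line of manipulation gives
\begin{align}
W^{(N)}-P^*_N(\overline{W}^{(N)})
= \gamma^{-N}\bigl(\gamma^N W^{(N)}+\overline{W}^{(N)}-W^*\bigr)
= \gamma^{-N}(\overline{W}-W^*), \nonumber
\end{align}
which, after squaring, yields the key identity $\|\overline{W}-W^*\|^2=\gamma^{2N}\|W^{(N)}-P^*_N(\overline{W}^{(N)})\|^2\le \gamma^{2N}\mathbb{J}_k$.

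To conclude, I would take expectations, substitute Theorem~\ref{theorem:TT-convergence-scvx_long}, and let $t\to\infty$. The transient term $\mathcal{O}((1-\rho)^{\gamma^{2N}t})\mathbb{E}[\mathbb{J}_0]$ vanishes, while the residual floor $\Theta(\gamma^{-4N/3}(\sigma\Delta w_{\min})^{2/3})$ picks up the factor $\gamma^{2N}$ from the identity above, producing
\begin{align}
\limsup_{t\to\infty}\mathbb{E}\big[\|W^*-\overline{W}_t\|^2\big]
\le \Theta\bigl(\gamma^{2N-4N/3}(\sigma\Delta w_{\min})^{2/3}\bigr)
= \Theta\bigl(\gamma^{2N/3}(\sigma\Delta w_{\min})^{2/3}\bigr), \nonumber
\end{align}
which is the claimed bound. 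There is no real obstacle here: all of the heavy lifting lives in Theorem~\ref{theorem:TT-convergence-scvx_long}, and the corollary is essentially a change of variables on its asymptotic term. The only point I would double-check is that strong convexity (Assumption~\ref{assumption:Lip}) actually delivers the affine identity $P^*_n(\overline{W}^{(n)})=\gamma^{-n}(W^*-\overline{W}^{(n)})$ used above, together with the arithmetic $2N-4N/3=2N/3$ on the exponent; without the former the rescaling step collapses.
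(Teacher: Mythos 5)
Your proposal is correct and follows essentially the same route as the paper: the paper likewise bounds the $n=N$ component of $\mathbb{J}_k$ by the Lyapunov bound of Theorem~\ref{theorem:TT-convergence-scvx_long}, substitutes $P^*_N(\overline{W}^{(N)})=\gamma^{-N}(W^*-\overline{W}^{(N)})$ (so that $\gamma^{2N}\|W^{(N)}-P^*_N\|^2=\|W^*-\overline{W}\|^2$), and takes $k\to\infty$ to drop the transient term, yielding the exponent $2N-\tfrac{4N}{3}=\tfrac{2N}{3}$. The only cosmetic difference is that the paper states the intermediate bound for every tile index $n$ before specializing to $n=N$, whereas you go directly to the top tile.
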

\vspace{-1em}
The proofs of Theorem \ref{theorem:TT-convergence-scvx_long} and Corollary \ref{corollary:Convergence_rate} can be found in Appendix \ref{section:proof-TT-convergence-scvx}. 
Intuitively, increasing the value of $N$ reduces the upper bound $\Theta\big(\gamma^{\frac{2N}{3}} (\sigma \Delta w_{\min})^{\frac{2}{3}}\big)$, demonstrating the effectiveness of multi-timescale residual learning under a limited number of conductance states.
\section{Numerical Simulations}
\label{section: Numerical Simulations}
In this section, we evaluate our method using the AIHWKIT toolkit on the SoftBounds device class, which models bi-directional memristive devices such as ReRAM by capturing saturation effects and limited update precision. We compare against the MP, TT-v1, and TT-v2 baselines. 
Detailed algorithmic procedures are provided in Appendix~\ref{Sec:Pseudocode}. 
\subsection{Analog training performance on real dataset}
We train an analog LeNet-5 model on the MNIST and Fashion-MNIST dataset for 100 epochs with 4 and 10 conductance states, respectively.
We also train the ResNet-34 model on the CIFAR-10 and CIFAR-100 datasets with 4 or 16 conductance states, covering both an extreme low-precision case and the widely adopted industrial setting. Training is performed for 200 epochs on CIFAR-10 and 400 epochs on CIFAR-100, with \texttt{layer3}, \texttt{layer4}, and the  fully connected layer mapped to analog.  The parameter configurations for TT-v1, TT-v2, MP, and our method are provided in Appendix~\ref{sec:Simulation_Details}.
As shown in Tables~\ref{tab:num_states_comparison} and \ref{tab:num_states_comparison_cifar_models}, our residual learning method steadily improves accuracy as the number of tiles increases, surpassing both TT-v1 and TT-v2  with only 3 to 4 tiles, and  reaching accuracy comparable to MP while incurring far lower storage and runtime overhead, demonstrating both scalability and robustness across larger networks.
To further illustrate the scalability of our method, we also provide results in Appendix~\ref{sec:Supplement_simulations} on 
models with larger analog deployments and higher conductance states. 
\subsection{Ablation studies}
\label{sec:Ablation_studies}
\begin{figure}[t]  
    \centering
    \begin{minipage}{0.3\linewidth}
        \centering
        \includegraphics[width=\linewidth]{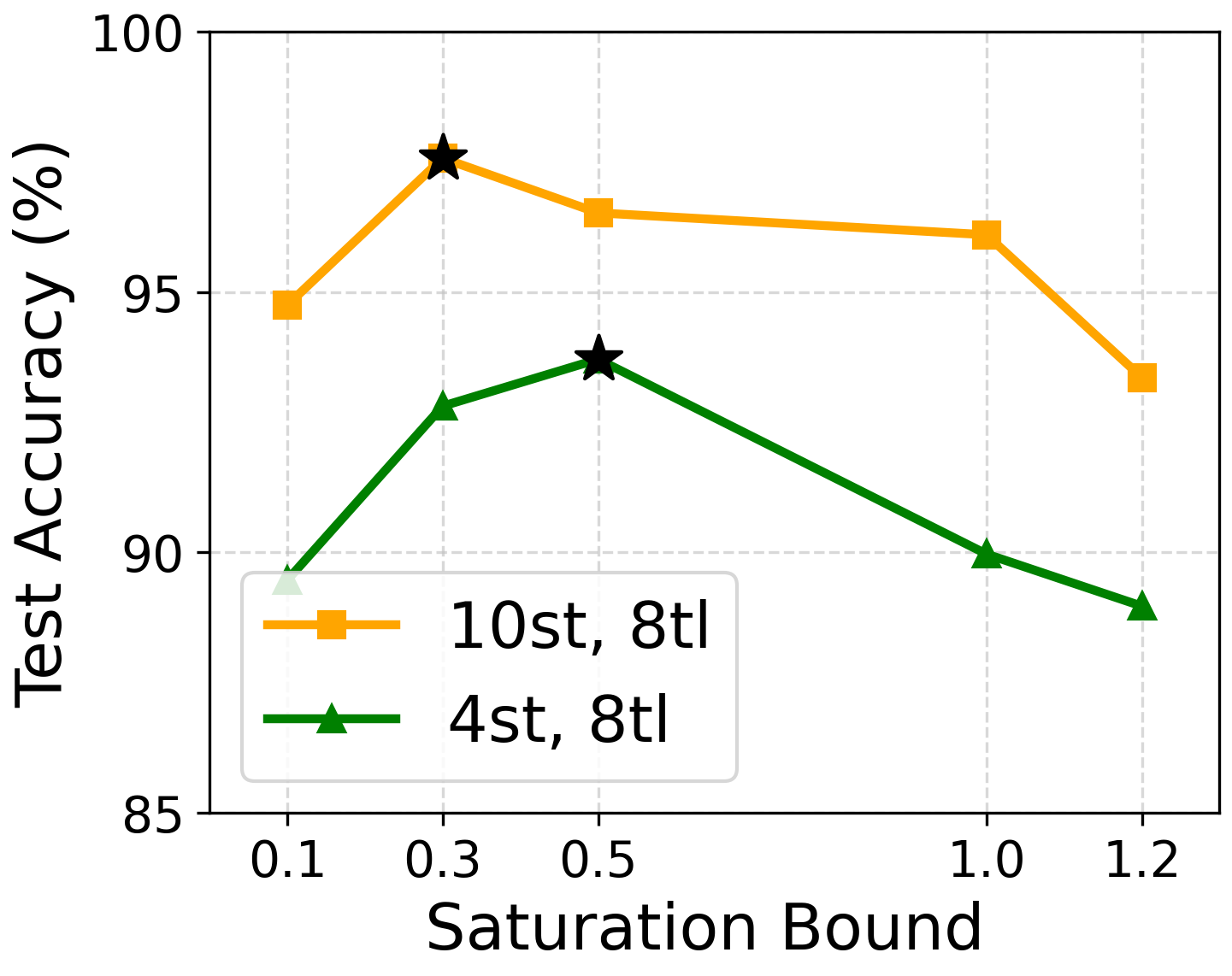}
    \end{minipage}
    \begin{minipage}{0.3\linewidth}
        \centering
        \includegraphics[width=\linewidth]{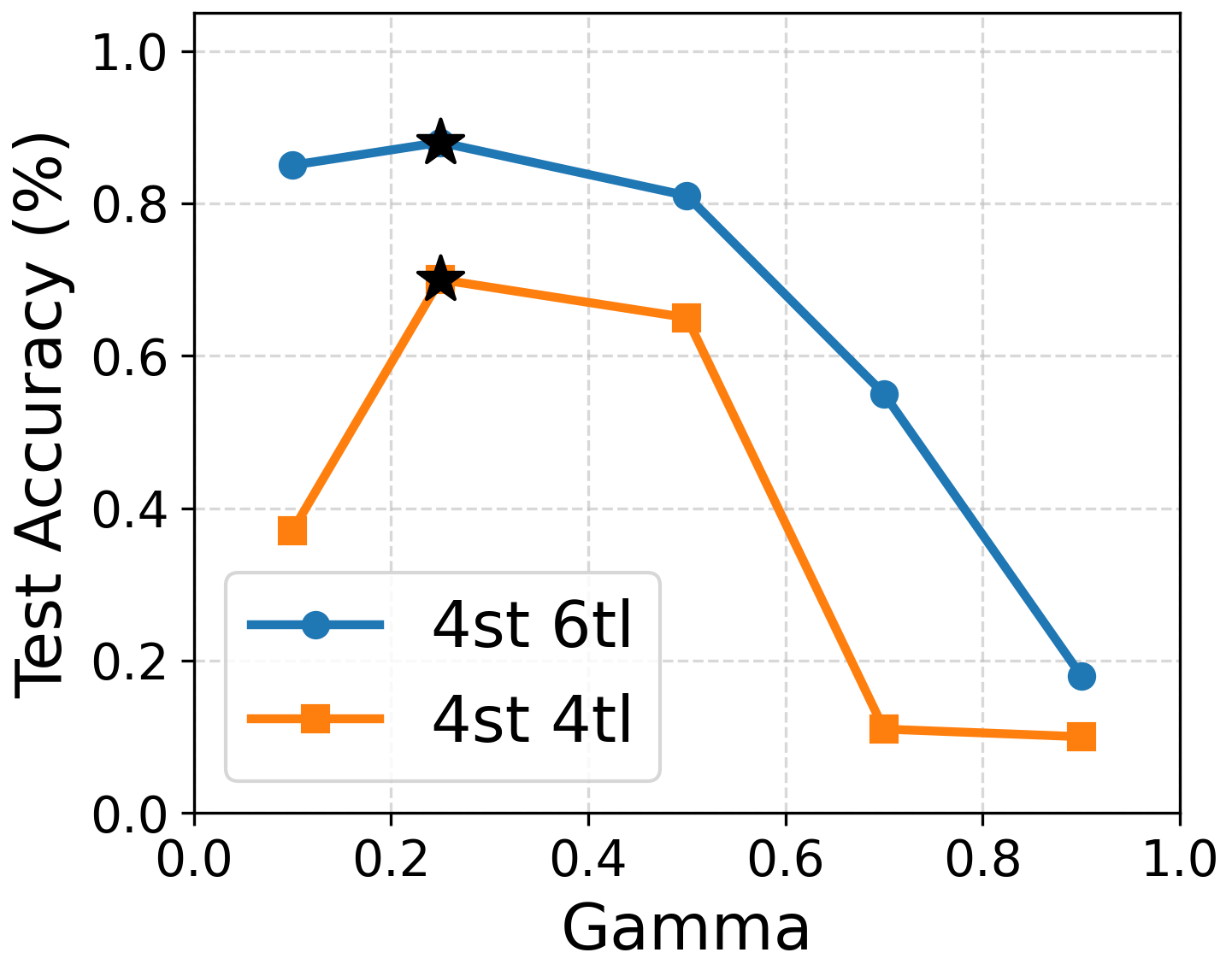}
    \end{minipage}
     \begin{minipage}{0.3\linewidth}
        \centering
        \includegraphics[width=\linewidth]{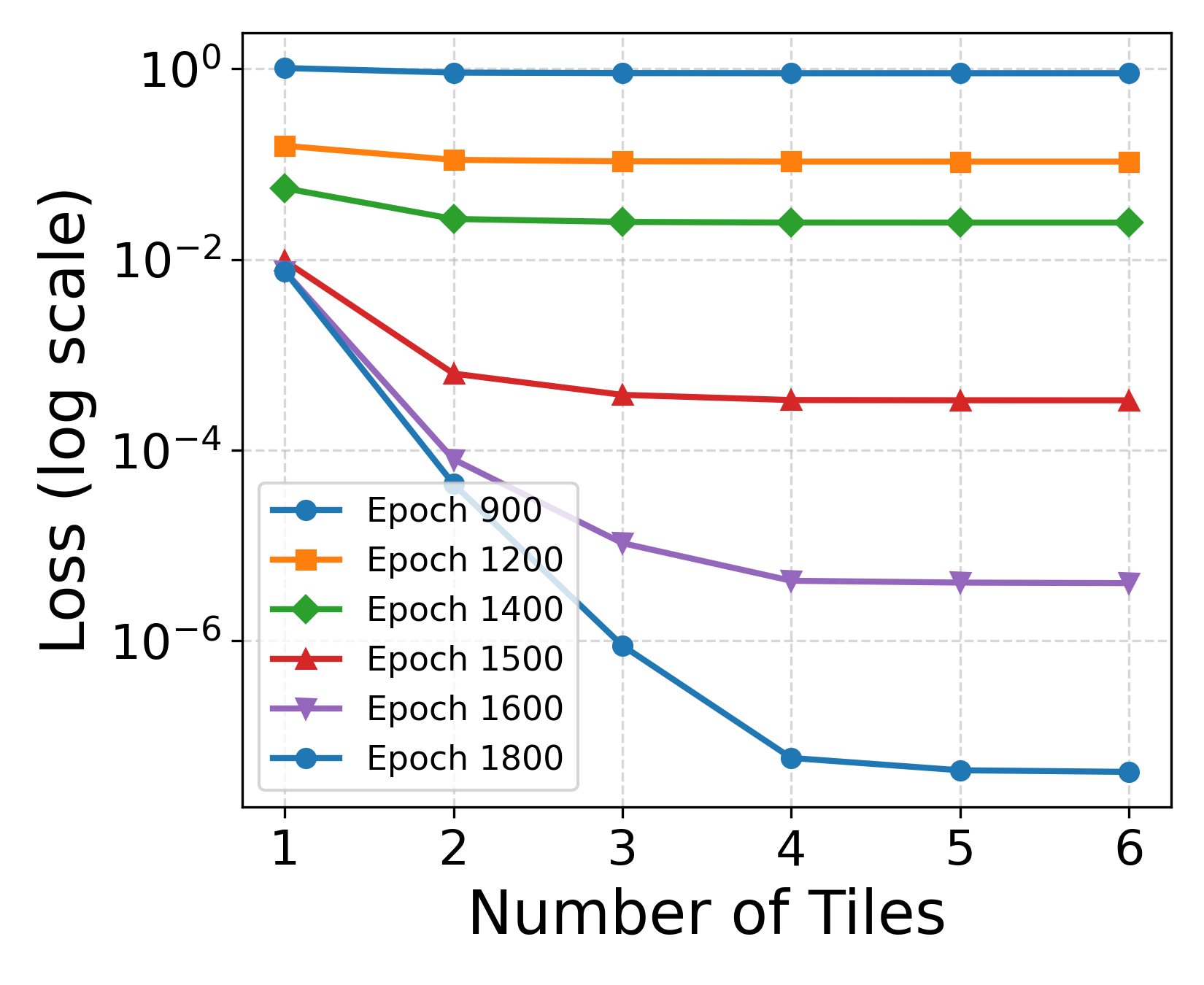}
    \end{minipage}
        \vspace{-0.2cm}
     \caption{({\sf left}) Effect of asymmetry. st: \#states, tl: \#tiles. 
    ({\sf middle}) Effect of geometric scaling factor $\gamma$. ({\sf right}) A toy example illustrating that training loss decreases along both the epoch and tile-count dimensions. The results demonstrate that each tile converges successfully, meanwhile, incorporating more tiles also improves accuracy.}
    \label{fig:asymmetry_gamma}
        \vspace{-0.4cm}
\end{figure}

\textbf{Effect of asymmetry.}  
For a given number of conductance states, the degree of asymmetry in an analog device is determined by the saturation bound $\tau_{\max}$ (with $\tau_{\min} = -\tau_{\max}$). 
We evaluate training accuracy with $\tau_{\max}$ varying in $(0,1)$ on MNIST and show that our algorithm consistently maintains high accuracy across different levels of asymmetry in Figure \ref{fig:asymmetry_gamma}~({\sf left}).

\textbf{Effect of geometric scaling factor.} As show in Figure \ref{fig:asymmetry_gamma} ({\sf middle}), we find that the optimal scaling factor $\gamma$ generally lies near $\frac{1}{n_{\text{states}}}$, which ensures that the weight range of each tile is properly matched to the resolution of its previous tile, whereas overly large $\gamma$ severely degrades accuracy. 

\textbf{Effect of number of tiles.}
To validate the convergence behavior of our residual learning mechanism, we construct a toy example based on a simple least-squares problem of the form $(w - b)^2$. The target output $b$ is quantized to 16-bit resolution, and each tile has 2-bit update granularity. The parameter configurations are provided in Appendix \ref{sec:Simulation_Details}. 
Figure \ref{fig:asymmetry_gamma} ({\sf right}) presents the log-scaled training loss. 
This visualization highlights two aspects of the learning dynamics. First, the composite weight converges reliably toward the target $w^* = b$. Second, the loss decreases consistently with more tiles, confirming the effectiveness of the multi-tile strategy.

\section{Conclusions and Limitations} 
\label{sec:limitations}
\vspace{-0.5em}
We propose multi-timescale residual learning, an in-memory analog training framework that enables reliable DNN training under limited conductance states by modeling device non-idealities and proving convergence, achieving strong results on standard image classification tasks.
We acknowledge the lack of real hardware evaluation and plan to validate our method through future chip-level experiments.

\bibliography{ref}

\begin{thebibliography}{65}
\providecommand{\natexlab}[1]{#1}
\providecommand{\url}[1]{\texttt{#1}}
\expandafter\ifx\csname urlstyle\endcsname\relax
  \providecommand{\doi}[1]{doi: #1}\else
  \providecommand{\doi}{doi: \begingroup \urlstyle{rm}\Url}\fi

\bibitem[Ambrogio et~al.(2018)Ambrogio, Narayanan, Tsai, Shelby, Boybat, Nolfo, Sidler, Giordano, Bodini, Farinha, et~al.]{ambrogio2018equivalent}
S.~Ambrogio, P.~Narayanan, H.~Tsai, R.~M. Shelby, I.~Boybat, C.~Di Nolfo, S.~Sidler, M.~Giordano, M.~Bodini, N.~C.~P. Farinha, et~al.
\newblock Equivalent-accuracy accelerated neural-network training using analogue memory.
\newblock \emph{Nature}, 558\penalty0 (7708):\penalty0 60--67, 2018.

\bibitem[Apalkov et~al.(2013)Apalkov, Khvalkovskiy, Watts, Nikitin, Tang, Lottis, Moon, Luo, Chen, and et~al.]{apalkov2013spin}
D.~Apalkov, A.~Khvalkovskiy, S.~Watts, V.~Nikitin, X.~Tang, D.~Lottis, K.~Moon, X.~Luo, E.~Chen, and A.~Ong et~al.
\newblock Spin-transfer torque magnetic random access memory (stt-mram).
\newblock \emph{ACM J. Emerg. Technol. Comput. Syst.}, 9\penalty0 (2):\penalty0 1--35, May 2013.
\newblock \doi{10.1145/2463585.2463589}.

\bibitem[Bottou et~al.(2018)Bottou, Curtis, and Nocedal]{bottou2018optimization}
L.~Bottou, F.~E. Curtis, and J.~Nocedal.
\newblock Optimization methods for large-scale machine learning.
\newblock \emph{SIAM Rev.}, 60\penalty0 (2):\penalty0 223--311, 2018.

\bibitem[Boybat et~al.(2018)Boybat, Gallo, Nandakumar, Moraitis, Parnell, Tuma, Rajendran, Leblebici, Sebastian, and Eleftheriou]{boybat2018neuromorphic}
I.~Boybat, M.~Le Gallo, S.~R. Nandakumar, T.~Moraitis, T.~Parnell, T.~Tuma, B.~Rajendran, Y.~Leblebici, A.~Sebastian, and E.~Eleftheriou.
\newblock Neuromorphic computing with multi-memristive synapses.
\newblock \emph{Nat. Commun.}, 9\penalty0 (1):\penalty0 2514, June 2018.

\bibitem[B{\"u}chel et~al.(2025)B{\"u}chel, Chalas, Acampa, Chen, Fagbohungbe, Tsai, Maghraoui, Gallo, Rahimi, and Sebastian]{buchel2025analog}
Julian B{\"u}chel, Iason Chalas, Giovanni Acampa, An~Chen, Omobayode Fagbohungbe, Sidney Tsai, Kaoutar~El Maghraoui, Manuel~Le Gallo, Abbas Rahimi, and Abu Sebastian.
\newblock Analog foundation models.
\newblock \emph{arXiv preprint arXiv:2505.09663}, 2025.

\bibitem[Burr et~al.(2015)Burr, Shelby, Sidler, Nolfo, Jang, Boybat, Shenoy, Narayanan, Virwani, and Giacometti]{burr2015experimental}
G.~W. Burr, R.~M. Shelby, S.~Sidler, C.~Di Nolfo, J.~Jang, I.~Boybat, R.~S. Shenoy, P.~Narayanan, K.~Virwani, and E.~U. Giacometti.
\newblock Experimental demonstration and tolerancing of a large-scale neural network (165 000 synapses) using phase-change memory as the synaptic weight element.
\newblock \emph{IEEE Trans. Electron Devices}, 62\penalty0 (11):\penalty0 3498--3507, 2015.

\bibitem[Burr et~al.(2016)Burr, Brightsky, Sebastian, Cheng, Wu, Kim, Sosa, Papandreou, Lung, and Pozidis]{burr2016recent}
G.~W. Burr, M.~J. Brightsky, A.~Sebastian, H.-Y. Cheng, J.-Y. Wu, S.~Kim, N.~E. Sosa, N.~Papandreou, H.-L. Lung, and H.~Pozidis.
\newblock Recent progress in phase-change memory technology.
\newblock \emph{IEEE J. Emerg. Sel. Topics Circuits Syst.}, 6\penalty0 (2):\penalty0 146--162, 2016.

\bibitem[Chang et~al.(2011)Chang, Jo, and Lu]{chang2011short}
T.~Chang, S.-H. Jo, and W.~Lu.
\newblock Short-term memory to long-term memory transition in a nanoscale memristor.
\newblock \emph{ACS Nano}, 5\penalty0 (9):\penalty0 7669--7676, September 2011.
\newblock \doi{10.1021/nn202983n}.

\bibitem[Chen et~al.(2022)Chen, Ohshita, Amano, Kurokawa, Watanabe, Imoto, Ando, Hsieh, Chang, and Wu]{chen202264}
M.-C. Chen, S.~Ohshita, S.~Amano, Y.~Kurokawa, S.~Watanabe, Y.~Imoto, Y.~Ando, W.-H. Hsieh, C.-H. Chang, and C.-C. Wu.
\newblock A >64 multiple states and >210 tops/w high efficient computing by monolithic si/caac-igzo+ super-lattice zro2/al2o3/zro2 for ultra-low power edge ai application.
\newblock In \emph{IEEE Int. Electron Devices Meet.}, pages 18--2, San Francisco, CA, December 2022.

\bibitem[Chen et~al.(2017)Chen, Peng, and Yu]{chen2017neurosim+}
P.-Y. Chen, X.~Peng, and S.~Yu.
\newblock Neurosim+: An integrated device-to-algorithm framework for benchmarking synaptic devices and array architectures.
\newblock In \emph{IEEE Int. Electron Devices Meet.}, pages 6--1, San Francisco, CA, December 2017.

\bibitem[Cristiano et~al.(2018)Cristiano, Giordano, Ambrogio, Romero, Cheng, Narayanan, Tsai, Shelby, and Burr]{cristiano2018perspective}
G.~Cristiano, M.~Giordano, S.~Ambrogio, L.~P. Romero, C.~Cheng, P.~Narayanan, H.~Tsai, R.~M. Shelby, and G.~W. Burr.
\newblock Perspective on training fully connected networks with resistive memories: Device requirements for multiple conductances of varying significance.
\newblock \emph{Journal of Applied Physics}, 124\penalty0 (15):\penalty0 152102, 2018.

\bibitem[Feinberg et~al.(2018)Feinberg, Vengalam, Whitehair, Wang, and Ipek]{feinberg2018enabling}
B.~Feinberg, U.~K.~R. Vengalam, N.~Whitehair, S.~Wang, and E.~Ipek.
\newblock Enabling scientific computing on memristive accelerators.
\newblock In \emph{Proc. ACM/IEEE Annu. Int. Symp. Comput. Archit.}, pages 367--382, Los Angeles, CA, June 2018.

\bibitem[Fuller et~al.(2019)Fuller, Keene, Melianas, Wang, Agarwal, Li, Tuchman, James, Marinella, and Yang]{fuller2019parallel}
E.~J. Fuller, S.~T. Keene, A.~Melianas, Z.~Wang, S.~Agarwal, Y.~Li, Y.~Tuchman, C.~D. James, M.~J. Marinella, and J.~J. Yang.
\newblock Parallel programming of an ionic floating-gate memory array for scalable neuromorphic computing.
\newblock \emph{Science}, 364\penalty0 (6440):\penalty0 570--574, 2019.

\bibitem[Gallo et~al.(2018)Gallo, Sebastian, Mathis, Manica, Giefers, Tuma, Bekas, Curioni, and Eleftheriou]{le2018mixed}
M.~Le Gallo, A.~Sebastian, R.~Mathis, M.~Manica, H.~Giefers, T.~Tuma, C.~Bekas, A.~Curioni, and E.~Eleftheriou.
\newblock Mixed-precision in-memory computing.
\newblock \emph{Nat. Electron.}, 1\penalty0 (4):\penalty0 246--253, 2018.

\bibitem[Gallo et~al.(2022)Gallo, Nandakumar, Ciric, Boybat, Khaddam-Aljameh, Mackin, and Sebastian]{le2022precision}
M.~Le Gallo, S.~R. Nandakumar, L.~Ciric, I.~Boybat, R.~Khaddam-Aljameh, C.~Mackin, and A.~Sebastian.
\newblock Precision of bit slicing with in-memory computing based on analog phase-change memory crossbars.
\newblock \emph{Neuromorph. Comput. Eng.}, 2\penalty0 (1):\penalty0 014009, 2022.

\bibitem[Gokmen(2021)]{gokmen2021enabling}
T.~Gokmen.
\newblock Enabling training of neural networks on noisy hardware.
\newblock \emph{Front. Artif. Intell.}, 4:\penalty0 699148, 2021.

\bibitem[Gokmen and Haensch(2020)]{gokmen2020algorithm}
T.~Gokmen and W.~Haensch.
\newblock Algorithm for training neural networks on resistive device arrays.
\newblock \emph{Front. Neurosci.}, 14:\penalty0 103, 2020.

\bibitem[Gokmen and Vlasov(2016)]{gokmen2016acceleration}
T.~Gokmen and Y.~Vlasov.
\newblock Acceleration of deep neural network training with resistive cross-point devices: Design considerations.
\newblock \emph{Front. Neurosci.}, 10:\penalty0 333, 2016.

\bibitem[Gong et~al.(2022)Gong, Rasch, Seo, Gasasira, Solomon, Bragaglia, Consiglio, Higuchi, Park, and Brew]{gong2022deep}
N.~Gong, M.~J. Rasch, S.-C. Seo, A.~Gasasira, P.~Solomon, V.~Bragaglia, S.~Consiglio, H.~Higuchi, C.~Park, and K.~Brew.
\newblock Deep learning acceleration in 14nm cmos compatible reram array: device, material and algorithm co-optimization.
\newblock In \emph{IEEE Int. Electron Devices Meet.}, pages 33--7, San Francisco, CA, December 2022.

\bibitem[He et~al.(2019)He, Lin, Ewetz, Yuan, and Fan]{he2019noise}
Z.~He, J.~Lin, R.~Ewetz, J.-S. Yuan, and D.~Fan.
\newblock Noise injection adaption: End-to-end reram crossbar non-ideal effect adaption for neural network mapping.
\newblock In \emph{Proc. ACM/IEEE Design Autom. Conf.}, pages 1--6, Las Vegas, NV, USA, June 2019.

\bibitem[Hu et~al.(2016)Hu, Strachan, Li, Grafals, Davila, Graves, Lam, Ge, Yang, and Williams]{hu2016dot}
M.~Hu, J.~P. Strachan, Z.~Li, E.~M. Grafals, N.~Davila, C.~Graves, S.~Lam, N.~Ge, J.~J. Yang, and R.~S. Williams.
\newblock Dot-product engine for neuromorphic computing: Programming 1t1m crossbar to accelerate matrix-vector multiplication.
\newblock In \emph{Proc. ACM/IEEE Design Autom. Conf.}, pages 1--6, Austin, TX, June 2016.

\bibitem[Huang et~al.(2020)Huang, Sun, Peng, Jiang, and Yu]{huang2020overcoming}
S.~Huang, X.~Sun, X.~Peng, H.~Jiang, and S.~Yu.
\newblock Overcoming challenges for achieving high in-situ training accuracy with emerging memories.
\newblock In \emph{Proc. Des. Autom. Test Eur. Conf.}, pages 1025--1030, Grenoble, France, March 2020.

\bibitem[Huang et~al.(2025)Huang, Wu, Ma, and Ling]{huang2025single}
Y.~Huang, Z.~Wu, S.~Ma, and Q.~Ling.
\newblock Single-timescale multi-sequence stochastic approximation without fixed point smoothness: Theories and applications.
\newblock \emph{IEEE Trans. Signal Process.}, 2025.

\bibitem[Jain et~al.(2022)Jain, Tsai, Chen, Muralidhar, Boybat, Frank, Wo{\'z}niak, Stanisavljevic, Adusumilli, Narayanan, et~al.]{jain2022heterogeneous}
S.~Jain, H.~Tsai, C.-T. Chen, R.~Muralidhar, I.~Boybat, M.~M. Frank, S.~Wo{\'z}niak, M.~Stanisavljevic, P.~Adusumilli, P.~Narayanan, et~al.
\newblock A heterogeneous and programmable compute-in-memory accelerator architecture for analog-ai using dense 2-d mesh.
\newblock \emph{IEEE Trans. Very Large Scale Integr. (VLSI) Syst.}, 31\penalty0 (1):\penalty0 114--127, January 2022.

\bibitem[Jia et~al.(2021)Jia, Ozatay, Tang, Valavi, Pathak, Lee, and Verma]{jia2021scalable}
H.~Jia, M.~Ozatay, Y.~Tang, H.~Valavi, R.~Pathak, J.~Lee, and N.~Verma.
\newblock Scalable and programmable neural network inference accelerator based on in-memory computing.
\newblock \emph{IEEE J. Solid-State Circuits}, 57\penalty0 (1):\penalty0 198--211, 2021.

\bibitem[Jiang et~al.(2016)Jiang, Han, Lin, Wang, Jang, Wu, Barnell, Yang, Xin, and Xia]{jiang2016sub}
H.~Jiang, L.~Han, P.~Lin, Z.~Wang, M.~H. Jang, Q.~Wu, M.~Barnell, J.~J. Yang, H.~L. Xin, and Q.~Xia.
\newblock Sub-10 nm ta channel responsible for superior performance of a hfo2 memristor.
\newblock \emph{Sci. Rep.}, 6\penalty0 (1):\penalty0 28525, 2016.

\bibitem[Jiang et~al.(2024)Jiang, Xiao, Tenorio, Real-Rojas, Marques, and Chen]{jiang2024primal}
L.~Jiang, Q.~Xiao, V.~M. Tenorio, F.~Real-Rojas, A.~G. Marques, and T.~Chen.
\newblock A primal-dual-assisted penalty approach to bilevel optimization with coupled constraints.
\newblock \emph{arXiv preprint arXiv:2406.10148}, 2024.

\bibitem[Joshi et~al.(2020)Joshi, Gallo, Haefeli, Boybat, Nandakumar, Piveteau, Dazzi, Rajendran, Sebastian, and Eleftheriou]{joshi2020accurate}
V.~Joshi, M.~Le Gallo, S.~Haefeli, I.~Boybat, S.~R. Nandakumar, C.~Piveteau, M.~Dazzi, B.~Rajendran, A.~Sebastian, and E.~Eleftheriou.
\newblock Accurate deep neural network inference using computational phase-change memory.
\newblock \emph{Nat. Commun.}, 11\penalty0 (1):\penalty0 2473, 2020.

\bibitem[Kim et~al.(2023)Kim, Seo, Cho, Jeon, Woo, and Lee]{kim2023three}
H.~Kim, J.~Seo, S.~Cho, S.~Jeon, J.~Woo, and D.~Lee.
\newblock Three-dimensional vertical structural electrochemical random access memory for high-density integrated synapse device.
\newblock \emph{Sci. Rep.}, 13\penalty0 (1):\penalty0 14325, 2023.

\bibitem[Kim et~al.(2019)Kim, Todorov, Onen, Gokmen, Bishop, Solomon, Lee, Copel, Farmer, and Ott]{kim2019metal}
S.~Kim, T.~Todorov, M.~Onen, T.~Gokmen, D.~Bishop, P.~Solomon, K.-T. Lee, M.~Copel, D.~B. Farmer, and J.~A. Ott.
\newblock Metal-oxide based, cmos-compatible ecram for deep learning accelerator.
\newblock In \emph{IEEE Int. Electron Devices Meet.}, pages 35--7, San Francisco, CA, December 2019.

\bibitem[Klachko et~al.(2019)Klachko, Mahmoodi, and Strukov]{klachko2019improving}
M.~Klachko, M.~R. Mahmoodi, and D.~Strukov.
\newblock Improving noise tolerance of mixed-signal neural networks.
\newblock In \emph{Proc. Int. Joint Conf. Neural Netw.}, pages 1--8, Budapest, Hungary, July 2019.

\bibitem[Kwak et~al.(2024)Kwak, Kim, Jeon, Kim, and Woo]{kwak2024electrochemical}
H.~Kwak, N.~Kim, S.~Jeon, S.~Kim, and J.~Woo.
\newblock Electrochemical random-access memory: Recent advances in materials, devices, and systems towards neuromorphic computing.
\newblock \emph{Nano Convergence}, 11\penalty0 (1):\penalty0 9, 2024.

\bibitem[Kwak et~al.(2025)Kwak, Choi, Han, Kim, Kim, Solomon, Lee, Kim, Shin, and Lee]{kwak2025unveiling}
H.~Kwak, J.~Choi, S.~Han, E.~H. Kim, C.~Kim, P.~Solomon, J.~Lee, D.~Kim, B.~Shin, and D.~Lee.
\newblock Unveiling ecram switching mechanisms using variable temperature hall measurements for accelerated ai computation.
\newblock \emph{Nat. Commun.}, 16\penalty0 (1):\penalty0 2715, 2025.

\bibitem[Li et~al.(2018)Li, Kim, Sun, Solomon, Gokmen, Tsai, Koswatta, Ren, Mo, and Yeh]{li2018capacitor}
Y.~Li, S.~Kim, X.~Sun, P.~Solomon, T.~Gokmen, H.~Tsai, S.~Koswatta, Z.~Ren, R.~Mo, and C.-C. Yeh.
\newblock Capacitor-based cross-point array for analog neural network with record symmetry and linearity.
\newblock In \emph{Proc. IEEE Symp. VLSI Technol.}, pages 25--26, Honolulu, HI, June 2018.

\bibitem[Lu(2023)]{lu2023bilevel}
S.~Lu.
\newblock Bilevel optimization with coupled decision-dependent distributions.
\newblock In \emph{Proc. International Conference on Machine Learning}, pages 22758--22789, Honolulu, HI, July 2023.

\bibitem[Mackin et~al.(2022)Mackin, Rasch, Chen, Timcheck, Bruce, Li, Narayanan, Ambrogio, Gallo, Nandakumar, et~al.]{mackin2022optimised}
C.~Mackin, M.~J. Rasch, A.~Chen, J.~Timcheck, R.~L. Bruce, N.~Li, P.~Narayanan, S.~Ambrogio, M.~Le Gallo, S.~R. Nandakumar, et~al.
\newblock Optimised weight programming for analogue memory-based deep neural networks.
\newblock \emph{Nat. Commun.}, 13\penalty0 (1):\penalty0 3765, July 2022.

\bibitem[Murmann(2020)]{murmann2020mixed}
B.~Murmann.
\newblock Mixed-signal computing for deep neural network inference.
\newblock \emph{EEE Trans. Very Large Scale Integr. (VLSI) Syst.}, 29\penalty0 (1):\penalty0 3--13, 2020.

\bibitem[Nandakumar et~al.(2020)Nandakumar, Gallo, Piveteau, Joshi, Mariani, Boybat, Karunaratne, Khaddam-Aljameh, Egger, and Petropoulos]{nandakumar2020mixed}
S.~R. Nandakumar, M.~Le Gallo, C.~Piveteau, V.~Joshi, G.~Mariani, I.~Boybat, G.~Karunaratne, R.~Khaddam-Aljameh, U.~Egger, and A.~Petropoulos.
\newblock Mixed-precision deep learning based on computational memory.
\newblock \emph{Front. Neurosci.}, 14:\penalty0 406, 2020.

\bibitem[Nesterov(2018)]{nesterov2018lectures}
Y.~Nesterov.
\newblock \emph{Lectures on convex optimization}, volume 137.
\newblock Springer, Cham, 2018.

\bibitem[Park et~al.(2013)Park, Sheri, Kim, Noh, Jang, Jeon, Lee, Lee, Lee, and Hwang]{park2013neuromorphic}
S.~Park, A.~Sheri, J.~Kim, J.~Noh, J.~Jang, M.~Jeon, B.~Lee, B.~R. Lee, B.~H. Lee, and H.-J. Hwang.
\newblock Neuromorphic speech systems using advanced reram-based synapse.
\newblock In \emph{IEEE Int. Electron Devices Meet.}, pages 25--6, December 2013.

\bibitem[Pedretti et~al.(2021)Pedretti, Ambrosi, and Ielmini]{pedretti2021conductance}
G.~Pedretti, E.~Ambrosi, and D.~Ielmini.
\newblock Conductance variations and their impact on the precision of in-memory computing with resistive switching memory (rram).
\newblock In \emph{IEEE International Reliability Physics Symposium.}, pages 1--8, Monterey, CA, April 2021.

\bibitem[Rasch et~al.(2021)Rasch, Moreda, Gokmen, Gallo, Carta, Goldberg, Maghraoui, Sebastian, and Narayanan]{rasch2021flexible}
M.~J. Rasch, D.~Moreda, T.~Gokmen, M.~Le Gallo, F.~Carta, C.~Goldberg, K.~El Maghraoui, A.~Sebastian, and V.~Narayanan.
\newblock A flexible and fast pytorch toolkit for simulating training and inference on analog crossbar arrays.
\newblock In \emph{Proc. IEEE Int. Conf. AI Circuits Syst.}, pages 1--4, virtual, June 2021.

\bibitem[Rasch et~al.(2023)Rasch, Carta, Fagbohungbe, and Gokmen]{rasch2023fast}
M.~J. Rasch, F.~Carta, O.~Fagbohungbe, and T.~Gokmen.
\newblock Fast offset corrected in-memory training.
\newblock \emph{arXiv preprint arXiv:2303.04721}, March 2023.

\bibitem[Rasch et~al.(2024)Rasch, Carta, Fagbohungbe, and Gokmen]{rasch2024fast}
M.~J. Rasch, F.~Carta, O.~Fagbohungbe, and T.~Gokmen.
\newblock Fast and robust analog in-memory deep neural network training.
\newblock \emph{Nat. Commun.}, 15\penalty0 (1):\penalty0 7133, 2024.

\bibitem[Reuther et~al.(2022)Reuther, Michaleas, Jones, Gadepally, Samsi, and Kepner]{reuther2022ai}
A.~Reuther, P.~Michaleas, M.~Jones, V.~Gadepally, S.~Samsi, and J.~Kepner.
\newblock Ai and ml accelerator survey and trends.
\newblock In \emph{Proc. IEEE High Perform. Extreme Comput. Conf.}, pages 1--10, Waltham, MA, September 2022.

\bibitem[Rzeszut et~al.(2022)Rzeszut, Ch\k{e}ci{\'n}ski, Brzozowski, Zi\k{e}tek, Skowro{\'n}ski, and Stobiecki]{rzeszut2022multi}
P.~Rzeszut, J.~Ch\k{e}ci{\'n}ski, I.~Brzozowski, S.~Zi\k{e}tek, W.~Skowro{\'n}ski, and T.~Stobiecki.
\newblock Multi-state mram cells for hardware neuromorphic computing.
\newblock \emph{Sci. Rep.}, 12\penalty0 (1):\penalty0 7178, April 2022.
\newblock \doi{10.1038/s41598-022-11183-9}.

\bibitem[Shen and Chen(2022)]{shen2022single}
H.~Shen and T.~Chen.
\newblock A single-timescale analysis for stochastic approximation with multiple coupled sequences.
\newblock In \emph{Proc. Advances in Neural Information Processing Systems}, volume~35, pages 17415--17429, New Orleans, LA, December 2022.

\bibitem[Song et~al.(2023)Song, Chen, Li, and Chen]{song2023refloat}
L.~Song, F.~Chen, H.~Li, and Y.~Chen.
\newblock Refloat: Low-cost floating-point processing in reram for accelerating iterative linear solvers.
\newblock In \emph{Proc. Int. Conf. High Perform. Comput., Netw., Storage Anal.}, pages 1--15, Denver, CO, November 2023.

\bibitem[Song et~al.(2024)Song, Rao, Li, Li, Zhuo, Cai, Wu, Yin, Li, and Wei]{song2024programming}
W.~Song, M.~Rao, Y.~Li, C.~Li, Y.~Zhuo, F.~Cai, M.~Wu, W.~Yin, Z.~Li, and Q.~Wei.
\newblock Programming memristor arrays with arbitrarily high precision for analog computing.
\newblock \emph{Science}, 383\penalty0 (6685):\penalty0 903--910, 2024.

\bibitem[Stathopoulos et~al.(2017)Stathopoulos, Khiat, Trapatseli, Cortese, Serb, Valov, and Prodromakis]{stathopoulos2017multibit}
S.~Stathopoulos, A.~Khiat, M.~Trapatseli, S.~Cortese, A.~Serb, I.~Valov, and T.~Prodromakis.
\newblock Multibit memory operation of metal-oxide bi-layer memristors.
\newblock \emph{Sci. Rep.}, 7\penalty0 (1):\penalty0 17532, December 2017.
\newblock \doi{10.1038/s41598-017-17785-1}.

\bibitem[Stecconi et~al.(2024)Stecconi, Bragaglia, Rasch, Carta, Horst, Falcone, Kate, Gong, Ando, and Olziersky]{stecconi2024analog}
T.~Stecconi, V.~Bragaglia, M.~J. Rasch, F.~Carta, F.~Horst, D.~F. Falcone, S.~C.~Ten Kate, N.~Gong, T.~Ando, and A.~Olziersky.
\newblock Analog resistive switching devices for training deep neural networks with the novel tiki-taka algorithm.
\newblock \emph{Nano Lett.}, 24\penalty0 (3):\penalty0 866--872, 2024.

\bibitem[Stuecheli(2013)]{stuecheli2013next}
J.~Stuecheli.
\newblock Next generation power microprocessor.
\newblock In \emph{Hot Chips}, Cupertino, CA, August 2013.

\bibitem[Tang et~al.(2018)Tang, Bishop, Kim, Copel, Gokmen, Todorov, Shin, Lee, Solomon, and Chan]{tang2018ecram}
J.~Tang, D.~Bishop, S.~Kim, M.~Copel, T.~Gokmen, T.~Todorov, S.~Shin, K.-T. Lee, P.~Solomon, and K.~Chan.
\newblock Ecram as scalable synaptic cell for high-speed, low-power neuromorphic computing.
\newblock In \emph{IEEE Int. Electron Devices Meet.}, pages 13--1, San Francisco, CA, December 2018.

\bibitem[Vasilopoulos et~al.(2023)Vasilopoulos, Büchel, Kersting, Lammie, Brew, Choi, Philip, Saulnier, Narayanan, Gallo, et~al.]{vasilopoulos2023exploiting}
A.~Vasilopoulos, J.~Büchel, B.~Kersting, C.~Lammie, K.~Brew, S.~Choi, T.~Philip, N.~Saulnier, V.~Narayanan, M.~Le Gallo, et~al.
\newblock Exploiting the state dependency of conductance variations in memristive devices for accurate in-memory computing.
\newblock \emph{IEEE Transactions on Electron Devices}, 70\penalty0 (12):\penalty0 6279--6285, December 2023.

\bibitem[Wang et~al.(2020)Wang, Wu, Tian, and Shi]{wang2020ssm}
Y.~Wang, S.~Wu, L.~Tian, and L.~Shi.
\newblock {SSM}: A high-performance scheme for in situ training of imprecise memristor neural networks.
\newblock \emph{Neurocomputing}, 407:\penalty0 270--280, 2020.

\bibitem[Woo et~al.(2016)Woo, Moon, Song, Lee, Kwak, Park, and Hwang]{woo2016improved}
J.~Woo, K.~Moon, J.~Song, S.~Lee, M.~Kwak, J.~Park, and H.~Hwang.
\newblock Improved synaptic behavior under identical pulses using alox/hfo2 bilayer rram array for neuromorphic systems.
\newblock \emph{IEEE Electron Device Lett.}, 37\penalty0 (8):\penalty0 994--997, 2016.

\bibitem[Wu et~al.(2024)Wu, Gokmen, Rasch, and Chen]{wu2024towards}
Z.~Wu, T.~Gokmen, M.~Rasch, and T.~Chen.
\newblock Towards exact gradient-based training on analog in-memory computing.
\newblock In \emph{Proc. Advances in Neural Information Processing Systems}, pages 37264--37304, Vancouver, Canada, December 2024.

\bibitem[Wu et~al.(2025)Wu, Xiao, Gokmen, Fagbohungbe, and Chen]{wu2025analog}
Z.~Wu, Q.~Xiao, T.~Gokmen, O.~Fagbohungbe, and T.~Chen.
\newblock Analog in-memory training on general non-ideal resistive elements: The impact of response functions.
\newblock \emph{arXiv preprint arXiv:2502.06309}, February 2025.

\bibitem[Xi et~al.(2020)Xi, Gao, Tang, Chen, Chang, Hu, Spiegel, Qian, and Wu]{xi2020memory}
Y.~Xi, B.~Gao, J.~Tang, A.~Chen, M.-F. Chang, X.~S. Hu, J.~Van~Der Spiegel, H.~Qian, and H.~Wu.
\newblock In-memory learning with analog resistive switching memory: A review and perspective.
\newblock \emph{Proceedings of the IEEE}, 109\penalty0 (1):\penalty0 14--42, January 2020.

\bibitem[Xu et~al.(2024{\natexlab{a}})Xu, Liu, Duan, Liao, Jin, Yang, Li, Liu, Mao, and Zhang]{xu2024reharvest}
J.~Xu, H.~Liu, Z.~Duan, X.~Liao, H.~Jin, X.~Yang, H.~Li, C.~Liu, F.~Mao, and Y.~Zhang.
\newblock Reharvest: An adc resource-harvesting crossbar architecture for reram-based dnn accelerators.
\newblock \emph{ACM Trans. Archit. Code Optim.}, 21\penalty0 (3):\penalty0 1--26, 2024{\natexlab{a}}.

\bibitem[Xu et~al.(2024{\natexlab{b}})Xu, Liu, Peng, Duan, Liao, and Jin]{xu2024cascaded}
J.~Xu, H.~Liu, X.~Peng, Z.~Duan, X.~Liao, and H.~Jin.
\newblock A cascaded reram-based crossbar architecture for transformer neural network acceleration.
\newblock \emph{ACM Trans. Des. Autom. Electron. Syst.}, 30\penalty0 (1):\penalty0 1--23, 2024{\natexlab{b}}.

\bibitem[Yang et~al.(2019)Yang, Wang, and Fang]{yang2019multilevel}
S.~Yang, M.~Wang, and E.~X. Fang.
\newblock Multilevel stochastic gradient methods for nested composition optimization.
\newblock \emph{SIAM J. Optim.}, 29\penalty0 (1):\penalty0 616--659, 2019.

\bibitem[Zeng and Doan(2024{\natexlab{a}})]{zeng2024fast}
S.~Zeng and T.~Doan.
\newblock Fast two-time-scale stochastic gradient method with applications in reinforcement learning.
\newblock In \emph{Conference on Learning Theory}, pages 5166--5212, Edinburgh, UK, July 2024{\natexlab{a}}.

\bibitem[Zeng and Doan(2024{\natexlab{b}})]{zeng2024accelerated}
S.~Zeng and T.~T. Doan.
\newblock Accelerated multi-time-scale stochastic approximation: Optimal complexity and applications in reinforcement learning and multi-agent games.
\newblock \emph{arXiv preprint arXiv:2409.07767}, 2024{\natexlab{b}}.

\bibitem[Zhang et~al.(2017)Zhang, Wang, and Verma]{zhang2017memory}
J.~Zhang, Z.~Wang, and N.~Verma.
\newblock In-memory computation of a machine-learning classifier in a standard 6t sram array.
\newblock \emph{IEEE J. Solid-State Circuits}, 52\penalty0 (4):\penalty0 915--924, 2017.

\end{thebibliography}
\bibliographystyle{plainnat}

\newpage
\appendix
\onecolumn
\begin{center}
\Large \textbf{Supplementary Materials} 

\end{center}

\addcontentsline{toc}{section}{} 
\part{} 
\parttoc
\section{Literature Review }
 This section briefly reviews literature that is related to this paper, as complementary to Section \ref{sec:related works}.
\label{sec:Memeristor_Devices}

\textbf{Gradient-based training on AIMC hardware.}
Since the introduction of \emph{rank-update-based} Analog SGD~\citep{gokmen2016acceleration}, various techniques have been proposed to improve its robustness under non-ideal device behavior. 
To mitigate asymmetric updates and noise accumulation in Analog SGD, TT-v1 uses an auxiliary array to accumulate a moving average of gradients, periodically transferring them to the main array~\citep{gokmen2020algorithm}.
TT-v2 enhances this approach with digital filtering to improve robustness against noise and low conductance ~\citep{gokmen2021enabling}. However, despite these refinements, rank-update methods still struggle to converge reliably when scaling to larger models under limited conductance states. Some in-memory analog training approaches employ hybrid 3T1C–PCM synapses, which rely on closed-loop tuning (CLT) with about 20 write–verify retries to transfer gradients from the linear 3T1C capacitor to the non-ideal PCM device~\citep{ambrogio2018equivalent,cristiano2018perspective}. While precise, CLT introduces substantial latency, energy, and control overhead, and the bulky 3T1C CMOS cells also limit array density compared to compact memristive devices, which has motivated a shift toward more compact, modestly linear NVMs.
Moreover, the volatile nature of CMOS capacitors means that the effective number of valid states is ultimately determined by the non-volatile PCM, preventing extension to arbitrary precision beyond the intrinsic limits of memristive devices.
Our approach avoids the use of auxiliary CMOS components and iterative CLT, and instead employs fully asymmetric and limited-precision memristor-based synapses, which are trained in-memory through a simple \emph{open-loop transfer} mechanism between tiles and can achieve high-precision training.
Meanwhile, another class of in-memory training approaches known as \emph{hybrid training paradigms}  has emerged.The MP approach~\citep{le2018mixed}, for example, computes gradients digitally and directly programs these high-precision gradients into low-precision analog weights, enabling high-accuracy training even with very few conductance states. Subsequent extensions~\citep{wang2020ssm, huang2020overcoming} incorporate momentum to further suppress gradient noise. However, these methods often incur higher storage, memory access, and computational costs.

\setlength{\tabcolsep}{2pt} 
\begin{wraptable}{r}{0.65\linewidth}
  \vspace{-1.5 em}
\centering
\begin{tabular}{@{}lcc@{}}
\toprule
\textbf{Device Name} & \textbf{\# States} & \textbf{Mature} \\ \midrule
Capacitor~\citep{li2018capacitor}    & 400  & \ding{51} \\
ECRAM~\citep{tang2018ecram}        & 1000 & \ding{55}\\
ECRAM (MO)~\citep{kim2019metal}        & 7100 & \ding{55} \\
PCM~\citep{nandakumar2020mixed}       & 200  & \ding{51}\\
RERAM (OM)~\citep{gong2022deep}    & 21   & \ding{51}\\
RERAM (HfO$_2$)~\citep{gong2022deep}   & 4    & \ding{51}  \\ 
RERAM (AlO$_x$/HfO$_2$)~\citep{woo2016improved}   & 40    & \ding{51}  \\ 
RERAM (PCMO 
)~\citep{park2013neuromorphic}   & 50    & \ding{51}  \\ 
RERAM (HfO$_2$)~\citep{jiang2016sub}   & 26    & \ding{51}  \\ \bottomrule
\end{tabular}
\caption{
Comparison of representative analog memory devices used for DNN training. 
Here, \textit{Mature} denotes whether the corresponding class of memristor devices has been demonstrated with stable and reproducible fabrication processes \citep{joshi2020accurate}. 
}
\label{table:device_granularity}
\vspace{-1 em}
\end{wraptable}
\textbf{Low-precision computing.}  
Existing works have mainly demonstrated how low-precision devices can support high-precision computing in scientific computing and DNN inference. In scientific computing, algorithmic techniques have been proposed to achieve high-precision arithmetic on binary devices \cite{feinberg2018enabling, song2023refloat}, and residual programming strategies sequentially approximate the residuals of previously programmed tiles \cite{song2024programming}. 
The residual concept has also inspired precision-enhancing strategies in DNN inference, where multiple cells are used to encode high-precision weights \cite{le2022precision, pedretti2021conductance, boybat2018neuromorphic}. Varying the significance of these devices has been shown to yield further accuracy improvements \cite{mackin2022optimised}.
Unlike these approaches that operate on static weights, our method tackles the more challenging setting of training, where weights dynamically evolve under asymmetric and low-precision updates across multiple coupled tiles, and must still converge jointly toward the optimal solution.
Alternative precision enhancement strategies include incorporating hardware specific noises into the training process to improve inference accuracy \cite{klachko2019improving,he2019noise}.
Other approaches include logarithmic weight-to-conductance mappings that bias encoding toward more stable device states~\cite{vasilopoulos2023exploiting}, and inference~\cite{zhang2017memory} through weighted majority voting after offline learning of binary weights
and column-specific scaling factors.

\textbf{Memristor devices.}
To provide an intuitive overview of the characteristics of current analog memristor devices, particularly their conductance states, we survey recent works on ReRAM~\citep{stathopoulos2017multibit,chang2011short}, PCM~\citep{burr2015experimental,burr2016recent}, MRAM~\citep{apalkov2013spin,rzeszut2022multi}, and ECRAM~\citep{fuller2019parallel}, and summarize their reported conductance state information in Table~\ref{table:device_granularity}. In this work, our experiments and analysis primarily focus on bi-directionally updated devices such as ReRAM, which represent one of the most practically relevant device classes for analog training. To the best of our knowledge, most reported ReRAM devices provide fewer than 100 distinct conductance levels, with practical demonstrations typically limited to around 4-bit precision, which poses a fundamental limitation for scaling analog training to larger models.
As illustrated in Table~\ref{tab:80states}, where we conducted experiments on ResNet-34 with a larger portion of the model converted into analog and using 80 states, accuracy collapses for TT-v1 and remains far below our method even when TT-v2 is given the same device with only three tiles. 
These results suggest that our residual multi-tile mechanism is necessary for scaling analog training to larger models, rather than relying on extremely high-precision device assumptions.

\section{Related Work \texorpdfstring{\cite{wu2024towards,wu2025analog}}{}}
\label{section:relation-with-wu2024}
This paper can be viewed as an extension of~\cite{wu2024towards, wu2025analog} to the multi-tile setting, specifically designed to address the quantization noise arising from the limited conductance states in analog in-memory training. We adopt the analog training dynamics model: 
\begin{align}
    W_{t+1} = W_t + \Delta W_t \odot F(W_t) - |\Delta W_t| \odot G(W_t)
\end{align}
and build upon the asymptotic error analysis of Analog SGD presented in Theorem~\ref{theorem_Convergence of_Analog_SGD}, which originates from the asymmetric non-idealities studied in~\cite{wu2024towards, wu2025analog}. In addition, we explicitly incorporate the \emph{limited states non-ideality}, introducing the quantization noise term \(\zeta_t\) into both the model \eqref{biased-update} and the convergence proof, thereby capturing the impact of discrete conductance levels on analog training.

\textbf{Challenge of analyzing the convergence with generic response functions.}
\cite{wu2024towards} investigates analog training under the \emph{asymmetric linear device (ALD)} model, where the pulse response factors are defined as:
\begin{align}
    q_+(w) = 1 - \frac{w - w^\diamond}{\tau_{\max}}, \quad
    q_-(w) = 1 + \frac{w - w^\diamond}{\tau_{\max}}
\end{align}
with \( w^\diamond \) denoting the symmetric conductance point such that \(q_+(w^\diamond) = q_-(w^\diamond)\)~\citep{rasch2023fast}. By Assumption~\ref{assumption:pulse-response-symmetry}, this point can be shifted to zero, i.e., \(w^\diamond=0\). In this case, the symmetric and asymmetric components reduce to \(F(w) = 1\) and \(G(w) = |w|/\tau_{\max}\), which simplifies the Analog SGD recursion \eqref{biased-update_sgd} into the structured form:
\begin{align} \label{eq:analog_lowerbound}
    W_{t+1} = W_{t} - \alpha_t\nabla f(W_{t}; \xi_{t})
    - \frac{\alpha_t}{\tau_{\max}} \left| \nabla f(W_{t}; \xi_{t}) \right| \odot W_{t}.
\end{align}
This recursion admits a special structure where the first term and the bias term can be combined, leading to:
\begin{align}
    W_{t+1} = \Bigl(1 - \frac{\alpha_t}{\tau_{\max}} \left| \nabla f(W_{t}; \xi_{t}) \right|\Bigr)\odot W_{t}
    -\alpha_t\nabla f(W_{t}; \xi_{t})
\end{align}
which is a weighted average of \(W_{t}\) and \(\nabla f(W_{t}; \xi_{t})\). From this perspective, the transfer operation can be interpreted as \emph{biased gradient descent}, with the linear response enabling a tractable analysis.
In contrast, \cite{wu2025analog} considers the more challenging general case where the response functions are not necessarily linear:
\begin{align}
     W_{t+1} = W_t -\alpha\nabla f(W_t;\xi_t) \odot F(W_t) -|\alpha\nabla f(W_t;\xi_t)| \odot G(W_t).
\end{align}
In this setting, the above decomposition is no longer viable, complicating the analysis. To address this difficulty, the authors leverage the Lipschitz continuity of the response functions and establish bounds that relate \(G(W_t)\) to \(W_t\), thereby recovering a weighted average interpretation of \(W_t\) even under general nonlinear responses. In our paper, we adopt the generic-response to establish the scalability of our algorithm.

\textbf{Challenge of analyzing the convergence with quantization noise.}
The quantization noise has statistical properties with zero mean and variance proportional to $\Theta(\alpha \Delta w_{\min})$, which after normalizing by the step size $\alpha$ leads to a residual error of order $\Delta w_{\min}$ that does not vanish as $T$ increases, thereby introducing an additional asymptotic error term in Analog SGD. As we will prove in Section~\ref{section:Analog_Stochastic_Gradient_Descent_Convergence}, the asymptotic error is bounded by
\begin{align}
     \frac{1}{T} \sum_{t=0}^{T-1} \mathbb{E}[\|W^* - W_t\|^2]
\leq  \mathcal{O}\!\Big(R_T\sqrt{ \tfrac{ 2(f(W_0) - f^*)\sigma^2}{ T}}\Big)
     + 4\sigma^2 S_T + R_T\Delta w_{\min}
\end{align}
while the work in~\cite{wu2025analog} only considers the error from asymmetric updates, which originates from the absolute stochastic gradient term $|\alpha \nabla f(W_t;\xi_t)|$ in~\eqref{biased-update_sgd} and leads to an asymptotic error of order $\sigma^2$:
\begin{align}
     \frac{1}{T} \sum_{t=0}^{T-1} \mathbb{E}[\|W^* - W_t\|^2]
\leq  \mathcal{O}\!\Big(\sqrt{ \tfrac{ 2(f(W_0) - f^*)\sigma^2}{ T}}\Big)
     + 4\sigma^2 S_T.
\end{align}

\textbf{Challenge of analyzing the convergence with multiple-tiles.}
From the bilevel optimization perspective, the convergence analysis in~\cite{wu2025analog} for the two-tile case amounts to solving the following two-sequence nested problem:
\begin{align}
    \label{problem:residual-learning}
    \argmin_{W^{(0)}\in\reals^D} \;\|P_1^*(W^{(0)}) \|^2, 
    \quad \text{s.t. }~
    P_1^*(W^{(0)})  \in \argmin_{P\in\reals^D} f(W^{(0)} + \gamma P).
\end{align}
In contrast, our multi-tile residual learning algorithm generalizes this formulation to a multi-sequence bilevel optimization problem as described in Section \ref{sec.3}:
\begin{align}
W^{(0)} &:= \arg\min_{U_{0}}   \| U_{0}  - P_0^*  \|^2,\quad  P_0^* := W^*, \notag \\
W^{(1)} &:= \arg\min_{U_{1}}   \| U_{1}  - P_1^*(\overline{W}^{(1)})  \|^2, 
\quad \text{s.t. } P_1^*(\overline{W}^{(1)}) := \arg\min_{P_1} f(\overline{W}^{(1)} + \gamma P_1), \notag\\
&\ldots \notag\\
W^{(N)} &:= \arg\min_{U_{N}}    \| U_{N} - P_N^*(\overline{W}^{(N)}) \|^2, 
\quad \text{s.t. } P_N^*(\overline{W}^{(N)})  := \arg\min_{P_N} f(\overline{W}^{(N)} + \gamma^N P_N).
\end{align}
The analysis of multi-sequence residual learning is substantially more challenging than the two-tile case. First, each tile must track a drifting optimum that recursively depends on the composite weights of all preceding tiles, creating a deeply nested dependency that complicates the convergence analysis. Second, the quantization error from different tiles are not isolated but accumulate and couple across layers, requiring global measures such as Lyapunov functions to capture their joint dynamics, meanwhile ensuring that the overall error decays as more tiles are introduced. 
As detailed in Section 
\ref{sec.3} and \ref{section:proof-TT-convergence-scvx}, we address these difficulties using a multi-timescale learning strategy, where each tile is updated in its own inner loop with sufficiently long iterations, while the preceding tiles are frozen until the current tile converges close enough to its quasi-stationary optimum. 

\section{Mapping Coefficient Setting}
In AIMC, each \emph{logical weight} \( W \) is physically represented by mapping the difference between two \emph{physical conductance} values: a main conductance \( C_{\mathrm{main}} \) and a reference conductance \( C_{\mathrm{ref}} \). Specifically, the mapping takes the form:
\begin{align}
\label{eq:mapping}
    W = \kappa C = \kappa (C_{\mathrm{main}} - C_{\mathrm{ref}})
\end{align}
where \( \kappa \) is a fixed scaling constant that determines the logical weight range based on the physical conductance range of the device.

This representation scheme allows the hardware to represent both positive and negative weights using non-negative conductance values, which are physically realizable.
Before proceeding with the analysis, we clarify a slight abuse of notation used in the main text. In our notation, we do not explicitly distinguish between the physical conductance \( C \) of the memristive crossbar array and the corresponding logical weight \( W \), which are related by a fixed mapping constant. However, it is important to note that the device-level response functions \( q_\pm(\cdot) \) as well as symmetric and asymmetric components  \( F(\cdot) \) and  \( G(\cdot) \) are defined over the conductance domain. In the following theoretical analysis, we will reintroduce this mapping explicitly when necessary to ensure mathematical correctness.
In fact, the conductance-update rule is derived using the same approach as the update rule presented in ~\eqref{biased-update} of the main text:
\begin{equation}
\label{eq: conductance_update}
  C_{t+1}
   = 
  C_t
  + \Delta C_t \odot F(C_t)
  - |\Delta C_t| \odot G(C_t)
  + \frac{\zeta_t}{\kappa}
\end{equation}
where $\zeta_t$ is a stochastic quantization noise term introduced by the finite weight resolution $\Delta w_{\min}$, with $\mathbb{E}[\zeta_{t}] = 0, 
    \operatorname{Var}[\zeta_{t}] = \Theta ( \alpha \cdot \Delta w_{\min})$. Here $\alpha$ is the learning rate. See details in Lemma \ref{lemma:pulse-update-error}. We rewrite \eqref{biased-update} in its exact form by multiply $\kappa$ on both sides of \eqref{eq: conductance_update}:
\begin{equation}
  W_{t+1}
   = 
  W_t
  + \Delta W_t \odot F \bigl(\frac{W_t}{\kappa}\bigr)
  - |\Delta W_t| \odot G \bigl(\frac{W_t}{\kappa}\bigr)
  +\zeta_t.
  \label{eq:biased-update-conductance}
\end{equation}
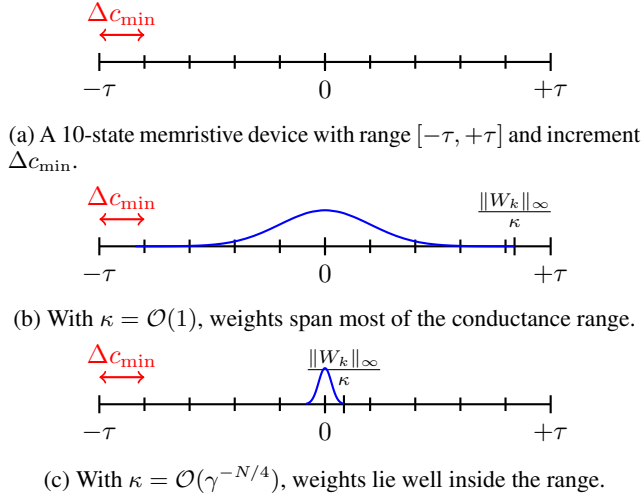
\begin{figure}[t]
    \centering
    \vspace{-2em}

    \begin{subfigure}[b]{0.6\textwidth}
        \centering
        \begin{tikzpicture}[scale=0.6, thick]
            \draw (-5,0) -- (5,0);
            \draw (-5,0.2) -- (-5,-0.2);
            \draw (5,0.2) -- (5,-0.2);
            \draw (0,0.2) -- (0,-0.2);
            \foreach \i in {1,...,9} {
                \draw (-5 + \i*1,0.15) -- (-5 + \i*1,-0.15);
            }
            \draw[<->, red, thick] (-5,0.6) -- (-4,0.6);
            \node[above, red] at (-4.5,0.6) {$\Delta c_{\min}$};
            \node[below] at (-5,-0.2) {$-\tau$};
            \node[below] at (5,-0.2) {$+\tau$};
            \node[below] at (0,-0.2) {$0$};
        \end{tikzpicture}
        \caption{A 10-state memristive device with range $[-\tau,+\tau]$ and increment $\Delta c_{\min}$.}
    \end{subfigure}

    \begin{subfigure}[b]{0.6\textwidth}
        \centering
        \begin{tikzpicture}[scale=0.6, thick]
            \draw (-5,0) -- (5,0);
            \draw (-5,0.2) -- (-5,-0.2);
            \draw (5,0.2) -- (5,-0.2);
            \draw (0,0.2) -- (0,-0.2);
            \draw (4.2,0.2) -- (4.2,-0.2);
            \foreach \i in {1,...,9} {
                \draw (-5 + \i*1,0.15) -- (-5 + \i*1,-0.15);
            }
            \draw[<->, red, thick] (-5,0.6) -- (-4,0.6);
            \node[above, red] at (-4.5,0.6) {$\Delta c_{\min}$};
            \node[below] at (-5,-0.2) {$-\tau$};
            \node[below] at (5,-0.2) {$+\tau$};
            \node[below] at (0,-0.2) {$0$};
            \node[above] at (4.2,0.2) {$\frac{\|W_{k}\|_{\infty}}{\kappa}$};
            \draw[domain=-4.2:4.2, smooth, variable=\x, blue, thick]
                plot ({\x}, {0.8*exp(-((\x)^2)/2)});
        \end{tikzpicture}
        \caption{With $\kappa=\mathcal{O}(1)$, weights span most of the conductance range.}
    \end{subfigure}

    \begin{subfigure}[b]{0.6\textwidth}
        \centering
        \begin{tikzpicture}[scale=0.6, thick]
            \draw (-5,0) -- (5,0);
            \draw (-5,0.2) -- (-5,-0.2);
            \draw (5,0.2) -- (5,-0.2);
            \draw (0,0.2) -- (0,-0.2);
            \draw (0.42,0.2) -- (0.42,-0.2);
            \foreach \i in {1,...,9} {
                \draw (-5 + \i*1,0.15) -- (-5 + \i*1,-0.15);
            }
            \draw[<->, red, thick] (-5,0.6) -- (-4,0.6);
            \node[above, red] at (-4.5,0.6) {$\Delta c_{\min}$};
            \node[below] at (-5,-0.2) {$-\tau$};
            \node[below] at (5,-0.2) {$+\tau$};
            \node[below] at (0,-0.2) {$0$};
            \node[above] at (0.42,0.2) {$\frac{\|W_{k}\|_{\infty}}{\kappa}$};
            \draw[domain=-0.42:0.42, smooth, variable=\x, blue, thick]
                plot ({\x}, {0.8*exp(-((\x)^2)/0.04)});
        \end{tikzpicture}
        \caption{With $\kappa=\mathcal{O}(\gamma^{-N/4})$, weights lie well inside the range.}
    \end{subfigure}

    \vspace{-0.5em}
    \caption{Comparison of dynamic ranges and weight distributions under different $\kappa$.}
    \label{figure:mapping-illustration}
    \vspace{-2em}
\end{figure}
All theoretical guarantees in Section \ref{Sec:one-tile-asymptotic-error} and \ref{section:proof-TT-convergence-scvx} are proved using
\eqref{eq:biased-update-conductance}; the shorthand
\eqref{biased-update} is retained elsewhere to keep the notation compact.
Furthermore, although the logical weight \( W\in \mathbb{R}^{D\times D} \) in the main text, the proof part focuses on a vector-valued  $W \in \mathbb{R}^D$ while retaining the same uppercase notation to distinguish it from scalar elements \( w \). This simplification is justified because, in analog updates—whether during gradient updates or transfer updates—each column of the weight matrix behaves identically. Specifically, during gradient updates, all elements in a column are updated in parallel, and during transfer updates, updates are applied column-wise from one tile to another. Therefore, the vector setting captures the essential behavior without loss of generality.
Table~\ref{tab:notation_conductance_weight} summarizes the notations that appear in both the main text and the proofs, especially those where conductance and weight representations may be easily confused. 

In section \ref{section:proof-TT-convergence-scvx}, the mapping constant $\kappa$ is set as \(\mathcal{O}(\gamma^{-\frac{N}{4}}) \) to make Theorem~\ref{theorem:TT-convergence-scvx_long} hold. This setting is feasible because increasing the mapping coefficient 
$\kappa$ narrows the physical conductance range used to represent the same logical weight values. Figure \ref{figure:mapping-illustration} illustrates that  when the logical weights are concentrated well within the conductance boundaries, the impact of non-ideal device behaviors such as saturation, nonlinearities, or non-monotonicity is significantly reduced. This makes the training process more robust. 
\begin{table}[ht]
\centering
\begin{tabular}{ll}
\toprule
\textbf{Symbol} & \textbf{Meaning} \\
\midrule
\multicolumn{2}{l}{\textbf{Conductance domain}} \\
\midrule
$[-\tau, \tau]$ & Physical conductance range \\
$C_t$ & Matrix of physical conductance at step $t$\\
$c_t$ & Scalar conductance value of a single element at step $t$\\
$\Delta c_{\min}$ & Minimum conductance increment from one pulse at $c = 0$ \\
$F(c), G(c)$ & Symmetric and asymmetric components of conductance \\
$q_+(c), q_-(c)$ & Upward and downward pulse response factors at conductance $c$ \\
$U(c, \Delta c)$ & Approximate analog update over conductance \\
$U_p^{\text{BL}}(c, s)$ & Pulse-based update using bit-length pulses in conductance domain \\
\midrule
\multicolumn{2}{l}{\textbf{Weight domain}} \\
\midrule
$[\tau_{\min}, \tau_{\max}]$ & Logical weight range used in training \\
$W_t$ & Matrix of logical weights at step $t$\\
$w_t$ & Scalar logical weight of a single element at step $t$\\
$\|W_t\|_{\infty}$ & Maximum absolute value among all elements in $W_t$ \\
$W_{\max}$ & Upper bound of $\|W_{t}\|_\infty$ for all $t$, lies in $[\|W_{t}\|_\infty, \tau_{\max})$ \\
$\Delta w_{\min}$ & Minimum weight increment from one pulse at $w = 0$ \\
$F(w), G(w)$ & Symmetric and asymmetric components evaluated as $F(w/\kappa), G(w/\kappa)$ \\
$q_+(w), q_-(w)$ & Pulse response factors evaluated as $q_\pm(w) := q_\pm(w/\kappa)$ \\
$U(w, \Delta w)$ & Approximate analog update over logical weights \\
$U_p^{\text{BL}}(w, s)$ & Pulse-based update using bit length in weight domain\\
\midrule
\multicolumn{2}{l}{\textbf{Shared / Mapping}} \\
\midrule
$\kappa$ & Mapping constant: $W = \kappa C$ \\
\bottomrule
\end{tabular}
\vspace{1em}
\caption{Notations in the conductance and weight domains}
\label{tab:notation_conductance_weight}
\vspace{-2em}
\end{table}
\section{Notations}
\label{section:notations}
In this section, we define a series of notations that will be used in the analysis.

\textbf{Pseodo-inverse of diagonal matrix or vector.} For a given diagonal matrix $U\in\reals^{D\times D}$ with its $i$-th diagonal element $[U]_i$, we define the pseudo-inverse of a diagonal matrix $U$ as $U^\dag$, which is also a diagonal matrix with its $i$-th diagonal element: 
\begin{align}
    \label{definition:pseudo-inverse}
    [U^\dag]_i := \begin{cases}
        1 / [U]_i, ~~~&{[U]_i \ne 0}, \\
        0, ~~~&{[U]_i = 0}.
    \end{cases}
\end{align}
By definition, the pseudo-inverse satisfies $U U^\dag V = U^\dag U V$ for any diagonal matrix $U\in\reals^D$ and any matrix $V\in\reals^D$. With a slight abuse of notation, we also define the pseudo-inverse of a vector $W\in\reals^D$ as $W^\dag := \diag(W)^\dag$.

\textbf{Weighted norm.} For a given weight vector $S\in\reals^D_+$, the weighted norm $\|\cdot\|_S$ of vector $W\in\reals^D$ is defined by:
\begin{align}
    \|W\|_S := \sqrt{\sum_{d=1}^D [S]_d[W]_d^2}
    = W^\top \diag(S) W
\end{align}
where $\diag(S)\in\reals^{D\times D}_+$ rearranges the vector $S$ into a diagonal matrix.
\begin{lemma}
    \label{lemma:properties-weighted-norm}
    $\|W\|_S$ has the following properties: (1) $\|W\|_S=\|W\odot\sqrt{S}\|$; (2) $\|W\|_S\le\|W\| \sqrt{\|S\|_{\infty}}$; (3) $\|W\|_S\ge\|W\| \sqrt{\min\{[S]_i : i\in\ccalI\}}$.
\end{lemma}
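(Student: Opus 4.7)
The plan is to derive all three properties directly from the definition $\|W\|_S^2 = \sum_{d=1}^D [S]_d [W]_d^2$, without invoking any auxiliary results beyond elementary inequalities for nonnegative sums. Since each coordinate contributes independently and the weights $[S]_d$ are nonnegative (as $S\in\reals_+^D$), every bound reduces to a pointwise comparison of the scalar factors $[S]_d$.

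First, I would establish (1) by a direct computation: expanding the Euclidean norm of the Hadamard product gives
\begin{align*}
\|W\odot\sqrt{S}\|^2 = \sum_{d=1}^D \bigl([W]_d\sqrt{[S]_d}\bigr)^2 = \sum_{d=1}^D [S]_d [W]_d^2 = \|W\|_S^2,
\end{align*}
and taking square roots (both sides are nonnegative) yields the identity. Next, for (2), I would use the pointwise bound $[S]_d \le \|S\|_\infty$, which is valid for every index $d$ by definition of the sup-norm, and factor the constant out of the sum: $\sum_d [S]_d [W]_d^2 \le \|S\|_\infty \sum_d [W]_d^2 = \|S\|_\infty \|W\|^2$, again followed by a square root. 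The argument for (3) is symmetric: the pointwise lower bound $[S]_d \ge \min\{[S]_i : i\in\ccalI\}$ (assuming the minimum is taken over the support of $W$, or at worst over all coordinates) lets me factor out the minimum from the sum, giving $\sum_d [S]_d [W]_d^2 \ge \bigl(\min_i [S]_i\bigr) \|W\|^2$.

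There is essentially no main obstacle, as these are standard inequalities for weighted $\ell_2$-norms. The only minor subtlety worth flagging is the index set $\ccalI$ in property (3), which is not explicitly defined in the excerpt; I would interpret it as $\{1,\ldots,D\}$ (or equivalently the support of the nonzero components of $S$), and the bound then follows unconditionally from the nonnegativity of $S$. No careful handling of the pseudo-inverse notation introduced earlier is needed for this lemma, so the proof can be kept to a few lines of direct calculation.
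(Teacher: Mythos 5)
Your proof is correct, and it is the natural direct computation this lemma calls for; the paper in fact states the lemma without proof, so there is nothing to diverge from. Your reading of $\ccalI$ as the full index set $\{1,\ldots,D\}$ is the intended one, and with $S\in\reals_+^D$ all three properties follow exactly as you argue.
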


\section{Useful Lemmas}

\subsection{Lemma \ref{lemma:pulse-update-error}: Pulse update error}
\label{Section:Proof_of_Lemma pulse-update-error}
\LemmaTTConvergenceScvx*
\begin{proof}[Proof of Lemma \ref{lemma:pulse-update-error}]
Each weight update \(\Delta w_{ij}\) is the sum of \(BL\) independent Bernoulli trials , with $BL$ large enough to satisfy $\frac{\alpha |x_i \delta_j|}{BL \cdot \Delta w_{\min}} \leq 1$:
\begin{align}
    \Delta w_{ij} = \sum_{t=1}^{BL} Z_t
\end{align}
where:
\begin{align}
    Z_t =
\begin{cases}
\Delta w_{\min} \cdot \operatorname{sign}(x_i \delta_j), & \text{with probability } p := \frac{|\alpha x_i \delta_j|}{BL \cdot \Delta w_{\min}}, \\
0, & \text{with probability } 1 - p.
\end{cases}
\end{align}
Then:
\begin{align}
    \mathbb{E}[Z_t] = \Delta w_{\min} \cdot \operatorname{sign}(x_i \delta_j) \cdot p = \frac{\alpha x_i \delta_j}{BL}, \quad
\mathbb{E}[Z_t^2] = \Delta w_{\min}^2 \cdot p.
\end{align}
The equality holds for $\operatorname{sign}(x_i \delta_j) \cdot |x_i \delta_j| = x_i \delta_j$. So the variance of a single trial is:
\begin{align}
    \operatorname{Var}[Z_t] = \mathbb{E}[Z_t^2] - \mathbb{E}[Z_t]^2 = \Delta w_{\min}^2 p (1 - p).
\end{align}
Then summing over \(BL\) trials:
\begin{align}
    \mathbb{E}[\Delta w_{ij}] = BL \cdot \mathbb{E}[Z_t]  = \alpha x_i \delta_j, \quad
\operatorname{Var}[\Delta w_{ij}] = BL \cdot \Delta w_{\min}^2 \cdot p \cdot (1 - p).
\end{align}
Thus, 
\begin{align}
    \mathbb{E}[\zeta_{ij}] = \mathbb{E}[\Delta w_{ij}]- \alpha x_i \delta_j=0.
\end{align}
Moreover, substituting \(p = \dfrac{|\alpha x_i \delta_j|}{BL \cdot \Delta w_{\min}}\) into $\operatorname{Var}[\Delta w_{ij}]$, we get:
\begin{align}
    \operatorname{Var}[\Delta w_{ij}] = BL \cdot \Delta w_{\min}^2 \cdot \frac{|\alpha x_i \delta_j|}{BL \cdot \Delta w_{\min}} \cdot \left(1 - \frac{|\alpha x_i \delta_j|}{BL \cdot \Delta w_{\min}} \right).
\end{align}
Thus,
\begin{align}
    \operatorname{Var}[\zeta_{ij}] = \operatorname{Var}[\Delta w_{ij}] = \alpha |x_i \delta_j| \cdot \Delta w_{\min} \cdot \left(1 - \frac{\alpha |x_i \delta_j|}{BL \cdot \Delta w_{\min}} \right) = \Theta ( \alpha \cdot \Delta w_{\min}).
\end{align}
\end{proof}

\subsection{Lemma \ref{lemma:lip-analog-update}: Lipschitz continuity of analog update}
\begin{lemma}
    \label{lemma:lip-analog-update}
    Under Assumption \ref{assumption:pulse-response-symmetry}, the analog increment defined in \eqref{biased-update} is Lipschitz continuous with respect to $\Delta W$ in terms of any weighted norm $\|\cdot\|_S$, i.e., for any $W, \Delta W, \Delta W'\in\reals^D$ and $S\in\reals^{D}_+$, it holds
    \begin{align}
        &\|\Delta W \odot F(W) -|\Delta W| \odot G(W)-(\Delta W' \odot F(W) -|\Delta W'| \odot G(W))\|_S
        \\
        \le&\ F_{\max}\|\Delta W-\Delta W'\|_S.
        \nonumber
    \end{align}
\end{lemma}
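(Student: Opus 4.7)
\begin{myproof}[Proof proposal]
The plan is to reduce the vector inequality to a pointwise bound via the structure of the weighted norm and then perform a clean scalar argument for each coordinate. Using property (1) of Lemma~\ref{lemma:properties-weighted-norm}, the left-hand side equals $\|\sqrt{S}\odot\bigl[(\Delta W-\Delta W')\odot F(W) - (|\Delta W|-|\Delta W'|)\odot G(W)\bigr]\|$, so it suffices to prove the coordinate-wise estimate $|h_w(a) - h_w(b)| \le F_{\max}\,|a-b|$ where $h_w(a) := a\,F(w) - |a|\,G(w)$, uniformly over the relevant range of $w$; squaring, weighting by $[S]_d$, and summing then gives exactly $F_{\max}\|\Delta W-\Delta W'\|_S$ on the right.

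For the scalar Lipschitz estimate, I would first rewrite $h_w$ in its more transparent piecewise-linear form using the identities $F(w)+G(w)=q_-(w)$ and $F(w)-G(w)=q_+(w)$ from Section~\ref{subsection:Pulse update}, namely $h_w(a)=q_+(w)\,a$ for $a\ge 0$ and $h_w(a)=q_-(w)\,a$ for $a<0$. By Assumption~\ref{assumption:pulse-response-symmetry}, both response factors $q_+(w)$ and $q_-(w)$ are nonnegative, so $h_w$ is a continuous piecewise-linear function with two nonnegative slopes. I would then do a short case analysis: when $a,b$ have the same sign, $|h_w(a)-h_w(b)|=q_\pm(w)|a-b|$; when they have opposite signs, say $a\ge 0>b$, we compute $|h_w(a)-h_w(b)| = q_+(w)\,a + q_-(w)\,|b| \le \max\{q_+(w),q_-(w)\}\,(a+|b|) = \max\{q_+(w),q_-(w)\}\,|a-b|$. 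In all cases the pointwise Lipschitz constant is $\max\{q_+(w),q_-(w)\}$, which is uniformly bounded by $F_{\max}$ (interpreting $F_{\max}$ as the standing uniform bound on the pulse response magnitudes over the operating range guaranteed by Assumption~\ref{assumption:bounded-saturation} together with the continuity of $q_\pm$ in Assumption~\ref{assumption:pulse-response-symmetry}).

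Finally, I would assemble the pieces: the pointwise bound $|h_{W_d}(\Delta W_d)-h_{W_d}(\Delta W'_d)|\le F_{\max}|\Delta W_d-\Delta W'_d|$ gives, after multiplying by $\sqrt{[S]_d}$ and taking the $\ell_2$ norm over $d$, the desired inequality $\|\cdots\|_S \le F_{\max}\|\Delta W-\Delta W'\|_S$. The argument requires no assumptions beyond continuity and nonnegativity of $q_\pm$ and does not rely on strong convexity or smoothness of $f$.

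The only subtlety, and the step I expect to require the most care, is the opposite-signs case in the scalar argument: the naive bound $|h_w(a)-h_w(b)|\le F(w)|a-b|+|G(w)|\,\bigl||a|-|b|\bigr|$ together with the reverse triangle inequality only yields the weaker constant $F(w)+|G(w)|=\max\{q_+(w),q_-(w)\}$; one must notice that this equals exactly the correct slope of $h_w$ on the active side and does not exceed $F_{\max}$, rather than $2F_{\max}$. Making this identification precise, and therefore obtaining $F_{\max}$ rather than a larger constant on the right-hand side, is the main conceptual point of the lemma.
\end{myproof}
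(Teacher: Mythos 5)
Your reduction to coordinates via Lemma~\ref{lemma:properties-weighted-norm}, the rewriting $h_w(a)=q_+(w)\,a$ for $a\ge 0$ and $h_w(a)=q_-(w)\,a$ for $a<0$, and the case analysis identifying the sharp pointwise Lipschitz constant as $\max\{q_+(w),q_-(w)\}=F(w)+|G(w)|$ are all correct. Note that the paper itself contains no proof of this lemma (it defers to Section~C of the cited prior work), so I can only judge your argument on its merits; the route you take is the natural one and almost certainly the intended one.

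The gap is exactly the step you flag and then resolve by fiat: the claim $\max\{q_+(w),q_-(w)\}\le F_{\max}$. In this paper $F_{\max}$ is defined (Lemma~\ref{lemma:F-G-property}, and again in the Analog SGD analysis of Appendix~\ref{section:Analog_Stochastic_Gradient_Descent_Convergence}) as an upper bound on the \emph{symmetric component} $F=\tfrac{1}{2}(q_++q_-)$ only; Lemma~\ref{lemma:F-G-property} gives in addition $|G|\le F$, so from the paper's definitions your pointwise constant $F(w)+|G(w)|$ is only bounded by $2F_{\max}$, not by $F_{\max}$. The discrepancy is not cosmetic: for the asymmetric linear device used throughout the paper, $F\equiv 1$ so $F_{\max}=1$, while $q_-(w)=1+w/\tau_{\max}>1$ for $w>0$; taking a coordinate with $w>0$, $\Delta W=-\epsilon$, $\Delta W'=0$ gives
\begin{align}
\bigl|h_w(-\epsilon)-h_w(0)\bigr| \;=\; \Bigl(1+\tfrac{w}{\tau_{\max}}\Bigr)\epsilon \;>\; F_{\max}\,\epsilon,
\end{align}
so the stated inequality is false if $F_{\max}$ is read as $\sup_w F(w)$. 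Your appeal to Assumption~\ref{assumption:bounded-saturation} plus continuity in Assumption~\ref{assumption:pulse-response-symmetry} does give \emph{some} finite uniform bound on $q_\pm$, but it does not give the specific constant $F_{\max}$ as defined here. To close the proof you must either (i) carry the constant $\sup_w\max\{q_+(w),q_-(w)\}\le 2F_{\max}$ through the statement, or (ii) make explicit that $F_{\max}$ is (re)defined as a uniform bound on the response factors $q_+$ and $q_-$ themselves, which implies $F\le F_{\max}$ and keeps the other uses of $F_{\max}$ valid but is a strengthening of the paper's definition rather than a consequence of it. As written, the key inequality $\max\{q_+(w),q_-(w)\}\le F_{\max}$ is asserted, not proved.
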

The proof of Lemma \ref{lemma:lip-analog-update} can be found in \citep [ Section C]{wu2025analog}.
\begin{lemma}
    \label{lemma:F-G-property}
    Under Assumption \ref{assumption:pulse-response-symmetry}, the following statements about the response factors are valid; 
    (1) the symmetric part $F(\cdot)$ is upper bounded by a constant $F_{\max} > 0$, i.e. $F(w)\le F_{\max},  \forall w\in{\mathbb R}$;
    (2) The following inequality holds
    \begin{align}
    \label{constraint:response}
        -F(w) \le G(w) \le F(w)
 \end{align}
    where $G(w)=-F(w)$ and $G(w)=F(w)$ hold only when $w=\tau_{\min}$ and $w=\tau_{\max}$, respectively. 
\end{lemma}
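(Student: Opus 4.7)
The plan is to derive both statements directly from the definitions $F(w)=(q_+(w)+q_-(w))/2$ and $G(w)=(q_-(w)-q_+(w))/2$ together with the four clauses of Assumption \ref{assumption:pulse-response-symmetry}; no iteration-level arguments are needed, since the lemma is purely a pointwise statement about the response factors.

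For part (1), the strategy is to invoke continuity and compactness. By Assumption \ref{assumption:bounded-saturation} the conductance values encountered during training remain in the compact interval $[\tau_{\min},\tau_{\max}]$, and by the Continuity clause of Assumption \ref{assumption:pulse-response-symmetry} both $q_+$ and $q_-$ are continuous, so $F=(q_++q_-)/2$ is continuous on this compact interval. The extreme value theorem then yields a finite maximizer, which I would take as the constant $F_{\max}$. Strict positivity follows from the Positive-definiteness clause: at any interior point $w\in(\tau_{\min},\tau_{\max})$ we have $q_+(w)>0$ and $q_-(w)>0$, so $F(w)>0$, forcing $F_{\max}>0$.

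For part (2), the plan is to rewrite the claimed double inequality using the identities $F(w)-G(w)=q_+(w)$ and $F(w)+G(w)=q_-(w)$, which are immediate from the definitions. The inequality $-F(w)\le G(w)\le F(w)$ is then equivalent to $q_-(w)\ge 0$ and $q_+(w)\ge 0$, both of which follow from Positive-definiteness on the open interval combined with Saturation at the endpoints. The equality analysis for the boundary is also read off from this identification: $G(w)=-F(w)$ holds iff $q_-(w)=0$, and by Saturation together with Positive-definiteness this happens iff $w=\tau_{\min}$; symmetrically, $G(w)=F(w)$ iff $q_+(w)=0$ iff $w=\tau_{\max}$.

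There is no substantial obstacle in this proof; it is essentially a bookkeeping exercise. The only subtlety worth flagging is being explicit about the domain over which $F_{\max}$ is taken: it must be the compact interval $[\tau_{\min},\tau_{\max}]$ (whose relevance to training is ensured by Assumption \ref{assumption:bounded-saturation}) so that $F_{\max}$ is finite and trajectory-independent, and can subsequently be used as a uniform constant in downstream analyses such as Lemma \ref{lemma:lip-analog-update}.
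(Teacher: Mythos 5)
Your proposal is correct and matches the argument the paper implicitly relies on: the paper states Lemma~\ref{lemma:F-G-property} without an explicit proof (its neighboring lemmas are deferred to prior work), and the intended justification is exactly your bookkeeping via $F(w)-G(w)=q_+(w)$, $F(w)+G(w)=q_-(w)$ together with the Saturation and Positive-definiteness clauses, plus continuity on the compact range for the existence of $F_{\max}$. Your explicit restriction to $[\tau_{\min},\tau_{\max}]$ is the right reading of the lemma's ``$\forall w\in\mathbb{R}$'', since Assumption~\ref{assumption:pulse-response-symmetry} constrains $q_\pm$ only on (and at the endpoints of) that interval, which is where the iterates live by Assumption~\ref{assumption:bounded-saturation}.
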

\subsection{Lemma \ref{lemma:element-wise-product-error}: Element-wise product error}
\begin{lemma}
    \label{lemma:element-wise-product-error}
    Let \( U, V, Q \) be vectors indexed by \( \ccalI \). Then the following inequality holds
    \begin{align}
        \langle U, V \odot Q \rangle 
        \ge C_+ \langle U, V\rangle 
        - C_- \langle |U|, |V|\rangle
    \end{align}
    where the constant $C_+$ and $C_-$ are defined by
    \begin{align}
        C_+ :=& \frac{1}{2}\left(\max_{i\in\ccalI}\{[Q]_i\} + \min_{i\in\ccalI}\{[Q]_i\}\right), \\
        C_- :=& \frac{1}{2}\left(\max_{i\in\ccalI}\{[Q]_i\} - \min_{i\in\ccalI}\{[Q]_i\}\right).
    \end{align}
\end{lemma}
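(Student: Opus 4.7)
The plan is to centre the vector $Q$ around its midpoint and then apply a pointwise bound, so that the inequality follows term by term before summing over $i\in\ccalI$. Write $q_{\max}:=\max_{i\in\ccalI}\{[Q]_i\}$ and $q_{\min}:=\min_{i\in\ccalI}\{[Q]_i\}$, so by definition $C_+=(q_{\max}+q_{\min})/2$ and $C_-=(q_{\max}-q_{\min})/2$; the key algebraic identity to exploit is $q_{\max}=C_++C_-$ and $q_{\min}=C_+-C_-$, which together give $[Q]_i-C_+\in[-C_-,C_-]$ for every $i$.

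First, I would introduce the centred residual $r_i:=[Q]_i-C_+$, noting that $|r_i|\le C_-$ for all $i\in\ccalI$. Then I would split the inner product on the left-hand side as
\begin{align}
\langle U,V\odot Q\rangle
= \sum_{i\in\ccalI}[U]_i[V]_i\,[Q]_i
= C_+\langle U,V\rangle + \sum_{i\in\ccalI}[U]_i[V]_i\,r_i. \nonumber
\end{align}
It therefore suffices to prove that $\sum_{i\in\ccalI}[U]_i[V]_i\,r_i\ge -C_-\langle|U|,|V|\rangle$.

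Next, I would apply the elementary pointwise inequality $ab\ge-|a||b|$ with $a=[U]_i[V]_i$ and $b=r_i$, together with $|r_i|\le C_-$, to obtain
\begin{align}
[U]_i[V]_i\,r_i
\;\ge\; -|[U]_i|\,|[V]_i|\,|r_i|
\;\ge\; -C_-\,|[U]_i|\,|[V]_i|. \nonumber
\end{align}
Summing this pointwise bound over $i\in\ccalI$ and combining with the identity above yields the claimed inequality.

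I do not anticipate a real obstacle here: the statement is essentially an affine decomposition of $Q$ into its midpoint $C_+$ plus a residual bounded by $C_-$, followed by a termwise modulus bound. The only mild subtlety is making sure the decomposition $[Q]_i=C_++r_i$ with $|r_i|\le C_-$ is genuinely tight, which is immediate from the definitions of $q_{\max}$ and $q_{\min}$; tightness is attained when $[Q]_i$ equals either endpoint and $[U]_i[V]_i$ has the appropriate sign, which also confirms that the constants $C_+$ and $C_-$ in the lemma cannot be improved without further assumptions on $Q$.
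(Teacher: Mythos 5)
Your proof is correct: the midpoint decomposition $[Q]_i = C_+ + r_i$ with $|r_i|\le C_-$, followed by the termwise bound $[U]_i[V]_i r_i \ge -C_-|[U]_i||[V]_i|$ and summation, establishes the inequality exactly as stated. The paper itself does not spell out a proof (it defers to Section C of the cited work \citep{wu2025analog}), and your argument is the standard, essentially canonical one for this kind of bound, so there is nothing further to reconcile.
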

The proof of Lemma \ref{lemma:element-wise-product-error} can be found in \citep [ Section C]{wu2025analog}.
\section{Proof of Analog Stochastic Gradient Descent Convergence}
\label{section:Analog_Stochastic_Gradient_Descent_Convergence}
\subsection{Convergence of Analog SGD}
In this section, we derive the convergence guarantee of \textit{Analog SGD} under the hardware-constrained update rule in  ~\eqref{eq:biased-update-conductance}.
\begin{theorem}[Convergence of Analog SGD, long version of Theorem \ref{theorem_Convergence of_Analog_SGD_short}]
\label{theorem_Convergence of_Analog_SGD}
Suppose Assumptions \ref{assumption:noise}, \ref{assumption:bounded-saturation}, \ref{assumption:pulse-response-symmetry} hold, if the learning rate is set as 
$\alpha =\mathcal{O}(\sqrt{ \frac{ 2(f(W_0) - f^*)}{\sigma^2 T}})$, then
it holds that:
\[
 \frac{1}{T} \sum_{t=0}^{T-1} \mathbb{E}[\|W^* - W_t\|^2]
\leq  \mathcal{O} \Bigg(R_T\sqrt{ \frac{ 2(f(W_0) - f^*)\sigma^2}{ T}}\Bigg)+4\sigma^2 S_T + R_T\Delta w_{\min}
\]
where $S_T$  and $R_T$ denote the amplification factor given by :
\begin{align}
  S_T := \frac{1}{T} \sum_{t=0}^{T-1} \frac{\|W_t\|^2_\infty / \tau_{\max}^2}{1 - \|W_t\|^2_\infty / \tau_{\max}^2}
, \qquad
R_T := \frac{1}{T} \sum_{t=0}^{T-1} \frac{2L}{1 - \|W_t\|^2_\infty / \tau_{\max}^2}.  
\end{align}
\end{theorem}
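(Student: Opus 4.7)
The plan is to analyze the squared error $\|W^*-W_t\|^2$ through a one-step descent inequality based on the conductance-level recursion~\eqref{eq:biased-update-conductance} with $\Delta W_t = -\alpha\nabla f(W_t;\xi_t)$, and then telescope over $t=0,\dots,T-1$. The three summands in the bound should correspond respectively to (i) the standard SGD descent contribution (the $\sqrt{1/T}$ term), (ii) a non-vanishing bias from the asymmetric response $G$ (the $\sigma^2 S_T$ term), and (iii) a non-vanishing bias from the quantization noise $\zeta_t$ (the $R_T\Delta w_{\min}$ term).

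First I would expand $\|W^*-W_{t+1}\|^2 = \|W^*-W_t\|^2 + 2\langle W_t-W^*, W_{t+1}-W_t\rangle + \|W_{t+1}-W_t\|^2$ and substitute the increment $W_{t+1}-W_t = -\alpha g_t\odot F(W_t/\kappa) - \alpha|g_t|\odot G(W_t/\kappa) + \zeta_t$, where $g_t := \nabla f(W_t;\xi_t)$. Taking conditional expectation with respect to $\xi_t$ and $\zeta_t$ kills the linear $\zeta_t$ term since $\mathbb{E}[\zeta_t]=\mathbf{0}$, while Lemma~\ref{lemma:pulse-update-error} gives $\mathbb{E}[\|\zeta_t\|^2]=\Theta(D\alpha\Delta w_{\min})$, which will ultimately produce the $\Delta w_{\min}$ floor. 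For the $F$-weighted cross term, I would apply Lemma~\ref{lemma:element-wise-product-error} with $Q=F(W_t/\kappa)$, extract a descent coefficient $C_+\ge F_{\min}>0$ from the bounded-weights regime of Assumption~\ref{assumption:bounded-saturation}, and combine with unbiasedness of $g_t$ and $L$-smoothness of $f$ to convert $\langle W_t-W^*,\nabla f(W_t)\rangle$ into a quantity that telescopes against $f(W_t)-f^*$.

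The bias carried by $G(W_t/\kappa)$ is the critical step and is where the amplification factors $S_T$ and $R_T$ emerge. Here I would exploit the two facts supplied by Assumption~\ref{assumption:pulse-response-symmetry} and Lemma~\ref{lemma:F-G-property}: (a) $G$ vanishes at the symmetric point $w=0$, and (b) the envelope $-F(w)\le G(w)\le F(w)$ with equality only at the saturation boundaries. Together with $\|W_t\|_\infty\le W_{\max}\le\tau_{\max}$, these give a bound of the form $|G(W_t/\kappa)|/F(W_t/\kappa)\lesssim \|W_t\|_\infty/\tau_{\max}$, which after squaring and being amortized over the descent coefficient produces the characteristic blowup factor $(1-\|W_t\|_\infty^2/\tau_{\max}^2)^{-1}$ appearing in both $S_T$ and $R_T$. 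Applying Lemma~\ref{lemma:element-wise-product-error} to the $|g_t|\odot G$ term and bounding $\mathbb{E}[\,\|\,|g_t|\,\|\cdot\|W_t-W^*\|\,]$ via Cauchy--Schwarz together with $\mathbb{E}[\|g_t\|^2]\le\|\nabla f(W_t)\|^2+\sigma^2$ then yields an irreducible contribution of order $\sigma^2\cdot\|W_t\|_\infty^2/(\tau_{\max}^2-\|W_t\|_\infty^2)$, which averages to $4\sigma^2 S_T$.

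Finally I would control the squared-increment term $\|W_{t+1}-W_t\|^2$ via Lemma~\ref{lemma:lip-analog-update} (pulling out $F_{\max}$) plus the $\zeta_t$ variance computed above, take expectations, sum over $t=0,\dots,T-1$, telescope the $\|W_t-W^*\|^2$ terms, and divide by $T$. Choosing $\alpha=\mathcal{O}(\sqrt{2(f(W_0)-f^*)/(\sigma^2 T)})$ balances the telescoping $(f(W_0)-f^*)/(\alpha T)$ piece against the per-step variance $\alpha\sigma^2$, producing the $\mathcal{O}(R_T\sqrt{(f(W_0)-f^*)\sigma^2/T})$ leading term, while the two irreducible pieces $4\sigma^2 S_T$ and $R_T\Delta w_{\min}$ survive as asymptotic error floors. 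The main obstacle I anticipate is keeping the three noise sources (stochastic-gradient variance, asymmetric-update bias, and quantization noise) separated cleanly so that each inherits the correct $(1-\|W_t\|_\infty^2/\tau_{\max}^2)^{-1}$ amplification factor rather than a crude global constant, and in particular ensuring the coefficient multiplying $\Delta w_{\min}$ comes out exactly as $R_T$; doing so requires dividing each telescoped inequality by the iterate-dependent descent coefficient $\alpha F_{\min}(W_t)$, which is why a Lipschitz-in-$W$ control of $F$ via Assumption~\ref{assumption:pulse-response-symmetry} is essential.
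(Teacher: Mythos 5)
Your route is genuinely different from the paper's: you run a direct recursion on $\|W_t-W^*\|^2$, whereas the paper runs an $L$-smoothness (function-value) descent on $f(W_t)$, obtains a bound on $\frac{1}{T}\sum_t\mathbb{E}\|\nabla f(W_t)\|^2_{H(W_t)}$ with the saturation weight $H(W_t)=F^2-G^2=q_+\odot q_-$, and converts to distances only at the last line via strong convexity ($\mu\|W_t-W^*\|\le\|\nabla f(W_t)\|$). Your plan is workable in outline --- the paper's own lower-bound argument (Theorem~\ref{theorem1_long}) manipulates the iterate distance exactly as you propose --- and your attribution of the three terms (telescoped descent, $G$-bias floor, $\zeta$-variance floor with $\operatorname{Var}[\zeta]=\Theta(\alpha\Delta w_{\min})$ normalized by one power of $\alpha$) is correct. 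Two bookkeeping points: the telescoped quantity in your recursion is $\|W_0-W^*\|^2/(\alpha T)$, not $(f(W_0)-f^*)/(\alpha T)$, so you need strong convexity (Assumption~\ref{assumption:Lip}, which the paper's proof uses even though the theorem statement omits it) to restate the leading term in the claimed form; and Assumption~\ref{assumption:pulse-response-symmetry} gives only continuity of $F$, not the Lipschitz-in-$W$ control you invoke at the end, though $F_{\min}>0$ does follow by compactness.

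The genuine gap is the $G$ step. You claim that continuity, $G(0)=0$, the envelope $-F\le G\le F$, and bounded weights yield $|G(W_t/\kappa)|/F(W_t/\kappa)\lesssim\|W_t\|_\infty/\tau_{\max}$. This does not follow: Assumption~\ref{assumption:pulse-response-symmetry} provides no quantitative rate at which $G$ vanishes near the symmetric point or approaches $F$ near saturation, so no bound of this form --- and hence no amplification factor $(1-\|W_t\|^2_\infty/\tau_{\max}^2)^{-1}$ --- can be extracted from it for a general response. The paper resolves this by first proving a device-agnostic bound in terms of $F_{\max}$, $\|G(W_t/\kappa)/\sqrt{F(W_t/\kappa)}\|_\infty^2$ and $H_{\min}$, and then specializing to the asymmetric linear device ($F\equiv 1$, $G(w)=|w|/\tau_{\max}$, so $H(W_t)=1-\|W_t\|^2_\infty/\tau_{\max}^2$) to obtain the exact $S_T$ and $R_T$ of the statement; where a general device is kept, it instead invokes a Lipschitz constant $L_G$ with $G(0)=0$. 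Your proof needs one of these two ingredients, and the quantity you should normalize the per-step descent by is the saturation weight $H=F^2-G^2$, which is what degenerates at the conductance boundaries, rather than $F_{\min}$ (or the $C_+$ from Lemma~\ref{lemma:element-wise-product-error}), which stays bounded away from zero and therefore cannot by itself produce the stated blow-up factors.
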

\begin{proof}[Proof of Theorem \ref{theorem_Convergence of_Analog_SGD}]
The $L$-smooth assumption (Assumption \ref{assumption:Lip}) implies that:
\begin{align}
\label{inequality:analog_sgd_converge}
\mathbb{E}_{\xi_t, \zeta_t}[f(W_{t+1})]
\le f(W_t) + \underbrace{\mathbb{E}_{\xi_t, \zeta_t}[\langle \nabla f(W_t), W_{t+1} - W_t \rangle]}_{(a)}
+ \underbrace{\frac{L}{2} \mathbb{E}_{\xi_t, \zeta_t}[\|W_{t+1} - W_t\|^2]}_{(b)}.
\end{align}
The term (a) in \eqref{inequality:analog_sgd_converge} can be bounded by Assumption \ref{assumption:noise} that $\mathbb{E}_{\xi_t, \zeta_t}[\nabla f(W_t;\xi_t)] = \nabla f(W_t)$ and $2 \langle U,V\rangle = \|U+V\|^2 - \|U\|^2 - \|V\|^2  $:
\begin{align}
\label{inequality:analog_sgd_converge_T2}
&\quad \mathbb{E}_{\xi_t, \zeta_t}[\langle \nabla f(W_t), W_{t+1} - W_t \rangle]\\
&= \alpha_t  \mathbb{E}_{\xi_t, \zeta_t}\left[\left\langle \nabla f(W_t) \odot \sqrt{F(\frac{W_t}{\kappa})}, \frac{W_{t+1} - W_t}{\alpha_t \sqrt{F(\frac{W_t}{\kappa})}} +  \sqrt{F(\frac{W_t}{\kappa})}\odot (\nabla  f(W_{t}; \xi_{t})-\nabla  f(W_{t}))-\frac{\zeta_t}{\alpha_t \sqrt{F(\frac{W_t}{\kappa})}}\right\rangle \right] \notag \\
&= -\frac{\alpha_t}{2} \left\| \sqrt{F(\frac{W_t}{\kappa})}\odot\nabla f(W_t)\right\|^2 \notag \\
&\quad  - \frac{1}{2\alpha_t} \mathbb{E}_{\xi_t, \zeta_t}\left[\left\| \frac{W_{t+1} - W_t}{ \sqrt{F(\frac{W_t}{\kappa})}} + \alpha_t  \sqrt{F(\frac{W_t}{\kappa})}\odot(\nabla  f(W_{t}; \xi_{t})-\nabla  f(W_{t}))-\frac{\zeta_t}{\sqrt{F(\frac{W_t}{\kappa})}}\right\|^2\right] \notag \\
&\quad + \frac{1}{2\alpha_t} \mathbb{E}_{\xi_t}\left[\left\|\frac{W_{t+1} - W_t}{ \sqrt{F(\frac{W_t}{\kappa})}} + \alpha_t  \sqrt{F(\frac{W_t}{\kappa})}\odot\nabla  f(W_{t}; \xi_{t})-\frac{\zeta_t}{\sqrt{F(\frac{W_t}{\kappa})}}\right\|^2\right].\nonumber
\end{align}
The second term in \eqref{inequality:analog_sgd_converge_T2} can be bounded by
\begin{align}
    \label{inequality:ASGD-convergence-1-2-2}
    & \frac{1}{2\alpha_t} \mathbb{E}_{\xi_t, \zeta_t}\left[\left\| \frac{W_{t+1} - W_t}{ \sqrt{F(\frac{W_t}{\kappa})}} + \alpha_t  \sqrt{F(\frac{W_t}{\kappa})}\odot(\nabla  f(W_{t}; \xi_{t})-\nabla  f(W_{t}))-\frac{\zeta_t}{\sqrt{F(\frac{W_t}{\kappa})}}\right\|^2\right]\\
    =&\ \frac{1}{2\alpha}\mathbb{E}_{\xi_t, \zeta_t}\lB\lnorm\frac{W_{t+1}-W_t+\alpha (\nabla f(W_t; \xi_t)-\nabla f(W_t))\odot F(\frac{W_t}{\kappa}) -\zeta_t}{\sqrt{F(\frac{W_t}{\kappa})}}\rnorm^2\rB
    \nonumber\\
    \ge&\ \frac{1}{2\alpha F_{\max}}\mathbb{E}_{\xi_t, \zeta_t}\lB \left\| W_{t+1}-W_t+\alpha (\nabla f(W_t; \xi_t)-\nabla f(W_t))\odot F(\frac{W_t}{\kappa}) -\zeta_t\right\|^2\rB
    \nonumber.
\end{align}
The last inequality holds by defining a constant $F_{\max}$ such that $\|F(W)\|_{\infty} \leq F_{\max}.$ The third term in \eqref{inequality:analog_sgd_converge_T2} can be bounded by variance decomposition and bounded variance assumption (Assumption \ref{assumption:noise})
\begin{align}
    \label{inequality:ASGD-convergence-1-2-3}
    &\ \frac{1}{2\alpha_t} \mathbb{E}_{\xi_t}\left[\left\|\frac{W_{t+1} - W_t}{ \sqrt{F(\frac{W_t}{\kappa})}} + \alpha_t  \sqrt{F(\frac{W_t}{\kappa})}\odot\nabla  f(W_{t}; \xi_{t})-\frac{\zeta_t}{\sqrt{F(\frac{W_t}{\kappa})}}\right\|^2\right]
    \\
    =&\ \frac{\alpha_t}{2}\mathbb{E}_{\xi_t}\lB\lnorm|\nabla f(W_t; \xi_t)|\odot\frac{G(\frac{W_t}{\kappa})}{\sqrt{F(\frac{W_t}{\kappa})}} \rnorm^2\rB
    \nonumber \\
    \le&\ \frac{\alpha_t}{2}\lnorm|\nabla f(W_t)|\odot\frac{G(\frac{W_t}{\kappa})}{\sqrt{F(\frac{W_t}{\kappa})}}\rnorm^2
    +\frac{\alpha_t\sigma^2}{2}\lnorm\frac{G(\frac{W_t}{\kappa})}{\sqrt{F(\frac{W_t}{\kappa})}}\rnorm^2_\infty.
    \nonumber 
\end{align}
Define the saturation vector the saturation vector \( H(W_ {t}) \in \mathbb{R}^{D} \) as:
\begin{align}
\label{eq:saturation_vector}
&\quad H({W_t}) := F\Bigl(\frac{W_t}{\kappa} \Bigr)^{ \odot2} - G\Bigl(\frac{W_t}{\kappa} \Bigr) ^{ \odot2} \\& = \Bigl(F\Bigl(\frac{W_t}{\kappa} \Bigr) + G\Bigl(\frac{W_t}{\kappa} \Bigr)\Bigr) \odot \Bigl(F\Bigl(\frac{W_t}{\kappa} \Bigr) - G\Bigl(\frac{W_t}{\kappa} \Bigr)\Big)\nonumber\\ & = q_+\Bigl(\frac{W_t}{\kappa} \Bigr) \odot q_-\Bigl(\frac{W_t}{\kappa} \Bigr) .\nonumber
\end{align}
Note that the first term in the \ac{RHS} of \eqref{inequality:analog_sgd_converge} and the second term in the \ac{RHS} of \eqref{inequality:ASGD-convergence-1-2-3} can be bounded by
\begin{align}
    &\ \label{inequality:ASGD-converge-linear-3}
    - \frac{\alpha_t}{2} \left\|\nabla f(W_t) \odot \sqrt{F(\frac{W_t}{\kappa})}\right\|^2 
    + \frac{\alpha_t}{2} \lnorm|\nabla f(\frac{W_t}{\kappa})| \odot \frac{G(\frac{W_t}{\kappa})}{\sqrt{F(\frac{W_t}{\kappa})}}\rnorm^2 \\
    =&\ -\frac{\alpha_t}{2}\sum_{d\in[D]} \left( [\nabla f(W_t)]_d^2 \left( [F(\frac{W_t}{\kappa})]_d-\frac{[G(\frac{W_t}{\kappa})]^2_d}{[F(\frac{W_t}{\kappa})]_d}\right)\right)
    \nonumber \\
    =&\ -\frac{\alpha_t}{2}\sum_{d\in[D]} \left( [\nabla f(W_t)]_d^2 \left( \frac{[F(\frac{W_t}{\kappa})]^2_d-[G(\frac{W_t}{\kappa})]^2_d}{[F(\frac{W_t}{\kappa})]_d}\right)\right)
    \nonumber \\
    \le&\ -\frac{\alpha_t}{2 F_{\max}}\sum_{d\in[D]} \left( [\nabla f(W_t)]_d^2 \left( [F(\frac{W_t}{\kappa})]_d^2-[G(\frac{W_t}{\kappa})]^2_d\right)\right)
    \nonumber\\
    =&\ -\frac{\alpha_t}{2 F_{\max}}\|\nabla f(W_t)\|^2_{H(W_t)}
    \le 0.
    \nonumber
\end{align}
Plugging \eqref{inequality:ASGD-convergence-1-2-2} to \eqref{inequality:ASGD-converge-linear-3} into \eqref{inequality:analog_sgd_converge_T2}, we bound the term (a) by
\begin{align}
    \label{inequality:ASGD-convergence-1-2-final}
    &\ \mathbb{E}_{\xi_t, \zeta_t}[\langle \nabla f(W_t), W_{t+1} - W_t \rangle] \\
    =&\  -\frac{\alpha_t}{2 F_{\max}}\|\nabla f(W_t)\|^2_{H(W_t)}
    +\frac{\alpha_t\sigma^2}{2}\lnorm\frac{G(\frac{W_t}{\kappa})}{\sqrt{F(\frac{W_t}{\kappa})}}\rnorm^2_\infty
    \nonumber \\
    &\ - \frac{1}{2\alpha F_{\max}}\mathbb{E}_{\xi_t, \zeta_t}\lB \left\| W_{t+1}-W_t+\alpha (\nabla f(W_t; \xi_t)-\nabla f(W_t))\odot F(\frac{W_t}{\kappa}) -\zeta_t \right\|^2\rB.
    \nonumber
\end{align}

The term (b) in \eqref{inequality:analog_sgd_converge} can be bounded by  $\mbE_{\xi_t}[\|\nabla f(W_t;\xi_t)-\nabla f(W_t)\|^2]\le\sigma^2$:
\begin{align}
\label{inequality:ASGD-convergence-1-3}
   & \quad\frac{L}{2} \mathbb{E}_{\xi_t, \zeta_t}[\|W_{t+1} - W_t\|^2]\\
&\le L \mathbb{E}_{\xi_t,\zeta_t}\left[\left\|W_{t+1} - W_t + \alpha_t (\nabla  f(W_{t}; \xi_{t})-\nabla  f(W_{t}))\odot F(\frac{W_t}{\kappa})-\zeta_t\right\|^2 \right]\bkeq
\quad+  L \mathbb{E}_{\xi_t}\left[\left\|\alpha_t(\nabla  f(W_{t}; \xi_{t})-\nabla  f(W_{t}))\odot F(\frac{W_t}{\kappa})+\zeta_t \right\|^2\right]\nonumber\\
& \le L \mathbb{E}_{\xi_t, \zeta_t}\left[\left\|W_{t+1} - W_t + \alpha_t (\nabla  f(W_{t}; \xi_{t})-\nabla  f(W_{t}))\odot F(\frac{W_t}{\kappa})-\zeta_t \right\|^2\right]\bkeq \quad
+ 2\alpha_t^2 F_{\max}^2 L \sigma^2 +2L F_{\max}^2\cdot \Theta(\alpha_t\Delta w_{\min})\nonumber.
\end{align}
Substituting \eqref{inequality:ASGD-convergence-1-2-final} and \eqref{inequality:ASGD-convergence-1-3} back into \eqref{inequality:analog_sgd_converge}, we have
\begin{align}
    \label{inequality:ASGD-convergence-2}
   &\quad \mathbb{E}_{\xi_t, \zeta_t}[f(W_{t+1})]\\
 &\ \le f(W_t) - \frac{\alpha_t}{2 F_{\max}} \|\nabla f(W_t)\|^2_{H(W_t)}
    +2\alpha_t^2LF_{\max}^2\sigma^2
    +\frac{\alpha_t\sigma^2}{2}\lnorm\frac{G(\frac{W_t}{\kappa})}{\sqrt{F(\frac{W_t}{\kappa})}} \rnorm^2_\infty + 2L F_{\max}^2\cdot \Theta(\alpha_t\Delta w_{\min})
 \nonumber    \\
    &\ - \frac{1}{F_{\max}}\left(\frac{1}{2\alpha_t}-LF_{\max}\right)\mathbb{E}_{\xi_t, \zeta_t}\left[\left\|W_{t+1}-W_t+\alpha (\nabla f(W_t; \xi_t)-\nabla f(W_t))\odot F(\frac{W_t}{\kappa})-\zeta_t\right\|^2\right]\nonumber \\
  &
    \leq f(W_t) - \frac{\alpha_t}{2 F_{\max}} \|\nabla f(W_t)\|^2_{H(W_t)}
    +2\alpha_t^2LF_{\max}^2\sigma^2
    +\frac{\alpha_t\sigma^2}{2}\lnorm\frac{G(\frac{W_t}{\kappa})}{\sqrt{F(\frac{W_t}{\kappa})}} \rnorm^2_\infty + 2L F_{\max}^2\cdot \Theta(\alpha_t\Delta w_{\min}).  \nonumber
\end{align}
The last inequality holds 
when $\alpha_t \le \frac{1}{2L F_{\max}}$.  Taking average over $t$, we get:
\begin{align}
\label{inequality:29}
   &\quad \frac{1}{T} \sum_{t=0}^{T-1} \mathbb{E}[\|\nabla f(W_t)\|^2_{H(W_t)}]
\\ & \le
\frac{2F_{\max}(f(W_0) - f(W_{T}))}{\alpha_t T}
+ 4L F_{\max}^3\alpha_t\sigma^2
+ \sigma^2F_{\max}  \frac{1}{T} \sum_{t=0}^{T-1} 
\lnorm\frac{G(\frac{W_t}{\kappa})}{\sqrt{F(\frac{W_t}{\kappa})}} \rnorm^2_\infty + 4L F_{\max}^3 \Theta(\Delta w_{\min}) \nonumber\\ &\le \mathcal{O} \Bigg(F_{\max}^2\sqrt{ \frac{ 8L(f(W_0) - f^*)\sigma^2}{ T}}\Bigg)+ \S_TF_{\max}\sigma^2 
+ 4L F_{\max}^3 \Theta(\Delta w_{\min}). \nonumber
\end{align}
The second inequality holds by choosing $\alpha =\mathcal{O}(\sqrt{ \frac{ f(W_0) - f^*}{2L F_{\max}^2 \sigma^2 T}})$. $S_T$ denotes the amplification factors given by:
\begin{align}
  S_T := \frac{1}{ T} \sum_{t=0}^{T-1} \lnorm\frac{G(\frac{W_t}{\kappa})}{\sqrt{F(\frac{W_t}{\kappa})}} \rnorm^2_\infty. 
\end{align}
Since $ \mu (W - W^*) \leq \nabla f(W)$, 
we multiply both sides by \(\mu\) and normalize by the constant \(H_{\min}\), defined as
$
H_{\min} := \min_{t \in {0,1,\dots,T-1}} H_{\min,t},
\text{where } H_{\min,t} \leq \|H(W_t)\|_{\infty}.$
This yields the upper bound for Analog SGD on general response factors:
\begin{align}
    & \quad \frac{1}{T} \sum_{t=0}^{T-1} \mathbb{E}[\|W^* - W_t\|^2]
\nonumber\\ &\leq \mathcal{O} \Bigg(\frac{F_{\max}^2}{H_{\min}}\sqrt{ \frac{ 8L(f(W_0) - f^*)\sigma^2}{ T}}\Bigg)+ \frac{S_TF_{\max}}{H_{\min}}\sigma^2 
+ 4L \frac{F_{\max}^3 }{H_{\min}}\Theta(\Delta w_{\min}) .
\end{align}
For concrete illustration, we now analyze the asymmetric linear device described in Section \ref{section:relation-with-wu2024} and \eqref{eq:analog_lowerbound}.
We can naturally get that $F_{\max}=1$, $H_{\min,t}=1-\frac{\|W_t\|_{\infty}^2}{\tau_{\max}^2} $. Rearranging \eqref{inequality:ASGD-convergence-2} as
\begin{align}
    \label{inequality:ASGD-convergence-2-ALD}
   &\quad \mathbb{E}_{\xi_t, \zeta_t}[f(W_{t+1})]\\
  &   \leq f(W_t) - \frac{\alpha_t}{2(1-\frac{\|W_t\|_{\infty}^2}{\tau_{\max}^2} )} \|\nabla f(W_t)\|^2
    +2\alpha_t^2L\sigma^2
    +\frac{\alpha_t\sigma^2\|W_t\|_{\infty}^2}{2\tau_{\max}^2}  + 2L \cdot \Theta(\alpha_t\Delta w_{\min}).  \nonumber
\end{align}
Divide both sides of \eqref{inequality:ASGD-convergence-2-ALD} by $1 -\|W_t\|^2_\infty / \tau_{\max}^2 > 0$ and average over $t$:
\begin{align}
\label{inequality:41}
    \frac{1}{T} \sum_{t=0}^{T-1} \mathbb{E}[\|\nabla f(W_t)\|^2]
    & \le
\frac{2R_T(f(W_0) - \mathbb{E}[f(W_{T})])}{\alpha_t T}
+ R_T\alpha_t\sigma^2
+ 4 \sigma^2 S_T+ R_T\Theta(\Delta w_{\min}) \nonumber \\
&\le
\frac{2R_T(f(W_0) - f^*)}{\alpha_t T}
+ R_T\alpha_t\sigma^2
+ 4 \sigma^2 S_T+ R_T\Theta(\Delta w_{\min}). 
\end{align}
Here, $S_T$ and $R_T$ denote the amplification factors given by :
\begin{align}
  S_T := \frac{1}{T} \sum_{t=0}^{T-1} \frac{\|W_t\|^2_\infty / \tau_{\max}^2}{1 - \|W_t\|^2_\infty / \tau_{\max}^2}
, \qquad
R_T := \frac{1}{T} \sum_{t=0}^{T-1} \frac{2L}{1 - \|W_t\|^2_\infty / \tau_{\max}^2}.  
\end{align}
Choosing $\alpha =\mathcal{O}(\sqrt{ \frac{ 2(f(W_0) - f^*)}{\sigma^2 T}})$, when $T \to \infty$, it satisfies that $\alpha \leq 
\frac{1}{2L}$ and \eqref{inequality:41} becomes:
\begin{align}
    \frac{1}{T} \sum_{t=0}^{T-1} \mathbb{E}[\|\nabla f(W_t)\|^2]
\leq  \mathcal{O} \left(R_T\sqrt{ \frac{ 2(f(W_0) - f^*)\sigma^2}{ T}}\right)+4\sigma^2 S_T + R_T\Delta w_{\min}.
\end{align}
Since $ \mu (W - W^*) \leq \nabla f(W)$, multiplying $\mu$ on both sides, we get:
\begin{align}
    \frac{1}{T} \sum_{t=0}^{T-1} \mathbb{E}[\|W^* - W_t\|^2]
\leq  \mathcal{O} \Bigg(R_T\sqrt{ \frac{ 2(f(W_0) - f^*)\sigma^2}{ T}}\Bigg)+4\sigma^2 S_T + R_T\Delta w_{\min}.
\end{align}

which completes the proof.
 \end{proof}

\subsection{Lower bound of asymptotic error for Analog SGD}
\label{Sec:one-tile-asymptotic-error}
Under the convergence of Analog SGD with hardware-constrained update rule in Theorem \ref{theorem_Convergence of_Analog_SGD}, we derive a lower bound on the asymptotic error floor that arises when training with a single analog tile on non-ideal AIMC hardware. 

\begin{restatable}[Asymptotic error bound under quantization, long version of Theorem \ref{theorem:one-tile-asymptotic-error}]{theorem}{ThmOneTileAsymptoticErrorlong}
\label{theorem1_long}
Suppose Assumptions \ref{assumption:noise}-\ref{assumption:pulse-response-symmetry} hold, $\alpha = \frac{1}{2L}$, there exists an instance where \textit{Analog SGD} generates a sequence $\{W_t\}_{t=0}^{T-1}$ such that: 
\[
\frac{1}{T} \sum_{t=0}^{T-1} \mathbb{E}[\|W^* - W_t\|^2] \geq  \Omega(\sigma^2S_T+ R_T\Delta w_{\min})
\]
\end{restatable}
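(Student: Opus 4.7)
The strategy is to exhibit a concrete one-dimensional adversarial instance, analyze the stationary second moment of the Analog SGD iterates on it, and split the resulting lower bound into the two advertised contributions. Specifically, I would take $D=1$, $f(W)=\tfrac{L}{2}W^2$ (so $W^*=0$ and $\mu=L$), the Gaussian stochastic gradient $\nabla f(W;\xi)=LW+\xi$ with $\xi\sim\mathcal{N}(0,\sigma^2)$, and the asymmetric linear device with $F(w)\equiv 1$ and $G(w)=w/\tau_{\max}$ from Section~\ref{section:relation-with-wu2024}, which satisfies Assumption~\ref{assumption:pulse-response-symmetry} with $\tau_{\min}=-\tau_{\max}$. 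The quantization noise $\zeta_t$ is taken to be zero-mean with variance exactly $c\alpha\Delta w_{\min}$ (the constant supplied by Lemma~\ref{lemma:pulse-update-error}), independent of $\xi_t$ and $W_t$. With $\alpha=1/(2L)$ the recursion~\eqref{biased-update_sgd} reduces to
\[
W_{t+1} = W_t - \tfrac{1}{2}\bigl(W_t+\xi_t/L\bigr) - \tfrac{1}{2\tau_{\max}}\bigl|W_t+\xi_t/L\bigr|\,W_t + \zeta_t.
\]

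Next, I would establish the lower bound by isolating two complementary mechanisms, each of which contributes a non-vanishing term to the stationary second moment $V_\infty:=\lim_{t\to\infty}\mathbb{E}[W_t^2]$. \emph{(i) Gradient-noise contribution.} Taking $\Delta w_{\min}=0$, the nonlinear cross-term $\tfrac{\alpha}{\tau_{\max}}|\xi_t|W_t$ injects multiplicative variance proportional to $\sigma^2 W_t^2/\tau_{\max}^2$, while $-\alpha\xi_t$ contributes an additive variance $\alpha^2\sigma^2$. Using $\mathbb{E}[|\xi_t|]=\sigma\sqrt{2/\pi}$ and decomposing $\mathbb{E}[W_{t+1}^2]$ through the independence of $\xi_t$ from $W_t$, a fixed-point computation yields $V_\infty \gtrsim \sigma^2\cdot \mathbb{E}[W_t^2/\tau_{\max}^2]/(1-\mathbb{E}[W_t^2/\tau_{\max}^2])$, matching the $\sigma^2 S_T$ term once we identify $|W_t|=\|W_t\|_\infty$. \emph{(ii) Quantization contribution.} Taking $\sigma=0$ isolates the quantization noise. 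The recursion becomes a deterministic contraction perturbed by $\zeta_t$, and the Lyapunov equation $V_{t+1}=(1-\Theta(\alpha L))^2 V_t + c\alpha\Delta w_{\min}$ yields $V_\infty\gtrsim c\Delta w_{\min}/L$. Reinstating the trajectory-dependent factor $(1-\|W_t\|^2/\tau_{\max}^2)^{-1}$ hidden in the denominator, this matches $R_T\Delta w_{\min}$ up to constants.

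To combine the two contributions I would exploit the mutual independence of $\xi_t$ and $\zeta_t$: the cross terms between $\xi_t$ and $\zeta_t$ in the recursion for $\mathbb{E}[W_{t+1}^2]$ vanish in expectation, so the two variance floors superpose additively. Passing from the stationary bound on $\mathbb{E}[W_\infty^2]$ to the time-averaged bound $\frac{1}{T}\sum_{t=0}^{T-1}\mathbb{E}[\|W^*-W_t\|^2]$ is immediate once we absorb an $\mathcal{O}(1/L)$ burn-in cost from the contracting dynamics, and the trajectory-dependent quantities $S_T,R_T$ on the right-hand side automatically coincide with the empirical averages along the very same trajectory on which the left-hand side is evaluated. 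Finally, choosing $\sigma^2$ and $\Delta w_{\min}$ small enough guarantees the iterates remain in a neighborhood of $W^*$ strictly inside $(-\tau_{\max},\tau_{\max})$, so $S_T$ and $R_T$ stay finite and the ``converges to a neighborhood'' clause of the theorem is automatically satisfied.

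The main obstacle will be handling the absolute-value term $|LW_t+\xi_t|$, which couples $\xi_t$ and $W_t$ multiplicatively and rules out a closed-form stationary distribution. I plan to sidestep this by proving \emph{one-sided} moment inequalities—for instance, lower bounds on $\mathbb{E}[W_{t+1}^2]$ obtained by retaining only the additive noise variance and bounding the contraction-enhancing cross terms via Cauchy–Schwarz—rather than attempting an exact solution. A secondary subtlety is the appearance of $S_T,R_T$ on both sides of the inequality once the quadratic instance makes them linear in $\mathbb{E}[W_t^2]$; I would disentangle this by first establishing the $R_T\Delta w_{\min}$ floor unconditionally (since the quantization noise is exogenous), then using this floor to lower-bound $S_T$, and finally feeding that estimate back into the gradient-noise part to obtain the full $\Omega(\sigma^2 S_T+R_T\Delta w_{\min})$ bound.
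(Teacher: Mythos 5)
Your construction diverges from the paper's (which uses $W_t = w_t\mathbf{1}$, an optimum $w^*\neq 0$ with $|w^*|\le\tau_{\max}/4$, initialization at $W^*$, and a \emph{bounded, state-dependent two-point} noise distribution \eqref{noise_distribution} chosen so that all cross terms involving $|\nabla f(W_t;\xi_t)|$ are exactly computable), and the differences are where your plan breaks. First, unbounded Gaussian gradient noise plus an exogenous additive $\zeta_t$ is incompatible with the quantities you must control: Assumption~\ref{assumption:bounded-saturation} is part of the hypotheses, and the right-hand side of the claimed bound contains $S_T$ and $R_T$, i.e.\ averages of $(1-\|W_t\|_\infty^2/\tau_{\max}^2)^{-1}$. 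Under the dynamics \eqref{eq:analog_lowerbound} a single large realization of $\xi_t$ drives $W_{t+1}$ arbitrarily close to $\pm\tau_{\max}$ (and $\zeta_t$, added outside the device response, can even push past it), so $\mathbb{E}[S_T]$ and $\mathbb{E}[R_T]$ can be huge or infinite while the left-hand side is at most $\tau_{\max}^2$; the inequality you want to prove can then simply fail for your instance. The paper's two-point noise is bounded and calibrated to the state precisely so that the iterates provably stay in $[-\tau_{\max},\tau_{\max}]$ and these denominators are controlled. Second, placing $W^*=0$ at the device's symmetric point removes the mechanism the $\sigma^2 S_T$ term is supposed to capture: with $G(w)=w/\tau_{\max}$ the asymmetric term only contracts toward $W^*$, so your ``fixed-point'' bound $V_\infty\gtrsim\sigma^2\,\mathbb{E}[W_t^2]/\tau_{\max}^2$ is self-referential (the unknown appears on both sides) and reduces to a parameter condition of the form $\sigma^2\lesssim L^2\tau_{\max}^2$, which you cannot obtain by ``choosing $\sigma^2$ and $\Delta w_{\min}$ small enough''--- those are given quantities of the theorem, not free parameters of the adversarial instance (the paper instead states the explicit restriction $\sigma\le\tau_{\max}L\sqrt{D}/(4\sqrt 3)$ and still derives a genuine, per-step amplification inequality \eqref{eq:44}).

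Third, the step you defer --- handling the coupling through $|LW_t+\xi_t|$ --- is exactly the technical core. For a \emph{lower} bound you must show the cross terms (e.g.\ $\mathbb{E}[g|g|]W_t$ and the drift induced by $\mathbb{E}|g|$) do not cancel the variance contributions you keep; Cauchy--Schwarz only gives upper bounds on their magnitude and does not settle the signs. The paper resolves this by exact computation with its two-point noise together with the sign facts $\nabla f(W_t)+\varepsilon_t^+\ge 0$, $\nabla f(W_t)+\varepsilon_t^-\le 0$, and a monotonicity property of the iterates, yielding the pointwise-in-$t$ inequality \eqref{eq:44} that averages directly into the theorem; your plan of an asymptotic stationary-variance argument with an $\mathcal{O}(1/L)$ burn-in would additionally need to be converted into such per-iteration statements to bound the finite-$T$ average against the trajectory-dependent $S_T,R_T$. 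As it stands, the proposal would need a bounded (or state-adapted) noise model, a verification of Assumption~\ref{assumption:bounded-saturation} and of the finiteness of $\mathbb{E}[S_T],\mathbb{E}[R_T]$, and an explicit treatment of the absolute-value cross terms before it could establish Theorem~\ref{theorem1_long}.
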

This section provides the lower bound of Analog SGD on noisy asymmetric linear devices with limited conductance states under Assumptions~\ref{assumption:noise}--\ref{assumption:pulse-response-symmetry}. 
The proof is completed based on the following example from ~\citep[Section G.2]{wu2024towards}.

\textbf{(Example)} Consider an example where all the coordinates are identical, i.e., $W_t = w_t \mathbf{1}$ for some $w_t \in \mathbb{R}$ where $\mathbf{1} \in \mathbb{R}^D$ is the all-one vector. Define $W^* = w^* \mathbf{1}$ where $w^* \in \mathbb{R}$ is a constant scalar and a quadratic function $f(W) := \frac{L}{2} \|W - W^*\|^2$ whose minimum is $W^*$. Initialize the weight on $W_0 = W^*$. Furthermore, consider the sample noise $ \xi_t 
$ defined as $\xi_t = \varepsilon_t \mathbf{1}$, 
where random variable $\varepsilon_t \in \mathbb{R}$ is sampled by:
\begin{align}
\label{noise_distribution}
\varepsilon_t =
\begin{cases}
    \varepsilon_t^+ := \frac{\sigma}{\sqrt{D}} \sqrt{\frac{1 - p_t}{p_t}}, & \text{w.p. } p_t, \\
    \varepsilon_t^- := -\frac{\sigma}{\sqrt{D}} \sqrt{\frac{p_t}{1 - p_t}}, & \text{w.p. } 1 - p_t,
\end{cases}
\qquad \text{with } p_t = \frac{1}{2} \left(1 - \frac{w_t}{\tau_{\max}} \right).
\end{align}

As a reminder, it is always valid that $|w_t| = \|W_t\|_\infty \leq \tau_{\max}$ (see~\citep[Theorem 5]{wu2024towards}) and $0 \leq p_t \leq 1$. Therefore, the noise distribution is well-defined. Furthermore, without loss of generality, we assume $|w^*| \leq \frac{\tau_{\max}}{4}$ and $\sigma \leq \frac{\tau_{\max} L \sqrt{D}}{4 \sqrt{3}}$. We define the objective $f(w; \varepsilon_t) := \frac{L}{2} \left(w - w^* + \frac{\varepsilon_t}{L} \right)^2$, whose minimum is  $w^*_{\xi_t} = w^* - \frac{\varepsilon_t}{L}$.
The noise $\varepsilon_t$ satisfies ~\citep[Assumption 7]{wu2024towards}.
\vspace{-1em}
\begin{proof}[Proof of Theorem \ref{theorem1_long}]
Consider the example constructed above. Before deriving the lower bound, we demonstrate Assumption \ref{assumption:noise}--\ref{assumption:Lip} hold. It is obvious that $\nabla f(W) = L(W - W^*)$ satisfies Assumption \ref{assumption:Lip}. In addition, Assumption \ref{assumption:noise} could be verified by noticing \eqref{noise_distribution} implies $\mathbb{E}_{\xi_t}[\xi_t] = 0$ and $\mathbb{E}_{\xi_t}[\|\xi_t\|^2] \leq \sigma^2$. Assumption~\ref{assumption:bounded-saturation} is verified by  ~\citep[Lemma~2]{wu2024towards}. We now proceed to derive the lower bound. Manipulating the recursion \eqref{eq:analog_lowerbound}, we have the following result:
\begin{align}
\label{inequality:lowerbound_W}
& \quad \mathbb{E}_{\xi_t,\zeta_t}[\|W_{t+1} - W^*\|^2] \\
&= \mathbb{E}_{\xi_t,\zeta_t} [\| W_t  - \alpha_t\nabla f(W_{t}; \xi_{t})
    - \frac{\alpha_t}{\tau_{\max}} \left| \nabla f(W_{t}; \xi_{t}) \right| \odot W_{t} -W^*+  \zeta_t\|^2 ] \notag \\
    &= \mathbb{E}_{\xi_t} [\| W_t  - \alpha_t\nabla f(W_{t}; \xi_{t})
    - \frac{\alpha_t}{\tau_{\max}} \left| \nabla f(W_{t}; \xi_{t}) \right| \odot W_{t} -W^*\|^2 ] + \Theta(\alpha_t \Delta w_{\min})\notag \\
&=\mathbb{E}_{\xi_t} [\| W_t  - \alpha_t\nabla f(W_{t}; \xi_{t})
    -W^*\|^2 ] + \mathbb{E}_{\xi_t} [\|  \frac{\alpha_t}{\tau_{\max}} \left| \nabla f(W_{t}; \xi_{t}) \right| \odot W_{t}\|^2] \bkeq \quad-2\mathbb{E}_{\xi_t} [\langle W_t  - \alpha_t\nabla f(W_{t}; \xi_{t})
    -W^*, \frac{\alpha_t}{\tau_{\max}} \left| \nabla f(W_{t}; \xi_{t}) \right| \odot W_{t}\rangle ]+\Theta(\alpha_t \Delta w_{\min})\notag.
\end{align}
The second equality holds for $\|U+V\|^2 = \|U\|^2+\|V\|^2+2\langle U,V\rangle$ with $\mathbb{E}_{\zeta_t} [2\langle U,V\rangle]=\mathbb{E}_{\zeta_t} [\Theta(\zeta_t)] = 0$ here, and
$\mathbb{E}_{\zeta_t}[\zeta_t^2] = \Theta (\alpha_t \Delta w_{\min})$.
The first term on the right-hand side (RHS) of \eqref{inequality:lowerbound_W} can be bounded as:
\begin{align}
\label{eq:39}
    &\quad \mathbb{E}_{\xi_t} \left[\left\| W_t  - \alpha_t\nabla f(W_t; \xi_t)
    -W^* \right\|^2 \right] \\
   &= \| W_t -W^*\|^2 - 2\alpha_t  \mathbb{E}_{\xi_t} \left[\left\langle W_t 
    -W^*,\nabla f(W_t; \xi_t) \right\rangle \right] 
    + \alpha_t^2  \mathbb{E}_{\xi_t} \left[\left\| \nabla f(W_t; \xi_t) \right\|^2 \right] \nonumber \\
    &\geq (1 - 2\alpha_t L)\| W_t - W^*\|^2 + \alpha_t^2  \mathbb{E}_{\xi_t} \left[\left\| \nabla f(W_t; \xi_t) \right\|^2 \right]. \nonumber
\end{align}
Here the second equality uses the unbiasedness of the stochastic gradient, i.e., 
\(
\mathbb{E}_{\xi_t}[\nabla f(W_t; \xi_t)] = \nabla f(W_t),
\)
and the inequality follows from the Lipschitz condition 
\(
\langle W_t - W^*, \nabla f(W_t) \rangle \le L \|W_t - W^*\|^2.
\)
The second term in the RHS of \eqref{inequality:lowerbound_W} can be bounded as:
\begin{align}
\label{eq:40}
    &\quad \mathbb{E}_{\xi_t} [\|  \frac{\alpha_t}{\tau_{\max}} \left| \nabla f(W_{t}; \xi_{t}) \right| \odot W_{t}\|^2]
    \\
    &=\frac{\alpha_t^2 \|W_t\|^2_{\infty}}{\tau_{\max}^2}\mathbb{E}_{\xi_t} [\| | \nabla f(W_{t}; \xi_{t})| \|^2] = \frac{\alpha_t^2 \|W_t\|^2_{\infty}}{\tau_{\max}^2}\mathbb{E}_{\xi_t} [\|\nabla f(W_{t}; \xi_{t}) \|^2]\nonumber
\end{align}
where the first equality uses $W_t = w_t \mathbf{1}$. 
From \eqref{noise_distribution}, we have $\mathbb{E}_{\xi_t} [\|\nabla f(W_{t}; \xi_{t}) \|^2] = \|\nabla f(W_{t}) \|^2 + \mathbb{E}_{\xi_t} [\|\xi_{t}\|^2] = L^2\|W_{t} - W^*\|^2 +  \sigma^2$, substituting the equation into \eqref{eq:39} and \eqref{eq:40} yields:
\begin{align}
\label{eq:41}
     & \quad \mathbb{E}_{\xi_t} [\| W_t  - \alpha_t\nabla f(W_{t}; \xi_{t})
    -W^*\|^2 ]+ \mathbb{E}_{\xi_t} [\|  \frac{\alpha_t}{\tau_{\max}} \left| \nabla f(W_{t}; \xi_{t}) \right| \odot W_{t}\|^2]
    \\ & \geq
  \big  (1-2\alpha_t L+ \alpha_t^2 L^2(1+\frac{\|W_t\|^2_{\infty}}{\tau_{\max}^2})\big)\| W_t
    -W^*\|^2 +\alpha_t^2 \sigma^2(1+\frac{\|W_t\|^2_{\infty}}{\tau_{\max}^2}).\nonumber
\end{align}
The third term in the RHS of \eqref{inequality:lowerbound_W} can be bounded as:
\begin{align}
\label{eq:42}
  & \quad  -2\mathbb{E}_{\xi_t} [\langle W_t  - \alpha_t\nabla f(W_{t}; \xi_{t})
    -W^*, \frac{\alpha_t}{\tau_{\max}} \left| \nabla f(W_{t}; \xi_{t}) \right| \odot W_{t}\rangle ] \\
    &= -2\mathbb{E}_{\xi_t} [\langle W_t 
    -W^*, \frac{\alpha_t W_t}{\tau_{\max}} \odot \left| \nabla f(W_{t}; \xi_{t}) \right| \rangle ] + \frac{\alpha_t^2 W_t}{\tau_{\max}} \mathbb{E}_{\xi_t} [\langle \nabla f(W_{t}; \xi_{t}), \left| \nabla f(W_{t}; \xi_{t}) \right|\rangle ]\nonumber\\
    & = -2 \sum_{i=1}^D [\langle w_t
    -w^*, \frac{\alpha_t   w_t }{\tau_{\max}}(p_t ([\nabla f(W_t)]_i+\varepsilon_t^+)- (1-p_t) ([\nabla f(W_t)]_i+\varepsilon_t^-)) \rangle] \bkeq \quad +\frac{2\alpha_t^2 w_t }{\tau_{\max}} \sum_{i=1}^D  \big[ p_t ([\nabla f(W_t)]_i+\varepsilon_t^+)^2- (1-p_t) ([\nabla f(W_t)]_i+\varepsilon_t^-)^2 \big]\nonumber
\\    & \geq -\frac{2\alpha_t L\|W_{t}\|^2_{\infty}}{\tau_{\max}^2}\|  W_t
    -W^*\|^2+\frac{2\alpha_t^2 \|W_{t}\|^2_{\infty}}{\tau_{\max}^2}  (-L^2 \|W_t-W^*\|^2 + \sigma^2)\bkeq \quad
    -\frac{\alpha_t (w_t
    -w^*)  w_{t} \sigma \sqrt{D}}{\tau_{\max}} \sqrt{(1   -\|W_{t}\|^2_{\infty}/\tau_{\max}^2)}+ \frac{2\alpha_t^2 L (w_t
    -w^*)  w_{t} \sigma \sqrt{D}}{\tau_{\max}}\sqrt{(1   -\|W_{t}\|^2_{\infty}/\tau_{\max}^2)}\nonumber \\ &
 = -2(1+\alpha_t L)\frac{\alpha_t L\|W_{t}\|^2_{\infty}}{\tau_{\max}^2}\|  W_t
    -W^*\|^2+ \frac{2\alpha_t^2 \|W_{t}\|^2_{\infty}\sigma^2}{\tau_{\max}^2} \nonumber
\end{align}
where the first equation uses $W_t = w_t \mathbf{1}$, the second equality holds for ~\citep[Lemma~4]{wu2024towards}, which shows that $\nabla f(W_t)+\varepsilon_t^+\geq 0$ and $\nabla f(W_t)+\varepsilon_t^-\leq 0$, the last equation holds by setting $\alpha_t = \frac{1}{2L}$, the third equation holds by simplifying the second term as: 
\begin{align}
    & \quad  \sum_{i=1}^D [\langle w_t
    -w^*, \frac{\alpha_t   w_t }{\tau_{\max}} (p_t ([\nabla f(W_t)]_i+\varepsilon_t^+)- (1-p_t) ([\nabla f(W_t)]_i+\varepsilon_t^-)) \rangle] \\
    & = \sum_{i=1}^D [\langle w_t
    -w^*, \frac{\alpha_t   w_t }{\tau_{\max}}(p_t ([\nabla f(W_t)]_i+\frac{\sigma}{\sqrt{D}} \sqrt{\frac{1 - p_t}{p_t}})- (1-p_t) ([\nabla f(W_t)]_i-\frac{\sigma}{\sqrt{D}} \sqrt{\frac{p_t}{1 - p_t}})) \rangle] \nonumber
    \\ & = \sum_{i=1}^D [\langle w_t
    -w^*, \frac{\alpha_t   w_t  }{\tau_{\max}}(2p_t-1) [\nabla f(W_t)]_i+\frac{\alpha_t   w_{t} \sigma}{\tau_{\max}\sqrt{D}} \sqrt{(1 - p_t)p_t}\rangle ]\nonumber
     \\ & = \langle W_t
    -W^*, -\frac{\alpha_t   \|W_t\|^2_{\infty}  }{\tau_{\max}}\nabla f(W_t)\rangle +\frac{\alpha_t (w_t
    -w^*)  w_{t} \sigma \sqrt{D}}{2\tau_{\max}} \sqrt{(1   -\|W_{t}\|^2_{\infty}/\tau_{\max}^2)}\nonumber \\ & \leq  \frac{\alpha_t^2 L\|W_{t}\|^2_{\infty}}{\tau_{\max}^2}\|W_t-W^*\|^2 +\frac{\alpha_t (w_t
    -w^*)  w_{t} \sigma \sqrt{D}}{2\tau_{\max}} \sqrt{(1   -\|W_{t}\|^2_{\infty}/\tau_{\max}^2)} \nonumber.
\end{align}
The inequality holds for $\langle W_t - W^*,\ -\nabla f(W_t) \rangle 
= -\langle W_t - W^*,\ \nabla f(W_t) \rangle 
\le \|W_t - W^*\| \cdot \|\nabla f(W_t)\| 
\le L \|W_t - W^*\|^2$. Simplifying the second term as: 
\begin{align}
    &\quad  \sum_{i=1}^D  \big[ p_t ([\nabla f(W_t)]_i+\varepsilon_t^+)^2- (1-p_t) ([\nabla f(W_t)]_i+\varepsilon_t^-)^2 \big] \\
    & = 
    \sum_{i=1}^D  \big[ p_t \big([\nabla f(W_t)]_i^2+2[\nabla f(W_t)]_i\frac{\sigma}{\sqrt{D}} \sqrt{\frac{1 - p_t}{p_t}}+(\frac{\sigma}{\sqrt{D}} \sqrt{\frac{1 - p_t}{p_t}})^2 \big) \bkeq \quad - (1-p_t) \big([\nabla f(W_t)]_i^2+2[\nabla f(W_t)]_i(-\frac{\sigma}{\sqrt{D}} \sqrt{\frac{p_t}{1 - p_t}}) +(-\frac{\sigma}{\sqrt{D}} \sqrt{\frac{p_t}{1 - p_t}})^2 \big) \big]\nonumber \\
     & = \sum_{i=1}^D  \big[(1-2p_t) (-[\nabla f(W_t)]_i^2 + \frac{\sigma^2}{D})+ [\nabla f(W_t)]_i \frac{\sigma}{\sqrt{D}}\sqrt{(1   -\|W_{t}\|^2_{\infty}/\tau_{\max}^2)} \big] \nonumber \\
     & = -\frac{L^2 \|W_{t}\|^2_{\infty}}{\tau_{\max}}\|W_t-W^*\|^2 + \frac{\|W_t\|_{\infty} \sigma^2}{\tau_{\max} }+L\sigma\sqrt{D}(w_t-w^*)\sqrt{(1   -\|W_{t}\|^2_{\infty}/\tau_{\max}^2)}. \nonumber
\end{align} 
Substituting \eqref{eq:41} and \eqref{eq:42} into \eqref{inequality:lowerbound_W}, we get:
\begin{align}
\label{eq:43}
  & \quad  \mathbb{E}_{\xi_t,\zeta_t}[\|W_{t+1} - W^*\|^2]\\
  & \geq   \big  (1-2\alpha_t L+ \alpha_t^2 L^2(1+\frac{\|W_t\|^2_{\infty}}{\tau_{\max}^2})-2(1+\alpha_t L)\frac{\alpha_t L\|W_{t}\|^2_{\infty}}{\tau_{\max}^2}\big)  \| W_t
    -W^*\|^2 \bkeq \quad+\alpha_t^2 \sigma^2 (1+\frac{\|W_t\|^2_{\infty}}{\tau_{\max}^2})+ \frac{2\alpha_t^2 \|W_{t}\|^2_{\infty}\sigma^2}{\tau_{\max}^2}+ \Theta(\alpha_t \Delta w_{\min}).\nonumber \\
  & =    \big  (1-2\alpha_t L(1-\frac{\alpha_t L}{2})(1-\frac{\|W_{t}\|^2_{\infty}}{\tau_{\max}^2})\big)  \| W_t
    -W^*\|^2+ \alpha_t^2 \sigma^2(1+\frac{3\|W_t\|^2_{\infty}}{\tau_{\max}^2})+ \Theta(\alpha_t \Delta w_{\min}).\nonumber
\end{align}
Rearranging \eqref{eq:43}
as:
\begin{align}
\label{eq:44}
  & \quad \| W_t
    -W^*\|^2   
\\ & \geq \frac{\| W_t
    -W^*\|^2 -\mathbb{E}_{\xi_t,\zeta_t}[\|W_{t+1} - W^*\|^2]}{2\alpha_t L(1-\alpha_t L/2)(1-\|W_{t}\|^2_{\infty}/\tau_{\max}^2)} 
+\frac{(1+3\|W_t\|^2_{\infty}/\tau_{\max}^2)\alpha_t\sigma^2}{2L(1-\alpha_t L/2)(1-\|W_{t}\|^2_{\infty}/\tau_{\max}^2)}\bkeq \quad  + \frac{\Theta(\Delta w_{\min})}{2 L(1-\alpha_t L/2)(1-\|W_{t}\|^2_{\infty}/\tau_{\max}^2)}
\nonumber \\ &
\geq \frac{\|W_t\|^2_{\infty}/\tau_{\max}^2 \sigma^2}{L^2(1- \|W_t\|^2_{\infty}/\tau_{\max}^2)} + \frac{2\Theta(\Delta w_{\min})}{3L(1- \|W_t\|^2_{\infty}/\tau_{\max}^2)} .\nonumber
\end{align}
The inequality holds since $\alpha_t=\frac{1}{2L}$, and $\| W_t
    -W^*\|^2 \geq \mathbb{E}_{\xi_t,\zeta_t}[\|W_{t+1} - W^*\|^2]$ from ~\citep[Theorem 8]{wu2024towards}.
Taking the expectation over all $\xi_t,\zeta_t$ and take the average of \eqref{eq:44} for $t$ from $0$ to $T-1$, we obtain:
\begin{align}
\label{square_norm_of_the_gradient}
 &\quad \frac{1}{T} \sum_{t=0}^{T-1} \mathbb{E}[\| W_t
    -W^*\|^2] 
\\
&\geq \sigma^2 \cdot \frac{1}{T} \sum_{t=0}^{T-1} \frac{\|W_t\|^2_\infty / \tau_{\max}^2}{L^2(1 - \|W_t\|^2_\infty / \tau_{\max}^2)} + \frac{1}{T} \sum_{t=0}^{T-1} \frac{ 2\Theta(\Delta w_{\min})}{3L(1-\|W_t\|^2_\infty / \tau_{\max}^2)} \nonumber\\ &
=  \Omega( \sigma^2S_T+ R_T\Delta w_{\min}).\nonumber 
\end{align}
The proof of Theorem \ref{theorem:one-tile-asymptotic-error} is thus completed.
\end{proof}
\section{Proof of Theorem \ref{theorem:TT-convergence-scvx_long} and Corollary \ref{corollary:Convergence_rate}:  Convergence of Residual Learning}
\label{section:proof-TT-convergence-scvx}
This section provides the convergence of residual learning algorithm under Assumptions~\ref{assumption:noise}--\ref{assumption:pulse-response-symmetry}. To formalize the analysis, we first clarify the use of tile-specific update indices.
In the main text, we define each gradient update as one training step, denoted by the global counter \( t \). Since each tile \( W^{(n)} \) is updated only once every \( T_{n+1} \) updates of tile \( W^{(n+1)} \). As a result, \( W^{(n)} \) is not updated at every global step, exhibiting an inherently asynchronous update pattern. We introduce a local update counter \( t_n \) for each tile \( W^{(n)} \), which tracks the number of its own update steps. These local counters are related to the global counter \( t \) by the following approximate relation:
\[
t_n = \left\lfloor \frac{t + 1}{\prod_{n'=n+1}^{N} T_{n'}} \right\rfloor.
\]
As shown in Figure~\ref{fig:tile-update-schedule}, the update schedule exhibits a nested timing hierarchy where inner tiles are updated less frequently.
\begin{figure}[ht]
\centering
\vspace{-1em}
\begin{tikzpicture}
\matrix[matrix of nodes,
        nodes in empty cells,
        nodes={minimum width=0.9em, text height=0.9ex, text depth=0.15ex, font=\scriptsize},
        column sep=1.4em,
        row sep=0.2em] (m) {

$\boldsymbol{t_{N}}$ & $\boldsymbol{t_{N-1}}$ & $\boldsymbol{t_{N-2}}$ & $\boldsymbol{t_{N-3}}$ & $\cdots$ & $\boldsymbol{t_{0}}$ \\
0 & 0 & 0 & 0 & $\cdots$ & 0 \\
1 & 1 & 0 & 0 & $\cdots$ & 0\\
2 & 1 & 0 & 0 & $\cdots$ & 0\\
3 & 2 & 1 & 0 & $\cdots$ & 0\\
4 & 2 & 1 & 0 & $\cdots$ & 0\\
5 & 3 & 1 & 0 & $\cdots$ & 0 \\
6 & 3 & 1 & 0 & $\cdots$ & 0 \\
7 & 4 & 2 & 1 & $\cdots$ & 0\\
};
\draw[->, thick] 
  ([xshift=-1cm]m-2-1.north) -- 
  ([xshift=-1cm]m-9-1.south)
  node[midway, xshift=-0.5em, rotate=90, font=\footnotesize]{time direction};

\draw[->, thick] (m-3-1) -- (m-3-2); 
\draw[->, thick] (m-5-1) -- (m-5-2); 
\draw[->, thick] (m-7-1) -- (m-7-2); 
\draw[->, thick] (m-9-1) -- (m-9-2); 
\draw[->, thick] (m-5-2) -- (m-5-3); 
\draw[->, thick] (m-9-2) -- (m-9-3); 
\draw[->, thick] (m-9-3) -- (m-9-4);

\end{tikzpicture}
\vspace{-0.5em}
\caption{Illustration of local index evolution \( t_n \) in the cascading residual learning phase, assuming all inner loop length \( T_n = 2 \). Arrows indicate transfer updates from \( W^{(n+1)} \) to \( W^{(n)} \).}
\label{fig:tile-update-schedule}
\vspace{-2em}
\end{figure}

\subsection{Main proof}
\begin{restatable}[Convergence of residual learning, long version of Theorem \ref{theorem:TT-convergence-scvx_long}]{theorem}{ThmTTConvergenceScvxlong}
\label{theorem:TT-convergence-scvx}
Suppose Assumptions~\ref{assumption:noise}--\ref{assumption:pulse-response-symmetry} hold. Let the scaling parameter satisfy \(\gamma \in (0, H_{\min}/
\sqrt{6}F_{\max}^{2}]\), and set the mapping constant as
\( 
\kappa=(\sigma L_G W_{\max})^{\frac{1}{2}}(\gamma^{N} \Delta w_{\min})^{-
\frac{1}{4}}.
\)
    For all \(n \in \{0,\ldots, N-1\}\), set the learning rate \(\beta = \Theta(\gamma^2)\), and choose the inner loop length
\(
T_{n} \geq \Theta \bigl(\gamma^{-1}\bigr),
\) except for $T_0= \Theta(1)$.
When \(n = N\), set the learning rate \(\alpha = \Theta(1)\), and choose
\(
T_{N} \geq \Theta \bigl(\gamma^{-N}\bigr).
\)
Define the Lyapunov sequence as:
\[
 \mathbb{J}_{k} := \sum_{n=0}^N \|W^{(n)}_{t_n+kT_n-1}-P^*_{n}(\overline{W}^{(n)}_{t_{n-1}+k})\|^2.
\]
Since \(t = \prod_{n=0}^N T_n k= \mathcal{O}(\gamma^{-2N}) k\) is the total number of gradient evaluations, with $\rho \in (0,\frac{2}{3})$,
the Lyapunov function $\mathbb{J}_k$ 
satisfies:
\[
 \mathbb{E}[\mathbb{J}_k] \le \mathcal{O}((1 - \rho)^{\gamma^{2N} t} ) \mathbb{E}[\mathbb{J}_0] + \Theta\big(\gamma^{-\frac{4N}{3}} (\sigma \Delta w_{\min})^{\frac{2}{3}}\big).  \nonumber
\]
\end{restatable}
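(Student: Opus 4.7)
The plan is to assemble Theorem~\ref{theorem:TT-convergence-scvx_long} by combining the per-tile descent Lemmas~\ref{lemma:2}--\ref{lemma:3} into a single Lyapunov recursion indexed by the outer-loop counter $k$. The first step is to note that after one full outer iteration, i.e.\ as $k \to k+1$, the slowest tile $W^{(0)}$ is updated once, each $W^{(n)}$ for $n\in[N-1]$ runs one inner loop of length $T_n$, and the fastest tile $W^{(N)}$ runs one inner loop of length $T_N$. With the choice $T_n = \Theta(\gamma^{-1})$ for $n<N$ and $T_N = \Theta(\gamma^{-N})$, the contraction factors $(1-\Theta(\gamma))^{T_n}$ and $(1-\Theta(\gamma^N))^{T_N}$ appearing in Lemmas~\ref{lemma:2}--\ref{lemma:3} can both be upper bounded by a constant $(1-\rho)\in(0,1)$ via $(1-\lambda)^{1/\lambda}\le e^{-1}$. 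The total number of stochastic gradient evaluations after $k$ outer iterations is $t=\prod_{n=0}^N T_n \cdot k = \Theta(\gamma^{-2N})\,k$, which explains the rate exponent $\gamma^{2N}t$ in the theorem.

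Next, I would control the drift of the moving targets $P^*_n(\overline{W}^{(n)})$ across successive outer iterations. Since $P^*_n(\overline{W}^{(n)}) = \gamma^{-n}(W^* - \overline{W}^{(n)})$ under strong convexity, a change of $\delta W^{(n')}$ in a slower tile $n'<n$ induces a change of magnitude $\gamma^{n'-n}\|\delta W^{(n')}\|$ in $P^*_n$, which is amplified because $\gamma<1$. I would bound this drift by the preceding tiles' tracking errors $\|W^{(n')}_{\cdot}-P^*_{n'}\|$ and route it into the Lyapunov sum through the triangle inequality
\begin{equation*}
\|W^{(n)}_{t_n+(k+1)T_n-1}-P^*_n(\overline{W}^{(n)}_{t_{n-1}+k+1})\|^2
\le 2\|W^{(n)}_{t_n+(k+1)T_n-1}-P^*_n(\overline{W}^{(n)}_{t_{n-1}+k})\|^2
+ 2\|P^*_n(\overline{W}^{(n)}_{t_{n-1}+k+1})-P^*_n(\overline{W}^{(n)}_{t_{n-1}+k})\|^2.
\end{equation*}
The first term is controlled by the appropriate descent lemma; the second is a drift term that I would bound by the contracted distances of slower tiles, using the formula $P^*_n = \gamma^{-n}(W^*-\overline{W}^{(n)})$ and the definition of $\overline{W}^{(n)}$.

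Step three is to aggregate. Summing the per-tile bounds with appropriately chosen multiplicative weights (for instance, giving heavier weight to slower tiles) and using $\gamma \le H_{\min}/(\sqrt{6}F_{\max}^2)$ to dominate the cross-tile coupling coefficients, the coupling term $\Theta(\gamma^2)\|\tilde W^{(n+1)}-P^*_{n+1}\|^2$ appearing in Lemma~\ref{lemma:3} and the drift terms from Step two can all be absorbed into $\rho\,\mathbb{E}[\mathbb{J}_k]$ for a slightly smaller $\rho$. This yields the one-step recursion
\begin{equation*}
\mathbb{E}[\mathbb{J}_{k+1}] \le (1-\rho)\,\mathbb{E}[\mathbb{J}_k] + \mathcal{E}
\end{equation*}
with a constant residual floor $\mathcal{E}$ collecting the additive noises from both lemmas. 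Unrolling gives $\mathbb{E}[\mathbb{J}_k]\le(1-\rho)^k\mathbb{E}[\mathbb{J}_0] + \mathcal{E}/\rho$, and re-expressing $k=\gamma^{2N}t$ produces the stated linear rate.

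The main obstacle is the residual floor $\mathcal{E}$, which must reduce to $\Theta(\gamma^{-4N/3}(\sigma\Delta w_{\min})^{2/3})$. Lemma~\ref{lemma:2} contributes a term $\Theta(\sigma^2\gamma^{-N}+\gamma^{-4N/3}(\sigma\Delta w_{\min})^{2/3})$ scaled by $\gamma^{2N}$ after weighting (since the $\|W^{(N)}-P^*_N\|^2$ component enters the composite error with factor $\gamma^{2N}$), while Lemma~\ref{lemma:3} contributes $\Theta(\Delta w_{\min})$ terms from each intermediate tile. To balance these, I would tune the mapping constant $\kappa = (\sigma L_G W_{\max})^{1/2}(\gamma^N\Delta w_{\min})^{-1/4}$ as stated, which rescales the effective quantization variance $\Delta w_{\min}/\kappa^2$ relative to the sample variance $\sigma^2$; the optimal trade-off between these two sources then gives the $(\sigma\Delta w_{\min})^{2/3}$ scaling via Young's inequality applied to terms of the form $A\sigma^2 + B\Delta w_{\min}$ with free parameter $\kappa$. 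The exponentially growing prefactor $\gamma^{-4N/3}$ reflects the unavoidable amplification of per-tile errors up through the composite weight, and Corollary~\ref{corollary:Convergence_rate} is then immediate by keeping only the $n=N$ term of $\mathbb{J}_k$ and multiplying by $\gamma^{2N}$ to obtain the optimality gap of $\overline{W}$.
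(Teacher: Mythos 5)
Your overall architecture is the same as the paper's: a Lyapunov function summing per-tile tracking errors, one-inner-loop contractions from Lemmas~\ref{lemma:2}--\ref{lemma:3}, a drift bound for the moving targets, unrolling, and the $\kappa$-balancing that yields the $(\sigma\Delta w_{\min})^{2/3}$ floor (in the paper that balancing is already performed inside the proof of the long version of Lemma~\ref{lemma:2}, so no further tuning is needed at the assembly stage). The genuine gap is in your Step two/three treatment of the drift. The drift of $P^*_n$ over one outer step is caused by the transfer update of $W^{(n-1)}$, whose magnitude is $\beta$ times the transferred value $\tilde{W}^{(n)}$; since $\tilde{W}^{(n)}$ is close to the target $P^*_n$ itself (not to zero), the squared drift contains a piece of order $\beta^2\gamma^{-2}\|P^*_n\|^2$, which is \emph{not} a tracking error and cannot simply be ``absorbed into $\rho\,\mathbb{E}[\mathbb{J}_k]$'' as you assert. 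The paper's proof handles exactly this term by retaining, in the long version of Lemma~\ref{lemma:3} (Lemma~\ref{lemma:TT-n-descent}), a negative term $-\Theta(\gamma^2/H_{\min})\,\|P^*_{n+1}\odot F - |P^*_{n+1}|\odot G\|^2$ and imposing the coefficient condition (the quantity $(A)$ in the proof of Theorem~\ref{theorem:TT-convergence-scvx}) under which it cancels the positive drift contribution; this cancellation is the load-bearing step and is invisible in the short lemma statements you worked from, so your plan as written would stall here. Your route could be repaired by invoking the identity $P^*_n(\overline{W}^{(n)}) = \gamma^{-1}\bigl(P^*_{n-1}(\overline{W}^{(n-1)}) - W^{(n-1)}\bigr)$, which converts $\|P^*_n\|$ into $\gamma^{-1}$ times the tracking error of tile $n-1$; but then the cross-coupling coefficient becomes $\Theta(\beta^2\gamma^{-4}) = \Theta(1)$ rather than $o(1)$, so absorption into the contraction works only if the constant hidden in $\beta=\Theta(\gamma^2)$ is made explicitly small---a requirement you never state (the paper's analogous condition is $c_\beta \le 1/(5F_{\max})$).

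A second, smaller problem is your weighting of the Lyapunov function. The statement defines $\mathbb{J}_k$ with unit weights, and with unit weights the floor of Lemma~\ref{lemma:2}, $\Theta(\sigma^2\gamma^{-N} + \gamma^{-4N/3}(\sigma\Delta w_{\min})^{2/3})$, enters the recursion unscaled---which is precisely how the stated $\Theta(\gamma^{-4N/3}(\sigma\Delta w_{\min})^{2/3})$ bound arises. Your proposal to weight slower tiles more heavily and your claim that the main-tile noise is ``scaled by $\gamma^{2N}$ after weighting'' would prove a bound on a different quantity than the stated $\mathbb{J}_k$, and it is internally inconsistent with your closing step for Corollary~\ref{corollary:Convergence_rate}, where you again multiply the $n=N$ component by $\gamma^{2N}$; either the weight or that final multiplication must go, otherwise the $\gamma^{2N}$ factor is counted twice. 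Keeping unit weights, as the paper does, removes this issue and makes the corollary follow exactly as you describe.
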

\begin{proof}[Proof of Theorem \ref{theorem:TT-convergence-scvx}]
We begin by presenting two lemmas essential for establishing the convergence proof.

\textbf{One inner loop contraction.}

Lemma~\ref{lemma:TT-WN-descent} establishes that tile \( W^{(N)} \) undergoes a descent in expected distance to its local stationary point \( P^*_{N}(\overline{W}^{(N)}) \) after one inner loop with \( T_N \) updates. The update dynamic is defined as:
\begin{align}
\label{eq:TT-WN-descent}
    W_{t+1}^{(N)} =
    W_{t}^{(N)} - \alpha \nabla f(\overline{W}_{t}; \xi_{t})\odot F\Bigl( \frac{W_{t}^{(N)}}{\kappa} \Bigr) -|\alpha \nabla f(\overline{W}_{t}; \xi_{t})| \odot G \Bigl( \frac{W_{t}^{(N)}}{\kappa} \Bigr) + \zeta_{t}. 
\end{align}
\vspace{-1em}
\begin{restatable}[Descent lemma of the main sequence $W^{(N)}$, long version of Lemma \ref{lemma:2}]{lemma}{LemmaTTWnDescentScvxWS}\label{lemma:TT-WN-descent}
        Suppose Assumptions~\ref{assumption:noise} --\ref{assumption:pulse-response-symmetry} hold, the learning rate satisfies $\alpha \leq \frac{C_{k, +}}{4 \gamma^N (\mu+L) F_{\max}^2}$, the mapping constant is set as \( \kappa=(\sigma L_G W_{\max})^{\frac{1}{2}}(\gamma^{N} \Delta w_{\min})^{-
\frac{1}{4}} \). Denote $\mathbb{E}_{\boldsymbol{\xi_N},\boldsymbol{\zeta_N}} := \mathbb{E}_{\xi_{t:t+T_N-1},\zeta_{t:t+T_N-1}} $.
    It holds that:
\begin{align}
\label{inequality:TT_convergence-scvx-C3N-final}
& \mathbb{E}_{\boldsymbol{\xi_N}, \boldsymbol{\zeta_N}}\left[\| W_{t+(k+1)T_N-1}^{(N)} - P^*_{N}(\overline{W}^{(N)}_{t_{N-1}+k})\|^2\right] \\ \le &
 \left(1 - \frac{\alpha\mu L \gamma^N}{4(\mu + L)} \right)^{T_N} \| W_{t+kT_N}^{(N)}-P^*_{N}(\overline{W}^{(N)}_{t_{N-1}+k})\|^2 + 
        \frac{8(\mu + L)\alpha}{\gamma^N \mu L} F_{\max}^2 \sigma^2 +\gamma^{-\frac{4N}{3}}  \Theta ((\sigma \Delta w_{\min})^\frac{2}{3}).\nonumber
\end{align}
\end{restatable}
Lemma~\ref{lemma:TT-n-descent} establishes that a single update of tile \( W^{(n)} \) leads to a descent in its expected distance to the local stationary point \( P^*_{n}(\overline{W}^{(n)}) \) after one inner loop with  \( T_n \) updates. The update dynamic is defined as:
\begin{align}
\label{eq:TT-update-n-compact-II}
    W_{t_n+1}^{(n)} =
    W_{t_n}^{(n)} + \beta W^{(n+1)}_{t_{n+1} + T_{n+1} - 1} \odot F\Bigl( \frac{W_{t_n}^{(n)}}{\kappa} \Bigr) -|\beta W^{(n+1)}_{t_{n+1} + T_{n+1} - 1}| \odot G\Bigl( \frac{W_{t_n}^{(n)}}{\kappa} \Bigr) + \zeta_{t_n}. 
\end{align}

\begin{restatable}[Descent lemma of lower level sequences $W^{(n)}$, long version of Lemma \ref{lemma:3}]{lemma}{LemmaTTWNDescentScvxCRL}
    \label{lemma:TT-n-descent}
    Following the same assumptions of Lemma \ref{lemma:TT-WN-descent}, for \(n \in \{0,\ldots, N-1\}\), the learning rate satisfies that $\beta \leq \frac{F_{\max}^3 \gamma}{3H_{\min}}$. 
    Denote
    $ \mathbb{E}_{\boldsymbol{\zeta_n}} := \mathbb{E}_{\zeta_{t_n+kT_n:t_n+(k+1)T_n-1}} $. 
    It holds that:
\begin{align}
\label{inequality:TT-convergence-scvx-C3n-final}
   & \quad \mathbb{E}_{\boldsymbol{\zeta_n}}[\|W_{ t_{n}+(k+1)T_n-1}^{(n)}-P^*_{n}(\overline{W}^{(n)}_{t_{n-1}+k})\|^2]
  \\ &  \leq \left( 1-\frac{\beta H_{\min}}{2\gamma  F_{\max}}\right)^{T_n}\|W_ {t_{n}+kT_n}^{(n)}-  P^*_{n}(\overline{W}^{(n)}_{t_{n-1}+k})\|^2\bkeq \quad  + \frac{6 F_{\max}^4  \gamma^2}{H_{\min}^2}\| W_{t_{n+1}+(k+1)T_{n+1}-1}^{(n+1)}- P^*_{n+1}(\overline{W}^{(n+1)}_{t_{n}+k})\|^2 +\frac{2F_{\max}\Theta (\Delta w_{\min}) }{H_{\min}}\bkeq \quad-(\frac{2\gamma^2}{H_{\min}} -\frac{6\beta\gamma F_{\max}}{H_{\min}})\Big \|P^*_{n+1}(\overline{W}^{(n+1)}_{t_{n}+k})\odot F\Bigl(\frac{W_ {t_{n}}^{(n)}}{\kappa} \Bigr) - |P^*_{n+1}(\overline{W}^{(n+1)}_{t_{n}+k})|\odot G\Bigl(\frac{W_ {t_{n}}^{(n)}}{\kappa} \Bigr) \Big\|^2. \nonumber 
\end{align}
\end{restatable}
The proof of Lemma  \ref{lemma:TT-WN-descent} and \ref{lemma:TT-n-descent} are deferred to Section \ref{section:proof-lemma:TT-WN-descent} and \ref{section:proof-lemma:TT-n-descent}. We then proceed to prove the convergence of the algorithm with the result of Lemma  \ref{lemma:TT-WN-descent} and \ref{lemma:TT-n-descent} . Define a Lyapunov function as:
\begin{align}
     \mathbb{J}_{k} := \sum_{n=0}^N \|W^{(n)}_{t_n+kT_n-1}-P^*_{n}(\overline{W}^{(n)}_{t_{n-1}+k})\|^2.
\end{align}
\textbf{Bounding the drifting optimality gap.}

To derive the recursion of the Lyapunov function, we require an additional inequality to characterize the \textit{drift optimality}, when $n \in [1,N]$.  
Observe that between one time step increment on $t_{n-1}$, only the component \(W^{(n-1)}\) of the stationary point \(P^*_{n}(\overline{W}^{(n)}) = \gamma^{-n}(W^* - \overline{W}^{(n)})\) is updated due to the structure of the inner-loop algorithm we obtain:
\begin{align}
    \label{inequality:TT-convergence-scvx-Pstar-T3}
    &\ \mathbb{E}_{\zeta_{t_n}}[ \|P^*_{n}(\overline{W}^{(n)}_{t_{n-1}+k+1})- P^*_{n}(\overline{W}^{(n)}_{t_{n-1}+k})\|^2]\\
    = &\frac{1}{\gamma}\mathbb{E}_{\zeta_{t_n}}\Big[ \Big\|\beta \Bigl(W_{t_{n}+kT_n-1}^{(n)}\odot F\Bigl(\frac{W_ {t_{n-1}+k}^{(n-1)}}{\kappa} \Bigr) - |W_{t_{n}+kT_n-1}^{(n)}|\odot G\Bigl(\frac{W_ {t_{n-1}+k}^{(n-1)}}{\kappa} \Bigr)\Bigr)+ \zeta_{t_n}\Big\|^2 \Big] \nonumber
    \\ 
   \leq & \frac{3\beta^2}{\gamma} \Big\|P^*_{n}(\overline{W}^{(n)}_{t_{n-1}+k})\odot F\Bigl(\frac{W_ {t_{n-1}+k}^{(n-1)}}{\kappa}\Bigr) - |P^*_{n}(\overline{W}^{(n)}_{t_{n-1}+k})|\odot G\Bigl(\frac{W_ {t_{n-1}+k}^{(n-1)}}{\kappa} \Bigr)\Big\|^2 +\frac{3\beta^2}{\gamma}
\Big\|W_{t_{n}+kT_n-1}^{(n)} \bkeq\odot F\Bigl(\frac{W_ {t_{n-1}+k}^{(n-1)}}{\kappa}\Bigr) - |W_{t_{n}+kT_n-1}^{(n)} |\odot G\Bigl(\frac{W_ {t_{n-1}+k}^{(n-1)}}{\kappa}\Bigr)-\Bigl(P^*_{n}(\overline{W}^{(n)}_{t_{n-1}+k})\odot F\Bigl(\frac{W_ {t_{n-1}+k}^{(n-1)}}{\kappa} \Bigr)\bkeq - |P^*_{n}(\overline{W}^{(n)}_{t_{n-1}+k})|\odot G\Bigl(\frac{W_ {t_{n-1}+k}^{(n-1)}}{\kappa} \Bigr)\Bigr)\Big\|^2+\frac{\mathbb{E}_{\zeta_{t_n}}[\zeta_{t_n}^2]}{\gamma} 
    \nonumber\\
    \le&\ \frac{3\beta^2}{\gamma} \Big\|P^*_{n}(\overline{W}^{(n)}_{t_{n-1}+k})\odot F\Bigl(\frac{W_ {t_{n-1}+k}^{(n-1)}}{\kappa}\Bigr) - |P^*_{n}(\overline{W}^{(n)}_{t_{n-1}+k})|\odot G\Bigl(\frac{W_ {t_{n-1}+k}^{(n-1)}}{\kappa}\Bigr)\Big\|^2
 \bkeq + \frac{3\beta^2}{\gamma}\|W_{t_{n}+kT_n-1}^{(n)} - P^*_{n}(\overline{W}^{(n)}_{t_{n-1}+k})\|^2 +\frac{3\beta}{\gamma}\Theta(\Delta w_{\min}).
    \nonumber
\end{align}
Inequality \eqref{inequality:TT-convergence-scvx-Pstar-T3} is obtained by is bounded by Lemma \ref{lemma:pulse-update-error}, \ref{lemma:lip-analog-update} and Cauchy-Schwarz inequality. Therefore, the \textit{drift optimality} can be bounded by substituting \eqref{inequality:TT-convergence-scvx-Pstar-T3}  with $\|U+V\|^2 \leq 2\|U\|^2 +2\|V\|^2$: 
\begin{align}
\label{inequality: optimalty drift}
    & \quad \mathbb{E}_{\zeta_{t_n}}[\|W_{t_{n}+(k+1)T_n-1}^{(n)} - P^*_{n}(\overline{W}^{(n)}_{t_{n-1}+k+1})\|^2]\\ & \leq 2\|W_{t_{n}+(k+1)T_n-1}^{(n)} - P^*_{n}(\overline{W}^{(n)}_{t_{n-1}+k})\|^2+2\mathbb{E}_{\zeta_{t_n}}[\|P^*_{n}(\overline{W}^{(n)}_{t_{n-1}+k+1})- P^*_{n}(\overline{W}^{(n)}_{t_{n-1}+k})\|^2]\nonumber \\
   & \leq (2+\frac{6\beta^2}{\gamma})\|W_{t_{n}+(k+1)T_n-1}^{(n)} - P^*_{n}(\overline{W}^{(n)}_{t_{n-1}+k})\|^2 +\frac{6\beta^2}{\gamma}\Big\|P^*_{n}(\overline{W}^{(n)}_{t_{n-1}+k})\odot F\Bigl(\frac{W_{t_{n-1}+k}^{(n-1)}}{\kappa}\Bigr) \bkeq \quad - |P^*_{n}(\overline{W}^{(n)}_{t_{n-1}+k})|\odot G\Bigl(\frac{W_ {t_{n-1}+k}^{(n-1)}}{\kappa}\Bigr)\Big\|^2+ \frac{6\beta}{\gamma}\Theta(\Delta w_{\min}). \nonumber
\end{align}
\textbf{Establishing convergence.}

Define \(\mathcal{F}_k\) as the \(\sigma\)-algebra generated by all random variables up to time \(k\), including \(\{W^{(n)}_{t_n}\}_{t \leq k, n \in [0, N]}\). 
We write conditional expectations \(\mathbb{E}[~\cdot \mid \mathcal{F}_k]\) compactly as \(\mathbb{E}_k[~\cdot~]\) for brevity. For notational consistancy, we denote $W^* = :P^*_{0, t_{-1}+k}$ for all $k$.
Expanding $\mathbb{J}_{k+1}$ with \eqref{inequality: optimalty drift}, \eqref{inequality:TT_convergence-scvx-C3N-final} and \eqref{inequality:TT-convergence-scvx-C3n-final}, we get: 
\begin{align}
\label{inequality:descent 
_of_Lyapunov}
   &\quad\mathbb{E}_k[ \mathbb{J}_{k+1}]- \mathbb{J}_{k}\\& = \sum_{n'=0}^N \|W^{(n')}_{t_n'+(k+1)T_n'-1}-P^*_{N}(\overline{W}^{(n')}_{t_{n'-1}+k+1})\|^2-\sum_{n'=0}^N \|W^{(n')}_{t_n'+kT_n'-1}-P^*_{n'}(\overline{W}^{(n')}_{t_{n'-1}+k})\|^2 \nonumber \\&\leq
  \sum_{n'=1}^N   \bigg((2+\frac{6\beta^2}{\gamma})\|W_{t_{n'}+(k+1)T_n'-1}^{(n')} - P^*_{n'}(\overline{W}^{(n')}_{t_{n'-1}+k})\|^2 + \frac{6\beta^2}{\gamma}\Big\|P^*_{n'}(\overline{W}^{(n')}_{t_{n'-1}+k})\odot F\Bigl(\frac{W_ {t_{n'-1}+k}^{(n'-1)}}{\kappa}\Bigr)\bkeq \quad- |P^*_{n'}(\overline{W}^{(n')}_{t_{n'-1}+k})|\odot G\Bigl(\frac{W_ {t_{n'-1}+k}^{(n'-1)}}{\kappa}\Bigr)\Big\|^2 + \frac{6\beta}{\gamma}\Theta(\Delta w_{\min})\bigg)+\|W^{(0)}_{t_0+(k+1)T_0-1}-P^*_{0, t_{-1}+k}\|^2\bkeq \quad - \sum_{n'=0}^N \|W^{(n')}_{t_n'+kT_n'-1}-P^*_{n'}(\overline{W}^{(n')}_{t_{n'-1}+k})\|^2  \nonumber\\& \leq (2+\frac{6\beta^2}{\gamma})\Big( \big(1 - \frac{\alpha\mu L \gamma^N}{4(\mu + L)} \big)^{T_N} \| W_{t+kT_N-1}^{(N)}-P^*_{N}(\overline{W}^{(N)}_{t_{N-1}+k})\|^2 + 
        \frac{8(\mu + L)\alpha}{\gamma^N \mu L} F_{\max}^2 \sigma^2 +\gamma^{-\frac{4N}{3}}  \bkeq \quad\Theta ((\sigma \Delta w_{\min})^\frac{2}{3})\Big) 
     + \frac{6\beta^2}{\gamma} \Big\|P^*_{N}(\overline{W}^{(N)}_{t_{N-1}+k})\odot F\Bigl(\frac{W_ {t_{N-1}+k}^{(N-1)}}{\kappa}\Bigr) - |P^*_{N}(\overline{W}^{(N)}_{t_{N-1}+k})|\odot G\Bigl(\frac{W_ {t_{N-1}+k}^{(N-1)}}{\kappa}\Bigr)\Big\|^2 \bkeq \quad+ \frac{6\beta}{\gamma}\Theta(\Delta w_{\min})+
 \sum_{n'=1}^{N-1} \Bigg(\Big(2+\frac{6\beta^2}{\gamma}\Big) \Big(\big(-\frac{2\gamma^2}{H_{\min}}+\frac{6\beta\gamma F_{\max}}{H_{\min}}\big)\Big\|P^*_{n'}(\overline{W}^{(n')}_{t_{n'-1}+k})\odot F\Bigl(\frac{W_ {t_{n'-1}+k}^{(n'-1)}}{\kappa}\Bigr)\bkeq \quad - |P^*_{n'}(\overline{W}^{(n')}_{t_{n'-1}+k})|\odot G\Bigl(\frac{W_ {t_{n'-1}+k}^{(n'-1)}}{\kappa}\Bigr)\Big\|^2  +\big(1-\frac{\beta H_{\min}}{2\gamma  F_{\max}}\big)^{T_{n'}}\|W_ {t_{n'}+kT_n'-1}^{(n')}-  P^*_{n'}(\overline{W}^{(n')}_{t_{n'-1}+k})\|^2 \bkeq \quad + \frac{6 F_{\max}^4  \gamma^2}{H_{\min}^2}\|W_{t_{n'+1}+kT_{n'+1}-1}^{(n'+1)} - P^*_{n'}(\overline{W}^{(n'+1)}_{t_{n'}+k})\|^2 +\frac{2F_{\max}\Theta (\Delta w_{\min}) }{H_{\min}}\Big) +  \frac{6\beta^2}{\gamma} \Big\|P^*_{n'}(\overline{W}^{(n')}_{t_{n'-1}+k}) \bkeq \quad\odot F\Bigl(\frac{W_ {t_{n'-1}+k}^{(n'-1)}}{\kappa}\Bigr)- |P^*_{n'}(\overline{W}^{(n')}_{t_{n'-1}+k})|\odot G\Bigl(\frac{W_ {t_{n'-1}+k}^{(n'-1)}}{\kappa}\Bigr)\Big\|^2 + \frac{6\beta}{\gamma}\Theta(\Delta w_{\min})\Bigg)\bkeq \quad + \left( 1-\frac{\beta H_{\min}}{2\gamma  F_{\max}}\right)^{T_0}\|W_ {t_{0}+kT_0}^{(0)}-  P^*_{0,t_{-1}+k}\|^2  + \frac{6 F_{\max}^4  \gamma^2}{H_{\min}^2}\| W_{t_{1}+(k+1)T_{1}-1}^{(1)}- P^*_{1}(\overline{W}^{(1)}_{t_{0}+k})\|^2 \bkeq \quad+\frac{2F_{\max}\Theta (\Delta w_{\min}) }{H_{\min}} - \sum_{n'=0}^N \|W^{(n')}_{t_n'+kT_n-1}-P^*_{n'}(\overline{W}^{(n')}_{t_{n'-1}+k})\|^2 \nonumber\\
 & \leq \sum_{n'=1}^{N} \underbrace{\Big((2+\frac{6\beta^2}{\gamma})(-\frac{2\gamma^2}{H_{\min}}+\frac{6\beta\gamma F_{\max}}{H_{\min}})+\frac{6\beta^2}{\gamma}\Big)}_{(A)}  \Big\|P^*_{n'}(\overline{W}^{(n')}_{t_{n'-1}+k})\odot F\Bigl(\frac{W_ {t_{n'-1}+k}^{(n'-1)}}{\kappa}\Bigr) - |P^*_{n'}(\overline{W}^{(n')}_{t_{n'-1}+k})|\bkeq\quad\odot G\Bigl(\frac{W_ {t_{n'-1}+k}^{(n'-1)}}{\kappa}\Bigr)\Big\|^2+  \sum_{n'=1}^{N-1} \underbrace{\Big((2+\frac{6\beta^2}{\gamma})\big(1-\frac{\beta H_{\min}}{2\gamma  F_{\max}}\big)^{T_{n'}}+\frac{6 F_{\max}^4  \gamma^2}{H_{\min}^2}-1\Big)}_{(B)}\|W_{t_{n'}+kT_n'-1}^{(n')} - P^*_{n'}(\overline{W}^{(n')}_{t_{n'-1}+k})\|^2 \bkeq \quad+\underbrace{\Big((2+\frac{6\beta^2}{\gamma}) \big(1 - \frac{\alpha\mu L \gamma^N}{4(\mu + L)} \big)^{T_N}+ \frac{6 F_{\max}^4  \gamma^2}{H_{\min}^2}-1\Big)}_{(C)} \| W_{t+kT_N-1}^{(N)}-P^*_{N}(\overline{W}^{(N)}_{t_{N-1}+k})\|^2 
 +(2+\frac{6\beta^2}{\gamma}) \bkeq \quad \big( \frac{8(\mu + L)\alpha}{\gamma^N \mu L} F_{\max}^2 \sigma^2
  +\gamma^{-\frac{4N}{3}}  \Theta ((\sigma \Delta w_{\min})^\frac{2}{3})\big)
  +\sum_{n'=0}^{N-1} \Big((2+\frac{6\beta^2}{\gamma})\frac{2F_{\max}\Theta (\Delta w_{\min}) }{H_{\min}}+\frac{6\beta}{\gamma}\Theta(\Delta w_{\min})\Big)\bkeq \quad +\Big(\big(1-\frac{\beta H_{\min}}{2\gamma  F_{\max}}\big)^{T_{0}}-1\Big)\|W_{t_{0}+kT_0-1}^{(0)} - P^*_{0,t_{-1}+k}\|^2 . \nonumber
\end{align}
The coefficient $\Big(\big(1-\frac{\beta H_{\min}}{2\gamma  F_{\max}}\big)^{T_{n'}}-1\Big)$ for the last term is negative when $T_{0} \geq 1$.
To achieve descent on Lyapunov function, we choose \begin{align}
    \beta = \Theta(\gamma^2), \quad \alpha = \Theta(1), \quad
   \gamma  \le  \sqrt{\frac{1-\frac{3}{2}\rho}{6}} 
\frac{H_{\min}}{F_{\max}^{2}}, \quad
T_{n} \geq \Theta \bigl(\gamma^{-1}\bigr), n \in \{1,\ldots,N-1\},\quad
T_{N} \geq \Theta \bigl(\gamma^{-N}\bigr)\nonumber
\end{align} 
to satisfy the following conditions, where $\rho \in (0,\frac{2}{3})$
is a constant:
\begin{align}
   A := (2+\frac{6\beta^2}{\gamma})(-\frac{2\gamma^2}{H_{\min}}+\frac{6\beta\gamma F_{\max}}{H_{\min}})+\frac{6\beta^2}{\gamma}& \leq 0, \label{inequality:condition_1}\\
   B:= (2+\frac{6\beta^2}{\gamma})\big(1-\frac{\beta H_{\min}}{2\gamma  F_{\max}}\big)^{T_{n}}+\frac{6 F_{\max}^4  \gamma^2}{H_{\min}^2}-1 &\leq -\rho, \quad n \in [1,N-1], \label{inequality:condition_2}\\
   C:= (2+\frac{6\beta^2}{\gamma}) \big(1 - \frac{\alpha\mu L \gamma^N}{4(\mu + L)} \big)^{T_N}+ \frac{6 F_{\max}^4  \gamma^2}{H_{\min}^2}-1 &\le -\rho.\label{inequality:condition_3}
\end{align}
Let $\beta =c_\beta \gamma^2$, the inequality \eqref{inequality:condition_1} becomes:
\begin{align}
    \left(2 + 6c_\beta^2 \gamma^3\right) \left(-2\gamma^2 + 6c_\beta F_{\max}\gamma^3 \right) + 6c_\beta^2 \gamma^3 H_{\min}\leq 0
\end{align}
which holds when $\gamma \in (0,
\frac{H_{\min}}{\sqrt{6}F_{\max}^{2}}]$ and $c_\beta \in (0, \frac{1}{5F_{\max}}]$.
For \eqref{inequality:condition_2} and \eqref{inequality:condition_3}, we first give the upper bound of $\gamma$ as $\gamma  \le  
\frac{\sqrt{1-\frac{3}{2}\rho}H_{\min}}{\sqrt{6}F_{\max}^{2}}
 $ in order to satisfy
\(
\frac{6F_{\max}^{4}\gamma^{2}}{H_{\min}^{2}}-1 \le -\frac{3}{2}\rho
\). We then bound the inner loops count $T_n$, $n\in\{0,\ldots,N-1\}$ by defining :
\begin{align}
    \lambda_1 = \frac{\beta H_{\min}}{2\gamma F_{\max}}
          = \frac{c_\beta H_{\min}}{2F_{\max}}\gamma
          =: c_1\gamma= \Theta(\gamma),
\qquad
A_1 = 2 + \frac{6\beta^2}{\gamma}
    = 2 + 6c_\beta^{2}\gamma^{3}
    = \mathcal{O}(1).
\end{align}
Using $(1-\lambda_1)^{T_n}\le e^{-\lambda_1 T_n}$,
inequality~\eqref{inequality:condition_2} must satisfy
$A_1 e^{-\lambda_1 T_n}\le \rho/2$, which yields
 $T_n  \ge 
\frac{1}{\lambda_1} 
\ln\ \Bigl(\frac{2A_1}{\rho}\Bigr)
= \Theta\ \bigl(\gamma^{-1}\bigr)$.
Similarly,
let 
\(
\lambda_2 = \frac{\alpha\mu L\gamma^{N}}{4(\mu+L)}
          =: c_2\gamma^{N}  =\Theta(\gamma^{N}),
\)
to satisfy
$A_1 e^{-\lambda_2 T_N}\le \rho/2$, we bound $T_N$ as
$T_N  \ge 
\frac{1}{\lambda_2} 
\ln\ \Bigl(\frac{2A_1}{\rho}\Bigr)
= \Theta\ \bigl(\gamma^{-N}\bigr)$.
Once inequalities \eqref{inequality:condition_1}, \eqref{inequality:condition_2}, and \eqref{inequality:condition_3} are satisfied, the Lyapunov descent inequality \eqref{inequality:descent 
_of_Lyapunov} simplifies to:
\begin{align}
\label{inequality:descent_of_Lyapunov_compact}
\mathbb{E}_k[ \mathbb{J}_{k+1}] \leq (1-\rho) \mathbb{J}_{k} + \gamma^{-\frac{4N}{3}} \Theta\big((\sigma \Delta w_{\min})^{\frac{2}{3}}\big).
\end{align}
    Taking full expectation over \(\mathcal{F}_k\) on both sides, the recurrence becomes:
\begin{align}
  \mathbb{E}[\mathbb{J}_{k+1}] \le (1 - \rho) \mathbb{E}[\mathbb{J}_k] + \gamma^{-\frac{4N}{3}} \Theta\big((\sigma \Delta w_{\min})^{\frac{2}{3}}\big).
\end{align}
Unrolling this inequality over $k$ steps, we get:
\begin{align}
\label{inequality:65}
  \mathbb{E}[\mathbb{J}_k] &\le (1 - \rho)^k\mathbb{E}[\mathbb{J}_0] + \gamma^{-\frac{4N}{3}} \Theta\big((\sigma \Delta w_{\min})^{\frac{2}{3}}\big) \sum_{t=0}^{K-1} (1 - \rho)^t \\
&\le (1 - \rho)^k \mathbb{E}[\mathbb{J}_0] + \frac{\gamma^{-\frac{4N}{3}} \Theta\big((\sigma \Delta w_{\min})^{\frac{2}{3}}\big)}{\rho}.  \nonumber
\end{align}
Since $\rho \in (0,1) = \Theta(1)$, we get:
\begin{align}
  \mathbb{E}[\mathbb{J}_k] \le (1 - \rho)^k \mathbb{E}[\mathbb{J}_0] + \Theta\big(\gamma^{-\frac{4N}{3}} (\sigma \Delta w_{\min})^{\frac{2}{3}}\big).  
\end{align}
Since $W^{(N)}$ dominates the compute cost, one outer iteration
(i.e.\ one update of $\mathbb{J}_k$) consumes
\(
t  =  \prod_{n=0}^N T_n k  = \mathcal{O}(\gamma^{-2N})\cdot k
\)
gradient evaluations.  
Therefore the averaged Lyapunov function, as a function of the total
gradient evaluations, obeys:
\begin{align}
  \mathbb{E}[\mathbb{J}_k] \le \mathcal{O}((1 - \rho)^{\gamma^{2N} t})\mathbb{E}[\mathbb{J}_0] + \Theta\big(\gamma^{-\frac{4N}{3}} (\sigma \Delta w_{\min})^{\frac{2}{3}}\big)  \nonumber
\end{align}
which completes the proof.
\end{proof}
\ThmTTConvergencecorollary*
\begin{proof}[Proof of Corollary \ref{corollary:Convergence_rate}]
From \eqref{inequality:65}, taking the limit as \(k \to \infty\), we obtain:
\begin{align}
\label{inequality:68}
\limsup_{k \to \infty} \mathbb{E}[\mathbb{J}_k]
\le \Theta( \gamma^{- \frac{4N}{3}} (\sigma \Delta w_{\min})^{\frac{2}{3}} ).
\end{align}
Once \eqref{inequality:68} holds for \(\mathbb{J}_k\) , each component of \(\mathbb{J}_k\) also satisfies:
\begin{align}
\label{inequality:69}
\limsup_{t_n \to \infty} \mathbb{E}\big[\|W^{(n)}_{t_n} - P^*_n(\overline{W}^{(n)}_{t_{n-1}})\|^2\big]
\leq 
\Theta\big(\gamma^{- \frac{4N}{3}} (\sigma \Delta w_{\min})^{\frac{2}{3}}\big).
\end{align}
Substituting \(P^*_n(\overline{W}^{(n)}) = \gamma^{-n}(W^* - \overline{W}^{(n)})\) into \eqref{inequality:69}, we derive the bound on the scaled residual:
\begin{align}
\limsup_{t_n \to \infty} \mathbb{E}\big[\|W^* - \overline W^{(n+1)}_{t_n}\|^2\big]
\leq 
\Theta\big(\gamma^{2n - \frac{4N}{3}} (\sigma \Delta w_{\min})^{\frac{2}{3}}\big), \quad n \in [0, N].
\end{align}
In particular, when \(n = N\), we obtain the desired result as:
\begin{align}
\limsup_{t \to \infty} \mathbb{E}\big[\|W^* - \overline{W}_{t}\|^2\big]
\leq 
\Theta\big(\gamma^{\frac{2N}{3}} (\sigma \Delta w_{\min})^{\frac{2}{3}}\big).
\end{align}
This demonstrates that increasing the number of tiles leads to an error decay rate proportional to \(\gamma^{\frac{2}{3}}\), thereby completing the proof.
\end{proof}

\subsection{Proof of Lemma \ref{lemma:TT-WN-descent}: Descent of the main sequence \texorpdfstring{$W_{t}^{(N)}$}{Wtnn}}
\label{section:proof-lemma:TT-WN-descent}
\LemmaTTWnDescentScvxWS*
\begin{proof}[Proof of Lemma \ref{lemma:TT-WN-descent}]
The proof begins from manipulating the norm $\|W_{t+1}^{(N)} - P^*_{N}(\overline{W}^{(N)}_{t_{N-1}})\|^2$
\begin{align}
    \label{inequality:TT-convergence-PN-S1}
    & \quad \mbE_{\xi_{t}, \zeta_{t}}[\|W_{t+1}^{(N)}-P^*_{N}(\overline{W}^{(N)}_{t_{N-1}})\|^2] \\ &\nonumber
    =\|W_{t}^{(N)}-P^*_{N}(\overline{W}^{(N)}_{t_{N-1}})\|^2 + 2\mbE_{\xi_{t}, \zeta_{t}}[\langle W_{t}^{(N)}-P^*_{N}(\overline{W}^{(N)}_{t_{N-1}}), W_{t+1}^{(N)}-  W_{t}^{(N)}\rangle] \bkeq \quad+ \mbE_{\xi_{t}, \zeta_{t}}[\|W_{t+1}^{(N)}-  W_{t}^{(N)}\|^2].\nonumber
\end{align}
To bound the second term, we first apply the update dynamics given in ~\eqref{eq:TT-WN-descent} to obtain the following equality:
\begin{align}
    \label{inequality:TT-convergence-N-S1-T2}
    &\ 2\mbE_{\xi_{t}, \zeta_{t}}[\langle  W_{t}^{(N)}-P^*_{N}(\overline{W}^{(N)}_{t_{N-1}}),W_{t+1}^{(N)}-  W_{t}^{(N)}\rangle]
    \\
    =&\ -2 \mbE_{\xi_{t}, \zeta_{t}}\Big[\Big\langle  W_{t}^{(N)}-P^*_{N}(\overline{W}^{(N)}_{t_{N-1}}), \alpha\nabla f(\overline W_{t}; \xi_{t}) \odot F\Bigl( \frac{W_{t}^{(N)}}{\kappa} \Bigr) + |\alpha\nabla f(\overline W_{t}; \xi_{t}) |\bkeq \odot G\Bigl( \frac{W_{t}^{(N)}}{\kappa} \Bigr)-\zeta_{t}\Big\rangle \Big] 
    \nonumber\\
    =&\ -2\alpha \mbE_{\xi_{t}}\Big[\Big\langle    W_{t}^{(N)}-P^*_{N}(\overline{W}^{(N)}_{t_{N-1}}), \nabla f(\overline W_{t}; \xi_{t}) \odot F\Bigl( \frac{W_{t}^{(N)}}{\kappa} \Bigr)\Big\rangle\Big]
    \bkeq
    -2\alpha \mbE_{\xi_{t}}\Big[\Big\langle    W_{t}^{(N)}-P^*_{N}(\overline{W}^{(N)}_{t_{N-1}}), |\nabla f(\overline W_{t}; \xi_{t})| \odot G\Bigl( \frac{W_{t}^{(N)}}{\kappa} \Bigr)\Big\rangle\Big]
    \nonumber\\
    =&\ -2\alpha \Big \langle    W_{t}^{(N)}-P^*_{N}(\overline{W}^{(N)}_{t_{N-1}}), \nabla f(\overline W_{t}) \odot F \Bigl(\frac{W_{t}^{(N)}}{\kappa} \Bigr) \Big\rangle
    \bkeq
    + 2\alpha \mbE_{\xi_{t}}\Big[\Big\langle    W_{t}^{(N)}-P^*_{N}(\overline{W}^{(N)}_{t_{N-1}}), (|\nabla f(\overline W_{t})|-|\nabla f(\overline  W_{t}; \xi_{t})|)\odot G \Bigl(  \frac{W_{t}^{(N)}}{\kappa} \Bigr)\Big\rangle \Big]
    \nonumber\\ & - 2\alpha \Big \langle    W_{t}^{(N)}-P^*_{N}(\overline{W}^{(N)}_{t_{N-1}}), |\nabla f(\overline W_{t}) |\odot G \Bigl(  \frac{W_{t}^{(N)}}{\kappa} \Bigr)\Big\rangle \nonumber\\
    \leq &\ -2\alpha \underbrace{\Big\langle     W_{t}^{(N)}-P^*_{N}(\overline{W}^{(N)}_{t_{N-1}}), \nabla f(\overline W_{t}) \odot F\Bigl( \frac{W_{t}^{(N)}}{\kappa} \Bigr) + |\nabla f(\overline W_{t}) |\odot G\Bigl( \frac{W_{t}^{(N)}}{\kappa} \Bigr)\Big\rangle}_{(T1)}
    \bkeq
    + 2\alpha \underbrace{\mbE_{\xi_{t}}\Big[\Big\langle     W_{t}^{(N)}-P^*_{N}(\overline{W}^{(N)}_{t_{N-1}}), (|\nabla f(\overline W_{t})|-|\nabla f(\overline W_{t}; \xi_{t})|)\odot G \Bigl(  \frac{W_{t}^{(N)}}{\kappa} \Bigr)\Big\rangle \Big]}_{(T2)} \nonumber
\end{align}

\textbf{Upper bound of the first term $(T1)$.}
With Lemma \ref{lemma:element-wise-product-error}, the second term in the \ac{RHS} of \eqref{inequality:TT-convergence-PN-S1} can be bounded by:
\begin{align}
    \label{inequality:TT-convergence-PN-S1-T2-T1-S1}
    &\ -2\alpha \Big\langle     W_{t}^{(N)}-P^*_{N}(\overline{W}^{(N)}_{t_{N-1}}), \nabla f(\overline W_{t}) \odot F\Bigl( \frac{W_{t}^{(N)}}{\kappa} \Bigr) + |\nabla f(\overline W_{t}) |\odot G\Bigl( \frac{W_{t}^{(N)}}{\kappa} \Bigr)\Big\rangle
    \\
    =&\ -2\alpha \Big\langle   W_{t}^{(N)}-P^*_{N}(\overline{W}^{(N)}_{t_{N-1}}), \nabla f(\overline W_{t}) \odot q_s\Bigl( \frac{W_{t}^{(N)}}{\kappa} \Bigr)\Big\rangle
    \nonumber\\
    \le&\ -2\alpha   C_{k, +} \langle   W_{t}^{(N)}-P^*_{N}(\overline{W}^{(N)}_{t_{N-1}}), \nabla f(\overline W_{t}) \rangle
     +2\alpha  C_{k, -} \langle |  W_{t}^{(N)}-P^*_{N}(\overline{W}^{(N)}_{t_{N-1}})|, |\nabla f(\overline W_{t})| \rangle
    \nonumber
\end{align}
where 
$C_{k,+}$ and $C_{k,-}$ are defined as:
\begin{align}
    C_{k, +} :=& \frac{1}{2}\left(\max_{i\in\ccalI}\{q_s([\frac{W_{t}^{(N)}}{\kappa}]_i)\} + \min_{i\in\ccalI}\{q_s([\frac{W_{t}^{(N)}}{\kappa}]_i)\}\right), \\
    C_{k, -} :=& \frac{1}{2}\left(\max_{i\in\ccalI}\{q_s([\frac{W_{t}^{(N)}}{\kappa}]_i)\} - \min_{i\in\ccalI}\{q_s([\frac{W_{t}^{(N)}}{\kappa}]_i)\}\right).
\end{align}
In the inequality above, the first term can be bounded by the strong convexity of $f$. 
Let $\varphi(W^{(N)}) := f( \overline W^{(N)} + \gamma^N W^{(N)})$ which is $\gamma^{2N} L$-smooth and {$\gamma^{2N}\mu$-strongly convex}. It can be verified that $\varphi(W_{t}^{(N)})$ has gradient $\nabla \varphi(W_{t}^{(N)}) =  \gamma^N
 \nabla f(\overline W_{t_{N-1}}^{(N)}+\gamma^N W_{t}^{(N)})= \gamma^N
 \nabla f(\overline W_{t_{N}})$ and optimal point $P^*( \overline  W_{t_{N-1}}^{(N)})$.
Leveraging Theorem 2.1.9 in \citep{nesterov2018lectures}, we have:
\begin{align}
    &\ 2\alpha  C_{k,+}\langle   W_{t}^{(N)}-P^*_{N}(\overline{W}^{(N)}_{t_{N-1}}), \nabla f(\overline W_{t}) \rangle
  \\=& 
     \frac{2\alpha  C_{k,+}}{\gamma^N }\langle \nabla \varphi(W_{t}^{(N)}),   W_{t}^{(N)}-P^*_{N}(\overline{W}^{(N)}_{t_{N-1}})\rangle\nonumber\\ =&\frac{2\alpha  C_{k,+}}{\gamma^N}\langle \nabla \varphi(W_{t}^{(N)})-\nabla \varphi(P^*_{N}(\overline{W}^{(N)}_{t_{N-1}})),   W_{t}^{(N)}-P^*_{N}(\overline{W}^{(N)}_{t_{N-1}})\rangle    \nonumber
    \\
    \ge&\ \frac{2\alpha  C_{k,+}}{\gamma^N }\left(\frac{\gamma^{2N}\mu \cdot \gamma^{2N} L}{\gamma^{2N}\mu+\gamma^{2N} L}\|  W_{t}^{(N)}-P^*_{N}(\overline{W}^{(N)}_{t_{N-1}})\|^2 + \frac{1}{\gamma^{2N}\mu+\gamma^{2N}L}\|\nabla \varphi(W_{t}^{(N)})\|^2\right)
    \nonumber\\
    =&\ \frac{2\alpha  C_{k,+}\mu L \gamma^{N}}{\mu+L}\|  W_{t}^{(N)}-P^*_{N}(\overline{W}^{(N)}_{t_{N-1}})\|^2 + \frac{2\alpha  C_{k,+}}{\gamma^{N}(\mu+L)}\|\nabla f(\overline W_{t})\|^2.
    \nonumber
\end{align}
The second term in the \ac{RHS} of \eqref{inequality:TT-convergence-PN-S1-T2-T1-S1} can be bounded by the following inequality:
\begin{align}
    \label{inequality:TT-convergence-PN-S1-T2-T1-S1-T2}
    & 2\alpha  C_{k,-} \Big\langle |  W_{t}^{(N)}-P^*_{N}(\overline{W}^{(N)}_{t_{N-1}})|, |\nabla f(\overline W_{t})|\Big \rangle
    \\
    \le&\ \frac{\alpha C_{k,-}^2\gamma^{N}(\mu+L)}{C_{k,+}} \|  W_{t}^{(N)}-P^*_{N}(\overline{W}^{(N)}_{t_{N-1}})\|^2 
    + \frac{\alpha C_{k,+}}{\gamma^{N}(\mu+L)} \|\nabla f(\overline W_{t})\|^2.
    \nonumber
\end{align}
Therefore, \eqref{inequality:TT-convergence-PN-S1-T2-T1-S1} becomes:
\begin{align}
\label{inequality:TT-convergence-PN-S1-T2-final}
    &\ -2\alpha \langle     W_{t}^{(N)}-P^*_{N}(\overline{W}^{(N)}_{t_{N-1}}), \nabla f(\overline W_{t}) \odot F\Bigl( \frac{W_{t}^{(N)}}{\kappa} \Bigr) + |\nabla f(\overline W_{t}) |\odot G\Bigl( \frac{W_{t}^{(N)}}{\kappa} \Bigr)\rangle
    \\
    \le&\ - \gamma^{N} \left(\frac{2\alpha \mu L C_{k, +}}{\mu+L}-\frac{\alpha C_{k,-}^2(\mu+L)}{C_{k,+}}\right)\|  W_{t}^{(N)}-P^*_{N}(\overline{W}^{(N)}_{t_{N-1}})\|^2 - \frac{\alpha C_{k, +}}{\gamma^{N}(\mu+L)}\|\nabla f(\overline W_{t})\|^2.
    \nonumber
\end{align}

\textbf{Upper bound of the second term $(T2)$.} Leveraging the Young's inequality, we have:
\begin{align}
    &  2\alpha \mbE_{\xi_{t}}\Big[\Big\langle     W_{t}^{(N)}-P^*_{N}(\overline{W}^{(N)}_{t_{N-1}}), (|\nabla f(\overline W_{t})|-|\nabla f(\overline W_{t}; \xi_{t})|)\odot G \Bigl(  \frac{W_{t}^{(N)}}{\kappa} \Bigr)\Big\rangle \Big]
    \\
    \le&\ \frac{\alpha \mu L C_{k, +}\gamma^{N}}{(\mu+L)} \|  W_{t}^{(N)}-P^*_{N}(\overline{W}^{(N)}_{t_{N-1}})\|^2 
    \bkeq
    + \frac{\alpha(\mu+L)}{\mu L C_{k, +}\gamma^{N}} \mbE_{\xi_{t}}\Big[\Big\|(|\nabla f(\overline W_{t})|-|\nabla f(\overline W_{t}; \xi_{t})|)\odot G \Bigl(\frac{1}{\kappa}W_{t}^{(N)}\Bigr)\Big\|^2 \Big]
    \nonumber\\
    \lemark{a}&\ \frac{\alpha \mu L C_{k, +}\gamma^{N}}{(\mu+L)} \|  W_{t}^{(N)}-P^*_{N}(\overline{W}^{(N)}_{t_{N-1}})\|^2 
    \bkeq
    + \frac{\alpha(\mu+L)}{\mu L C_{k, +}\gamma^{N}} \mbE_{\xi_{t}}\Big[\Big\|(|\nabla f(\overline W_{t})-\nabla f(\overline W_{t}; \xi_{t})|)\odot G\Bigl( \frac{W_{t}^{(N)}}{\kappa} \Bigr)\Big\|^2\Big]
    \nonumber\\
    \eqmark{b}&\ \frac{\alpha \mu L C_{k, +}\gamma^{N}}{(\mu+L)} \|  W_{t}^{(N)}-P^*_{N}(\overline{W}^{(N)}_{t_{N-1}})\|^2 
    + \frac{\alpha(\mu+L)\sigma^2}{\mu L C_{k, +}\gamma^{N}}\Big\| G\Bigl( \frac{W_{t}^{(N)}}{\kappa} \Bigr)\Big\|_\infty^2
    \nonumber
\end{align}
where $(a)$ applies $||x|-|y|| \le |x-y|$ for any $x, y\in\reals$, $(b)$ uses the bounded variance assumption (see Assumption \ref{assumption:noise}).
Combining the upper bound of $(T1)$ and $(T2)$, we bound \eqref{inequality:TT-convergence-N-S1-T2} by:
\begin{align}
    \label{inequality:TT-convergence-P-S1-T1-T2}
    &\ 2\mbE_{\xi_{t}}[\langle   W_{t}^{(N)}-P^*_{N}(\overline{W}^{(N)}_{t_{N-1}}),  W_{t+1}^{(N)}-W_{t}^{(N)}\rangle]
    \\
    \le&\ - \gamma^{N}\left(\frac{\alpha \mu L C_{k, +}}{\mu+L}-\frac{\alpha C_{k,-}^2(\mu+L)}{C_{k,+}}\right)\|  W_{t}^{(N)}-P^*_{N}(\overline{W}^{(N)}_{t_{N-1}})\|^2 
    -  \frac{\kappa^2\alpha C_{k, +}}{\gamma^{N}(\mu+L)}\|\nabla f(\overline W_{t})\|^2 \bkeq
    + \frac{\alpha(\mu+L)\sigma^2}{\mu L C_{k, +}\gamma^{N}}\lnorm G\Bigl( \frac{W_{t}^{(N)}}{\kappa} \Bigr)\rnorm_\infty^2 
    \nonumber\\
    \le&\ - \frac{\gamma^{N}\alpha \mu L C_{k, +}}{2(\mu+L)}\|  W_{t}^{(N)}-P^*_{N}(\overline{W}^{(N)}_{t_{N-1}})\|^2 
    - \frac{\alpha C_{k, +}}{\gamma^{N}(\mu+L)}\|\nabla f(\overline W_{t})\|^2 
    + \frac{\alpha(\mu+L)\sigma^2}{\mu L C_{k, +}\gamma^{N}}\lnorm G(\frac{W_{t}^{(N)}}{\kappa})\rnorm_\infty^2 
    \nonumber
\end{align}
where the last inequality holds for  
$C_{k,-} \ll C_{k,+}$ , which is  sufficiently close to 0, and the following inequality holds:
\begin{align}
    (\mu+L)\frac{C_{k,-}^2}{C_{k,+}^2}
    \le \frac{\mu L}{2(\mu+L)}.
\end{align}
Furthermore, the last term in the \ac{RHS} of \eqref{inequality:TT-convergence-PN-S1} can be bounded by the Lipschitz continuity of analog update (see Lemma \ref{lemma:lip-analog-update}) and the bounded variance assumption (see Assumption \ref{assumption:noise}) as:
\begin{align} 
    \label{inequality:TT-convergence-PN-S1-T3}
   &  \mbE_{\xi_{t}, \zeta_{t}}\Big[\Big\|W_{t+1}^{(N)}-  W_{t}^{(N)}\|^2]
    \\ =&\ \mbE_{\xi_{t}, \zeta_{t}}[\|\alpha \nabla f(\overline W_{t}; \xi_{t})\odot F\Bigl( \frac{W_{t}^{(N)}}{\kappa} \Bigr) - \alpha  |\nabla f(\overline W_{t}; \xi_{t})|\odot G\Bigl(\frac{W_{t}^{(N)}}{\kappa}\Bigr)+ \zeta_{t}\Big\|^2 \Big] \nonumber
      \\
    \leq &\ 2\alpha^2 F_{\max}^2 \mbE_{\xi_{t}}[\|\nabla f(\overline W_{t};\xi_{t})\|^2 ]+2 \alpha \Theta(\Delta w_{\min})
    \nonumber\\
    \leq&\ 2\alpha^2  F_{\max}^2  \|\nabla f(\overline W_{t})\|^2 + 2\alpha^2  F_{\max}^2 \sigma^2 +2\alpha \Theta(\Delta w_{\min})
    \nonumber\\
    \le&\ \frac{\alpha C_{k, +}}{2\gamma^{N}(\mu+L)}\|\nabla f(\overline W_{t})\|^2 + 2  \alpha^2 F_{\max}^2 \sigma^2 +2\alpha \Theta(\Delta w_{\min})
    \nonumber
\end{align}
where the first inequality holds by $\|U+V\|^2 \leq 2\|U\|^2+ 2\|V\|^2$ and $\mbE_{\zeta_{t}}[\zeta_{t}^2] = \alpha \Theta(\Delta w_{\min})$, the last inequality holds if $\alpha \leq \frac{C_{k, +}}{4 \gamma^N (\mu+L) F_{\max}^2}$ .
Plugging inequality \eqref{inequality:TT-convergence-P-S1-T1-T2} and \eqref{inequality:TT-convergence-PN-S1-T3} above into \eqref{inequality:TT-convergence-PN-S1} yields:
\begin{align}
\label{inequality:70}
    &\ \mbE_{\xi_{t},\zeta_{t}}[\|  W_{t+1}^{(N)}-P^*_{N}(\overline{W}^{(N)}_{t_{N-1}})\|^2] \\
    \le&\ 
    \left( 1 - \frac{\alpha \mu L C_{k, +}\gamma^{N}}{2(\mu+L)}\right) \|  W_{t}^{(N)}-P^*_{N}(\overline{W}^{(N)}_{t_{N-1}})\|^2 -\frac{\alpha C_{k, +}}{2\gamma^{N}(\mu+L)}\|\nabla f(\overline W_{t})\|^2 \nonumber\\ &
    + \frac{\alpha(\mu+L)\sigma^2}{\mu L C_{k, +}\gamma^{N}}\Big\|G\Bigl( \frac{W_{t}^{(N)}}{\kappa} \Bigr)\Big\|_\infty^2
    + 2\alpha^2  F_{\max}^2 \sigma^2+ 2\alpha \Theta(\Delta w_{\min}).
    \nonumber
\end{align}
From the definition of $C_{k,+}$, when the saturation degree of $W_{t}^{(N)}$ is properly limited, we have $C_{k,+} \ge \frac{1}{2}$ since $\alpha \gamma^n$ is sufficiently small. Therefore, we have:
\begin{align}
\saveeq{inequality-saved:TT-WN-descent}{
    &\ \mbE_{\xi_{t},\zeta_{t}}[\|  W_{t+1}^{(N)}-P^*_{N}(\overline{W}^{(N)}_{t_{N-1}})\|^2] \\
    \le&\ 
    \left(1-\frac{\mu L \alpha\gamma^{N}}{4(\mu+L)}\right) \|  W_{t}^{(N)}-P^*_{N}(\overline{W}^{(N)}_{t_{N-1}})\|^2 
    -  \frac{\alpha }{4\gamma^{N}(\mu+L)}\|\nabla f(\overline W_{t})\|^2 \nonumber \\ & +\frac{2\alpha(\mu+L)\sigma^2}{\mu L\gamma^{N}}\Big\| G\Bigl( \frac{W_{t}^{(N)}}{\kappa} \Bigr)\Big\|_\infty^2
    + 2\alpha^2 F_{\max}^2 \sigma^2+2\alpha \Theta( \Delta w_{\min})    \nonumber}.
\end{align}
Under Assumption \ref{assumption:pulse-response-symmetry}, which indicates $G(0) = 0$ and the Lipschitz continuity of the response functions, we can directly bound the term $\Big\|G \Bigl(\frac{W_{t}^{(N)}}{\kappa} \Bigr)\Big\|^2_\infty$ in \eqref{inequality:70} as:
    \begin{align}
    \label{inequality:L_G}
    & \quad\Big\|G \Bigl(\frac{W_{t}^{(N)}}{\kappa} \Bigr)\Big\|^2_\infty
        \le \Big\|G \Bigl(\frac{W_{t}^{(N)}}{\kappa} \Bigr)\Big\|^2
        =\Big\|G \Bigl(\frac{W_{t}^{(N)}}{\kappa} \Bigr)-G(0)\Big\|^2 
   \le \frac{L_G^2}{\kappa^2} \| W_{t}^{(N)}\|^2_{\infty}
    \end{align}
     where $L_G\ge 0$ is a Lipschitz constant.  
Perform \( T_N \) iterations using the recursive process in \eqref{inequality:70}, and denote the expectation over the noise sequence $\mathbb{E}_{\xi_{t:t+T_N-1},\zeta_{t:t+T_N-1}} $ as \( \mathbb{E}_{\boldsymbol{\xi_N}, \boldsymbol{\zeta_N}} \), we obtain the following upper bound:
\begin{align}
\label{eq:85}
& \mathbb{E}_{\boldsymbol{\xi_N}, \boldsymbol{\zeta_N}}\left[\| W_{t+T_N-1}^{(N)} - P^*_{N}(\overline{W}^{(N)}_{t_{N-1}}) \|^2\right]\\
\le & \left(1- \frac{\alpha\mu L \gamma^N }{4(\mu+L)}\right)^{T_{N}} \|  W_{t}^{(N)}-P^*_{N}(\overline{W}^{(N)}_{t_{N-1}})\|^2 
   +\sum_{i=0}^{T_{N}-1} \left(1- \frac{\alpha\mu L \gamma^N}{4(\mu+L)}\right)^i   \bigg(  2\alpha^2 F_{\max}^2 \sigma^2
  \bkeq +\frac{2\alpha(\mu+L)\sigma^2 L_G^2 \|W_{t}^{(N)}\|_\infty^2}{\mu L \gamma^N \kappa^2}+2\alpha\Theta(\Delta w_{\min})\bigg)\nonumber 
    \\ \le\ & 
    \left(1 - \frac{\alpha\mu L \gamma^N}{4(\mu + L)} \right)^{T_N} \| W_{t}^{(N)}-P^*_{N}(\overline{W}^{(N)}_{t_{N-1}})\|^2 + 
        \frac{8(\mu + L)\alpha}{\gamma^N \mu L} F_{\max}^2 \sigma^2 
      + \frac{8(\mu + L)^2 \sigma^2 L_G^2 W_{\max}^2}{\gamma^{2N} \mu^2 L^2 \kappa^2}  \nonumber \\
    & +\frac{8 (\mu + L)\kappa}{\gamma^N \mu L}  \Theta(\Delta c_{\min})
   \nonumber \\ \le &
 \left(1 - \frac{\alpha\mu L \gamma^N}{4(\mu + L)} \right)^{T_N} \| W_{t}^{(N)}-P^*_{N}(\overline{W}^{(N)}_{t_{N-1}})\|^2 + 
        \frac{8(\mu + L)\alpha}{\gamma^N \mu L} F_{\max}^2 \sigma^2 +\gamma^{-\frac{4N}{3}}  \Theta ((\sigma \Delta w_{\min})^\frac{2}{3}).
    \nonumber
\end{align}
The second inequality holds for $\sum_{i=0}^{T_{N}-1} \left(1- \frac{\alpha\mu L \gamma^N}{8(\mu+L)}\right)^i \leq \frac {8(\mu+L)}{\alpha\mu L \gamma^N}$, and we define $W_{\max} \in [\|W_{t}^{(N)}\|_\infty, \tau_{\max})$ for all $t$.
The last inequality holds by choosing the mapping constant as:
\begin{align}
\label{inequality:72}
\kappa = (\sigma L_G W_{\max})^{\frac{2}{3}}(\gamma^{N} \Delta c_{\min})^{-
\frac{1}{3}}=(\sigma L_G W_{\max})^{\frac{2}{3}}(\frac{\gamma^{N} \Delta w_{\min}}{\kappa})^{-
\frac{1}{3}}.
\end{align}
The second equality holds by substituting \eqref{eq:mapping}. Rearranging \eqref{inequality:72}, we get:
\begin{align}
\kappa=(\sigma L_G W_{\max})^{\frac{1}{2}}(\gamma^{N} \Delta w_{\min})^{-
\frac{1}{4}}
\end{align}
When $k \neq 0$, \eqref{eq:85} can be written as the general case:
\begin{align}
& \mathbb{E}_{\boldsymbol{\xi_N}, \boldsymbol{\zeta_N}}\left[\| W_{t+(k+1)T_N-1}^{(N)} - P^*_{N}(\overline{W}^{(N)}_{t_{N-1}+k})\|^2\right] \\ \le &
 \left(1 - \frac{\alpha\mu L \gamma^N}{4(\mu + L)} \right)^{T_N} \| W_{t+kT_N}^{(N)}-P^*_{N}(\overline{W}^{(N)}_{t_{N-1}+k})\|^2 + 
        \frac{8(\mu + L)\alpha}{\gamma^N \mu L} F_{\max}^2 \sigma^2 +\gamma^{-\frac{4N}{3}}  \Theta ((\sigma \Delta w_{\min})^\frac{2}{3})\nonumber
\end{align}
which completes the proof.
\end{proof}
\subsection{Proof of Lemma \ref{lemma:TT-n-descent}: Descent of lower level sequence \texorpdfstring{$W_{t_n}^{(n)}$}{Wtnn}}
\label{section:proof-lemma:TT-n-descent}

\LemmaTTWNDescentScvxCRL*
\vspace{-1.5em}
\begin{proof}[Proof of Lemma \ref{lemma:TT-n-descent}]
The proof begins from manipulating the norm $\|W_{t_{n}+1}^{(n)}-P^*_{n}(\overline{W}^{(n)}_{t_{n-1}})\|^2$:
\begin{align}
    \label{inequality:TT-convergence-scvx-A1}
    &\mathbb{E}_{\zeta_{t_n}}[\|W_{t_{n}+1}^{(n)}-P^*_{n}(\overline{W}^{(n)}_{t_{n-1}})\|^2] \\ =& \|W_ {t_{n}}^{(n)}-P^*_{n}(\overline{W}^{(n)}_{t_{n-1}})\|^2 + 2\mathbb{E}_{\zeta_{t_n}}[\langle W_{t_{n}}^n-P^*_{n}(\overline{W}^{(n)}_{t_{n-1}}), W_{t_{n}+1}^{(n)}- W_ {t_{n}}^{(n)}\rangle]  + \mathbb{E}_{\zeta_{t_n}}[\|W_{t_{n}+1}^{(n)}- W_ {t_{n}}^{(n)}\|^2]. \nonumber
\end{align}
 Substituting update dynamic \eqref{eq:TT-update-n-compact-II}, we bound the second term of \eqref{inequality:TT-convergence-scvx-A1} as following: 
\begin{align}
    \label{inequality:TT-convergence-scvx-A1-T2}
    &\ 2\mathbb{E}_{\zeta_{t_n}}[\langle W_ {t_{n}}^{(n)}-  P^*_{n}(\overline{W}^{(n)}_{t_{n-1}}),  W_{t_{n}+1}^{(n)}-W_ {t_{n}}^{(n)}\rangle]
    \\
    =&\  2\mathbb{E}_{\zeta_{t_n}}\Big[\Big\langle W_ {t_{n}}^{(n)}-  P^*_{n}(\overline{W}^{(n)}_{t_{n-1}}), \beta \Bigl(W_{t_{n+1}+T-1}^{(n+1)}\odot F \Bigl(\frac{W_ {t_{n}}^{(n)}}{\kappa}\Bigr) + |W_{t_{n+1}+T-1}^{(n+1)}|\odot G \Bigl(\frac{W_ {t_{n}}^{(n)}}{\kappa}\Bigr)\Bigr ) + \zeta_{t_n} \Big\rangle \Big]
    \nonumber\\
    \leq &\ 2\beta \Big\langle W_ {t_{n}}^{(n)}-  P^*_{n}(\overline{W}^{(n)}_{t_{n-1}}),  P^*_{n+1}(\overline{W}^{(n+1)}_{t_{n}})\odot F\Bigl(\frac{W_ {t_{n}}^{(n)}}{\kappa} \Bigr) - |P^*_{n+1}(\overline{W}^{(n+1)}_{t_{n}})|\odot G \Bigl(\frac{W_ {t_{n}}^{(n)}}{\kappa} \Bigr)\Big\rangle 
    \bkeq
    + 2\beta\mathbb{E}_{{\xi_{t:t+\mathcal{T}_n-1}}}\Big[\Big \langle W_ {t_{n}}^{(n)}-  P^*_{n}(\overline{W}^{(n)}_{t_{n-1}}), W_{t_{n+1}+T-1}^{(n+1)}\odot F\Bigl(\frac{W_ {t_{n}}^{(n)}}{\kappa} \Bigr) - |W_{t_{n+1}+T-1}^{(n+1)}|\odot G\Bigl(\frac{W_ {t_{n}}^{(n)}}{\kappa} \Bigr)    \nonumber\\
& -\Bigl(P^*_{n+1}(\overline{W}^{(n+1)}_{t_{n}})\odot F\Bigl(\frac{W_ {t_{n}}^{(n)}}{\kappa} \Bigr) - |P^*_{n+1}(\overline{W}^{(n+1)}_{t_{n}})|\odot G\Bigl(\frac{W_ {t_{n}}^{(n)}}{\kappa} \Bigr)\Big\rangle \Big].
    \nonumber
\end{align}
The last inequality holds by Young's inequality. The first term in the \ac{RHS} of \eqref{inequality:TT-convergence-scvx-A1-T2} can be bounded by:
\begin{align}
    \label{inequality:TT-convergence-scvx-A1-T2-T1}
    &\  2\beta \Big\langle W_ {t_{n}}^{(n)}-  P^*_{n}(\overline{W}^{(n)}_{t_{n-1}}),  P^*_{n+1}(\overline{W}^{(n+1)}_{t_{n}})\odot F\Bigl(\frac{W_ {t_{n}}^{(n)}}{\kappa} \Bigr) - |P^*_{n+1}(\overline{W}^{(n+1)}_{t_{n}})|\odot G\Bigl(\frac{W_ {t_{n}}^{(n)}}{\kappa} \Bigr) \Big\rangle  \nonumber  \\
     =&\ 2\beta \Bigg\langle (W_ {t_{n}}^{(n)}-P^*_{n}(\overline{W}^{(n)}_{t_{n-1}}))\odot \sqrt{F\Bigl(\frac{W_ {t_{n}}^{(n)}}{\kappa} \Bigr)}, \frac{P^*_{n+1}(\overline{W}^{(n+1)}_{t_{n}})\odot F\Bigl(\frac{W_ {t_{n}}^{(n)}}{\kappa} \Bigr) - |P^*_{n+1}(\overline{W}^{(n+1)}_{t_{n}})|\odot G\Bigl(\frac{W_ {t_{n}}^{(n)}}{\kappa} \Bigr)}{\sqrt{F\Bigl(\frac{W_ {t_{n}}^{(n)}}{\kappa} \Bigr)}}\Bigg\rangle
    \nonumber\\
    \eqmark{a}&\ -\frac{2\beta}{\gamma} \Bigg\langle (W_ {t_{n}}^{(n)}-P^*_{n}(\overline{W}^{(n)}_{t_{n-1}}))\odot \sqrt{F\Bigl(\frac{W_ {t_{n}}^{(n)}}{\kappa} \Bigr)}, (W_ {t_{n}}^{(n)}-P^*_{n}(\overline{W}^{(n)}_{t_{n-1}}))\odot \sqrt{F\Bigl(\frac{W_ {t_{n}}^{(n)}}{\kappa} \Bigr)} \Bigg\rangle
	\bkeq
	+ \frac{2\beta}{\gamma}\Bigg\langle (W_ {t_{n}}^{(n)}-P^*_{n}(\overline{W}^{(n)}_{t_{n-1}}))\odot \sqrt{F\Bigl(\frac{W_ {t_{n}}^{(n)}}{\kappa} \Bigr)}, |W_ {t_{n}}^{(n)}-P^*_{n}(\overline{W}^{(n)}_{t_{n-1}})|\odot \frac{G\Bigl(\frac{W_ {t_{n}}^{(n)}}{\kappa} \Bigr)}{\sqrt{F\Bigl(\frac{W_ {t_{n}}^{(n)}}{\kappa} \Bigr)}}\Bigg\rangle
    \nonumber\\
    \eqmark{b}&\ - \frac{\beta}{\gamma} \Bigg\|(W_ {t_{n}}^{(n)}-P^*_{n}(\overline{W}^{(n)}_{t_{n-1}}))  \odot \sqrt{F\Bigl(\frac{W_ {t_{n}}^{(n)}}{\kappa} \Bigr)}\Bigg\|^2 
	+ \frac{\beta}{\gamma} \Bigg\||W_ {t_{n}}^{(n)}-P^*_{n}(\overline{W}^{(n)}_{t_{n-1}})| \odot \frac{G\Bigl(\frac{W_ {t_{n}}^{(n)}}{\kappa} \Bigr)}{\sqrt{F\Bigl(\frac{W_ {t_{n}}^{(n)}}{\kappa} \Bigr)}}\Bigg\|^2 
    \bkeq
	- \frac{\beta}{\gamma} \Bigg\|(W_ {t_{n}}^{(n)}-P^*_{n}(\overline{W}^{(n)}_{t_{n-1}})) \odot \sqrt{F\Bigl(\frac{W_ {t_{n}}^{(n)}}{\kappa} \Bigr)} + |W_ {t_{n}}^{(n)}-P^*_{n}(\overline{W}^{(n)}_{t_{n-1}})|\odot \frac{G\Bigl(\frac{W_ {t_{n}}^{(n)}}{\kappa} \Bigr)}{\sqrt{F\Bigl(\frac{W_ {t_{n}}^{(n)}}{\kappa} \Bigr)}} \Bigg\|^2 
    \nonumber\\
    \lemark{c}&\ - \frac{\beta}{\gamma F_{\max}} \|W_ {t_{n}}^{(n)}-P^*_{n}(\overline{W}^{(n)}_{t_{n-1}})\|^2_{H(W_ {t_{n}}^{(n)})}  \nonumber\\
	&- \frac{\beta}{\gamma}\Bigg\| (W_ {t_{n}}^{(n)}-P^*_{n}(\overline{W}^{(n)}_{t_{n-1}})) \odot \sqrt{F\Bigl(\frac{W_ {t_{n}}^{(n)}}{\kappa} \Bigr)} + |W_ {t_{n}}^{(n)}-P^*_{n}(\overline{W}^{(n)}_{t_{n-1}})|\odot \frac{G\Bigl(\frac{W_ {t_{n}}^{(n)}}{\kappa} \Bigr)}{\sqrt{F\Bigl(\frac{W_ {t_{n}}^{(n)}}{\kappa} \Bigr)}} \Bigg\|^2 
    \nonumber\\
   \lemark{d} &\  - \frac{\beta}{ F_{\max}\gamma} \|W_ {t_{n}}^{(n)}-  P^*_{n}(\overline{W}^{(n)}_{t_{n-1}})\|^2_{H(W_ {t_{n}}^{(n)})}
	  \nonumber\\ & - \frac{\beta\gamma}{F_{\max}} \lnorm P^*_{n+1}(\overline{W}^{(n+1)}_{t_{n}})\odot F\Bigl(\frac{W_ {t_{n}}^{(n)}}{\kappa} \Bigr) - |P^*_{n+1}(\overline{W}^{(n+1)}_{t_{n}})|\odot G\Bigl(\frac{W_ {t_{n}}^{(n)}}{\kappa} \Bigr) \rnorm^2 
\end{align}
where $(a)$ holds by $P^*_{n+1}(\overline{W}^{(n+1)}_{t_{n}}) =\frac{W_ {t_{n}}^{(n)}-P^*_{n}(\overline{W}^{(n)}_{t_{n-1}})}{\gamma}$, $(b)$ leverages the equality $2\langle U, V\rangle = \|U\|^2+\|V\|^2-\|U-V\|^2$ for any $U, V\in\reals^D$, 
$(c)$ is achieved by the saturation vector \( H(W_ {t_{n}}^{(n)}) \in \mathbb{R}^{D} \) defined in \eqref{eq:saturation_vector}.
Thus:
\begin{align}
    & - \frac{\beta}{\gamma} \|(W_ {t_{n}}^{(n)}-P^*_{n}(\overline{W}^{(n)}_{t_{n-1}}))  \odot \sqrt{F\Bigl(\frac{W_ {t_{n}}^{(n)}}{\kappa} \Bigr)}\|^2 
	+ \frac{\beta}{\gamma} \Big\| |W_ {t_{n}}^{(n)}-P^*_{n}(\overline{W}^{(n)}_{t_{n-1}})| \odot \frac{G\Bigl(\frac{W_ {t_{n}}^{(n)}}{\kappa} \Bigr)}{\sqrt{F\Bigl(\frac{W_ {t_{n}}^{(n)}}{\kappa} \Bigr)}}\Big\|^2 \nonumber
\\ & 
    = -\frac{\beta}{\gamma} \sum_{d \in [D]} \left( [(W_ {t_{n}}^{(n)}-P^*_{n}(\overline{W}^{(n)}_{t_{n-1}}))]_d^2 \left( [F\Bigl(\frac{W_ {t_{n}}^{(n)}}{\kappa} \Bigr)]_d - \frac{[G\Bigl(\frac{W_ {t_{n}}^{(n)}}{\kappa} \Bigr)]_d^2}{[F\Bigl(\frac{W_ {t_{n}}^{(n)}}{\kappa} \Bigr)]_d} \right) \right)
\nonumber\\ & 
    = -\frac{\beta}{\gamma} \sum_{d \in [D]} \left( [(W_ {t_{n}}^{(n)}-P^*_{n}(\overline{W}^{(n)}_{t_{n-1}}))]_d^2 \left( \frac{[F\Bigl(\frac{W_ {t_{n}}^{(n)}}{\kappa} \Bigr)]_d^2 - [G\Bigl(\frac{W_ {t_{n}}^{(n)}}{\kappa} \Bigr)]_d^2}{[F\Bigl(\frac{W_ {t_{n}}^{(n)}}{\kappa} \Bigr)]_d} \right) \right)
\nonumber\\ & 
    \leq -\frac{\beta}{\gamma F_{\max}} \sum_{d \in [D]} \left( [(W_ {t_{n}}^{(n)}-P^*_{n}(\overline{W}^{(n)}_{t_{n-1}}))]_d^2 \left( [F\Bigl(\frac{W_ {t_{n}}^{(n)}}{\kappa} \Bigr)]_d^2 - [G\Bigl(\frac{W_ {t_{n}}^{(n)}}{\kappa} \Bigr)]_d^2 \right) \right) \nonumber\\ & 
    = -\frac{\beta}{\gamma F_{\max}} \|(W_ {t_{n}}^{(n)}-P^*_{n}(\overline{W}^{(n)}_{t_{n-1}})) \|^2_{H(W_ {t_{n}}^{(n)})}.
\end{align}
$(d)$ comes from :
\begin{align}
    &\ - \frac{\beta}{\gamma} \Big\|(W_ {t_{n}}^{(n)}-P^*_{n}(\overline{W}^{(n)}_{t_{n-1}})) \odot \sqrt{F\Bigl(\frac{W_ {t_{n}}^{(n)}}{\kappa} \Bigr)} + |W_ {t_{n}}^{(n)}-P^*_{n}(\overline{W}^{(n)}_{t_{n-1}})|\odot \frac{G\Bigl(\frac{W_ {t_{n}}^{(n)}}{\kappa} \Bigr)}{\sqrt{F\Bigl(\frac{W_ {t_{n}}^{(n)}}{\kappa} \Bigr)}} \Big\|^2  \nonumber
    \\
    =&\ - \beta\gamma \Big\| \Bigl(F\Bigl(\frac{W_ {t_{n}}^{(n)}}{\kappa} \Bigr)\Bigr)^{(-\frac{1}{2})}\odot\Bigl(\frac{W_ {t_{n}}^{(n)}-P^*_{n}(\overline{W}^{(n)}_{t_{n-1}})}{\gamma} \odot {F\Bigl(\frac{W_ {t_{n}}^{(n)}}{\kappa} \Bigr)}  \nonumber\\
  & \quad + \big|\frac{W_ {t_{n}}^{(n)}-P^*_{n}(\overline{W}^{(n)}_{t_{n-1}})}{\gamma}\big|\odot G\Bigl(\frac{W_ {t_{n}}^{(n)}}{\kappa} \Bigr)\Bigr) \Big\|^2 
    \nonumber\\
    \le&\ - \frac{\beta\gamma}{F_{\max}} \lnorm P^*_{n+1}(\overline W_{t_{n}}^{(n+1)}) \odot F\Bigl(\frac{W_ {t_{n}}^{(n)}}{\kappa} \Bigr) - |P^*_{n+1}(\overline W_{t_{n}}^{(n+1)})|\odot G\Bigl(\frac{W_ {t_{n}}^{(n)}}{\kappa} \Bigr) \rnorm^2.
\end{align}
The second term in the \ac{RHS} of \eqref{inequality:TT-convergence-scvx-A1-T2} is bounded by  Lemma \ref{lemma:lip-analog-update} as:
\begin{align} \label{inequality:TT-convergence-scvx-A1-T2-T2}
    &\ 2\mathbb{E}_{\zeta_{t_n}}\Big[\Big\langle W_ {t_{n}}^{(n)}-  P^*_{n}(\overline{W}^{(n)}_{t_{n-1}}), W_{t_{n+1}+T-1}^{(n+1)}\odot F\Bigl(\frac{W_ {t_{n}}^{(n)}}{\kappa} \Bigr) - |W_{t_{n+1}+T-1}^{(n+1)}|\odot G\Bigl(\frac{W_ {t_{n}}^{(n)}}{\kappa} \Bigr)      \nonumber \\
& \quad -\Bigl(P^*_{n+1}(\overline{W}^{(n+1)}_{t_{n}})\odot F\Bigl(\frac{W_ {t_{n}}^{(n)}}{\kappa} \Bigr) - |P^*_{n+1}(\overline{W}^{(n+1)}_{t_{n}})|\odot G\Bigl(\frac{W_ {t_{n}}^{(n)}}{\kappa} \Bigr)\Bigr) \Big\rangle \Big]
   \\
    \le&\ 
    \frac{\beta}{2 F_{\max} \gamma}\|W_ {t_{n}}^{(n)}-  P^*_{n}(\overline{W}^{(n)}_{t_{n-1}})\|^2_{H(W_ {t_{n}}^{(n)})}
     + 2\beta F_{\max}\gamma \Big\| W_{t_{n+1}+T-1}^{(n+1)}\odot F\Bigl(\frac{W_ {t_{n}}^{(n)}}{\kappa} \Bigr)-|W_{t_{n+1}+T-1}^{(n+1)}| \nonumber \\ & \odot G\Bigl(\frac{W_ {t_{n}}^{(n)}}{\kappa} \Bigr) 
     -\Bigl(P^*_{n+1}(\overline{W}^{(n+1)}_{t_{n}})\odot F\Bigl(\frac{W_ {t_{n}}^{(n)}}{\kappa} \Bigr) - |P^*_{n+1}(\overline{W}^{(n+1)}_{t_{n}})|\odot G\Bigl(\frac{W_ {t_{n}}^{(n)}}{\kappa} \Bigr)\Bigr)\Big\|^2_{H(W_ {t_{n}}^{(n)})^\dag}
    \nonumber    \\
    \le&\ 
    \frac{\beta}{2 F_{\max} \gamma}\|W_ {t_{n}}^{(n)}-  P^*_{n}(\overline{W}^{(n)}_{t_{n-1}})\|^2_{H(W_ {t_{n}}^{(n)})}
    + 2\beta F_{\max}^3\gamma \| W_{t_{n+1}+T-1}^{(n+1)} - P^*( \overline W_{t_n}^{(n+1)})\|^2_{H(W_ {t_{n}}^{(n)})^\dag}.
    \nonumber
\end{align}
    Plugging inequality \eqref{inequality:TT-convergence-scvx-A1-T2-T1}  and \eqref{inequality:TT-convergence-scvx-A1-T2-T2} above into \eqref{inequality:TT-convergence-scvx-A1-T2} yields:
\begin{align}
  \label{inequality:TT-convergence-scvx-A1-T2-final}
   &\quad 2 \mathbb{E}_{\zeta_{t_n}}[\langle W_ {t_{n}}^{(n)}-P^*_{n}(\overline{W}^{(n)}_{t_{n-1}}), W_{t_{n}+1}^{(n)}- W_ {t_{n}}^{(n)}\rangle] \\& \nonumber \leq  2\beta F_{\max}^3\gamma \| W_{t_{n+1}+T-1}^{(n+1)} -  P^*( \overline W_{t_n}^{(n+1)})\|^2_{H(W_ {t_{n}}^{(n)})^\dag} -\frac{\beta\gamma}{F_{\max}}\Big\| P^*_{n+1}(\overline{W}^{(n+1)}_{t_{n}})\odot F\Bigl(\frac{W_ {t_{n}}^{(n)}}{\kappa} \Bigr) \bkeq \quad - |P^*_{n+1}(\overline{W}^{(n+1)}_{t_{n}})|\odot G\Bigl(\frac{W_ {t_{n}}^{(n)}}{\kappa} \Bigr)\Big\|^2 -\frac{\beta }{2\gamma  F_{\max}} \|W_ {t_{n}}^{(n)}-  P^*_{n}(\overline{W}^{(n)}_{t_{n-1}})\|^2_{H(W_ {t_{n}}^{(n)})} 
 .\nonumber
\end{align}
We assume there exists a non-zero constant \( H_{\min}\) such that {\(\min \{ H(W_ {t_{n}}^{(n)}) \} \geq H_{\min}\)} for all \(t_{n}\) and $n$. Under this condition, we have the following inequalities:
\begin{align}
 \label{inequality:TT-convergence-scvx-H1}
    -\frac{\beta}{2 F_{\max} \gamma}\|W_ {t_{n}}^{(n)}-  P^*_{n}(\overline{W}^{(n)}_{t_{n-1}})\|^2_{H(W_ {t_{n}}^{(n)})} &\leq -\frac{\beta H_{\min}}{2 F_{\max} \gamma}\|W_ {t_{n}}^{(n)}-  P^*_{n}(\overline{W}^{(n)}_{t_{n-1}})\|^2,  \\
       2\beta F_{\max}^3\gamma\lnorm W_{t_{n+1}+T-1}^{(n+1)} -  P^*( \overline W_{t_n}^{(n+1)})\rnorm^2_{H(W_ {t_{n}}^{(n)})^\dag} &\leq  \frac{2\beta F_{\max}^3\gamma}{H_{\min}}\lnorm W_{t_{n+1}+T-1}^{(n+1)} -  P^*( \overline W_{t_n}^{(n+1)})\rnorm^2. \nonumber
    \end{align}
    Plugging inequality \eqref{inequality:TT-convergence-scvx-H1} above into \eqref{inequality:TT-convergence-scvx-A1-T2-final} yields:
\begin{align}
  \label{inequality:TT-convergence-scvx-H1-final}
   &\quad 2\mathbb{E}_{\zeta_{t_n}}[\langle W_ {t_{n}}^{(n)}-P^*_{n}(\overline{W}^{(n)}_{t_{n-1}}), W_{t_{n}+1}^{(n)}- W_ {t_{n}}^{(n)}\rangle] \\& \nonumber \leq  -\frac{\beta H_{\min}}{2\gamma  F_{\max}} \|W_ {t_{n}}^{(n)}-  P^*_{n}(\overline{W}^{(n)}_{t_{n-1}})\|^2
    +\frac{2\beta F_{\max}^3\gamma}{H_{\min}} \| W_{t_{n+1}+T-1}^{(n+1)} - P^*( \overline W_{t_n}^{(n+1)})\|^2\\& \nonumber \quad  -\frac{\beta\gamma}{F_{\max}} \Big\| P^*_{n+1}(\overline{W}^{(n+1)}_{t_{n}})\odot F\Bigl(\frac{W_ {t_{n}}^{(n)}}{\kappa} \Bigr) - |P^*_{n+1}(\overline{W}^{(n+1)}_{t_{n}})|\odot G\Bigl(\frac{W_ {t_{n}}^{(n)}}{\kappa} \Bigr) \Big\|^2.
\end{align}
The third term in the \ac{RHS} of \eqref{inequality:TT-convergence-scvx-A1} is bounded by Lemma \ref{lemma:lip-analog-update}
as: 
\begin{align}
    \label{inequality:TT-convergence-scvx-A1-T3}
    &\ \mathbb{E}_{\zeta_{t_n}}[\|W_{t_{n}+1}^{(n)}-W_ {t_{n}}^{(n)}\|^2]\\
    = &\mathbb{E}_{\zeta_{t_n}}\Big[ \Big\|\beta \Bigl(W_{t_{n+1}+T-1}^{(n+1)}\odot F\Bigl(\frac{W_ {t_{n}}^{(n)}}{\kappa} \Bigr) - |W_{t_{n+1}+T-1}^{(n+1)}|\odot G\Bigl(\frac{W_ {t_{n}}^{(n)}}{\kappa} \Bigr)\Bigr)+ \zeta_{t_n}\Big\|^2 \Big] \nonumber
    \\ 
    \le&\ 3\beta^2 \Big\|P^*_{n+1}(\overline{W}^{(n+1)}_{t_{n}})\odot F\Bigl(\frac{W_ {t_{n}}^{(n)}}{\kappa} \Bigr) - |P^*_{n+1}(\overline{W}^{(n+1)}_{t_{n}})|\odot G\Bigl(\frac{W_ {t_{n}}^{(n)}}{\kappa} \Bigr)\Big\|^2  + 3\beta \Theta (\Delta w_{\min}) \nonumber \\
    & + 3\beta^2 
\Big\|W_{t_{n+1}+T-1}^{(n+1)}\odot F\Bigl(\frac{W_ {t_{n}}^{(n)}}{\kappa} \Bigr) - |W_{t_{n+1}+T-1}^{(n+1)}|\odot G\Bigl(\frac{W_ {t_{n}}^{(n)}}{\kappa} \Bigr)-\Bigl(P^*_{n+1}(\overline{W}^{(n+1)}_{t_{n}})\odot F\Bigl(\frac{W_ {t_{n}}^{(n)}}{\kappa} \Bigr) \nonumber \\
    &- |P^*_{n+1}(\overline{W}^{(n+1)}_{t_{n}})|\odot G\Bigl(\frac{W_ {t_{n}}^{(n)}}{\kappa} \Bigr)\Bigr)\Big\|^2 
    \nonumber\\
    \le&\ 3\beta^2 \Big\|P^*_{n+1}(\overline{W}^{(n+1)}_{t_{n}})\odot F\Bigl(\frac{W_ {t_{n}}^{(n)}}{\kappa} \Bigr) - |P^*_{n+1}(\overline{W}^{(n+1)}_{t_{n}})|\odot G\Bigl(\frac{W_ {t_{n}}^{(n)}}{\kappa} \Bigr)\Big\|^2 +3\beta \Theta (\Delta w_{\min})
     \bkeq + 3\beta^2 \mathbb{E}_{{\xi_{t:t+\mathcal{T}_n-1}}}[\|W_{t_{n+1}+T-1}^{(n+1)} - P^*_{n+1}(\overline{W}^{(n+1)}_{t_{n}})\|^2]. 
    \nonumber
\end{align}
The second inequality holds by Cauchy-Schwarz inequality. 
 Plugging inequality \eqref{inequality:TT-convergence-scvx-H1-final}  and \eqref{inequality:TT-convergence-scvx-A1-T3} above into \eqref{inequality:TT-convergence-scvx-A1} yields:
 \begin{align}
    \label{inequality:TT-convergence-scvx-A1-FINAL-FINAL}
&\quad\mathbb{E}_{\zeta_{t_n}}[\|W_{t_{n}+1}^{(n)}-P^*_{n}(\overline{W}^{(n)}_{t_{n-1}})\|^2] \\& =\nonumber(1-\frac{\beta H_{\min}}{2\gamma  F_{\max}})\|W_ {t_{n}}^{(n)}-  P^*_{n}(\overline{W}^{(n)}_{t_{n-1}})\|^2  
    + \frac{2\beta F_{\max}^3 \gamma }{H_{\min}} \| W_{t_{n+1}+T-1}^{(n+1)} - P^*_{n+1}(\overline{W}^{(n+1)}_{t_{n}})\|^2  \\& \nonumber \quad  -(\frac{\beta\gamma}{F_{\max}}-3\beta^2)\Big \| P^*_{n+1}(\overline{W}^{(n+1)}_{t_{n}})\odot F\Bigl(\frac{W_ {t_{n}}^{(n)}}{\kappa} \Bigr) - |P^*_{n+1}(\overline{W}^{(n+1)}_{t_{n}})|\odot G\Bigl(\frac{W_ {t_{n}}^{(n)}}{\kappa} \Bigr) \Big\|^2 \nonumber  \\ & \quad+3\beta \Theta (\Delta w_{\min})  + 3\beta^2  \| W_{t_{n+1}+T-1}^{(n+1)} - P^*_{n+1}(\overline{W}^{(n+1)}_{t_{n}})\|^2\nonumber  \\ &\leq  ( 1-\frac{\beta H_{\min}}{2\gamma  F_{\max}})\|W_ {t_{n}}^{(n)}-  P^*_{n}(\overline{W}^{(n)}_{t_{n-1}})\|^2  
    + \frac{3\beta F_{\max}^3 \gamma }{H_{\min}}\|W_{t_{n+1}+T-1}^{(n+1)} - P^*_{n+1}(\overline{W}^{(n+1)}_{t_{n}})\|^2 \nonumber \\
    &\quad +\beta \Theta (\Delta w_{\min}) -(\frac{\beta\gamma}{F_{\max}}-3\beta^2)\Big \| P^*_{n+1}(\overline{W}^{(n+1)}_{t_{n}})\odot F\Bigl(\frac{W_ {t_{n}}^{(n)}}{\kappa} \Bigr) - |P^*_{n+1}(\overline{W}^{(n+1)}_{t_{n}})|\odot G\Bigl(\frac{W_ {t_{n}}^{(n)}}{\kappa} \Bigr) \Big\|^2 . \nonumber
\end{align}
The last inequality holds by setting  $\beta \leq  \frac{F_{\max}^3 \gamma}{3H_{\min}}$.
Executing $T_n$ iterations through \eqref{inequality:TT-convergence-scvx-A1-FINAL-FINAL} yields:
\begin{align}
\label{eq:101}
   & \quad \mathbb{E}_{\boldsymbol{\zeta_n}}[\|W_{ t_{n}+T_n-1}^n-P^*_{n}(\overline{W}^{(n)}_{t_{n-1}})\|^2]
   \\ &  \leq \left( 1-\frac{\beta H_{\min}}{2\gamma  F_{\max}}\right)^{T_n}\|W_ {t_{n}}^{(n)}-  P^*_{n}(\overline{W}^{(n)}_{t_{n-1}})\|^2  
    + \sum_{i=0}^{T_n-1}\left( 1-\frac{\beta H_{\min}}{2\gamma  F_{\max}}\right)^i (\beta \Theta (\Delta w_{\min})\bkeq \quad +\frac{3\beta F_{\max}^3  \gamma}{H_{\min}} \| W_{t_{n+1}+(i+1)T_{n+1}-1}^{n+1}- P^*_{n+1}( \overline W_{t_n+i}^{(n+1)})\|^2 ) \nonumber  \\ &  \leq \left( 1-\frac{\beta H_{\min}}{2\gamma  F_{\max}}\right)^{T_n}\|W_ {t_{n}}^{(n)}-  P^*_{n}(\overline{W}^{(n)}_{t_{n-1}})\|^2   + \frac{6 F_{\max}^4  \gamma^2}{H_{\min}^2}\| W_{t_{n+1}+T_{n+1}-1}^{(n+1)}- P^*_{n+1}(\overline{W}^{(n+1)}_{t_{n}})\|^2\bkeq\quad-(\frac{2\gamma^2}{H_{\min}}-\frac{6\beta\gamma F_{\max}}{H_{\min}})\Big \| P^*_{n+1}(\overline{W}^{(n+1)}_{t_{n}})\odot F\Bigl(\frac{W_ {t_{n}}^{(n)}}{\kappa} \Bigr) - |P^*_{n+1}(\overline{W}^{(n+1)}_{t_{n}})|\odot G\Bigl(\frac{W_ {t_{n}}^{(n)}}{\kappa} \Bigr) \Big\|^2\bkeq \quad +\frac{2F_{\max}\Theta (\Delta w_{\min}) }{H_{\min}} . \nonumber
\end{align}
When $k \neq 0$, \eqref{eq:101} can be written as the general case: \begin{align}
   & \quad \mathbb{E}_{\boldsymbol{\zeta_n}}[\|W_{ t_{n}+(k+1)T_n-1}^{(n)}-P^*_{n}(\overline{W}^{(n)}_{t_{n-1}+k})\|^2]
  \\ &  \leq \left( 1-\frac{\beta H_{\min}}{2\gamma  F_{\max}}\right)^{T_n}\|W_ {t_{n}+kT_n}^{(n)}-  P^*_{n}(\overline{W}^{(n)}_{t_{n-1}+k})\|^2\bkeq \quad  + \frac{6 F_{\max}^4  \gamma^2}{H_{\min}^2}\| W_{t_{n+1}+(k+1)T_{n+1}-1}^{(n+1)}- P^*_{n+1}(\overline{W}^{(n+1)}_{t_{n}+k})\|^2 +\frac{2F_{\max}\Theta (\Delta w_{\min}) }{H_{\min}}\bkeq \quad-(\frac{2\gamma^2}{H_{\min}} -\frac{6\beta\gamma F_{\max}}{H_{\min}})\Big \|P^*_{n+1}(\overline{W}^{(n+1)}_{t_{n}+k})\odot F\Bigl(\frac{W_ {t_{n}}^{(n)}}{\kappa} \Bigr) - |P^*_{n+1}(\overline{W}^{(n+1)}_{t_{n}+k})|\odot G\Bigl(\frac{W_ {t_{n}}^{(n)}}{\kappa} \Bigr) \Big\|^2 \nonumber 
\end{align}
which completes the proof.
\end{proof}

\newpage
\section{Pseudocode}
\label{Sec:Pseudocode}
\begin{algorithm}[H]
\small
\caption{Multi-timescale Residual Learning with Warm Start Initialization}
\label{alg:bit-slice-tt-switch}
\begin{algorithmic}[1]
\STATE Initialize $W^{(0)}$ with the weight from digital side, working tiles $W^{(n)} \gets 0$ for $n=1,\dots,N$
\STATE Initialize tile index counter: $t_n \gets 0$ for all $n$, and set \textbf{current update tile} $k \gets 0$
\STATE Initialize inner loop length: $T_n = \textit{Transfer\_every\_vec}[n]$
\STATE Initialize switching flag: $\texttt{trigger\_tile\_switch} \gets \texttt{False}$
\STATE Initialize loss history buffer: $\mathcal{L}$
\FOR{each iteration $t=1,2,\dots$}
    \STATE $ W_{t_{N}}^{(N)} \gets W_{t}^{(N)}
     - \alpha \nabla f(\overline{W}_{t};\xi_{t}) \odot F \bigl(W_{t}^{(N)}\bigr)
     - \bigl|\alpha \nabla f(\overline{W}_{t};\xi_{t})\bigr|
       \odot G \bigl(W_{t}^{(N)}\bigr)
     + \zeta_{t}$ 
   \\ \textcolor{gray}{// Gradient accumulation}
    \STATE Append $\ell_{t}$ to loss history $\mathcal{L}$
    \IF{\texttt{LossPlateau}$(\mathcal{L}, k)$ \textbf{and} $k \geq 0$}
        \STATE $\texttt{trigger\_tile\_switch} \gets \texttt{True}$ \hfill \textcolor{gray}{// Detected plateau: trigger tile switch}
    \ENDIF
\IF{$\texttt{trigger\_tile\_switch} = \texttt{True}$ \textbf{and} $k < N$}
        \STATE $k \gets k + 1$ \hfill \textcolor{gray}{// Progressive per-tile transfer switch}
        \STATE $\texttt{trigger\_tile\_switch} \gets \texttt{False}$
    \ENDIF

    \IF{$k < N$ \textbf{and} $t \bmod T_N = 0$}
        \STATE $W^{(k)} _{t_k} \gets  W_{t_k}^{(k)}
     + \beta  W^{(N)}_{t+T_N-1}\odot F \bigl(W_{t_k}^{(k)}\bigr)
     - \bigl|\beta W^{(N)}_{t+T_N-1}\bigr|
       \odot G \bigl(W_{t_k}^{(k)}\bigr)
     + \zeta_{t_k}$ 
\\ \textcolor{gray}{// Transfer update from $W^{(N)}$ to $W^{(k)}$}
    \ENDIF
     \hfill \textcolor{gray}{// Warm start initialization finished}
    \IF{$k \geq N$}      
        \FOR{$n = N-1$ \TO $0$} 
            \IF{$t_{n+1} \bmod T_{n+1} = 0$}
                \STATE $W^{(n)}_{t_n} \gets W_{t_n}^{(n)}
     + \beta  \Tilde W^{(n+1)}\odot F \bigl(W_{t_n}^{(n)}\bigr)
     - \bigl|\beta  \Tilde W^{(n+1)}\bigr|
       \odot G \bigl(W_{t_n}^{(n)}\bigr)
     + \zeta_{t_n}$   \\ \textcolor{gray}{// Transfer update from $W^{(n+1)}$ to $W^{(n)}$}
            \ENDIF
        \ENDFOR
    \ENDIF
    \STATE $\overline{W}_{t} = \sum_{n=0}^{N} \gamma^n W^{(n)}_{t_n}$ \hfill \textcolor{gray}{// Combine all tiles to form effective weight}
\ENDFOR
\vspace{1em}
\STATE \textbf{Function:} LossPlateau$(\mathcal{L}, k)$
\STATE \quad \textbf{if} $k \leq 3$: 
\STATE \quad \quad \textbf{if} $|\mathcal{L}| < 2$: \textbf{return} False \hfill \textcolor{gray}{// Not enough history}
\STATE \quad \quad \textbf{else}: \textbf{return} $\mathcal{L}[t] > \mathcal{L}[t{-}1]$ \hfill \textcolor{gray}{// Aggressive mode}
\STATE \quad \textbf{else}:  
\STATE \quad \quad \textbf{if} $|\mathcal{L}| < 6$: \textbf{return} False
\STATE \quad \quad \textbf{else}: 
\STATE \quad \quad \quad $v \gets 0$
 \STATE\quad \quad \quad \textbf{for}{ $i = t{-}5$ to $t{-}1$} \textbf{do}
    \STATE \quad \quad \quad \quad  \textbf{if} $\mathcal{L}[i+1] > \mathcal{L}[i]$: $v \gets v + 1$
 \STATE\quad \quad \quad \textbf{end for}
\STATE \quad \quad \quad \textbf{return} $v \ge 2$ \hfill \textcolor{gray}{// mild mode}
\end{algorithmic}
\end{algorithm}
\vspace{-2em}
Algorithm \ref{alg:bit-slice-tt-switch} includes an optional warm start phase (lines 1--18), which is only used in our experimental implementation to accelerate convergence and stabilize early training. This warm start is not required in the general method (Section~\ref{sec.3}) nor in the theoretical analysis (Section~\ref{Sec:Theorem}). The main results of our method should focus on the multiscale residual learning process beginning from line 19 onward.
The warm start process in this algorithm uses the gradient accumulated on the tile $W^{(N)}$ to successively update tiles $W^{(0)}, W^{(1)}, \dots, W^{(N-1)}$. Initially, only $W^{(0)}$ is initialized with the digital model weights (line 1), and the current update tile index is set to $k = 0$ (line 2). During training, $W^{(N)}$ is updated at every step using the gradient of the current composite weight $\overline{W}_{t}$ (line 6). Every $T_N$ steps, the content of $W^{(N)}$ is transferred to tile $W^{(k)}$ (line 14). This continues until the loss plateaus, as determined by the \texttt{LossPlateau} function (lines 7--9). When a plateau is detected, the algorithm increments $k$ (lines 10--13), thereby the content of $W^{(N)}$ is transferred to tile $W^{(k+1)}$. This procedure repeats until $k > N$, which means that all tiles have been updated, and then the warm start initialization is complete.

\section{Analog Circuit Implementation Details}
\label{appendix:complexity_analysis}

\begin{figure}[H]
  \centering
\includegraphics[width=0.9\linewidth]{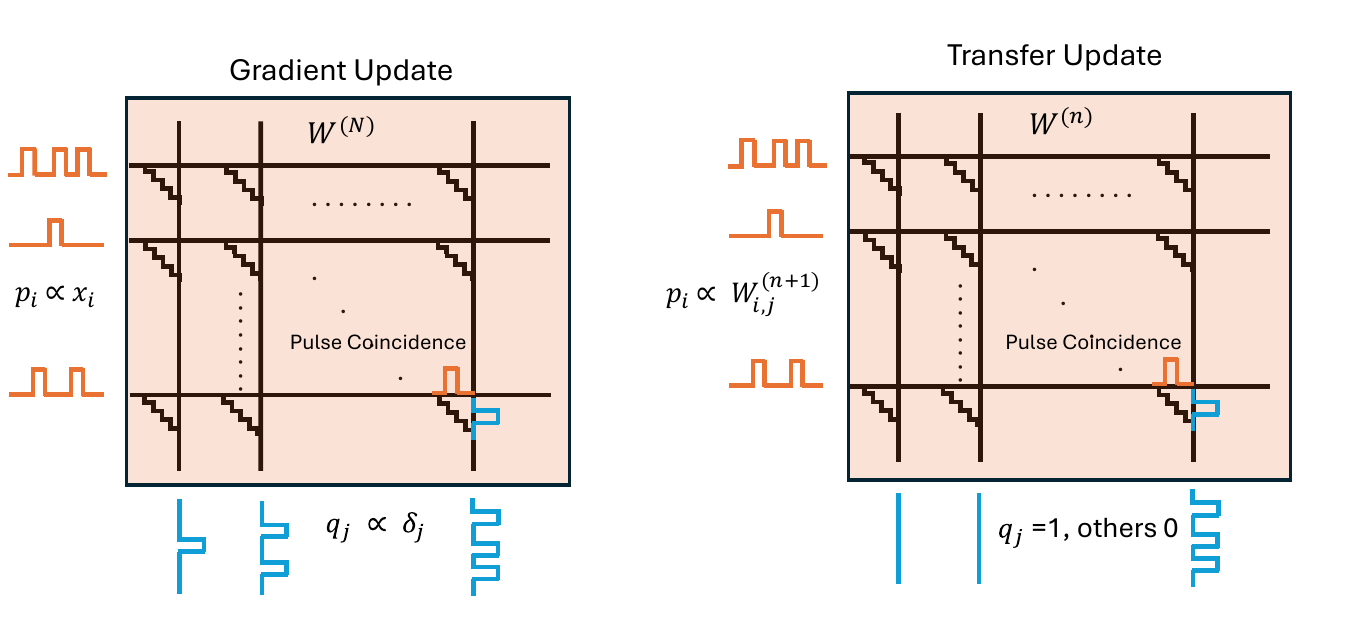}
  \vspace{-1 em}
  \caption{Implementation of gradient update and transfer update process.}
   \vspace{-1 em}
  \label{analog practice}
\end{figure}
\subsection{Circuit-level implementation of update process} 

Figure \ref{analog practice} illustrates how gradient updates are applied to tile \(W^{(N)}\) and how transfer updates are applied to the remaining tiles \(W^{(n)}\). As described in Section \ref{subsection:Pulse update}, stochastic pulse streams whose probabilities are proportional to the error \(\delta_j\) and input \(x_i\) are injected along each row and column of the crossbar array, realizing gradient updates with expectation $\Delta w_{ji} =\alpha\delta_j x_i
=\frac{\partial f(\overline{W})}{\partial w_{ji}}$. 
For each transfer update on \(W^{(n)}\), residual learning reads one column from \(W^{(n+1)}\), and the column selection may follow either a cyclic or a sequential schedule. To support the feasibility of implementing the composite weight structure in Figure~\ref{fig:analog_forward_practice}, we note that similar analog crossbar-based weight composition and accumulation schemes have been demonstrated in practice~\citep{song2024programming}. These prior works focus on inference, where the weights remain static, but this does not affect our in-memory training scenario because the ADCs are only used during forward and backward passes, which are present in both training and inference. Moreover, sharing analog peripheral circuits (e.g., ADCs, DACs, and drivers) across multiple subarrays has been adopted in several recent compute-in-memory designs to reduce power and area overhead~\citep{xu2024reharvest,xu2024cascaded}.
\subsection{Runtime and update complexity comparison.}
\begin{table}[t]
\centering
\vspace{-1em}
\resizebox{\linewidth}{!}{
\begin{tabular}{l|c|c|c|c}
\toprule
\textbf{Algorithm} & \textbf{TT-v2} & \textbf{Analog SGD} & \textbf{MP} & \textbf{Ours} \\
\midrule
Digital storage [byte] & $\mathcal{O}(D^2 + 2D)$ & $\mathcal{O}(2D)$ & $\mathcal{O}(D^2 + 2DB)$ & $\mathcal{O}(2D)$  \\
Memory ops [bit] & $\mathcal{O}(16D / n_s)$ & $\mathcal{O}(1)$ & $\mathcal{O}(16D^2 / B)$ & $\mathcal{O}(1)$ \\
FP ops & $\mathcal{O}(2D + 2D / n_s)$ & $\mathcal{O}(2D)$ & $\mathcal{O}(2D^2 + D)$ & $\mathcal{O}(2D)$\\
Analog ops [time] & $(l_{\mathrm{avg}} + \frac{1}{n_s}) t_{\mathrm{sp}} + \frac{t_{\mathrm{M}}}{n_s}$ & $l_{\mathrm{avg}}  t_{\mathrm{sp}}$ & $\frac{D}{B}   t_{\mathrm{sp}}$& $l_{\mathrm{avg}} \frac{ t_{\mathrm{sp}} n_s}{n_s-1} + \frac{ t_{\mathrm{M}}}{n_s-1}$  \\
$\approx$ Time est. [ns] & $56.3$ & $30.9$ & $3024.5$ & $95.9$ \\
\bottomrule
\end{tabular}
}
\caption{Comparison of complexity and estimated runtime for per-sample weight update. 
Here, $D$ is the vector/matrix dimension and $B$ is the mini-batch size. 
For the time estimates, we assume $D=512$, $B=100$, $n_s=2$ as the transfer period,  $l_{\mathrm{avg}}=5$ is the average number of pulses per sample, 
$t_{\mathrm{sp}}=5 ns$ is the duration of a single pulse, $t_{\mathrm{M}} = 40 ns$ is the time for matrix-vector readout and FP operations compute time is calculated assuming throughput of $0.7$ TFLOPS~\citep{jain2022heterogeneous},  Statistics for TT-v2, Analog SGD, and MP are based on~\citep{rasch2024fast}.}
\label{tab:weight-update-comparison}
\vspace{-2em}
\end{table}

As shown in Section \ref{section: Numerical Simulations} and supplement experiments in Section \ref{sec:Supplement_simulations}, our algorithm achieves superior performance under a limited-precision setting, consistently outperforming the Tiki-Taka series and in some cases performing on par with MP.
However, this performance advantage sometimes comes at the cost of using multiple analog tiles (from 3 to 8), in contrast to TT-v1, which employs only 2 tiles, TT-v2, which adds one digital tile to TT-v1, and MP, which uses a single analog tile combined with a high-precision digital unit. To evaluate the impact of using multiple tiles, we first summarize the per-sample weight update complexity of different algorithms in Table~\ref{tab:weight-update-comparison}. This table provides a unified metric for comparing the hardware load across methods, including digital storage, memory operations, floating-point operations, and analog operations. Based on these results, we then present a more detailed analysis of the specific hardware costs including digital storage, runtime, energy, and area to highlight the efficiency advantages of our proposed approach.

\textbf{Digital storage cost.} We evaluate the digital storage cost of our proposed method in comparison with TT-v2, Analog SGD and MP. Here, digital storage refers exclusively to the memory used to buffer intermediate forward input, backward error, and gradients during training, which reside in SRAM or DRAM. In analog crossbar-based training, only the input $x \in \mathbb{R}^D$ and error signal $\delta \in \mathbb{R}^D$ are digitized via ADCs and temporarily stored for the backward pass. These two vectors lead to a digital storage requirement of $O(2D)$ bytes for both Analog SGD and our method. In contrast, TT-v2 requires an additional $D \times D$ digital transfer buffer between auxiliary and core arrays, incurring a total cost of $O(2D + D^2)$. For MP, gradient accumulation is performed in the digital domain over a batch of size $B$, resulting in $O(D^2 + 2DB)$ digital storage. 

To quantify this storage in real scenarios, we extract the analog tile dimensions used in our experiments in Tables \ref{tab:num_states_comparison} and \ref{tab:num_states_comparison_cifar} and sum the corresponding vector and matrix sizes across all analog layers. Assuming each element occupies 1 byte (i.e., 8 bits of precision), we report total digital storage in kilobytes (KB). For MP, we incorporate mini-batch accumulation with batch size $B = 8$ for LeNet-5 and $B = 128$ for ResNet-18.
\begin{table}[H]
\centering
\small
\vspace{-0.5em}
\setlength{\tabcolsep}{6pt}
\renewcommand{\arraystretch}{1.2}
\begin{tabular}{l|cccc}
\toprule
\textbf{Model} & \textbf{TT-v2} & \textbf{Analog SGD} & \textbf{MP} & \textbf{Ours} \\
\midrule
LeNet-5   (KB)    & 80.2    & 2.13  & 94.8   & 2.13 \\
ResNet-18  (KB)   & 10,600  & 50.2  & 17,000 & 50.2 \\
\bottomrule
\end{tabular}
\vspace{-0.5em}
\caption{
Digital storage required by different algorithms on LeNet-5 and ResNet-18.
}
\label{tab:digital_storage_comparison}
\vspace{-2 em}
\end{table}
 Table
\ref{tab:digital_storage_comparison} demonstrates that on LeNet-5 and ResNet-18 , our method achieves digital storage reductions of $37\times$–$211\times$ compared to TT-v2, and $44\times$–$339\times$ compared to MP while matching the minimal cost of Analog SGD. This efficiency stems from the fact that increasing the number of tiles does not incur additional cost for storing $x$ and $\delta$, as they are computed collectively across all tiles as illustrated in Figure \ref{fig:analog_forward_practice}. Moreover, unlike MP and TT-v2, our method eliminates the need for any digital tile to store weights or gradients. We believe our algorithm offers a substantial advantage in terms of digital memory overhead.

\textbf{Runtime cost.} We analyze the runtime cost, which includes both the  \emph{FP operations time} and \emph{analog operations time}. 
For the MP baseline, the  outer-product $\delta x^T$ requires $D^2$ multiplications and $D^2$ additions, resulting in a total of $2D^2$ floating-point operations per input. Including additional scaling and preprocessing for $x$ and $\delta$, the total is $\mathcal{O}(2D^2 + D)$ FP ops. Dividing by the effective throughput of 0.175 TFLOPS (assuming 0.7 TFLOPS shared across 4 tiles as in \citep{rasch2024fast}), the FP operations take $2998.9 \text{ns}$, while the analog operations require $\frac{D}{B} t_{\text{sp}} = 25.6 \text{ns}$, resulting in a total estimated latency of $3024.5 \text{ns}$. In contrast, our method maintains the same $\mathcal{O}(2D)$ FP operations as Analog SGD, as it only requires computing the absolute maximum values of $x$ and $\delta$ to scale the probabilities used in stochastic pulse updates. 
For analog operations, we focus on deriving the runtime expression specific to our method, which includes the pulse update and the MVM-based readout for weight transfer. For pulse update, when the final tile $W^{(N)}$ is updated once, each preceding tile $W^{(n)}$ is updated approximately every  $(1/n_s)^{N-n}$ iterations, and the aggregate latency is bounded by $l_{\mathrm{avg}} \cdot t_{\text{sp}} \cdot \sum_{n=0}^{N} (1/n_s)^{N-n}$, whose upper bound as $N \to \infty$ converges to $\frac{l_{\mathrm{avg}} t_{\text{sp}} n_s}{n_s - 1}$. Similarly, the MVM-based readout incurs an additional delay bounded by $\frac{t_{\mathrm{M}}}{n_s - 1}$. As a result, our method’s analog latency is $l_{\mathrm{avg}} \cdot \frac{t_{\mathrm{sp}} n_s}{n_s - 1} + \frac{t_{\mathrm{M}}}{n_s - 1}$, which gives a total estimated latency of $95.9\text{ns}$ in Table \ref{tab:weight-update-comparison}.

We apply the same methodology to estimate runtime on full-model configurations. For both LeNet-5 and ResNet-18, we assume each layer is processed in parallel, and the slowest layer dominates the total latency. For MP, the largest matrix in LeNet-5 is of size $128 \times 512$, resulting in a total latency of $457.4 \text{ns}$. In the case of ResNet-18, the largest analog matrix is $512 \times 4608$, leading to an FP time of $13508.0 \text{ns}$ and an analog latency of $20.0 \text{ns}$ (with batch size $B = 128$), yielding a total latency of $13528.0 \text{ns}$ We summarize the runtime across all algorithms and architectures in Table \ref{tab:latencycomparison} below:

\begin{table}[H]
\centering
\small
\vspace{-0.5em}
\setlength{\tabcolsep}{6pt}
\renewcommand{\arraystretch}{1.2}
\begin{tabular}{l|cccc}
\toprule
\textbf{Model} & \textbf{TT-v2} & \textbf{Analog SGD} & \textbf{MP} & \textbf{Ours} \\
\midrule
LeNet-5 (ns)   & 56.3   & 30.9   & 457.4   & 95.9 \\
ResNet-18 (ns) & 126.5  & 77.7   & 13528.0 & 142.7 \\
\bottomrule
\end{tabular}
\vspace{-0.5em}
\caption{Estimated runtime for different analog training methods on LeNet-5 and ResNet-18.}
\label{tab:latencycomparison}
\vspace{-2 em}
\end{table}

These results highlight the efficiency of our method: on LeNet-5, MP is 4.8× slower than ours, while on ResNet-18, it is 94.7× slower. At the same time, our method introduces only modest latency overhead compared to TT-v2.

\textbf{Energy cost.} We analyze the  energy consumption per training sample  and compare our method to the MP baseline. As reported in \citep[Table 1]{le2018mixed}, MP consumes $83.2\text{nJ}$ to process a single training image on a two-layer perceptron (785 inputs, 250 hidden neurons, 10 outputs; total 198,760 synapses). For consistency, we adopt the same model configuration for energy estimation across methods.
For our method, the energy is composed of three parts: pulse update energy, transfer update energy, and forward/backward propagation energy. To estimate the  pulse update energy , we follow \citep{gokmen2016acceleration}, which reports a combined power consumption of $0.7\text{W}$ for op-amps and stochastic translators (STRs) on a $4096 \times 4096$ crossbar. We scale this power by the number of active rows and columns in our perceptron, yielding $P_{\text{scaled}} = 0.1107\text{W}$. Assuming a worst-case pulse update time of $\frac{l_{\mathrm{avg}}  t_{\mathrm{sp}}  n_s}{n_s - 1} = 50\text{ns}$, we obtain the pulse update energy:
$
E_{\text{pulse\_update}} = P_{\text{scaled}} \cdot 50 \text{ns} \approx 5.53 \text{nJ}.
$
In addition, we consider the  transfer update energy  for analog readout. Since in our architecture each tile is read roughly every $n_s$ steps, the aggregate readout time is upper bounded by the MVM latency of a single tile, $\frac{t_{\mathrm{M}}}{n_s - 1} = 40\text{ns}$. The MP work reports a  forward-propagation energy in computational memory of $7.29 \text{nJ}$ under similar conditions, which can be interpreted as the estimated energy cost for a single-tile analog forward pass closely matching the MVM read time in our design. Therefore, we adopt $7.29 \text{nJ}$ as an upper-bound estimate for our total transfer energy:
$
E_{\text{trans}} = 7.29 \text{nJ}.
$
Hence, the  total update energy per sample  in our method is the sum:
$$
E_{\text{update}} = E_{\text{pulse\_update}} + E_{\text{trans}} \approx 12.82 \text{nJ}.
$$
For the forward and backward passes, our method distributes the computation across $N$ analog tiles. Each tile requires the full sequence of operations including data input, PWM generation, read-voltage regulation, analog computation,  ADC conversion and data output. We adopt a conservative estimation by assuming that these operations are not shared across tiles, so the energy cost of each tile is independent and must be incurred individually. Following the values reported in \citep{le2018mixed}, the forward and backward passes consume $7.29 \text{nJ}$ and $2.15 \text{nJ}$ per tile, respectively. Consequently, the total propagation energy scales linearly with $N$ as $N \cdot (7.29 + 2.15) = N \cdot 9.44\text{nJ}$.
Putting all the components together, we compare the energy consumption per training sample in Table
\ref{tab:energy_breakdown}:
\begin{table}[H]
\centering
\small
\setlength{\tabcolsep}{10pt}
\renewcommand{\arraystretch}{1.2}
\begin{tabular}{l|c|c}
\toprule
\textbf{Component} & \textbf{MP (nJ)} & \textbf{Ours (nJ)} \\
\midrule
Weight update           & $62.03$         & $12.82$ \\
Forward/backward pass   & $21.21$         & $N \cdot 9.44$ \\
\midrule
\textbf{Total}          & $83.2$          & $12.82 + N \cdot 9.44$ \\
\bottomrule
\end{tabular}
\vspace{-0.5em}
\caption{Estimated energy consumption per image for MP and our method based on \citep{gokmen2016acceleration,le2018mixed}.}
\label{tab:energy_breakdown}
\vspace{-2 em}
\end{table}
As the table shows, our method becomes less energy-efficient than MP when $N \geq 8$. In practice, however, much of the forward/backward overhead can be shared across tiles. For example, the PWM counter and comparator logic that generate input vectors into modulated pulses, and the ADCs that digitize accumulated outputs can all be amortized across multiple tiles rather than duplicated. Such shared-ADC designs have been demonstrated in recent analog accelerators \citep{xu2024reharvest,xu2024cascaded,song2024programming}. Only the per-tile operational transconductance amplifiers used for voltage regulation, along with the intrinsic device conduction energy, scale directly with $N$. This means that the effective energy growth with tile count is substantially slower than the conservative upper bound assumed in Table~\ref{tab:energy_breakdown}. Combined with the fact that our method achieves higher accuracy than TT baselines with as few as 3–4 tiles, these considerations indicate that our design remains more energy-efficient than MP even when more than eight tiles are employed, making it a practical and scalable solution.

\textbf{Area cost.} For our methods, since the overall architecture closely resembles that of TT-v1 except for the increased number of tiles, the corresponding estimates for area and execution time are derived based on models presented in~\citep{gokmen2020algorithm} and~\citep{gokmen2016acceleration}. We analyze the area overhead of our method using the RPU tile design methodology described in \citep{gokmen2016acceleration}, which assumes a realistic CMOS-compatible fabrication stack. Specifically, RPU arrays are implemented in the back-end-of-line (BEOL) region, with resistive memory devices placed between intermediate metal layers. Each crossbar array contains $D \times D$ devices, and the interconnect pitch (wire width plus spacing) is set to  400 nm , based on typical dimensions of intermediate BEOL levels. This yields a physical tile area of $((0.4D) \mu\text{m})^2 = (0.16D^2) \mu\text{m}^2$. Following \citep{gokmen2016acceleration}, we adopt a baseline configuration of $D = 4096$, which corresponds to a tile area of approximately  $2.68 \text{mm}^2$. In our experimental configurations (e.g., LeNet-5 and ResNet-18), which use smaller crossbar sizes, the tile area is scaled proportionally under the same pitch assumption; for instance, a 128 × 512 tile occupies  $0.0105\text{mm}^2$ . Based on the actual tile dimensions used in each layer of Tables \ref{tab:num_states_comparison} and \ref{tab:num_states_comparison_cifar}, we estimate the total analog area required by our method to be  $0.0128\text{mm}^2 $ for LeNet-5 and $ 1.69 \text{mm}^2 $ for ResNet-18.

To map logical weights to physical devices, each weight $W$ is represented as the difference between two conductance values: a main conductance $C_{\text{main}}$ and a reference $C_{\text{ref}}$, i.e., $W \propto C_{\text{main}} - C_{\text{ref}}$. This implies that both Analog SGD and MP require  2 analog tiles per layer . TT-v1 and TT-v2, which maintain both core and assistant matrices, require  4 analog tiles per layer . In our method, the number of physical analog tiles is  twice  the count reported in Tables \ref{tab:num_states_comparison} and \ref{tab:num_states_comparison_cifar}, due to our multi-tile residual structure. However, because these tiles can be vertically stacked using BEOL integration, the actual die area remains compact. Even without stacking, the total area overhead of our method remains within practical limits. For example, in the worst-case configuration using 10 analog tiles, our method incurs approximately  10×  the area of MP and  5×  that of TT-v2. Yet this level of overhead is still feasible: modern processors such as IBM Power8 CPUs \citep{stuecheli2013next} supports chip areas up to  $600 \text{mm}^2 $. Such systems are capable of integrating hundreds of analog tiles, indicating that our method remains scalable and realistic under practical hardware constraints.

In summary, our multi-tile framework provides clear advantages over TT-v2 and MP: it reduces digital storage by up to two orders of magnitude and achieves substantially lower runtime latency, benefiting from parallel analog updates even when more tiles are used. While area and energy scale with tile count, area overhead can be mitigated through BEOL stacking, and energy only exceeds MP under \emph{the most conservative estimates} when $N > 8$, since those estimates assume that all I/O, PWM logic, and ADC resources are replicated per tile rather than shared, meaning that in practical our design can tolerate substantially more tiles before its energy surpasses MP.
Importantly, our method consistently delivers higher accuracy than TT baselines with only 3–4 tiles, keeping energy well within practical limits.
These results establish our approach as a scalable and efficient solution for high-precision analog training.

\section{Supplement simulations}
\label{sec:Supplement_simulations}
 To further demonstrate the scalability, robustness, and practical relevance of our proposed method, we conduct a series of supplementary experiments across diverse settings, as supplement simulations to Section \ref{section: Numerical Simulations}. 
 We further explore the case of 80-state ReRAM devices with a larger portion of the model implemented in analog. This experiment examines whether our approach can still retain its advantages when the analog model is scaled up, providing insights into its applicability for future generations of higher-precision memristor devices.
 To better understand the algorithmic behavior, we perform an ablation study on the geometric scaling factor and its influence on training dynamics. Lastly, we extend our method to a Transformer-based natural language modeling task to demonstrate its applicability beyond vision workloads. Together, these results strengthen the case for our method as a scalable and general solution for analog training under various model architectures and hardware regimes.
\subsection{CIFAR-10 and CIFAR-100 experiments}
We first provide supplementary results on CIFAR-10 with ResNet-18 in Table \ref{tab:num_states_comparison_cifar}, which were omitted from the main text due to space constraints.
To further validate the effectiveness of our method on more challenging datasets, we conduct additional experiments on CIFAR-100 using ResNet-18, with devices limited to 4 conductance states. Given the increased complexity and number of classes in CIFAR-100, we extend the training schedule to 400 epochs to ensure sufficient convergence. 
\begin{table}[h]
\centering
\small
\vspace{-0.5em}
\setlength{\tabcolsep}{6pt}
\renewcommand{\arraystretch}{1.2}
\begin{tabular}{ccc|c|ccccc}
\toprule
\textbf{\# States} & \textbf{TT-v1} & \textbf{TT-v2} & \textbf{MP} & \textbf{Ours (4 tiles)} & \textbf{Ours (6 tiles)} & \textbf{Ours (8 tiles)} \\
\midrule
4  & 53.83{\tiny$\pm$0.14} & 87.89{\tiny$\pm$0.06} & 92.77{\tiny$\pm$0.05} & 87.35{\tiny$\pm$0.08} & 89.79{\tiny$\pm$0.14} & 90.45{\tiny$\pm$0.09} \\
10 & 84.18{\tiny$\pm$0.06} & 89.17{\tiny$\pm$0.11} & 93.45{\tiny$\pm$0.08} & 90.76{\tiny$\pm$0.06} & 91.07{\tiny$\pm$0.06} & 90.88{\tiny$\pm$0.05} \\
\bottomrule
\end{tabular}
\vspace{-0.5 em}
\caption{
Test accuracy on CIFAR10 under 4 and 10 conductance states using analog ResNet-18. 
}
\label{tab:num_states_comparison_cifar}
\vspace{-1em}
\end{table}

\begin{table}[H]
\centering
\small
\begin{tabular}{l|ccc|ccc}
\toprule
\textbf{Model} & \textbf{TT-v1} & \textbf{TT-v2} & \textbf{MP} & \textbf{Ours (4 tiles)} & \textbf{Ours (6 tiles)} & \textbf{Ours (8 tiles)} \\
\midrule
ResNet-18 & 14.97{\tiny$\pm$1.93} & 27.91{\tiny$\pm$0.65} & 64.08{\tiny$\pm$0.44} & 58.36{\tiny$\pm$0.36} & 59.68{\tiny$\pm$0.33} & 60.62{\tiny$\pm$0.24} \\
\bottomrule
\end{tabular}
\vspace{-0.5em}
\caption{Test accuracy on CIFAR-100 using 4-state analog devices.}
\label{tab:cifar100_results}
\vspace{-2 em}
\end{table}

The results summarized in Table \ref{tab:cifar100_results} demonstrate that our method consistently outperforms TT-v1 and TT-v2 baselines and approaches the performance of the MP method as $6-8$ tiles are used.

\subsection{Experiments on 80-state ReRAM devices with increased analog deployment.}
\label{subsec:80-state ReRAM}
We further conduct experiments on 80-state ReRAM devices with increased analog deployment. Although 10-state devices are often used as an extreme but mature example (as most reported devices reach tens of states, see Table~\ref{table:device_granularity}), our algorithm also provides an effective solution for improving training accuracy on these relatively high-state devices. We observe that as the proportion of analog layers increases, for example, when further converting  \texttt{layer2} in ResNet-34 to analog, TT-v1 and TT-v2 degrades severely due to error accumulation across layers even with 80 states. In contrast, our method maintains higher accuracy, surpassing TT baselines with only 3–4 tiles and approaching MP performance with 8 tiles. We believe this accuracy collapse is a general challenge in scaling analog training to larger models, as errors from device non-idealities accumulate across more analog tiles, and our algorithm offers a practical solution without incurring extra latency, energy, or digital storage cost.
\begin{table}[H]
\centering
\small
\begin{tabular}{l|ccc|ccc}
\toprule
\textbf{Dataset} & \textbf{TT-v1} & \textbf{TT-v2} & \textbf{MP} & \textbf{Ours (3 tiles)} & \textbf{Ours (5 tiles)} & \textbf{Ours (7 tiles)} \\
\midrule
CIFAR-10 & 10.04{\tiny$\pm$0.04} & 75.65{\tiny$\pm$0.17} & 87.32{\tiny$\pm$0.10} & 82.59{\tiny$\pm$0.27} & 83.97{\tiny$\pm$0.20} & 84.96{\tiny$\pm$0.08} \\
CIFAR-100 & 1.27{\tiny$\pm$0.04} & 34.80{\tiny$\pm$0.16} & 58.06 {\tiny$\pm$0.07} & 42.12{\tiny$\pm$0.47} & 45.14{\tiny$\pm$0.13} & 50.82 {\tiny$\pm$0.14}\\
\bottomrule
\end{tabular}
\vspace{-0.5em}
\caption{Test accuracy on CIFAR-10 and CIFAR-100 using 80-state ReRAM devices with more layer in ResNet-34 converted to analog.}
\label{tab:80states}
\vspace{-2 em}
\end{table}

\subsection{Sensitivity to geometric scaling factor}
\begin{figure}[h]
    \centering
    \begin{subfigure}[t]{0.48\linewidth}
        \centering
        \includegraphics[width=\linewidth]{figures/gamma_ablation_4states.png}
        \label{fig:gamma_ablation_4states}
    \end{subfigure}
    \hfill
    \begin{subfigure}[t]{0.48\linewidth}
        \centering
        \includegraphics[width=\linewidth]{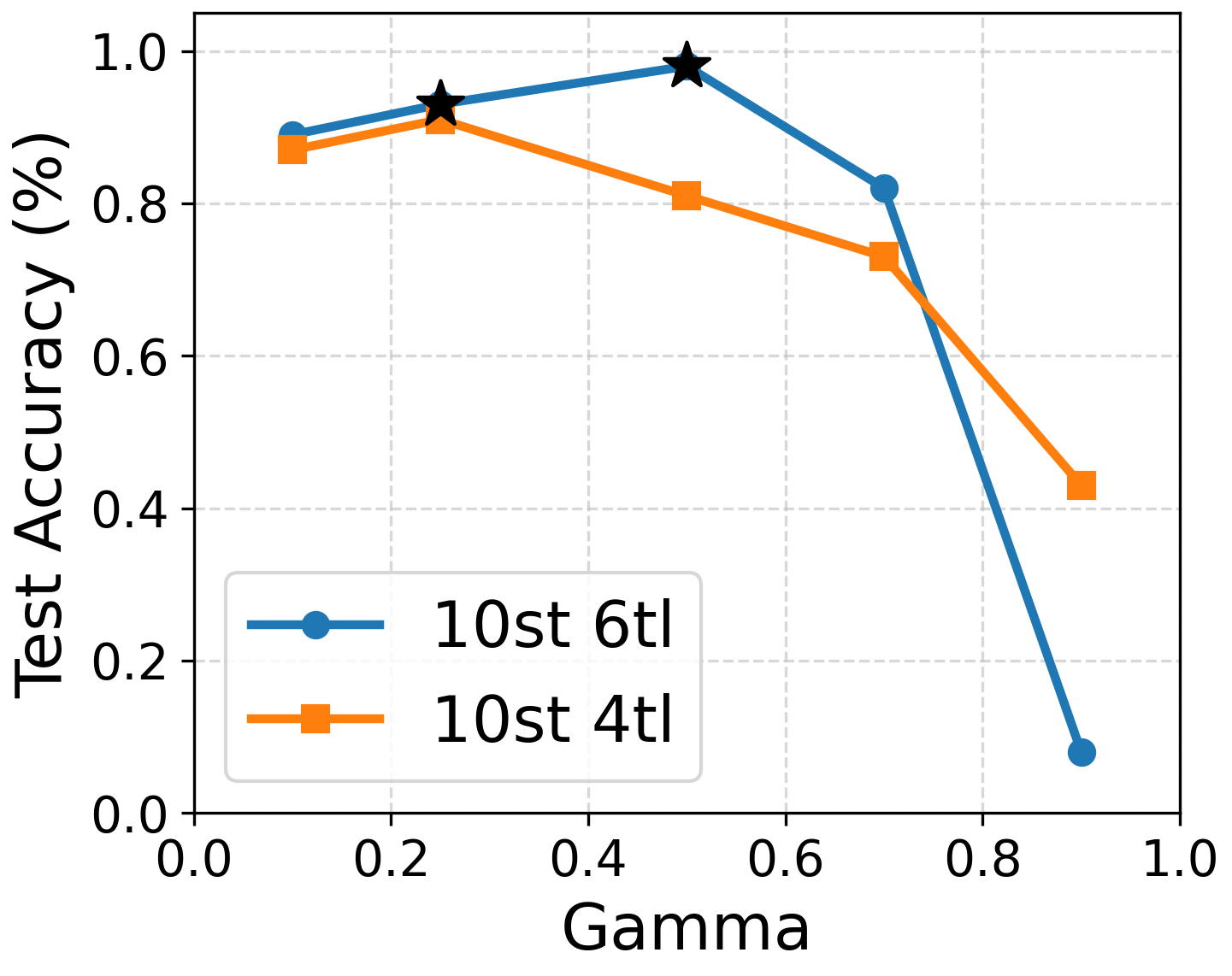}
        \label{fig:gamma_ablation_10states}
    \end{subfigure}
    \vspace{-1em}
    \caption{Gamma ablation study on LeNet-5 using different device states with 4-tile and 6-tile configurations.}
    \vspace{-1em}\label{fig:gamma_ablation_combined}
\end{figure}

The geometric scaling factor $\gamma$ plays a critical role in determining the effectiveness of residual representation across multiple analog tiles. Intuitively, if each tile has a dynamic range $[-\tau_{\text{max}}, \tau_{\text{max}}]$, then $\gamma$ should be chosen such that $
\gamma \cdot (2\tau_{\text{max}}) \approx \Delta w_{\min}
$
to ensure that the representable range of the next tile fully lies within the resolution of the previous tile. This design ensures that each additional tile effectively increases the representable precision by a factor equal to the number of conductance states. However, in practice, device non-idealities such as conductance saturation and asymmetric update dynamics reduce the usable dynamic range of each tile. To account for these effects, we heuristically choose $\gamma$ slightly larger than $\frac{\Delta w_{\min} }{(2\tau_{\text{max}})} =\frac{1}{n_{states}}$.
We show the corresponding ablation results in Figure~\ref{fig:gamma_ablation_combined}, where the peak 
$\gamma$ value is consistent with our hypothesis. These results support the need for careful selection of $\gamma$ based on device characteristics.

\subsection{Extension to Transformer-based NLP tasks}

To further demonstrate the scalability and general applicability of our method, we conducted an additional experiment on a natural language processing task using a GPT-2-style Transformer architecture. Specifically, we trained a 6-layer, 6-head, 768-dimensional model from scratch on the standard \textit{Shakespeare character-level language modeling} benchmark. The total number of trainable parameters is approximately 10.65M, and each iteration processes 16384 tokens.
We ran the training for 5000 iterations using 4-tile analog devices under \textbf{non-ideal I/O conditions} to simulate realistic hardware noise. The analog device used has 4 discrete states. We compare our method against representative analog training baselines, including TT-v1, TT-v2 and MP. 
\begin{table}[H]
\centering
\begin{tabular}{l|c|c|c|c}
\toprule
\textbf{Method} & \textbf{TT-v1} & \textbf{TT-v2} & \textbf{MP} & \textbf{Ours (4 tiles)} \\
\midrule
 Loss & 3.0336 & 2.6137 & 2.7213 & \textbf{ 2.5971} \\
\bottomrule
\end{tabular}
\vspace{-0.5em}
\caption{Validation loss on 6-layer GPT-style model.}
\label{tab:char_modeling}
\vspace{-2em}
\end{table}
As shown in Table~\ref{tab:char_modeling}, our method achieves comparable final validation loss, demonstrating both accuracy and robustness on this standard NLP benchmark.
These results confirm that our method is not limited to vision tasks, but also scales effectively to Transformer-based sequence modeling, maintaining accuracy and resilience under analog non-idealities.

\section{Simulation Details}
\label{sec:Simulation_Details}
This section provides details about the experiments in Section \ref{section: Numerical Simulations}. The analog training algorithms, including Mixed Precision and Tiki-Taka, are provided by the open-source simulation toolkit AIHWKIT, which has an Apache-2.0 license; see \url{https://github.com/IBM/aihwkit}. We use the Softbound device provided by AIHWKIT to simulate the asymmetric linear device. Digital algorithms (including SGD) and datasets (including MNIST and CIFAR10) used in this paper are provided by PyTorch, which has a BSD license; see \url{https://github.com/pytorch/pytorch}. Our implementation builds upon the \texttt{TT-v1} preset in \texttt{AIHWKIT v0.9.2}, with modifications to the gradient routing and transfer mechanisms to support our proposed Residual Learning scheme. 
We conduct our experiments on an NVIDIA RTX 3090 GPU, which has 24GB memory. The simulation time ranges from one to three hours, depending on the model size, dataset, and the number of training epochs. The code is available at \url{https://anonymous.4open.science/r/Code-CC44/}.

\subsection{MNIST and Fashion-MNIST training with analog LeNet-5}
\textbf{Data and preprocessing.} The MNIST dataset is used with standard normalization and no data augmentation. The training and testing sets use the default PyTorch torchvision splits (60{,}000 training and 10{,}000 testing samples). In our experiments, we utilize the full training set with a batch size of 8.
The Fashion-MNIST dataset is used with the default PyTorch \texttt{torchvision} splits, consisting of 60{,}000 training and 10{,}000 testing samples. No additional data augmentation is applied. A simple normalization transform is used through \texttt{ToTensor()}, and the full training set is utilized with a batch size of 16.

 \textbf{Model architecture.} 
We adopt a LeNet-5 model in which all convolutional and fully-connected layers are implemented using AIHWKit's analog modules (\texttt{AnalogConv2d}, \texttt{AnalogLinear}). Digital non-linear operations, such as Tanh activations and MaxPooling, are interleaved and remain executed in the digital domain.

\noindent \textbf{Optimizer and learning rate.} 
For MNIST, we employ the \texttt{AnalogSGD} optimizer with an initial global learning rate of 0.05 applied uniformly to all trainable parameters. 
For Fashion-MNIST, we use the \texttt{AnalogSGD} optimizer with an initial global learning rate of 0.2 for our method, and 0.1 for TT-v1, TT-v2, and MP.
A learning rate scheduler based on \texttt{LambdaLR} decays this global rate by a factor of 0.5 every 30 epochs.
In analog layers, for our algorithm, each tile $W^{(n)}$ is assigned a fixed internal learning rate \texttt{transfer\_lr\_vec[n]}$ = 0.1\cdot 1.2^n$. These internal learning rates remain constant throughout the training and are not affected by the global schedule, as \texttt{scale\_transfer\_lr=False}. For TT-v1, we set the auxiliary tile learning rate \texttt{fast\_lr} $\alpha$ to 0.01 and the transfer learning rate \texttt{transfer\_lr} to 0.1 on both datasets. For TT-v2  we set $\alpha=0.1$ and $\beta=1$ for MNIST and $\alpha=0.05$ for Fashion-MNIST. Additionally, we set \texttt{scale\_transfer\_lr=True} for TT-v1, TT-v2 and MP as default. 

\textbf{Tile parameter configuration.}
We configure the behavior of each analog tile through the following parameter vectors, all generated dynamically as a function of the total number of tiles \texttt{num\_tile}:
\begin{itemize}
    \item For MNIST:
    \begin{itemize}
        \item \texttt{transfer\_every\_vec = [2 * (5\^{}n) for n in range(num\_tile)]}
        \item \texttt{gamma\_vec = [0.5\^{}(num\_tile - 1 - i) for i in range(num\_tile)]}
    \end{itemize}
    \item For Fashion-MNIST:
    \begin{itemize}
        \item \texttt{transfer\_every\_vec = [2 * (5\^{}n) for n in range(num\_tile)]}
        \item \texttt{gamma\_vec = [0.2\^{}(num\_tile - 1 - i) for i in range(num\_tile)]}
    \end{itemize}
\end{itemize}
These vectors control the per-tile transfer schedule, readout scaling, and learning rate, respectively. The number of tiles \texttt{num\_tile} is a configurable parameter that we vary in experiments (e.g., Table~\ref{tab:num_states_comparison}).
It is worth noting that in the actual implementation, tile index \texttt{0} serves as the fixed gradient accumulation tile and plays the role of \( W^{(N)} \) in the main text. The remaining tiles at indices \( 1, 2, \ldots, \texttt{num\_tile} - 1 \) correspond to \( W^{(N-1)}, W^{(N-2)}, \ldots, W^{(0)} \), respectively. 
While this index order is opposite to the mathematical notation used in the main text, the transfer logic and learning behavior are equivalent. The index inversion only affects naming, not the functional correctness or conclusions of the training algorithm.

\textbf{I/O configuration.} As acknowledged in Section~\ref{sec:limitations}, this work assumes idealized I/O settings throughout all experiments. I/O behavior is configured as nearly ideal, with:
\begin{itemize}
    \item \textbf{Forward path:} The input vector \( x \) is injected into the crossbar without finite resolution quantization, amplitude clipping, or additive noise. The resulting output current is integrated ideally, bypassing any ADC or nonlinear feedback models.    
    \item \textbf{Backward path:} The backpropagated vector \( \delta \) is encoded and applied in a similarly ideal manner, ignoring input resolution limits, digital-to-analog conversion noise, or output quantization during the gradient computation.   
    \item \textbf{Transfer path:} When using compound devices such as \texttt{TransferCompound}, the internal transfer of weights between tiles (e.g., during warm start or periodic updates) also involves analog readout and write operations. Setting \texttt{device.transfer\_forward.is\_perfect = True} disables all I/O imperfections during this internal read phase, ensuring clean accumulation and precise programming of weights across tiles.
\end{itemize}

By default (\texttt{inp\_noise=0.0}, \texttt{out\_noise=0.0}). Unless overridden, the defaults are \\ \texttt{io\_inp\_bits}=7, \texttt{io\_out\_bits}=9.

\textbf{Tile switching schedule.} To avoid early saturation of coarse tiles, the training monitors convergence plateaus via loss history.  Early epochs use an aggressive trigger if training loss does not drop sufficiently between epochs.
After four tile switches, a smoother criterion is used based on the recent 5-step moving window.
Upon plateau detection, a C++ flag \texttt{trigger\_tile\_switch} is activated via Python binding for each tile.

\textbf{Training and Evaluation.} The network is trained for up to 100 epochs. Classification loss is computed using \texttt{nn.NLLLoss()} applied to the log-softmax outputs. Evaluation is performed after each epoch using classification accuracy on the full MNIST test set.

\subsection{CIFAR-10 and CIFAR-100 training with ResNet}
\label{sec:Simulation_Details_CIFAR}
 \textbf{Dataset and augmentation.} The CIFAR-10 dataset is used for all ResNet  experiments. Following the default splits provided by \texttt{torchvision}, the dataset consists of 50{,}000 training samples and 10{,}000 test samples, selected via the \texttt{train=True/False} flag. For additional experiments on more fine-grained recognition, we also use the CIFAR-100 dataset, which has the same image size and train/test splits  but with 100 object categories grouped into 20 superclasses. We utilize the entire training set and set the batch size to 128.
All images are normalized to zero mean and unit variance per channel. During training, strong data augmentation is applied, including random cropping, horizontal flipping, Cutout regularization and AutoAugment using 25 CIFAR-10-specific sub-policies. No augmentation is applied to the test set beyond normalization.

\noindent \textbf{Model architecture.} In different experiments, we use ResNet-18 and ResNet-34 models, where \texttt{layer3}, \texttt{layer4}, and the final classifier are mapped to analog tiles, while the remaining layers remain digital. To further demonstrate the scalability of our method (see Section \ref{subsec:80-state ReRAM}), we also conduct experiments on a ResNet-34 variant in which \texttt{layer2} is additionally mapped to analog tiles. Batch normalization and residual shortcuts are preserved unless explicitly disabled.

\noindent \textbf{Tile parameter configuration.}
We configure the behavior of each analog tile through the following parameter vectors, all generated dynamically as a function of the total number of tiles \texttt{num\_tile}:
\begin{itemize}
    \item For 4-state experiments:
    \begin{itemize}
        \item \texttt{transfer\_every\_vec = [3 * (2\^{}n) for n in range(num\_tile)]}
        \item \texttt{gamma\_vec = [0.5\^{}(num\_tile - 1 - i) for i in range(num\_tile)]}
    \end{itemize}
    \item For 16-state experiments:
    \begin{itemize}
        \item \texttt{transfer\_every\_vec = [3 * (2\^{}n) for n in range(num\_tile)]}
        \item \texttt{gamma\_vec = [0.1\^{}(num\_tile - 1 - i) for i in range(num\_tile)]}
    \end{itemize}
\end{itemize}

 \textbf{I/O configuration.} The I/O configuration is the same as in the MNIST experiments.

\textbf{Optimizer and Scheduler.} All analog parameters are trained using \texttt{AnalogSGD} with an initial learning rate of 0.1.  A \texttt{StepLR} scheduler reduces the learning rate by a factor of 0.1 every 100 epochs. In analog layers, for our algorithm, each tile $W^{(n)}$ is assigned a fixed internal learning rate \texttt{transfer\_lr\_vec[n]}$ = 0.3\cdot 1.2^n$. These internal learning rates remain constant throughout the training and are not affected by the global schedule, as \texttt{scale\_transfer\_lr=False}. For both TT-v1 and TT-v2, we set the auxiliary tile learning rate \texttt{fast\_lr} $\alpha$ to 0.1 and the transfer learning rate \texttt{transfer\_lr} to 1.  Additionally, we set \texttt{scale\_transfer\_lr=True} for TT-v1, TT-v2 and MP as default.

\textbf{Training and Evaluation.} The network is trained for 200 epochs for CIFAR-10 and 400 epochs for CIFAR-100 with a batch size of 128. Classification loss is computed using label-smoothed cross-entropy, implemented via \texttt{LabelSmoothingLoss} with a smoothing factor of 0.1.

\textbf{Tile switching.} The tile switching strategy is the same as in the MNIST experiments.

\subsection{Least square problem}
 
\noindent \textbf{Model architecture.}
We use a scalar analog layer $\mathbb{R}^{1}\!\to\!\mathbb{R}^{1}$: \texttt{AnalogLinear(1,1)}.

\noindent \textbf{Tile parameter configuration.}
We instantiate a \texttt{TransferCompound} device with \texttt{num\_tile} identical unit cells, each a \texttt{SoftBoundsDevice} with $w_{\min}=-1$, $w_{\max}=1$, and $\Delta w_{\min}=0.5$. Column transfers are enabled and multi-sample updates are treated as mini-batch units (\texttt{transfer\_columns=True}, \texttt{units\_in\_mbatch=True}). The two key parameter vectors are generated from \texttt{num\_tile}: \begin{itemize}
    \item \texttt{transfer\_every\_vec = [2 * (2\textasciicircum{}n) for n in range(num\_tile)]}
    \item \texttt{gamma\_vec = [0.1\textasciicircum{}(num\_tile - 1 - i) for i in range(num\_tile)]}
\end{itemize}
Tile-internal transfer learning rate is fixed to \texttt{transfer\_lr}=0.01 with \texttt{scale\_transfer\_lr=False}. 
We also set the pulse update scheme as \texttt{update\_bl\_management=False}, \texttt{update\_management=False}, as well as weight scaling scheme \texttt{digital\_bias=False}, \texttt{learn\_out\_scaling=False}, \texttt{weight\_scaling\_columnwise=False}, and \texttt{weight\_scaling\_omega}=0.0.

\noindent \textbf{I/O configuration.}
The I/O configuration is the same as in the MNIST experiments.

\noindent \textbf{Target generation.}
Batch size defaults to \texttt{batch\_size}=1. The regression target $b \in [-1,1]$ is sampled from a uniform 16-bit quantizer:
\[
b = -1 + k \cdot \frac{2}{2^{16}-1}, \qquad k \sim \text{Uniform}\{0,\ldots,2^{16}-1\}.
\]

\noindent \textbf{Optimizer.}
All analog parameters are trained using \texttt{AnalogSGD} with an initial learning rate of 0.001. In analog layers, for our algorithm, each tile $W^{(n)}$ is assigned a fixed internal learning rate \texttt{transfer\_lr}$ = 0.01$. These internal learning rates remain constant throughout the training and are not affected by the global schedule, as \texttt{scale\_transfer\_lr=False}. We set the auxiliary tile learning rate \texttt{fast\_lr} $\alpha$ to 0.01.

\end{document}